\documentclass[twoside,11pt]{article}

\usepackage{include/jmlr2e}
\usepackage[utf8]{inputenc} 
\usepackage[T1]{fontenc}    
\usepackage{url}            
\usepackage{booktabs}       
\usepackage{amsfonts}  
\usepackage{amsmath} 
\usepackage{nicefrac}       
\usepackage{microtype}      
\usepackage{xcolor}
\usepackage{hyperref}  
\usepackage{graphicx}
\usepackage{array}
\usepackage{subcaption,mathrsfs}
\usepackage{natbib}
\usepackage{wrapfig}
\usepackage{tikz}
\usepackage[makeroom]{cancel}
\usepackage{algorithm}
\usepackage{algpseudocode}
\usepackage{lastpage}

\allowdisplaybreaks
\usetikzlibrary{decorations.markings}




\definecolor{mydarkblue}{rgb}{0,0.08,0.45}

\definecolor{tab10_3}{rgb}{0.8392156862745098, 0.15294117647058825, 0.1568627450980392}
\definecolor{tab10_2}{rgb}{0.17254901960784313, 0.6274509803921569, 0.17254901960784313}
\definecolor{tab10_4}{rgb}{0.5803921568627451, 0.403921568627451, 0.7411764705882353}
\definecolor{tab10_5}{rgb}{0.5490196078431373, 0.33725490196078434, 0.29411764705882354}

\hypersetup{ %
    pdftitle={},
    pdfauthor={},
    pdfsubject={},
    pdfkeywords={},
    pdfborder=0 0 0,
    pdfpagemode=UseNone,
    colorlinks=true,
    linkcolor=mydarkblue,
    citecolor=mydarkblue,
    filecolor=mydarkblue,
    urlcolor=mydarkblue,
    pdfview=FitH
}


\usepackage[utf8]{inputenc}
\usepackage{booktabs} 
\usepackage{hyperref}

\usepackage{bbm} 
\usepackage{bm}
\usepackage{verbatim}
\usepackage{float}
\usepackage{color,soul}
\usepackage{enumitem}
\usepackage{mathtools}
\usepackage{hhline}
\usepackage[title]{appendix}
\usepackage[nameinlink]{cleveref} 
\usepackage[font=small,labelfont=bf,
   justification=justified,
   format=plain]{caption}
\usepackage{tikz}
\usepackage{wrapfig,booktabs}
\usepackage{xspace}
\usepackage{cancel}
\usepackage{sidecap}

\graphicspath{ {../figures/} }





\DeclarePairedDelimiterX{\infdivx}[2]{(}{)}{%
  #1\;\delimsize\|\;#2%
}









\newcommand{\one}{\mathbbm{1}}














\usetikzlibrary{shapes.geometric, arrows, bayesnet, calc, positioning}

\newcommand{\md}{\mathrm{d}}





\newcommand{\calF}{\mathcal{F}}

\newcommand{\calH}{\mathcal{H}}

\newcommand{\calT}{\mathcal{T}}

\newcommand{\calP}{\mathcal{P}}

\newcommand{\calN}{\mathcal{N}}
\newcommand{\calO}{\mathcal{O}}

\newcommand{\calL}{\mathcal{L}}

\newcommand{\calX}{\mathcal{X}}

\newcommand{\calD}{\mathcal{D}}

\newcommand{\bH}{\mathbf{H}}

\newcommand{\R}{\mathbb{R}}
\newcommand{\N}{\mathbb{N}}
\newcommand{\Pb}{\mathbb{P}}

\newcommand{\cP}{\mathcal{P}}

\newcommand{\closer}[3]{{\kern-#1ex{#2}\kern-#3ex}}

\newcommand\PSi[2]{{ \left \langle #1 \right \rangle}_{\! #2}}




\mathchardef\mhyphen="2D

\DeclareMathOperator{\E}{\mathbb{E}}

\newcommand\reallywidehat[1]{\arraycolsep=0pt\relax%
\begin{array}{c}
\stretchto{
  \scaleto{
    \scalerel*[\widthof{\ensuremath{#1}}]{\kern-.5pt\bigwedge\kern-.5pt}
    {\rule[-\textheight/2]{1ex}{\textheight}} 
  }{\textheight} %
}{0.5ex}\\           
#1\\                 
\rule{-1ex}{0ex}
\end{array}
}

\renewcommand{\d}{\mathrm{d}}
\newcommand{\ps}[1]{\langle #1 \rangle}
\newcommand{\cF}{\mathcal{F}}

\newcommand{\Hk}{\mathcal{H}}
\newcommand{\id}{\iota}
\def\Id{\mathrm{I}}

\newcommand{\cT}{\mathcal{T}}
\newcommand{\MMD}{\mathrm{MMD}}


\definecolor{azure}{rgb}{0.0, 0.5, 1.0}
\definecolor{airforceblue}{rgb}{0.36, 0.54, 0.66}
\definecolor{darkgreen}{rgb}{0.0, 0.2, 0.13}

\newtheorem{innercustomassumption}{Assumption}

\usepackage{standalone}
\usepackage{tikz}
\usepackage{pgfplots}
\usepackage{pgf}
\usetikzlibrary{calc}
\usetikzlibrary{positioning}
\usetikzlibrary{angles,quotes}
\usetikzlibrary{backgrounds}
\usetikzlibrary{fit}
\usetikzlibrary{arrows}
\usetikzlibrary{arrows.meta}
\usetikzlibrary{shapes.symbols}
\usetikzlibrary{shadings}
\usetikzlibrary{shapes}
\usetikzlibrary{fadings}
\usetikzlibrary{bayesnet}
\usetikzlibrary{matrix}
\usetikzlibrary{plotmarks}
\usetikzlibrary{intersections}
\usetikzlibrary{pgfplots.fillbetween}
\pgfplotsset{compat=1.14}
\usepackage{siunitx}

\usepackage{colortbl}
\definecolor{mediumgray}{gray}{0.7}
\definecolor{lightgray}{gray}{0.85}
\definecolor{lightlightgray}{gray}{0.9}
\definecolor{C1}{HTML}{1F77B4}
\definecolor{C2}{HTML}{FF7F0E}
\definecolor{C3}{HTML}{2CA02C}
\definecolor{C4}{HTML}{D62728}
\definecolor{C5}{HTML}{9467BD}
\colorlet{C1light}{C1!70!white}
\colorlet{C2light}{C2!70!white}
\colorlet{C3light}{C3!70!white}
\colorlet{C4light}{C4!70!white}
\colorlet{C5light}{C5!70!white}
\colorlet{C1lighter}{C1!50!white}
\colorlet{C2lighter}{C2!50!white}
\colorlet{C3lighter}{C3!50!white}
\colorlet{C4lighter}{C4!50!white}
\colorlet{C5lighter}{C5!50!white}
\colorlet{C1vlight}{C1!20!white}
\colorlet{C2vlight}{C2!20!white}
\colorlet{C3vlight}{C3!20!white}
\colorlet{C4vlight}{C4!20!white}
\colorlet{C5vlight}{C5!20!white}
\colorlet{linkcolor}{violet}

\newcommand{\dd}{\mathrm{d}}

\newtheorem{thm}{Theorem}[section]
\newtheorem{cor}{Corollary}[section]
\newtheorem{lem}{Lemma}[section]
\newtheorem{defi}{Definition}
\newtheorem{prop}{Proposition}[section]
\newtheorem{rem}{Remark}[section]

\crefname{enumi}{}{}
\crefname{enumii}{}{}


\usepackage{xr}
\makeatletter

\newcommand*{\addFileDependency}[1]{
\typeout{(#1)}
%
%
\@addtofilelist{#1}
%
\IfFileExists{#1}{}{\typeout{No file #1.}}
}\makeatother


\definecolor{blush}{rgb}{0.87, 0.36, 0.51}

\newcommand{\dmmd}{\operatorname{DrMMD}}
\newcommand{\mmd}{\operatorname{MMD}}
\newcommand{\kale}{\operatorname{KALE}}
\newcommand{\kl}{\operatorname{KL}}
\newcommand{\HS}{\operatorname{HS}}

\newenvironment{talign*}
 {\csname align*\endcsname}
 {\endalign}

\begin{document}

\title{(De)-regularized Maximum Mean Discrepancy Gradient Flow}

\jmlrheading{26}{2025}{1-\pageref{LastPage}}{9/24; Revised
8/25}{10/25}{24-1574}{Zonghao Chen, Aratrika Mustafi, Pierre Glaser, Anna Korba, Arthur Gretton, and Bharath K. Sriperumbudur
}
\ShortHeadings{(De)-regularized Maximum Mean Discrepancy Gradient Flow}{Chen, Mustafi, Glaser, Korba, Gretton and Sriperumbudur}

\author{\name Zonghao Chen 
        \email zonghao.chen.22@ucl.ac.uk \\
       \addr Department of Computer Science, University College London, London, WC1V 6LJ, UK
       \AND
        \name Aratrika Mustafi 
        \email abm6733@psu.edu \\
        \addr Department of Statistics, Pennsylvania State University, University Park, PA, 16802 USA
        \AND
        \name Pierre Glaser 
        \email pierreglaser@msn.com \\
        \addr Gatsby Computational Neuroscience Unit, University College London, London, WC1V 6LJ, UK
        \AND
        \name Anna Korba 
        \email anna.korba@ensae.fr \\
        \addr ENSAE, CREST, Institut Polytechnique de Paris, Palaiseau, France
        \AND
        \name Arthur Gretton 
        \email arthur.gretton@gmail.com \\
        \addr Gatsby Computational Neuroscience Unit, University College London, London, WC1V 6LJ, UK
        \AND
        \name Bharath K. Sriperumbudur 
        \email bks18@psu.edu \\
        \addr Department of Statistics, Pennsylvania State University, University Park, PA, 16802 USA
}
\editor{Quentin Berthet}

\firstpageno{1}

\maketitle

\begin{abstract}%
We introduce a (de)-regularization of the Maximum Mean Discrepancy
(DrMMD) and its Wasserstein gradient flow. 
Existing gradient flows that transport samples from source distribution to target distribution with only target samples, either lack tractable numerical implementation ($f$-divergence flows) or require strong assumptions and modifications, such as noise injection, to ensure convergence (Maximum Mean Discrepancy flows). In contrast, DrMMD flow can simultaneously (i) guarantee near-global convergence for a broad class of targets in both continuous and discrete time, and (ii) be implemented in closed form using only samples.
The former is achieved by leveraging the connection between the DrMMD and the $\chi^2$-divergence, while the latter comes by treating DrMMD as MMD with a de-regularized kernel. 
Our numerical scheme employs an adaptive de-regularization schedule throughout the flow to optimally balance the trade-off between discretization errors and deviations from the $\chi^2$ regime.
The potential application of the DrMMD flow is demonstrated across several numerical experiments, including a large-scale setting of training student/teacher networks.
\end{abstract}
\begin{keywords} Wasserstein gradient flow, reproducing kernel Hilbert space, maximum mean discrepancy, $f$-divergences, spectral regularization \end{keywords}

\section{Introduction}
Many applications in computational statistics and machine learning involve approximating 
a probability distribution $\pi$ on $\R^d$ (in terms of samples) when only partial information on $\pi$ is accessible.
For example, in Bayesian inference, $\pi$ is known up to an intractable normalizing constant for complex models. 
The setting of interest in this work is the so-called generative modeling setting \citep{brock2019large,ho2020denoising,song2020score,franceschi2023unifying}  where one assumes access to a set of samples from the target distribution $\pi $, with the goal being to generate new samples from $\pi$.
Recently, a popular framework to perform this task involves solving a minimization problem in
$\cP(\R^d)$, the space of probability distributions over $\R^d$, by choosing the objective function to be a \emph{dissimilarity} function $\calD(\cdot \| \pi)$ (a distance or divergence) between probability distributions that satisfies: $\calD(\nu\| \pi)=0$ if and only if $\nu=\pi$. 
Since only samples from $\pi$ are available, this problem is solved approximately---yet, the approximate minimizers may
converge to $\pi$ as the number of available samples increases. 
In particular, in the space of probability distributions with bounded second moment $\cP_2(\R^d)$, a common approach is to solve this optimization problem by running a (sample-based) approximation of the Wasserstein gradient flow of the functional $\cF = \calD(\cdot \| \pi)$, which defines a path of distributions with steepest descent for $\cF$ with respect to the Wasserstein-2 distance.

In generative modeling, the choice of $\calD(\cdot \| \pi) $ depends on two crucial aspects: First, its flow should admit consistent and preferably tractable numerical implementations using only samples from $\pi$, and second, under reasonable assumptions, it should guarantee convergence of its flow to $\pi$, the unique \emph{global} minimizer. Combined, these two properties ensure that, in the large sample limit, this algorithm generates new samples from the target.
Unfortunately, verifying these two properties simultaneously has proved to be a surprisingly challenging task. For instance, recent approaches based on Maximum Mean Discrepancy (MMD)
\citep{arbel2019maximum,hertrich2023generative}, the sliced-Wasserstein distance \citep{liutkus2019sliced} and the Sinkhorn divergence \citep{genevay2018learning} typically admit consistent finite-sample implementations,
but their global convergence guarantees---when they exist---do not apply to most practical targets $\pi$ of interest. 
To guarantee global convergence, one could instead choose $\calD$ as an $f$-divergence.
For instance, the (reverse) KL divergence 
and $\chi^2$-divergence are geodesically convex \citep[Definition 16.5]{villani2009optimal} when the target is log-concave (i.e. $\pi \propto e^{-V}$ with $V$ convex) \citep{ohta2011displacement}, and hence their flows enjoy better convergence behaviour. 
However, while the population Wasserstein gradient flows of the $\chi^2$ and KL divergences are well-defined \citep{jordan1998variational, chewi2020svgd}, they do not come naturally with consistent and tractable sample-based implementations. 
Multiple approaches propose to solve a surrogate optimization problem with samples at each iteration of the flow \citep{gao2019deep,ansari2020refining,simons2022variational,birrell2022f, gu2022lipschitz,liu2023variational}; however, it remains to be formally established whether these surrogate problems preserve the desirable convergence guarantees of $f$-divergence flows.

In the face of the trade-offs present in the current approaches, a natural question arises: \emph{Does there exist a divergence functional $\calD(\cdot \| \pi)$ whose gradient flow both globally converges, and admits a tractable, consistent sample-based implementation?}
In this work, we take a step towards a positive answer by constructing a ``de-regularized'' variant of the Maximum Mean Discrepancy ($\dmmd$) and its associated Wasserstein gradient flow. 
We prove that the $\dmmd$ gradient flow converges exponentially to the global minimum up to a controllable barrier term for targets $\pi$ that satisfy a Poincar\'{e} inequality, in both continuous and discrete time regimes. To do so, we establish and leverage a connection between the $\dmmd$ and the $\chi^2$ divergence, an $f$-divergence whose gradient flow benefits from strong convergence guarantees.
By alternatively viewing $\dmmd$ as MMD with a regularized kernel, $\dmmd$ flow comes with a consistent and tractable implementation when only samples from the target $\pi$ are available. 
In addition, given the empirical success of using adaptive kernels in MMD-based generative models~\citep{galashov2024deep, li2017mmd,arbel2018gradient}, 
our paper shows theoretically that using adaptive kernels through adaptive regularization indeed improves the convergence of MMD gradient flow.

This paper is organized as follows. \Cref{sec:background} introduces the necessary background on reproducing kernel Hilbert spaces (RKHS), the $\mmd$, $\chi^2$-divergence, and Wasserstein gradient flows. \Cref{sec:chard} introduces $\dmmd$ and shows that $\dmmd$ is a valid probability divergence that
metrizes weak convergence. \Cref{sec:chard_flow} uses $\dmmd$ as the optimization objective to define a Wasserstein gradient flow in $\calP_2(\R^d)$, and analyzes the convergence of $\dmmd$ flow in continuous time.
Sections \ref{sec:chard_particle_flow} and \ref{sec:space_dis} define an implementable $\dmmd$ particle descent scheme with both time and space discretization and analyze its convergence.
\Cref{sec:related_work} discusses other Wasserstein gradient flows related to our $\dmmd$ flow.
\Cref{sec:experiments} shows experiments that confirm our theoretical results. The proofs of all results are provided in Section~\ref{sec:proofs}, with the technical results being relegated to an appendix.


\section{Background}\label{sec:background}
In this section, we present the definitions and notation used throughout the paper.
\subsection{Notations}
Let $\calL^d$ be the Lebesgue measure on $\R^d$. $\cP_2(\R^d)$ denotes the set of all Borel probability measures $\mu$ on $\R^d$ with finite second moment. 
For $\mu \in \cP_2(\R^d)$, $\mu \ll \pi$  denotes that $\mu$ is absolutely continuous with respect to $\pi$. We use $\frac{\d\mu}{\d\pi}$ to denote the Radon-Nikodym derivative. We recall the standard definition of the Kullback-Leibler divergence, $\kl(\mu \| \pi)=\int \log(\frac{\d\mu}{\d\pi})\d\mu$ if $\mu \ll \pi$, $+\infty$ else.

For a continuous mapping $T:\R^d \to \R^d$, $T_{\#} \mu$ denotes the push-forward measure of $\mu$ by $T$.
For any $\pi \in \cP_2(\R^d)$, $L^2(\pi)$ is the Hilbert space of (equivalence class of) functions $f : \R^d \to \R$ such that $\int |f|^2 d\pi < \infty$. 
We denote by $\Vert \cdot \Vert_{L^2(\pi)}$ and $\ps{\cdot,\cdot}_{L^2(\pi)}$ the norm and the inner product of $L^2(\pi)$. 
We 
denote by $C_c^\infty (\R^d)$ the space of infinitely differentiable functions from $\R^d$ to $\R$ with compact support. 
For a vector valued functions $g: \R^d \to \R^p$, we abuse the notation of $L^2(\pi)$ and claim $g \in L^2(\pi)$ if $g_i \in L^2(\pi)$ for all $i = 1, \ldots, p$ along with $ \|g \|^2_{L^2(\pi)} := \sum_{i=1}^p \|g_i \|^2_{L^2(\pi)}$. 

If $f: \R^d \to \R$ is differentiable, we denote by $\nabla f$ the gradient of $f$ and $\bH f$ its Hessian. $f$ is $\alpha$-strongly convex if $\bH f \succeq \alpha \Id$, i.e, $\bH f(x) - \alpha \Id$ is positive semi-definite for any $x$, where $\Id$ is the identity matrix (also denotes an identity operator depending on the context). For a vector valued function $g: \R^d \to \R^d$, if $g_i$ is differentiable for all $i = 1, \cdots, d$, $\nabla \cdot g$ denotes the divergence of $g$. We also denote by $\Delta g$ the Laplacian of $g$, where $\Delta g = \nabla \cdot\nabla g$. 
We use $ \|\cdot\|_F$ to denote the matrix Frobenius norm.
$a \wedge b$ and $a \vee b$ denote the minimum and maximum of $a$ and $b$, respectively.

\subsection{Reproducing kernel Hilbert spaces}
For a positive semi-definite kernel $k : \R^d \times \R^d \to \R$, its corresponding reproducing kernel Hilbert space (RKHS) $\Hk$ is a Hilbert space with inner product $\ps{\cdot,\cdot}_{\Hk}$ and norm $\Vert \cdot \Vert_{\Hk}$~\citep{aronszajn1950theory}, such that (i) $k(x, \cdot) \in \Hk$ for all $x \in \R^d$, and (ii) the reproducing property holds, e.g. for all $f \in \Hk$, $x\in\mathbb{R}^d$,  $f(x)=\ps{f,k(x,\cdot)}_{\Hk}$.
We denote by $\Hk^d$ the Cartesian product RKHS consisting of elements $f=(f_1, \dots, f_d)$ with $f_i\in \Hk$ with inner product $\ps{f,g}_{\Hk^d}=\sum_{i=1}^d\ps{f_i,g_i}_{\Hk}$. 

When $\int k(x,x) \d\pi(x)<\infty$, $\mathcal H$ can be canonically injected into $L^2(\pi)$ using the operator $\iota_{\pi} : \Hk \to L^2(\pi),\,f\mapsto f$ with adjoint $ \iota_\pi^\ast: L^2(\pi) \rightarrow \Hk$ given by
\begin{align*}
   \iota_\pi^\ast f(\cdot) = \int k(x,\cdot)f(x)\d\pi(x).
\end{align*}
The operator $\iota_{\pi}$ and its adjoint can be composed to form an $L^2(\pi)$ endomorphism $\calT_{\pi} \coloneqq\iota_{ \pi} \iota_{ \pi}^\ast$ called the \emph{integral operator}, and a $\mathcal H$ endomorphism  $\Sigma_{\pi} := \iota_{ \pi}^\ast \iota_{ \pi}=\int k(\cdot, x) \otimes k(\cdot, x) d \pi(x)$
(where $(a\otimes b)c \coloneqq \langle b,c \rangle_{\Hk} a$ for $a,b,c\in \Hk$) called the \emph{covariance operator}.
$\calT_{\pi}$ is compact, positive, self-adjoint, and can thus be diagonalized into 
an orthonormal system in $\left\{e_i \right\}_{i \geq 1}$ of $L^2(\pi)$ with associated eigenvalues $\varrho_1 \geq \cdots \varrho_i \geq \cdots \geq 0$.

In this paper, we make the following assumption on our kernel.
\begin{innercustomassumption}\label{assumption:universal}
$k: \R^d \times \R^d \to \R$ is a  continuous and $c_0$-universal kernel, and there exists $K>0$ such that $\sup_{x} k(x,x) \leq K$.
\end{innercustomassumption}

We refer the reader to \cite{carmeli2010vector} for the definition of $c_0$-universal kernel. The implication of \Cref{assumption:universal} is that the RKHS $\calH$ is compactly embedded into $L^2(\pi)$~\citep[Lemma 2.3]{steinwart2012mercer}, and hence $k(x, x^\prime)$ has a absolute, uniform and pointwise convergent Mercer representation~\citep[Corollary 3.5]{steinwart2012mercer},
\begin{align}\label{eq:mercer}
    k(x, x^{\prime})=\sum_{i \geq 1} \varrho_i e_i(x) e_i (x^{\prime}),
\end{align}
for any $x$ and $x^\prime$ in the support of $\pi$. 
Since the kernel is $c_0$-universal, the RKHS $\calH$ is dense in $L^2(\pi)$ for all Borel probability measures $\pi$~\citep[Section 3.1]{sriperumbudur2011universality} and $\{e_i\}_{i \geq 1}$ becomes an orthornormal basis of $L^2(\pi)$~\citep[Theorem 3.1]{steinwart2012mercer}.

The power of the integral operator $\calT_\pi^r$ is defined as $\calT_\pi^r f:= \sum_{i \geq 1} \varrho_i^r \left\langle f, e_i \right\rangle_{L^2(\pi)} e_i, f \in L^2(\pi)$.
For $f \in \operatorname{Ran}(\calT_\pi^r)$, there exists $q \in L^2(\pi)$ such that $f=\calT_\pi^rq$. 
The exponent $r$ quantifies the smoothness of the range space relative to the original RKHS $\calH$ with $0<r<\frac{1}{2}$ (\emph{resp.} $r>\frac{1}{2}$) yields spaces that are less (\emph{resp.} more) smooth than $\mathcal{H}$ with $\operatorname{Ran}(\calT_\pi^{1/2}) $ being isometrically isomorphic to $ \mathcal{H}$~\citep{cucker2007learning, fischer2020sobolev}.

We make an additional assumption---commonly employed in the kernel-based gradient flow literature \citep{glaser2021kale, he2022regularized, korba2020non, arbel2019maximum}---on the regularity of the kernel that will be employed in studying the $\dmmd$ gradient flow.

\begin{innercustomassumption}\label{assumption:bounded_kernel}
$k: \R^d \times \R^d \to \R$ is twice differentiable in the sense of \cite[Definition 4.35]{steinwart2008support}, i.e., for $i, j \in\{1, \cdots, d\}$, both $ \partial_i \partial_{i+d} k$ and $\partial_i \partial_j \partial_{i+d} \partial_{j+d} k$ exist and are continuous. 
There exist constants $K_{1d}, K_{2d} > 0$ such that $\left\|\nabla_1 k(x,\cdot) \right\|_{\calH^d} := \sum_{i=1}^{d} \left\|\partial_i k(x,\cdot) \right\|_\calH \leq \sqrt{ K_{1 d} }$ and $\left\|\bH_1 k(x, \cdot) \right\|_{\calH^{d \times d}} := \sum_{i,j=1}^{d} \left\|\partial_i \partial_j k(x,\cdot) \right\|_\calH  \leq \sqrt{ K_{2 d} }$ for all $x\in\mathbb{R}^d$.
\end{innercustomassumption}

Many kernels satisfy both \Cref{assumption:universal} and \ref{assumption:bounded_kernel}, including the class of bounded, continuous, and translation invariant kernels on $\mathbb{R}^d$ whose Fourier transforms have finite second and fourth moments. This is easy to verify by employing the Fourier transform representation of the RKHS \citep[Theorem 10.12]{wendland2004scattered} and noting that the finiteness of the RKHS norm of $\nabla_1 k(\cdot,x)$ and $\mathbf{H}_1k(\cdot,x)$ for all $x$ corresponds to the existence and finiteness of the second and fourth moments of the Fourier transform of the kernel, respectively. This condition is satisfied by 
the Gaussian kernel, Mat\'{e}rn kernels of order $\nu$ with $\nu + \frac{d}{2} \geq 2$ and the inverse multiquadratic kernel.

\subsection{Maximum mean discrepancy and $\chi^2$-divergence}
The Maximum Mean Discrepancy ($\mmd$)~\citep{gretton2012kernel} between $\mu$ and $\pi$ is defined as the RKHS norm of the difference between the mean embeddings\footnote{Such mean embeddings are well-defined under \Cref{assumption:universal}.} $m_\mu := \int k(x,\cdot)d\mu(x)$ and $m_\pi := \int k(x,\cdot)d\pi(x)$.
\begin{align*}
    \mmd(\mu \| \pi) &\coloneqq \left\| \int k(x,\cdot)d\mu(x) - \int k(x,\cdot) d\pi(x) \right\|_{\mathcal{H}} = \left\|m_\mu -m_\pi \right\|_{\mathcal{H}}.
\end{align*}
The function $m_{\mu}-m_{\pi}$ is often referred to as the ``witness function''. 
When the kernel $k$ is $c_0$-universal, $\mmd(\mu \| \pi)=0$ if and only if $\mu = \pi$, and the $\mmd$ metrizes the weak topology between probability measures~\citep{sriperumbudur2010hilbert, sriperumbudur2016optimal}. 
Given samples $(x_1, \dots, x_n)$ and $(y_1, \dots, y_m)$ from $\mu$ and $\pi$ respectively, the $\mmd$ can be consistently estimated in multiple ways \citep{gretton2012kernel}. For instance, one can compute its ``plug-in'' estimator, e.g. $\mmd(\hat{\mu} \| \hat{\pi})$, where $\widehat{\mu} \coloneqq \frac{1}{n} \sum_{i=1}^{n} \delta_{x_i}$ and  $\widehat{\pi} \coloneqq \frac{1}{n} \sum_{i=1}^{n} \delta_{y_i}$.

The $\chi^2$-divergence --- a member of the family of $f$-divergences \citep{renyi1961measures} --- is defined as
the variance of the Radon-Nikodym derivative $\frac{d \mu}{d \pi}$ under $\pi$:
\begin{align*}
    \chi^2(\mu \| \pi) \coloneqq \int \left( \frac{\mathrm{d} \mu}{d \pi} - 1 \right)^2 \mathrm{d}\pi,
\end{align*}
when $\mu \ll \pi$, and $+\infty$ otherwise. 
The $\chi^2$-divergence has a variational form~\citep{nowozin2016f, nguyen2010estimating}:
\begin{align*}
    \chi^2(\mu \| \pi)=\sup _{h \in \mathcal{M}(\R^d)} \int h d \mu-\int\left(h+\frac{1}{4} h^2\right) d \pi,
\end{align*}
where $\mathcal{M}(\R^d)$ denote the set of all measurable functions from $\R^d$ to $\R$. 
When $\mu \ll \pi$, we prove in \Cref{lem:variation_chi2} that the optimal $h^\ast = \frac{d \mu}{d \pi} - 1 \in L^2(\pi)$ so that it is sufficient to restrict the variational set to $L^2(\pi)$ in contrast to $\mathcal{M}(\R^d)$ for general $f$-divergences. Since in most cases, $\chi^2(\hat{\mu} \| \hat{\pi}) = + \infty$,  the $\chi^2$-divergence does not admit plug-in estimators, and estimating it consistently involves more complicated strategies \citep{nguyen2010estimating}.

\subsection{Wasserstein gradient flows}\label{sec:wass}
Gradient flows are dynamics that use local (e.g. \emph{differential})
information about a given functional in order to minimize it as fast as possible.
Their exact definition depends on the nature of the input space;
in the familiar case of Euclidean space $\R^d$, the gradient flow of a sufficiently regular $F:\R^d \to \R$ given some initial condition $x_0$ is given by the solution $(x_t)_{t \ge 0}$ of $\partial_t x_t = - v(x_t)$,
where $v$ is the Fréchet subdifferential of $F$, a generalization of the notion of derivative to non-smooth functions~\citep{kruger2003frechet}.

Gradient flows can be extended from Euclidean spaces to the more general class of \emph{metric spaces}~\citep{ambrosio2005gradient}.
When the metric space in question is $\mathcal P_2(\R^d)$ endowed with
the Wasserstein-$2$ distance, this gradient flow is called the Wasserstein gradient flow $(\mu_t)_{t \geq 0}$. The Wasserstein gradient flow of $\calF:\calP_2(\R^d) \to \R$ takes the particular form~\citep[Lemma 10.4.1]{ambrosio2005gradient}:
\begin{align}\label{eq:cont_eqn_1}
    \partial_t \mu_t + \nabla \cdot \left(\mu_t v_t\right)=0, 
\end{align}
where $v_t$ is the Fr\'{e}chet subdifferential of $\calF: \calP_2(\R^d) \to \R$ evaluated at $\mu_t$~\citep[Definition 11.1.1]{ambrosio2005gradient}. \eqref{eq:cont_eqn_1} is an instance of the \emph{continuity equation} with velocity field $v_t$: under these dynamics, the mass of $\mu_t$ is transported in the direction $v_t$ that decreases $\calF$ at the fastest rate at each time $t$. 
While \eqref{eq:cont_eqn_1} can be time-discretized in various ways \citep{santambrogio2017euclidean, ambrosio2005gradient}, in this work, we will focus on the \emph{forward Euler} scheme, defined as $\mu_{n+1} \coloneqq (\Id - \gamma v_n)_{\#} \mu_n$ where $\gamma > 0$ is a step size parameter. Such a scheme is also known as the \emph{Wasserstein Gradient Descent} of $\mathcal F$.

Just as gradient descent in Euclidean spaces,  an instrumental property to characterize the convergence of the Wasserstein gradient descent of a functional $\calF$ is given by its \emph{geodesic}
convexity and smoothness. 
Among various ways, one can consider to characterize convexity and smoothness through lower and upper bounds on the Wasserstein Hessian of the functional $\calF$~\citep[Proposition 16.2]{villani2009optimal}.
Given any $\phi \in C_c^\infty(\R^d)$ that defines a constant speed geodesic\footnote{See \Cref{appsec:wass_discussion} for the definition.} starting at $\mu$: $\rho_t = (\Id + t \nabla \phi)_\# \mu$ for $0\leq t \leq 1$, the Wasserstein Hessian of a functional $\mathcal{F} : \calP_2(\R^d) \to \R$ at $\mu$, denoted as $\operatorname{Hess} \mathcal{F}_{\mid \mu}$, is an operator from $L^2(\mu)$ to $L^2(\mu)$:\footnote{Strictly speaking, $\operatorname{Hess} \mathcal{F}_{\mid \mu}$ is an operator over the tangent space $\calT_\mu \calP_2(\R^d)$ which is a subset of $L^2(\mu)$~\citep{villani2009optimal}.}
\begin{align*}
    \langle \operatorname{ Hess } \mathcal{F}_{\mid \mu} \nabla \phi , \nabla \phi \rangle_{L^2(\mu)}= \frac{d^2}{d t^2}\Big|_{t=0} \mathcal{F}(\rho_t).
\end{align*}
A functional $\calF$ is said to be \emph{geodesically} $M$-smooth at $\mu$ if $\langle \operatorname{ Hess } \mathcal{F}_{\mid \mu} \nabla \phi , \nabla \phi \rangle_{L^2(\mu)} \leq M \| \nabla \phi \|_{L^2(\mu)}$, and is said to be \emph{geodesically} $\Lambda$-convex at $\mu$ if $\langle \operatorname{ Hess } \mathcal{F}_{\mid \mu} \nabla \phi , \nabla \phi \rangle_{L^2(\mu)} \geq \Lambda \| \nabla \phi \|_{L^2(\mu)}$. Additionally, $\calF$ is \emph{geodesically semiconvex} if $ -\infty < \Lambda < 0$ and \emph{geodesically strongly convex} if $\Lambda > 0$.
Generally, a functional $\calF$ that is both smooth and strongly convex is preferred, because its Wasserstein gradient descent has an exponential rate of convergence under a small enough step size $\gamma$~\citep[Section 9.3.1]{boyd2004convex}\citep{bonet2024mirror}.
Given some probability measure $\pi \in \calP_2(\R^d)$, the $\mmd$ flow (\textit{resp.} $\chi^2$ flow) is the Wasserstein gradient flow of the functional $\calF_{\mmd}(\cdot) = \mmd(\cdot \| \pi )$ (\textit{resp.} $\calF_{\chi^2}(\cdot) = \chi^2(\cdot \| \pi)$). 
As with the $\mmd$, the $\mmd$ flow has an analytic finite sample implementation and 
may be used to construct generative modeling algorithms ~\citep{hertrich2023generative, hertrich2024wasserstein}.
The Wasserstein Hessian of $\calF_{\mmd}$ for smooth kernels is not positively lower bounded~\citep[Proposition 5]{arbel2019maximum}, however, so $\mmd$ flow only converges up to an unknown barrier~\citep[Theorem 6]{arbel2019maximum}, with global convergence only under a strong (and unverifiable) assumption~\citep[Proposition 7]{arbel2019maximum}.
More recent works ~\citep{boufadene2023global} have demonstrated the global convergence of the $\mmd$ flow when using the \emph{Coulomb kernel}. This kernel is non-smooth, however, which complicates numerical implementations. In contrast, the Wasserstein Hessian of $\calF_{\chi^2}$ is positively lower bounded~\citep{ohta2011displacement} for log-concave targets $\pi$, so $\calF_{\chi^2}$ is \emph{geodesically strongly convex} and $\chi^2$ flow enjoys exponential rate of convergence. 
The exponential convergence of $\chi^2$ flow towards the global minimum in fact holds for a broader class of targets $\pi$ that satisfy a Poincar\'{e} inequality~\citep{chewi2020svgd}.
The $\chi^2$ flow has so far lacked a tractable sample-based implementation, however, so it has not been widely used in practice. 

In the following sections, we will introduce a new Wasserstein gradient flow that combines the computational advantages of the MMD flow with the convergence properties of the $\chi^2$ flow.


\section{(De)-regularized Maximum Mean Discrepancy ($\dmmd$) }\label{sec:chard}
In this section, we introduce a (de)-regularized version of maximum mean discrepancy, or $\dmmd$ in short. 
%
The $\dmmd$ is rooted in a unified representation of
the MMD and the $\chi^2$-divergence, given in the following proposition, which is proved in \Cref{appsec:mmd_regularized_chi2}.
\begin{prop}[$\mmd$ and $\chi^2$-divergence]\label{prop:mmd_chi2}
Suppose
$\frac{d \mu}{d \pi} -1 \in L^2(\pi)$ for $\mu, \pi \in \calP_2(\R^d)$. Then 
\begin{align*}
    \mmd^2(\mu \| \pi) = \left \|  \mathcal  T_{\pi}^{1 / 2}\left(\frac{d \mu}{d \pi}-1\right) \right \|^2_{L^2(\pi)} & \text{  and  }\,\,
    \chi^2(\mu \| \pi) = \left\| \Id \left(\frac{d \mu}{d \pi}-1\right) \right\|^2_{L^2(\pi)}. 
\end{align*}
\end{prop}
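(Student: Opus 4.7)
The plan is to verify the two identities separately, with the $\chi^2$ case being immediate and the $\mmd$ case requiring a short manipulation that connects the witness function $m_\mu - m_\pi$ to the operator $\iota_\pi^\ast$ acting on the density ratio minus one. Throughout, let $g \defines \frac{d\mu}{d\pi} - 1$, which is in $L^2(\pi)$ by hypothesis.

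For the $\chi^2$ identity, I would simply observe that by the definition of $\chi^2$ and of the $L^2(\pi)$ norm,
\begin{align*}
\chi^2(\mu \| \pi) = \int g^2 \, d\pi = \|g\|^2_{L^2(\pi)} = \|\Id\, g\|^2_{L^2(\pi)},
\end{align*}
which is the second claim. Under \Cref{assumption:universal}, the kernel is bounded so $m_\mu$ and $m_\pi$ are well-defined elements of $\calH$, and $\iota_\pi^\ast : L^2(\pi) \to \calH$ maps $g$ to an element of $\calH$.

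For the $\mmd$ identity, the first step is to rewrite the witness function using the Radon-Nikodym derivative. Since $\mu \ll \pi$ (as $g \in L^2(\pi)$ is implicit in the assumption),
\begin{align*}
m_\mu - m_\pi = \int k(x,\cdot)\,d\mu(x) - \int k(x,\cdot)\,d\pi(x) = \int k(x,\cdot)\,g(x)\,d\pi(x) = \iota_\pi^\ast g,
\end{align*}
using the formula for $\iota_\pi^\ast$ given in the excerpt. Squaring the RKHS norm and applying the adjoint relation $\langle \iota_\pi^\ast u, f\rangle_\calH = \langle u, \iota_\pi f\rangle_{L^2(\pi)}$ yields
\begin{align*}
\mmd^2(\mu \| \pi) = \|\iota_\pi^\ast g\|_\calH^2 = \langle \iota_\pi^\ast g, \iota_\pi^\ast g\rangle_\calH = \langle g, \iota_\pi \iota_\pi^\ast g\rangle_{L^2(\pi)} = \langle g, \calT_\pi g\rangle_{L^2(\pi)}.
\end{align*}
Finally, since $\calT_\pi$ is positive and self-adjoint on $L^2(\pi)$, its square root $\calT_\pi^{1/2}$ is well-defined and self-adjoint, and
\begin{align*}
\langle g, \calT_\pi g\rangle_{L^2(\pi)} = \langle \calT_\pi^{1/2} g, \calT_\pi^{1/2} g\rangle_{L^2(\pi)} = \|\calT_\pi^{1/2} g\|^2_{L^2(\pi)},
\end{align*}
which gives the first claim.

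No step here poses a genuine obstacle; the argument is a short computation. The only subtlety worth flagging is verifying that the swap $\int k(x,\cdot)\,d\mu(x) = \int k(x,\cdot)\,\tfrac{d\mu}{d\pi}(x)\,d\pi(x)$ holds in $\calH$ (not just pointwise), which follows from Bochner integrability under the boundedness of $k$ and the fact that $g \in L^2(\pi) \subset L^1(\pi)$.
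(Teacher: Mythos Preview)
Your proof is correct and follows essentially the same route as the paper: rewrite $m_\mu - m_\pi = \iota_\pi^\ast g$, use the adjoint relation to obtain $\langle g, \calT_\pi g\rangle_{L^2(\pi)}$, and invoke the positive self-adjoint square root, with the $\chi^2$ identity immediate from the definition. Your added remark on Bochner integrability is a nice touch that the paper leaves implicit.
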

\begin{rem}\label{rem:mmd-chi2}
The $ \chi^{2}$ identity follows from the definition but is provided for comparison purposes. 
Together, these identities express both the MMD and the $\chi^2$-divergence as functionals of the (centered) density ratio $\frac{d \mu}{d \pi} -1$.
While the $\chi^2$-divergence directly
computes the $ L^2(\pi)$ norm of the centred ratio, the $\mmd^2$ first computes the image by the operator $ \mathcal  T_{\pi}^{1 / 2} $ before taking the $ L^2(\pi) $ norm. 
The smoothing effect of the compact operator $ \mathcal  T_{\pi}^{1 / 2}$---note that $\mathcal{T}_\pi$ is compact if $k$ is bounded as assumed in \Cref{assumption:universal}---has both positive and negative consequences: 
$\calF_{\mmd}(\cdot) = \mmd^2(\cdot \| \pi)$ admits finite sample estimators but is not geodesically convex, making the first-order optimization of MMD objective (as done in generative modeling) challenging~\citep{arbel2019maximum}. 
In contrast, $ \calF_{\chi^2}(\cdot) = \chi^2(\cdot \| \pi) $ is geodesically convex for log-concave targets $\pi$~\citep{ohta2011displacement} but is hard to estimate with samples.
\end{rem}

With these facts in mind, we introduce a divergence whose purpose is to combine the beneficial properties of both the $\chi^2$-divergence and $ \operatorname{MMD} $. 
To do so, this divergence computes the $ L^2(\pi) $ norm of the image of $ \frac{ d \mu }{  d \pi } - 1 $ by an alternative operator which interpolates between $ \Id $ and $ \mathcal  T_{\pi}^{1 / 2} $. We set this operator to be $ ((\mathcal  T_{\pi} + \lambda \Id)^{-1} \mathcal  T_{\pi})^{1 / 2} $, where $ \lambda > 0 $ is a regularization parameter. 
The operator $ ((\mathcal  T_{\pi} + \lambda \Id)^{-1} \mathcal  T_{\pi})^{1 / 2} $ can be seen as a (de)-regularization of the operator $ \mathcal  T_{\pi}^{1 / 2} $ used by the MMD---- a similar idea has been used in kernel Fisher discriminant
analysis~\citep{mika99fisher}, goodness-of-fit testing~\citep{balasubramanian2017optimality,hagrass2023spectralgof}, and two-sample testing~\citep{eric2007testing, hagrass2022spectral}. We call the resulting divergence the (De)-regularized Maximum Mean Discrepancy ($\dmmd$).

\begin{defi}[$\dmmd$]\label{def:chard}
Suppose $\frac{d \mu}{d \pi} -1 \in L^2(\pi)$ where $ \mu, \pi \in \calP_2(\R^d) $. 
Then the (de)-regularized maximum mean discrepancy ($ \dmmd $) between $ \mu, \pi \in \calP_2(\R^d) $ is defined as
\begin{align}\label{eq:dmmd_with_inverse_operator}
    \dmmd (\mu || \pi) &= ( 1+\lambda) \left\|\left((\mathcal  T_{\pi} + \lambda \Id)^{-1}\mathcal  T_{\pi} \right )^{1 / 2} \left(\frac{d \mu}{d \pi}-1\right) \right\|_{L^2(\pi)}^2,
\end{align}
where $\lambda>0$.
\end{defi}
While all three operators,
$ \Id $, $ \mathcal  T_{\pi} $ and $ (\mathcal  T_{\pi} + \lambda \Id)^{-1}\mathcal  T_{\pi}$ are diagonalizable in the same eigenbasis of $ L^{2}(\pi)$, the key difference between them lies in the behavior of their eigenvalues. The identity operator  $\Id$ has all eigenvalues $1$, the integral operator $\mathcal{T}_\pi$ has eigenvalues $(\varrho_i)_{i \geq 1}$ which decay to zero as $i\rightarrow\infty$ and the (de)-regularized integral operator $(\mathcal{T}_\pi+\lambda \Id)^{-1}\mathcal{T}_\pi$ has eigenvalues $\left(\varrho_i/(\varrho_i+\lambda)\right)_{i \geq 1}$ which either decay to zero or converge to 1 depending on the choice of $\lambda$ as $i\rightarrow\infty$. 
$ (\mathcal  T_{\pi} + \lambda \Id)^{-1}  \mathcal  T_{\pi} $ is known in the statistical estimation literature as Tikhonov regularization; alternative definitions of $\dmmd$ could be obtained by using other regularizing operators, such as Showalter regularization in \cite{engl1996regularization}. In this paper, we primarily focus on Tikhonov regularization and leave other types of regularization for future work. 


One of the stated purposes of the $\dmmd$ is to retain the computational benefits of the MMD, which are crucial for its use in particle algorithms for generative modeling. 
To this end,
we provide an alternative representation of $\dmmd$ which does not involve the density ratio $ \frac{ d \mu }{ d \pi } $ directly, but only kernel expectations.

\begin{prop}[Density ratio--free and variational formulations]\label{prop:drmmd_representation_no_ratio}
$\dmmd$ can be alternately represented as 
\begin{align} 
\dmmd(\mu || \pi) &= ( 1+\lambda) \left\|\left(\Sigma_\pi + \lambda \Id\right)^{-\frac{1}{2}}\left(m_\mu - m_\pi \right) \right\|_{\mathcal{H}}^2 \label{eq:drmmd_representation_no_ratio} \\
&= (1+\lambda) \sup _{h \in \mathcal{H}}\left\{\int h \mathrm{~d} \mu-\int\left(\frac{h^2}{4}+h\right) \mathrm{d} \pi-\frac{\lambda}{4}\|h\|_{\mathcal{H}}^2\right\}
\label{eq:drmmd_variational}
\end{align}
    with $h_{\mu, \pi}^\ast = 2\left(\Sigma_{\pi} + \lambda \Id\right)^{-1} (m_{\mu} - m_{\pi})$ being the witness function. 
\end{prop}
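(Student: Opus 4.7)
The plan is to prove both formulations by exploiting the standard intertwining between the integral operator $\mathcal{T}_\pi=\iota_\pi \iota_\pi^\ast$ on $L^2(\pi)$ and the covariance operator $\Sigma_\pi=\iota_\pi^\ast \iota_\pi$ on $\mathcal H$; this is the operator-theoretic analogue of an SVD and it turns the awkward $L^2(\pi)$ expression in \Cref{def:chard} into an explicit quadratic form in $\mathcal H$ involving only mean embeddings.

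First I would establish the bridge $\iota_\pi^\ast\!\bigl(\tfrac{d\mu}{d\pi}-1\bigr)=m_\mu-m_\pi$, which is immediate from the definition of $\iota_\pi^\ast$ and the change of variables $\int k(x,\cdot)\tfrac{d\mu}{d\pi}(x)\,d\pi(x)=\int k(x,\cdot)\,d\mu(x)$. Then, using the Mercer representation in \eqref{eq:mercer}, write $r\defines\tfrac{d\mu}{d\pi}-1=\sum_{i\ge 1} a_i e_i$ in the eigenbasis of $\mathcal T_\pi$ (eigenvalues $\varrho_i$), and set $\phi_i\defines \varrho_i^{-1/2}\iota_\pi^\ast e_i$, which forms an orthonormal system in $\mathcal H$ diagonalizing $\Sigma_\pi$ with the same eigenvalues $\varrho_i$ (this is the standard SVD pair). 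A direct computation then shows
\begin{equation*}
\bigl\|\bigl((\mathcal T_\pi+\lambda\Id)^{-1}\mathcal T_\pi\bigr)^{1/2} r\bigr\|_{L^2(\pi)}^2=\sum_{i\ge 1}\frac{\varrho_i}{\varrho_i+\lambda}a_i^2=\bigl\|(\Sigma_\pi+\lambda\Id)^{-1/2}\iota_\pi^\ast r\bigr\|_{\mathcal H}^2,
\end{equation*}
since $\iota_\pi^\ast r=\sum_i a_i\sqrt{\varrho_i}\,\phi_i$. Combining with the bridge identity and multiplying by $1+\lambda$ yields \eqref{eq:drmmd_representation_no_ratio}.

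For the variational formulation I would rewrite the objective using the reproducing property and the integral identity $\int h^2\,d\pi=\langle h,\Sigma_\pi h\rangle_{\mathcal H}$ (which follows by expanding $h(x)=\langle h,k(x,\cdot)\rangle_{\mathcal H}$ and using Fubini to obtain $\Sigma_\pi=\int k(\cdot,x)\otimes k(\cdot,x)\,d\pi(x)$):
\begin{equation*}
J(h)\defines\int h\,d\mu-\int\Bigl(\tfrac{h^2}{4}+h\Bigr)d\pi-\tfrac{\lambda}{4}\|h\|_{\mathcal H}^2=\langle h,m_\mu-m_\pi\rangle_{\mathcal H}-\tfrac{1}{4}\langle h,(\Sigma_\pi+\lambda\Id)h\rangle_{\mathcal H}.
\end{equation*}
This is a strictly concave quadratic on $\mathcal H$ (since $\Sigma_\pi+\lambda\Id\succeq\lambda\Id$ is strictly positive for $\lambda>0$), so the unique maximizer is found by setting the Fr\'echet derivative to zero, giving $h^\ast=2(\Sigma_\pi+\lambda\Id)^{-1}(m_\mu-m_\pi)$. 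Plugging back in and using $(\Sigma_\pi+\lambda\Id)h^\ast=2(m_\mu-m_\pi)$ collapses $J(h^\ast)$ to $\|(\Sigma_\pi+\lambda\Id)^{-1/2}(m_\mu-m_\pi)\|_{\mathcal H}^2$, matching \eqref{eq:drmmd_representation_no_ratio} up to the factor $1+\lambda$.

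The only subtle step is justifying the eigenbasis manipulation: one needs $r\in L^2(\pi)$ (ensured by hypothesis), the absolute convergence of the Mercer series (granted by \Cref{assumption:universal} via \cite[Corollary~3.5]{steinwart2012mercer}), and the fact that $\phi_i$ is a valid orthonormal system in $\mathcal H$---this uses $\langle\phi_i,\phi_j\rangle_{\mathcal H}=(\varrho_i\varrho_j)^{-1/2}\langle \iota_\pi^\ast e_i,\iota_\pi^\ast e_j\rangle_{\mathcal H}=(\varrho_i\varrho_j)^{-1/2}\langle e_i,\mathcal T_\pi e_j\rangle_{L^2(\pi)}=\delta_{ij}$. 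Once this bookkeeping is in place, the rest is routine functional calculus on the pair $(\mathcal T_\pi,\Sigma_\pi)$; no deep argument is needed.
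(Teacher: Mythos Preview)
Your proof is correct, but for the first identity \eqref{eq:drmmd_representation_no_ratio} you take a different route than the paper. You work in the Mercer eigenbasis, expanding $r=\frac{d\mu}{d\pi}-1$ as $\sum_i a_i e_i$ and computing both sides coordinate-wise as $\sum_i\frac{\varrho_i}{\varrho_i+\lambda}a_i^2$. The paper instead uses a coordinate-free operator identity: starting from $(1+\lambda)\|(\Sigma_\pi+\lambda\Id)^{-1/2}\iota_\pi^\ast r\|_{\mathcal H}^2$, it expands the squared norm as $\langle\iota_\pi(\iota_\pi^\ast\iota_\pi+\lambda\Id)^{-1}\iota_\pi^\ast r,r\rangle_{L^2(\pi)}$ and then invokes the intertwining relation $\iota_\pi(\iota_\pi^\ast\iota_\pi+\lambda\Id)^{-1}\iota_\pi^\ast=\iota_\pi\iota_\pi^\ast(\iota_\pi\iota_\pi^\ast+\lambda\Id)^{-1}=\mathcal T_\pi(\mathcal T_\pi+\lambda\Id)^{-1}$, landing directly on the definition. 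The paper's route is slicker and avoids the bookkeeping you flag at the end (Mercer convergence, orthonormality of $\{\phi_i\}$, handling the possible cokernel of $\Sigma_\pi$ in $\mathcal H$); your route is more hands-on and makes the interpolating spectral weights $\varrho_i/(\varrho_i+\lambda)$ explicit, which is pedagogically useful. For the variational part \eqref{eq:drmmd_variational} the two proofs coincide: both rewrite $J(h)$ as a quadratic in $\mathcal H$ via $\int h^2\,d\pi=\langle h,\Sigma_\pi h\rangle_{\mathcal H}$ and $\int h\,d(\mu-\pi)=\langle h,m_\mu-m_\pi\rangle_{\mathcal H}$; the paper completes the square while you set the Fr\'echet derivative to zero, but these are the same computation.
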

The proof is in \Cref{appsec:proof_chard_also}.
The density ratio-free representation \eqref{eq:drmmd_representation_no_ratio} contains three expectations under $ \mu $ and $ \pi$: the mean embeddings $ m_{\mu} $ and $ m_{\pi} $, and the covariance operator $\Sigma_{\pi}$. 
Given samples $ \{ x_i \}_{i=1}^{M} \sim \mu $ and $ \{ y_i \}_{i=1}^{N} \sim \pi $, we can construct a plug-in finite sample estimator in \eqref{appeq:dmmd_samples} by replacing $ \mu $ and $ \pi $ with their empirical counterparts: $ \hat{\mu} = \frac{1}{M} \sum_{i=1}^{M} \delta_{x_i} $ and $ \hat{\pi} = \frac{1}{N} \sum_{i=1}^{N} \delta_{y_i} $. 

The density ratio-free representation \eqref{eq:drmmd_representation_no_ratio} frames $\dmmd$ as acting on the difference of $m_\mu $ and $m_\pi$ similarly to MMD. 
In fact, up to a multiplicative factor $(1+\lambda)$, $\dmmd$ is $\mmd$ computed with respect to another kernel $ \tilde{k} $ defined as 
\begin{align}\label{eq:tilde_k}
    \tilde{k}(x, x^\prime) = \left\langle \left(\Sigma_\pi + \lambda \Id\right)^{-\frac{1}{2} } k(x,\cdot),  \left(\Sigma_\pi + \lambda \Id\right)^{-\frac{1}{2} } k(x^\prime, \cdot) \right\rangle_{\calH}.
\end{align}
The derivations are provided in \Cref{appsec:drmmd_is_mmd}. The kernel $\tilde{k}$ is symmetric and positive semi-definite by construction, and its associated reproducing kernel Hilbert space is $\tilde{\calH}$. The density ratio-free representation of $\dmmd$ in \Cref{prop:drmmd_representation_no_ratio} is already known in~\citep{balasubramanian2017optimality, eric2007testing,hagrass2022spectral,hagrass2023spectralgof} in the context of non-parametric hypothesis testing.

\subsection{Properties of $\dmmd$}
In this section, we establish various properties of $\dmmd$. 
As discussed earlier, $\dmmd$ is constructed to interpolate between $\chi^{2}$-divergence and MMD to exploit the advantages associated with each. The following result formalizes the interpolation property.
\begin{prop}[Interpolation property]\label{prop:chard_interpolation}
Let $\mu, \pi \in \mathcal{P}_2(\mathbb{R}^d)$. 
If \Cref{assumption:universal} holds and $\frac{d \mu}{d \pi} -1 \in L^2(\pi)$, then
\begin{align*}
    \lim _{\lambda \rightarrow 0} \dmmd(\mu \| \pi) = \chi^2(\mu \| \pi), \quad\text{and}\quad \lim _{\lambda \rightarrow \infty} \dmmd(\mu \| \pi) =\MMD^2(\mu \| \pi). 
\end{align*}
\end{prop}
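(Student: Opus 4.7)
The plan is to reduce both limits to termwise limits of a single scalar series obtained by spectrally diagonalizing the operator $(\mathcal T_\pi+\lambda\Id)^{-1}\mathcal T_\pi$. Since this operator shares the orthonormal eigenbasis $\{e_i\}_{i\geq 1}$ of $\mathcal T_\pi$ in $L^2(\pi)$ (guaranteed by the Mercer representation under \Cref{assumption:universal}), with eigenvalues $\varrho_i/(\varrho_i+\lambda)$, expanding $f\defines\frac{d\mu}{d\pi}-1\in L^2(\pi)$ as $f=\sum_{i\geq 1}f_i e_i$ with $f_i=\langle f,e_i\rangle_{L^2(\pi)}$ yields the closed-form series
\begin{equation*}
    \dmmd(\mu\|\pi)=\sum_{i\geq 1}(1+\lambda)\frac{\varrho_i}{\varrho_i+\lambda}\,f_i^2.
\end{equation*}

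Before passing to the limits, I would verify that $\varrho_i>0$ for every $i$, so that the pointwise limit $\varrho_i/(\varrho_i+\lambda)\to 1$ as $\lambda\to 0$ actually holds (and not just for indices outside a null subspace). This follows from $c_0$-universality: $\mathcal H$ is dense in $L^2(\pi)$, so $\iota_\pi$ has dense range, hence its adjoint $\iota_\pi^\ast$ is injective; since $\mathcal T_\pi=\iota_\pi\iota_\pi^\ast$ has the same kernel as $\iota_\pi^\ast$ (by the identity $\langle \mathcal T_\pi f,f\rangle_{L^2(\pi)}=\|\iota_\pi^\ast f\|_{\mathcal H}^2$), the operator $\mathcal T_\pi$ is injective and all eigenvalues are strictly positive.

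With the series representation in hand, both limits become applications of dominated convergence to $\sum_i w_i(\lambda)f_i^2$ with $w_i(\lambda)\defines(1+\lambda)\varrho_i/(\varrho_i+\lambda)$. For the $\lambda\to 0$ limit, observe that $w_i(\lambda)\to 1$ pointwise and $w_i(\lambda)\leq 1+\lambda\leq 2$ for $\lambda\in(0,1]$; the summable dominator is $2\sum_i f_i^2=2\,\chi^2(\mu\|\pi)<\infty$, so the sum converges to $\sum_i f_i^2=\chi^2(\mu\|\pi)$. For the $\lambda\to\infty$ limit, rewrite $w_i(\lambda)=\varrho_i\cdot\frac{1+\lambda}{\varrho_i+\lambda}$, which converges pointwise to $\varrho_i$ and is bounded by $2\varrho_i$ whenever $\lambda\geq 1$; the summable dominator is $2\sum_i \varrho_i f_i^2=2\,\mmd^2(\mu\|\pi)<\infty$ by \Cref{prop:mmd_chi2}, and the limit is $\sum_i\varrho_i f_i^2=\mmd^2(\mu\|\pi)$, again by \Cref{prop:mmd_chi2}.

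I do not anticipate a real obstacle here; the only subtle point worth flagging is the verification that $\mathcal T_\pi$ has trivial kernel (so that the Mercer basis legitimately spans all of $L^2(\pi)$ as a genuine orthonormal basis), since otherwise the $\lambda\to 0$ limit could a priori miss a component of $f$ lying in $\ker\mathcal T_\pi$. Everything else is a routine dominated convergence argument with an explicit $\lambda$-independent dominator.
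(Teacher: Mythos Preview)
Your proof is correct. For the $\lambda\to 0$ limit your argument is essentially the same as the paper's: both use dominated convergence with the dominator $(1+\lambda)\|f\|_{L^2(\pi)}^2$, and your spectral-series version just makes the DCT application explicit termwise. The paper already records (in the background section) that under \Cref{assumption:universal} the Mercer system $\{e_i\}$ is a genuine orthonormal basis of $L^2(\pi)$, so your kernel-triviality check is already in place.

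Where you genuinely diverge from the paper is the $\lambda\to\infty$ limit. You treat it with the same device---termwise DCT on the series $\sum_i w_i(\lambda)f_i^2$ with dominator $2\varrho_i f_i^2$---whereas the paper instead works in the density-ratio--free form \eqref{eq:drmmd_representation_no_ratio} and derives the two-sided operator-norm sandwich
\[
\frac{1+\lambda}{K+\lambda}\,\mmd^2(\mu\|\pi)\ \leq\ \dmmd(\mu\|\pi)\ \leq\ \frac{1+\lambda}{\lambda}\,\mmd^2(\mu\|\pi),
\]
from $\|(\Sigma_\pi+\lambda\Id)^{-1/2}\|_{op}\leq \lambda^{-1/2}$ and $\|(\Sigma_\pi+\lambda\Id)^{1/2}\|_{op}\leq (K+\lambda)^{1/2}$, then sends $\lambda\to\infty$. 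Your route is more elementary and pleasantly unified (one mechanism for both limits). The paper's route buys a non-asymptotic quantitative equivalence between $\dmmd$ and $\mmd^2$ that it immediately reuses in the proof of \Cref{prop:topology_chard} to show $\dmmd$ metrizes weak convergence; your argument, as stated, does not produce that byproduct.
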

\Cref{prop:chard_interpolation}, whose proof can be found in \Cref{appsec:proof_chard_interpolation}, shows that $\dmmd$ asymptotically becomes a probability divergence in the small and large $ \lambda $ regimes.
We seek to use $\dmmd$ as a minimizing objective in generative modeling algorithms; however, we need to ensure that $\dmmd$ is a probability divergence for \emph{any} fixed value of $\lambda$. This result holds, as shown next.

\begin{prop}[$\dmmd$ is a probability divergence]\label{prop:topology_chard}
Under \Cref{assumption:universal}, 
for any $ \lambda \in \left( 0, \infty \right ) $, $\dmmd$ is a probability divergence, i.e., $\dmmd (\mu \| \pi) \geq  0 $, with  equality iff $ \mu = \pi $.
Moreover, $\dmmd$ metrizes the weak topology between probability measures, i.e., $ \dmmd (\mu_n \| \pi) \to 0 $ iff $\mu_n $ converges weakly to $\pi \in \cP_2(\R^d)$  as $n\rightarrow\infty$.
\end{prop}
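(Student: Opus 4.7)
The plan is to work directly from the density-ratio--free representation in \Cref{prop:drmmd_representation_no_ratio}, namely
\[
\dmmd(\mu \| \pi) = (1+\lambda)\,\bigl\|(\Sigma_\pi + \lambda \Id)^{-1/2}(m_\mu - m_\pi)\bigr\|_{\calH}^2,
\]
which under \Cref{assumption:universal} is well-defined and finite for every $\mu, \pi \in \cP_2(\R^d)$ (in particular $\|m_\mu - m_\pi\|_\calH \leq 2\sqrt{K}$ by the bound $\sup_x k(x,x) \leq K$). Both claims will then follow from a simple spectral analysis of the operator $\Sigma_\pi + \lambda \Id$ on $\calH$.

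For the divergence property, non-negativity is immediate from the squared-norm form. Under \Cref{assumption:universal}, $\Sigma_\pi$ is bounded, self-adjoint, and positive semi-definite with $\|\Sigma_\pi\|_{\mathrm{op}} \leq K$, since $\langle \Sigma_\pi f, f \rangle_{\calH} = \int f(x)^2\, d\pi(x) \leq K \|f\|^2_{\calH}$ by the reproducing property. Consequently $\Sigma_\pi + \lambda \Id$ has spectrum contained in $[\lambda,\,\lambda + K]$, so its inverse square root is a bounded, positive definite, hence injective operator on $\calH$. Therefore $\dmmd(\mu \| \pi) = 0$ forces $m_\mu = m_\pi$, and the $c_0$-universality of $k$---which ensures the mean embedding $\mu \mapsto m_\mu$ is injective on Borel probability measures---then yields $\mu = \pi$.

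For the weak topology claim, the plan is to sandwich $\dmmd$ between two positive multiples of $\mmd^2$. The spectral bounds above yield the operator inequalities $(K+\lambda)^{-1} \Id \preceq (\Sigma_\pi + \lambda \Id)^{-1} \preceq \lambda^{-1} \Id$ on $\calH$, which when applied to $m_\mu - m_\pi$ give
\[
\frac{1+\lambda}{K+\lambda}\,\mmd^2(\mu \| \pi) \;\leq\; \dmmd(\mu \| \pi) \;\leq\; \frac{1+\lambda}{\lambda}\,\mmd^2(\mu \| \pi).
\]
For any fixed $\lambda \in (0,\infty)$ both multiplicative constants are strictly positive and finite, so $\dmmd(\mu_n \| \pi) \to 0$ if and only if $\mmd(\mu_n \| \pi) \to 0$, and the conclusion follows from the known result~\citep{sriperumbudur2010hilbert} that for a bounded $c_0$-universal kernel the $\mmd$ metrizes the weak topology on $\cP_2(\R^d)$. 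I do not anticipate a substantive obstacle; the only step requiring care is the lower bound in the sandwich, where the uniform control $\|\Sigma_\pi\|_{\mathrm{op}} \leq K$---and hence the bound $\sup_x k(x,x) \leq K$ in \Cref{assumption:universal}---is what allows a vanishing $\dmmd$ to be converted back into a vanishing $\mmd$.
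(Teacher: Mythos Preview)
Your proposal is correct and follows essentially the same approach as the paper: both use the injectivity of $(\Sigma_\pi+\lambda\Id)^{-1/2}$ together with the characteristic property of $c_0$-universal kernels for the divergence part, and both derive the identical sandwich $\frac{1+\lambda}{K+\lambda}\,\mmd^2 \le \dmmd \le \frac{1+\lambda}{\lambda}\,\mmd^2$ (the paper obtains these bounds in the proof of \Cref{prop:chard_interpolation} via operator-norm estimates, which is the same as your spectral argument) and then invoke the known metrization of weak convergence by $\mmd$.
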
 
As MMD with $c_0$-universal kernels metrizes the weak convergence of distributions~\citep{sriperumbudur2016optimal, simon2023metrizing}, \Cref{prop:topology_chard}, whose proof can be found in \Cref{appsec:proof_of_topology_chard}, shows that for any $ \lambda > 0 $, $\dmmd$ is ``MMD--like'' topologically speaking, and is different from the $\chi^2$-divergence which induces a strong topology~\citep{agrawal2021optimal}. 
\begin{rem}
$\dmmd$ is a specific case of a so-called ``Moreau envelopes of  $f$-divergences in reproducing kernel Hilbert spaces'' introduced in \cite{neumayer2024wasserstein}, when the $f$-divergence is taken as the $\chi^2$-divergence. This connection is uncovered by the variational formulation of $\dmmd$ in \eqref{eq:drmmd_variational}. 
In contrast to general $f$-divergences, the Moreau envelope of the $\chi^2$-divergence enjoys a closed-form expression, as we highlight in this paper with various analytical formulas for $\dmmd$. 
The interpolation property (\Cref{prop:chard_interpolation}) and the metrization of weak convergence (\Cref{prop:topology_chard}) are proved concurrently in Corollaries 12 and 13 of \cite{neumayer2024wasserstein}, relying on formulation via Moreau envelopes.
In our case, we use direct computations thanks to the closed form of $\dmmd$.\footnote{We would like to clarify that \cite{neumayer2024wasserstein} appeared on arxiv when our paper was already under review at ICML 2024.}
\end{rem}

\section{Wasserstein Gradient Flow of $\dmmd$}\label{sec:chard_flow}
Having introduced the $\dmmd$ in the previous section, we now construct and analyze its \emph{Wasserstein Gradient Flow} (WGF). 
As discussed in \Cref{sec:background}, WGFs define dynamics $ (\mu_t)_{t \geq 0} $ in Wasserstein-2 space that minimize a given functional $ \mathcal  F $ by transporting $ \mu_t $ in the direction of steepest descent, given by the \emph{Fréchet subdifferential} of $ \mathcal  F $ evaluated at $ \mu_t $. 
Given some target distribution $ \pi $ from which we wish to sample, the WGF of $\calF_{\dmmd}(\cdot) = \dmmd(\cdot \| \pi)$, called $\dmmd$ flow, has the potential to form the basis of a generative modeling algorithm, since $ \mu_t $ progressively minimizes its distance (in the DrMMD sense) to the target $ \pi$. 

To fulfill this potential, two additional ingredients are necessary. The first is to formally establish that $ \mu_t $ reaches the global minimizer $ \pi $, and 
the second is to design a tractable finite-sample algorithm that inherits the convergence properties of the original $\dmmd$ flow. 
We defer the second point to \Cref{sec:chard_particle_flow} and focus in this section on showing how $\dmmd$ benefits from its interpolation towards $\chi^2$-divergence such that the $\dmmd$ flow achieves near-global convergence for a large class of target distributions.

\subsection{$\dmmd$ flow: Definition, existence, and uniqueness}
To prove that the $\dmmd$ flow is well-defined and admits solutions, the key is to show that the $\calF_{\dmmd}$ admits Fréchet subdifferentials, as formalized in the following proposition.
\begin{prop}[$\dmmd$ gradient flow]\label{prop:chard_gradient_flow}
Let $\lambda>0$, and $\mu_0, \pi \in \mathcal{P}_2\left(\R^d\right)$. Under \Cref{assumption:universal} and \ref{assumption:bounded_kernel}, the functional $\calF_{\dmmd}$ admits Fréchet subdifferential of the form $(1 + \lambda) \nabla h_{\mu_t, \pi}^\ast$, where $ h_{\mu_t, \pi}^\ast $ is the witness function defined in \Cref{prop:drmmd_representation_no_ratio}. Consequently, the $\dmmd$ flow is well-defined and is the solution to the following equation
\begin{align}\label{eq:cont_eqn}
   \partial_t \mu_t - \nabla \cdot \left(\mu_t(1+\lambda) \nabla  h_{\mu_t, \pi}^\ast \right)=0.
\end{align}
In addition, the $\dmmd$ flow starting at $\mu_0$ is unique because $\calF_{\dmmd}$ is 
semiconvex, i.e., for any $\phi \in C_c^\infty(\R^d)$ and $\mu \in \calP_2(\R^d)$,
\begin{align}\label{eq:dmmd_semi_convex}
    \left| \left\langle \operatorname{Hess} {\calF_{\dmmd} }_{\mid \mu} \nabla \phi, \nabla \phi \right\rangle_{L^2(\mu)} \right| \leq 2(1+\lambda) \frac{ 2 \sqrt{ K K_{2d}} + K_{1d} }{\lambda} \|\nabla \phi \|_{L^2(\mu)}^2 .
\end{align}
\end{prop}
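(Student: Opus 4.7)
The plan is to derive all three claims---the form of the subdifferential, well-posedness of \eqref{eq:cont_eqn}, and the semiconvexity estimate \eqref{eq:dmmd_semi_convex}---from direct first- and second-order variational calculations starting from the density-ratio-free form \eqref{eq:drmmd_representation_no_ratio}. Writing $A := (\Sigma_\pi + \lambda\Id)^{-1/2}$, the essential analytic inputs are the operator-norm bounds $\|A\|_{\mathrm{op}} \leq \lambda^{-1/2}$ and $\|A^2\|_{\mathrm{op}} \leq \lambda^{-1}$, together with the kernel boundedness and derivative bounds of \Cref{assumption:universal}--\ref{assumption:bounded_kernel}.

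For the Fréchet subdifferential, I would compute the first variation along $\mu_\epsilon := (\Id + \epsilon \nabla\phi)_\#\mu$ for $\phi \in C_c^\infty(\R^d)$. Under \Cref{assumption:bounded_kernel} and the compact support of $\phi$, dominated convergence lets me differentiate $m_{\mu_\epsilon} = \int k(x + \epsilon\nabla\phi(x),\cdot)\,d\mu(x)$ under the integral to obtain $\dot m := \frac{d}{d\epsilon}\big|_{\epsilon=0} m_{\mu_\epsilon} = \int \nabla_1 k(x,\cdot)\cdot \nabla\phi(x)\,d\mu(x)$, and using self-adjointness of $A^2$ on $\calH$ yields
\begin{equation*}
\frac{d}{d\epsilon}\bigg|_{\epsilon=0}\dmmd(\mu_\epsilon\|\pi) = 2(1+\lambda)\langle A^2(m_\mu - m_\pi),\,\dot m\rangle_\calH.
\end{equation*}
The reproducing property $\langle g, \partial_i k(x,\cdot)\rangle_\calH = \partial_i g(x)$ applied to $g := A^2(m_\mu - m_\pi) \in \calH$ then exchanges the $\calH$-inner product with the $\mu$-integral, rewriting the right-hand side as $(1+\lambda)\int \langle \nabla h_{\mu,\pi}^*(x), \nabla\phi(x)\rangle\,d\mu(x)$ with $h_{\mu,\pi}^* = 2A^2(m_\mu - m_\pi)$, identifying $(1+\lambda)\nabla h_{\mu,\pi}^*$ as the Fréchet subdifferential of $\calF_{\dmmd}$ at $\mu$.

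For \eqref{eq:dmmd_semi_convex}, I would expand to second order along a constant-speed geodesic $\rho_t = (\Id + t\nabla\phi)_\# \mu$ and differentiate $t \mapsto \dmmd(\rho_t\|\pi) = (1+\lambda)\|A(m_{\rho_t} - m_\pi)\|_\calH^2$ twice, obtaining at $t = 0$
\begin{equation*}
\langle \operatorname{Hess} {\calF_{\dmmd}}_{\mid \mu} \nabla\phi, \nabla\phi\rangle_{L^2(\mu)} = 2(1+\lambda)\|A\dot m\|_\calH^2 + 2(1+\lambda)\langle A^2(m_\mu - m_\pi),\,\ddot m\rangle_\calH,
\end{equation*}
where $\ddot m := \int \nabla\phi(x)^\top \mathbf{H}_1 k(x,\cdot)\nabla\phi(x)\,d\mu(x)$. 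For the first summand, Jensen's inequality for $\|\cdot\|_\calH^2$ combined with $\|A\|_{\mathrm{op}}^2 \leq \lambda^{-1}$ and the assumption $\|\nabla_1 k(x,\cdot)\|_{\calH^d} \leq \sqrt{K_{1d}}$ yields an upper bound $2(1+\lambda) K_{1d}\lambda^{-1}\|\nabla\phi\|_{L^2(\mu)}^2$. For the second, Cauchy--Schwarz in $\calH$, the estimate $\|A^2(m_\mu - m_\pi)\|_\calH \leq 2\sqrt{K}/\lambda$ (using $\|A^2\|_{\mathrm{op}} \leq \lambda^{-1}$ and $\|m_\mu\|_\calH,\|m_\pi\|_\calH \leq \sqrt{K}$), and $\|\mathbf{H}_1 k(x,\cdot)\|_{\calH^{d\times d}} \leq \sqrt{K_{2d}}$ give $4(1+\lambda)\sqrt{K K_{2d}}\,\lambda^{-1}\|\nabla\phi\|_{L^2(\mu)}^2$. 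Summing and taking absolute values matches the claimed bound $2(1+\lambda)(2\sqrt{KK_{2d}} + K_{1d})/\lambda$.

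Given the explicit subdifferential and the semiconvexity estimate, existence of a solution to \eqref{eq:cont_eqn} and its uniqueness follow from the standard theory of \cite{ambrosio2005gradient} for Wasserstein gradient flows of semiconvex functionals, combined with the lower semi-continuity of $\calF_{\dmmd}$ implied by \Cref{prop:topology_chard}. The main obstacle is the second-order geodesic calculation: one must carefully track two $\calH$-valued derivatives of the $\mu$-dependent embedding $m_{\rho_t}$, then commute the $\calH$-inner product with $A^2$ and with $\mu$-integrals so that each constant scales tightly as $\lambda^{-1}$. The $\lambda^{-1}$ blowup in the final bound is precisely the price of the de-regularization operator $A$---it is what converts the non-positivity of the MMD Wasserstein Hessian into a finite semiconvexity constant, and formally recovers the MMD Wasserstein Hessian bound of \cite{arbel2019maximum} in the limit $\lambda \to \infty$.
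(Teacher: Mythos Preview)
Your approach is correct and closely parallels the paper's, with a presentational difference: the paper routes the computation through the regularized kernel $\tilde{k}(x,y) = \langle A k(x,\cdot), A k(y,\cdot)\rangle_\calH$ (viewing $\dmmd$ as $(1+\lambda)\,\mmd^2_{\tilde{k}}$ and invoking existing first/second-derivative formulas for MMD), whereas you work directly with the operator $A = (\Sigma_\pi+\lambda\Id)^{-1/2}$ acting on the mean-embedding difference. The two are equivalent term-by-term---your $2(1+\lambda)\|A\dot m\|_\calH^2$ is the paper's $\nabla_1\nabla_2\tilde{k}$ double integral, and your $2(1+\lambda)\langle A^2(m_\mu-m_\pi),\ddot m\rangle$ is the paper's $\bH_1\tilde{k}$ term---and the bounds you obtain match the paper's exactly. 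Your route is arguably more self-contained since it avoids introducing $\tilde{k}$ and its derivative calculus as auxiliary objects.

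One point where you are less explicit than the paper: computing the first variation along $\mu_\epsilon$ gives only a \emph{candidate} gradient; to show $(1+\lambda)\nabla h_{\mu,\pi}^*$ is a Fr\'echet subdifferential in the sense of \cite[Definition~10.1.1]{ambrosio2005gradient} one must verify the inequality $\calF_{\dmmd}((\Id+\nabla\phi)_\#\mu)-\calF_{\dmmd}(\mu) \geq (1+\lambda)\int\langle\nabla h_{\mu,\pi}^*,\nabla\phi\rangle\,d\mu + o(\|\nabla\phi\|_{L^2(\mu)})$. The paper handles this by Taylor-expanding $g(t)=\dmmd(\rho_t\|\pi)$ to second order and bounding $g''(\kappa)$ for $\kappa\in(0,1)$, not just $g''(0)$. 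Your Hessian bound closes this gap immediately---the estimates on $\|A\dot u(t)\|_\calH$ and $\langle A^2 u(t),\ddot u(t)\rangle_\calH$ hold uniformly in $t$ since the kernel derivative bounds are uniform in $x$---but you should state this explicitly rather than stopping at $t=0$.
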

The proof is in \Cref{appsec:proof_chard_gradient_flow}.
By recalling the discussion of Wasserstein Hessian in \Cref{sec:wass}, \eqref{eq:dmmd_semi_convex} indicates that  $\calF_{\dmmd}$ is both \emph{geodesically smooth} and \emph{geodesically semiconvex}, which is expected because $\dmmd$ is equivalent to $\mmd$ with a regularized kernel $\tilde{k}$ defined in \eqref{eq:tilde_k} and $\calF_{\mmd}$ is both \emph{geodesically smooth} and \emph{geodesically semiconvex}~\citep[Proposition 5]{arbel2019maximum}.
\Cref{prop:chard_gradient_flow} is proved concurrently in Corollaries 14 and 20 of \cite{neumayer2024wasserstein} relying on formulation via Moreau envelopes, while our proof uses the closed-form expression for $\dmmd$.

\subsection{Near-global convergence of $\dmmd$ flow}
Having defined the $\dmmd$ flow, we are now concerned with its convergence to the target $ \pi $. 
Since $\dmmd$ is constructed to interpolate between the $\mmd$ and the $ \chi^2 $-divergence, $\dmmd$ flow is expected to recover the convergence properties of the $\chi^2$ flow.
With this goal in mind, we first study the Wasserstein Hessian of $\dmmd$ and prove that it becomes \emph{asymptotically} positive as $ \lambda \to 0$ for strongly log-concave targets $\pi$. 
Next, to obtain a \emph{non-asymptotic} convergence rate, we take another route and show that the $\dmmd$ flow converges to $ \pi $ exponentially fast in KL divergence up to a barrier term that vanishes in the small $ \lambda $ regime, provided that $ \pi $ satisfies a Poincar\'{e} inequality.

\subsubsection{Near-Geodesic Convexity of $\calF_{\dmmd}$}\label{sec:near-geodesic-convexity}
One popular approach to proving that the $\dmmd$ flow $(\mu_t)_{t \geq 0}$ converges to the target $\pi$ in terms of $\dmmd$ is to show that $\calF_{\dmmd}$ is \emph{geodesically convex}~\citep[Theorem 4.0.4]{ambrosio2005gradient}, or, equivalently in our definition in \Cref{sec:wass}, its Wasserstein Hessian is positive definite. 
In the next proposition, we show that the Wasserstein Hessian of $\calF_{\dmmd}$ is indeed positive definite for small enough $\lambda$; however, as we will see below (\cref{lambda_difficult}), the form of the result will not allow us to show convergence besides in the limit $\lambda \rightarrow 0$, which leads us to take a different approach in subsequent sections.
\begin{prop}[Near-geodesic convexity of $\calF_{\dmmd}$] \label{prop:geometric_prop}
Let $\mu, \pi \in \calP_2(\R^d)$, $\mu, \pi \ll \calL^d$ and $\phi \in C_c^\infty(\R^d)$. 
Under \Cref{assumption:universal} and \ref{assumption:bounded_kernel}, 
let $\pi$ be $\alpha$ strongly log-concave, i.e., $\pi \propto \exp(-V)$, $\bH V \succeq \alpha \Id$, and assume additionally that $x \mapsto \bH V(x)$ is continuous. Then for all $ \mu $ such that $ x \mapsto \nabla \log \mu(x)$ is continuous and $\frac{d \mu}{d \pi} -1 \in \mathcal  H$, 
\begin{align}\label{eq:hessian_lower_bound}
    \left\langle \operatorname{Hess} {\calF_{\dmmd}}_{\mid \mu} \nabla \phi, \nabla \phi \right\rangle_{L^2(\mu)} \geq \alpha (1 + \lambda) \int \frac{d\mu}{d \pi}(x) \|\nabla \phi(x) \|^2  d \mu(x) - R(\lambda, \mu, \nabla \phi), 
\end{align}
where $\lim_{\lambda \to 0} R(\lambda, \mu, \nabla \phi) = 0$. 
\end{prop}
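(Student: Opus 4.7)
The plan is to exploit a decomposition of $\calF_{\dmmd}$ into a rescaled $\calF_{\chi^2}$ minus a correction vanishing with $\lambda$, and then combine this with the displacement convexity of $\chi^2$ for log-concave targets. Starting from the operator identity $(\calT_\pi + \lambda\Id)^{-1}\calT_\pi = \Id - \lambda(\calT_\pi + \lambda\Id)^{-1}$, applied inside the representation of \Cref{def:chard} and combined with \Cref{prop:mmd_chi2}, I would first establish
\begin{equation*}
\calF_{\dmmd}(\mu) \,=\, (1+\lambda)\,\calF_{\chi^2}(\mu) \,-\, (1+\lambda)\lambda\, G_\lambda(\mu), \qquad G_\lambda(\mu) := \bigl\|(\calT_\pi + \lambda\Id)^{-1/2}(\tfrac{d\mu}{d\pi} - 1)\bigr\|_{L^2(\pi)}^2.
\end{equation*}
Differentiating twice along the constant-speed geodesic $\rho_t = (\Id + t\nabla\phi)_\# \mu$ at $t=0$ then produces the Hessian identity
\begin{equation*}
\langle \operatorname{Hess}\calF_{\dmmd}|_\mu \nabla\phi, \nabla\phi\rangle_{L^2(\mu)} \,=\, (1+\lambda)\langle \operatorname{Hess}\calF_{\chi^2}|_\mu \nabla\phi, \nabla\phi\rangle_{L^2(\mu)} - R(\lambda, \mu, \nabla\phi),
\end{equation*}
where $R(\lambda, \mu, \nabla\phi) := (1+\lambda)\lambda\,\langle \operatorname{Hess} G_\lambda|_\mu \nabla\phi, \nabla\phi\rangle_{L^2(\mu)}$.

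For the first term, I would compute $\tfrac{d^2}{dt^2}\big|_{t=0}\calF_{\chi^2}(\rho_t)$ directly from the change-of-variables expansion $\tfrac{d\rho_t}{d\pi}(T_t(x)) = \tfrac{d\mu}{d\pi}(x)\,e^{V(T_t(x))-V(x)}/\det(\Id+t\bH\phi(x))$ with $T_t(x) = x + t\nabla\phi(x)$, which is rigorous since $\bH V$ and $\nabla\log\mu$ are continuous and $\nabla\phi \in C_c^\infty$. Expanding the exponential and determinant to second order in $t$, the Wasserstein Hessian of $\calF_{\chi^2}$ splits into the term $\int (\tfrac{d\mu}{d\pi})^2\,\nabla\phi^T \bH V\,\nabla\phi\,d\pi$ plus two manifestly non-negative contributions involving $\|\bH\phi\|_F^2$ and $(\nabla V^T\nabla\phi - \tr\bH\phi)^2$. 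Invoking $\bH V \succeq \alpha\Id$ on the first yields
\begin{equation*}
\langle\operatorname{Hess}\calF_{\chi^2}|_\mu \nabla\phi, \nabla\phi\rangle_{L^2(\mu)} \,\geq\, \alpha \int \tfrac{d\mu}{d\pi}(x)\,\|\nabla\phi(x)\|^2\, d\mu(x),
\end{equation*}
whose product with $(1+\lambda)$ is precisely the advertised leading term.

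The remaining---and main---technical obstacle is showing $R(\lambda, \mu, \nabla\phi) \to 0$ as $\lambda \to 0$. Writing $G_\lambda(\rho_t) = \langle g_{\rho_t,\pi}, (\calT_\pi + \lambda\Id)^{-1} g_{\rho_t,\pi}\rangle_{L^2(\pi)}$ with $g_{\rho_t,\pi} := d\rho_t/d\pi - 1$ and differentiating twice at $t=0$ produces a finite sum of bilinear forms $\langle u, (\calT_\pi + \lambda\Id)^{-1} v\rangle_{L^2(\pi)}$, where $u$ and $v$ are built from $g_{\mu,\pi}$ and its ``Lie derivative'' $\partial_t g_{\rho_t,\pi}|_{t=0}$ (itself expressible in closed form from the same change-of-variables formula). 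The key observation is that the hypothesis $g_{\mu,\pi} \in \calH$ is equivalent, via the Mercer expansion \eqref{eq:mercer} of $\calT_\pi$ with eigenvalues $(\varrho_i)$ and eigenfunctions $(e_i)$, to $\sum_i c_i^2/\varrho_i < \infty$ for the coefficients $c_i := \langle g_{\mu,\pi}, e_i\rangle_{L^2(\pi)}$. Propagating this spectral control through the explicit expressions of $u$ and $v$---the step where the continuity of $\bH V$, of $\nabla\log\mu$, and the compact support of $\nabla\phi$ all intervene---each bilinear form is dominated by $\sum_i a_i^2/(\varrho_i + \lambda)$ with $\sum_i a_i^2/\varrho_i < \infty$. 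Multiplying by $\lambda$ and applying dominated convergence to $\lambda/(\varrho_i + \lambda) \to 0$ (uniformly dominated by $1$) sends the full remainder to $0$, completing the proof.
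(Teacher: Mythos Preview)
Your resolvent-identity decomposition $\calF_{\dmmd} = (1+\lambda)\calF_{\chi^2} - (1+\lambda)\lambda\,G_\lambda$ is a genuinely different organizing principle from the paper's. The paper never writes this identity; instead it computes the $\dmmd$ Hessian directly through the Mercer expansion of the regularized kernel $\tilde k(x,x') = \sum_i \frac{\varrho_i}{\varrho_i+\lambda}e_i(x)e_i(x')$, derives an alternate two-term form for the $\chi^2$ Hessian, and matches the two term by term. After integration by parts the paper's remainder is
\[
2\sum_{i}\frac{\lambda}{\varrho_i+\lambda}\Bigl(\int\nabla\phi^\top\nabla e_i\,\mu\,dx\Bigr)^{2}
\;+\;2\Bigl(\sum_i\bigl(\tfrac{\lambda}{\varrho_i+\lambda}\bigr)^{2}\langle q,e_i\rangle^{2}\Bigr)^{1/2}\sqrt{K_{2d}}\,\|\nabla\phi\|_{L^2(\mu)}^{2},
\]
and dominated convergence handles each sum. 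Your route collapses this bookkeeping into the single second derivative of $G_\lambda(\rho_t)$; unwinding, the two pieces $\lambda\langle\partial_t g,\,(\calT_\pi+\lambda\Id)^{-1}\partial_t g\rangle$ and $\lambda\langle\partial_t^2 g,\,(\calT_\pi+\lambda\Id)^{-1}g\rangle$ reproduce exactly the same eigen-sums (since $\langle\partial_t g,e_i\rangle=\int\nabla\phi^\top\nabla e_i\,\mu\,dx$ and $\langle\partial_t^2 g,e_i\rangle=\int\nabla\phi^\top\bH e_i\,\nabla\phi\,\mu\,dx$ after integration by parts), so the two proofs land in the same place.

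There is one overstatement to fix. You assert that the hypothesis $g_{\mu,\pi}\in\calH$ ``propagates'' to give $\sum_i a_i^2/\varrho_i<\infty$ for the spectral coefficients of $\partial_t g|_{t=0}$ and $\partial_t^2 g|_{t=0}$, i.e.\ that these time derivatives also lie in $\calH$. This does not follow: $\partial_t g|_{t=0}=-\nabla\!\cdot(\mu\nabla\phi)/\pi$ is a differential expression in $\mu,\pi,\phi$ and there is no mechanism forcing it into the RKHS. Fortunately you do not need this. Your own dominated-convergence step---$\lambda/(\varrho_i+\lambda)\to 0$ with dominator $1$---only requires $\sum_i a_i^2<\infty$, i.e.\ $\partial_t g|_{t=0},\,\partial_t^2 g|_{t=0}\in L^2(\pi)$. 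This \emph{does} hold under the stated hypotheses: $g_{\mu,\pi}\in\calH$ together with Assumption~\ref{assumption:bounded_kernel} forces $g_{\mu,\pi}\in C^2$, hence $\mu=(g_{\mu,\pi}+1)\pi\in C^2$, so $\partial_t^k g|_{t=0}$ is continuous on the compact set $\operatorname{supp}\phi$ and vanishes outside it. (For the cross term you can alternatively use $g\in\calH$ directly: $\|\lambda(\calT_\pi+\lambda\Id)^{-1}g\|_{L^2(\pi)}\le\tfrac12\sqrt{\lambda}\,\|q\|_{L^2(\pi)}\to 0$.) So replace ``$\sum_i a_i^2/\varrho_i<\infty$'' by ``$\sum_i a_i^2<\infty$'' and the argument is complete. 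The paper likewise uses $g_{\mu,\pi}\in\calH$---but for a different reason, namely to justify interchanging $\bH$ with a Mercer series---so your decomposition makes more transparent exactly where the RKHS hypothesis enters.
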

The proof can be found in \Cref{appsec:proof_geometric}. 
To obtain this result, we relate the Wasserstein Hessian of $\calF_{\dmmd}$ with that of $\calF_{\chi^2}$. 
When $ \frac{  d \mu }{ d \pi } - 1 \in \mathcal  H $, they coincide asymptotically as $ \lambda \to 0$, showing that the interpolation properties of  $\dmmd$ to the $\chi^2$-divergence hold at the level of Wasserstein derivatives.
Together with the fact that the Wasserstein Hessian of $\calF_{\chi^2}$ is positive definite for $ \alpha $-strongly log-concave $ \pi $~\citep{ohta2011displacement}, we obtain the lower bound in \eqref{eq:hessian_lower_bound}.
It is noteworthy that although $\dmmd$ can be viewed as squared $\mmd$ with a regularized kernel $\tilde{k}$, the near-geodesic convexity in \Cref{prop:geometric_prop} is not observed for standard $\mmd$ with a fixed kernel $k$ because the latter does not interpolate towards $\chi^2$-divergence.

\begin{rem}[Geodesic convexity/smoothness trade-off in $\calF_{\dmmd}$]\label{rem:convex_smooth_trade_off} 
The geodesic \newline smoothness and near-convexity of the functional $\calF_{\dmmd}$ are characterized by \eqref{eq:dmmd_semi_convex} and \eqref{eq:hessian_lower_bound} respectively via upper and lower bounds on the  Wasserstein Hessian of $\calF_{\dmmd}$. However, \eqref{eq:dmmd_semi_convex} and \eqref{eq:hessian_lower_bound}
impose \emph{contradictory} conditions on the (de)-regularization parameter $\lambda$: \eqref{eq:dmmd_semi_convex} indicates that $\dmmd$ is smoother if $\lambda$ is larger while \eqref{eq:hessian_lower_bound} indicates that $\dmmd$ is more convex if $\lambda$ is small enough.
Consequently, there is a trade-off between the geodesic convexity and smoothness of $\calF_{\dmmd}$, which will play an important role in \Cref{sec:chard_particle_flow}.
\end{rem}
\begin{rem}\label{lambda_difficult}
\Cref{prop:geometric_prop} shows that for fixed $\mu$ and $ \nabla \phi $, there exists $\lambda$ small enough yet positive such that 
$\left\langle \operatorname{Hess} {\calF_{\dmmd}}_{\mid \mu} \nabla \phi, \nabla \phi \right\rangle_{L^2(\mu)} > 0$ at $\mu$.  
The remainder term $R(\lambda, \mu, \nabla \phi)$ is only controlled in the limit as $\lambda \rightarrow 0$, however, which complicates the use of \Cref{prop:geometric_prop} to show global convergence of the $\dmmd$ flow. In the next section, we employ a different set of techniques that rely on the Poincar\'{e} condition on $\pi$, a condition which, as we show, will ensure a sufficient dissipation of KL divergence along the $\dmmd$ flow to obtain \emph{non-asymptotic} near-global convergence. 
\end{rem}

\subsubsection{Near-Global Convergence of $\dmmd$ flow via Poincar\'{e} inequality}
Even when a functional $ \mathcal  F $ is not geodesically convex, convergence guarantees for its Wasserstein gradient flow $(\mu_t)_{t \geq 0}$ can still be obtained if the target $\pi$ satisfies certain functional inequalities. 
Consider the $\chi^2$ flow $(\nu_t)_{t \geq 0}$, for example: if $ \pi $ satisfies the Poincar\'{e} inequality, then $(\nu_t)_{t \geq 0}$ converges exponentially fast to $\pi$ in terms of KL divergence~\citep[Theorem 1]{chewi2020svgd}, i.e., 
\begin{align}\label{eq:chi2_converge}
    \kl(\nu_T \| \pi) \leq \exp \left( -\frac{2T}{C_P}\right) \kl(\nu_0 \| \pi) .
\end{align}
Recall that $ \pi $ satisfies a Poincar\'{e} inequality~\citep[Definition 1]{pillaud2020statistical} if
for all functions $f \colon \R^d \to \R$ such that $f , \nabla f \in L^2(\pi)$, there exists a constant $C_P > 0$ such that
\begin{equation}\label{eqn:poincare}
\int f(x)^2 d \pi(x) - \left( \int f(x) d \pi(x) \right)^2 \leq C_P\|\nabla f\|^2_{L^2(\pi)}.
\end{equation}
The smallest constant $C_P$ for which \eqref{eqn:poincare} holds is called the Poincar\'{e} constant.
The Poincar\'{e} inequality is widely used for studying the convergence of  Langevin diffusions~\citep{chewi2024analysis} and $\chi^2$ flow~\citep{chewi2020svgd,garcia2020bayesian}.
The Poincar\'{e} condition is implied by the strong log-concavity of $\pi$, and is weaker than strong log-concavity because it also allows for nonconvex potentials. 
The set of probability measures satisfying a Poincar\'{e} inequality includes distributions with sub-gaussian tails or with exponential tails. This set is also closed under bounded perturbations and finite mixtures (see \cite{vempala2019rapid} and \cite{chewi2024analysis} for a more detailed discussion).

Given the interpolation property of DrMMD to $ \chi^2 $-divergence, ~\eqref{eq:chi2_converge} suggests investigating the convergence of the $\dmmd$ flow $(\mu_t)_{t \geq 0}$ in KL divergence under a Poincar\'{e} inequality. 
To this end,  we first derive an upper bound on $\kl(\mu_t \| \pi)$ along the $\dmmd$ flow.
\begin{thm}[KL control of the $\dmmd$ flow] \label{thm:continuous_time_convergence}
Suppose $k$ satisfies Assumptions \ref{assumption:universal} and \ref{assumption:bounded_kernel}, and the target $\pi$ and $\dmmd$ gradient flow $\left(\mu_t\right)_{t \geq 0}$ satisfy the following conditions:
\begin{enumerate}[itemsep=5.0pt,topsep=5pt,leftmargin=*]
\item $\pi$ satisfies a Poincar\'{e} inequality with constant $C_P$.
\item $\mu_t, \pi \ll \calL^d$.
\item $\frac{ d\mu_t}{d \pi} - 1 \in \operatorname{Ran} ( \mathcal{T}_\pi^{r} )$ with $r > 0$, i.e., there exists $ q_t \in L^2(\pi) $ such that $\frac{d\mu_t}{d\pi} - 1 = \calT_\pi^{r} q_t$.
\item $\left\| \nabla \left(\log \pi\right)^\top \nabla \left( \frac{d \mu_t}{d\pi} \right) \right\|_{L^2(\pi)} \leq \mathcal{J}_t$ and $\left\| \Delta \left( \frac{d \mu_t}{d\pi} \right) \right\|_{L^2(\pi)} \leq \mathcal{I}_t $.
\item For all $i = 1, \ldots, d$, $\lim\limits_{x\to \infty} \left( h_{\mu_t, \pi}^\ast(x) - 2 \frac{d \mu_t}{ d \pi} (x) \right) \left( \partial_i \frac{d \mu_t}{ d \pi} (x) \right) \pi(x) \to 0$.
\end{enumerate}
Then, for any $T\ge 0$,
\begin{align}\label{eq:KL_continuous}
    \kl(\mu_T \| \pi) & \leq \exp\left( -\frac{ 2 (1+\lambda)}{ C_P} T\right) \kl(\mu_0 \| \pi) \nonumber\\
    &+ 4 (1+\lambda) \lambda^r \int_0^T \exp\left( -\frac{ 2(1+\lambda)}{C_P} (T-t) \right)\left\| q_t \right\|_{L^2(\pi)} (\mathcal{J}_t + \mathcal{I}_t)  dt.
\end{align}
\end{thm}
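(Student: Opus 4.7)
The plan is to derive a Grönwall-type differential inequality for $\kl(\mu_t\|\pi)$ along the $\dmmd$ flow, with exponential decay rate controlled by the Poincaré constant and an additive error that is small in $\lambda$. The strategy is to split the witness $h^*_{\mu_t,\pi}$ into a ``$\chi^2$ part'' plus a vanishing (as $\lambda\to 0$) correction, then to exploit the Poincaré inequality on the main piece and control the correction via the regularity hypotheses.

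\textbf{Step 1 (KL dissipation identity).} Using the continuity equation~\eqref{eq:cont_eqn} together with the fact that the Wasserstein gradient of $\kl(\cdot\|\pi)$ at $\mu_t$ is $\nabla\log(d\mu_t/d\pi)$, and the pointwise identity $\nabla\log(d\mu_t/d\pi)\,d\mu_t=\nabla(d\mu_t/d\pi)\,d\pi$, I first derive
\begin{align*}
\frac{d}{dt}\kl(\mu_t\|\pi) \;=\; -(1+\lambda)\int\nabla\bigl(\tfrac{d\mu_t}{d\pi}\bigr)\cdot\nabla h^*_{\mu_t,\pi}\,d\pi.
\end{align*}

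\textbf{Step 2 (Isolating the $\chi^2$ direction).} Using the intertwining relation $\iota_\pi(\Sigma_\pi+\lambda\Id)^{-1}=(\calT_\pi+\lambda\Id)^{-1}\iota_\pi$, which is immediate from $\iota_\pi\Sigma_\pi=\calT_\pi\iota_\pi$, together with $m_{\mu_t}-m_\pi=\iota_\pi^{*}(d\mu_t/d\pi-1)$, I rewrite the witness of \Cref{prop:drmmd_representation_no_ratio} inside $L^2(\pi)$ as $h^*_{\mu_t,\pi}=2(\calT_\pi+\lambda\Id)^{-1}\calT_\pi(d\mu_t/d\pi-1)$. Setting $E_t := h^*_{\mu_t,\pi}-2(d\mu_t/d\pi-1)$ and using assumption~3 gives
\begin{align*}
E_t \;=\; -2\lambda(\calT_\pi+\lambda\Id)^{-1}\calT_\pi^{r}q_t,
\end{align*}
and the spectral bound $\lambda\varrho^{r}/(\varrho+\lambda)\leq \lambda^{r}$ (on $\mathrm{spec}(\calT_\pi)$) yields $\|E_t\|_{L^2(\pi)}\leq 2\lambda^{r}\|q_t\|_{L^2(\pi)}$. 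Substituting $\nabla h^*_{\mu_t,\pi}=2\nabla(d\mu_t/d\pi)+\nabla E_t$ into Step~1 splits the dissipation into a ``main'' term $-2(1+\lambda)\|\nabla(d\mu_t/d\pi)\|_{L^2(\pi)}^{2}$ and an error term involving $\nabla E_t$.

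\textbf{Step 3 (Poincaré decay + error control $\Rightarrow$ Grönwall).} Since $\int(d\mu_t/d\pi)\,d\pi=1$, the Poincaré inequality applied to $f=d\mu_t/d\pi$ gives $\|\nabla(d\mu_t/d\pi)\|_{L^2(\pi)}^{2}\geq \chi^{2}(\mu_t\|\pi)/C_P\geq \kl(\mu_t\|\pi)/C_P$, where the last step is the standard consequence of $\log x\leq x-1$. For the error term, integration by parts shifts the gradient off $E_t$,
\begin{align*}
\int\nabla\bigl(\tfrac{d\mu_t}{d\pi}\bigr)\cdot\nabla E_t\,d\pi \;=\; -\int E_t\,\Delta\bigl(\tfrac{d\mu_t}{d\pi}\bigr)\,d\pi-\int E_t\,\bigl[\nabla\log\pi\cdot\nabla(\tfrac{d\mu_t}{d\pi})\bigr]\,d\pi,
\end{align*}
where assumption~5 kills the boundary. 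Cauchy--Schwarz and assumption~4 then give $|\int\nabla(d\mu_t/d\pi)\cdot\nabla E_t\,d\pi|\leq \|E_t\|_{L^2(\pi)}(\mathcal{I}_t+\mathcal{J}_t)$. Combining everything yields the differential inequality
\begin{align*}
\frac{d}{dt}\kl(\mu_t\|\pi)\;\leq\;-\frac{2(1+\lambda)}{C_P}\,\kl(\mu_t\|\pi)+4(1+\lambda)\lambda^{r}\|q_t\|_{L^2(\pi)}(\mathcal{I}_t+\mathcal{J}_t),
\end{align*}
and \eqref{eq:KL_continuous} then follows by a standard application of Grönwall's lemma.

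\textbf{Main obstacle.} The subtlest ingredients are the operator-theoretic rewriting of $h^*_{\mu_t,\pi}$ in $L^2(\pi)$ through the intertwining identity, and the integration by parts --- this is precisely where the regularity assumptions~4 and~5 are consumed (to guarantee finite $L^2(\pi)$ norms for $\Delta(d\mu_t/d\pi)$ and $\nabla(d\mu_t/d\pi)\cdot\nabla\log\pi$, and to eliminate boundary contributions). The Poincaré step and Grönwall step are then essentially mechanical, and the $\lambda^{r}$ rate in the error is the manifestation of the interpolation between $\mmd$ and $\chi^{2}$ that $\dmmd$ was designed to achieve.
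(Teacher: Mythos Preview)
Your proposal is correct and follows essentially the same route as the paper: derive the KL dissipation identity, split $h^*_{\mu_t,\pi}$ into $2(d\mu_t/d\pi-1)$ plus a correction $E_t$ with $\|E_t\|_{L^2(\pi)}\leq 2\lambda^r\|q_t\|_{L^2(\pi)}$ (the paper isolates this as a separate lemma), apply Poincar\'e to the main term and integration by parts plus Cauchy--Schwarz to the error, then Gr\"onwall. The only detail you glossed over is that the paper splits into the two cases $\|\nabla(d\mu_t/d\pi)\|_{L^2(\pi)}^2<\infty$ and $=\infty$ before invoking Poincar\'e, to keep the inequality chain rigorous; otherwise the arguments coincide.
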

The proof, which can be found in \Cref{appsec:proof_continuous_time_convergence}, leverages the fact that the $\dmmd$ can approximate not only the $\chi^2$-divergence, but also its \emph{Wasserstein gradient}. The $\dmmd$'s approximation properties can be combined with functional inequalities to obtain an upper-bound for the continuous-time dissipation of KL divergence along the flow, given by:
\begin{align}\label{eq:kl_d_dt_final_main}
    \frac{d}{dt} \kl(\mu_t \| \pi) \leq  - \frac{ 2 (1+\lambda)}{ C_P} \kl(\mu_t \| \pi) + \underbrace{ 4 (1+\lambda)\lambda^{r} \left\| q_t \right\|_{L^2(\pi)} \left( \mathcal{J}_t +\mathcal{I}_t \right)}_{ \text{Approximation error}},
\end{align}
from which \Cref{thm:continuous_time_convergence} follows upon applying the Growall's lemma~\citep{gronwall1919note}.
The first term is strictly negative, while the second term is an approximation error term arising from $\dmmd$ not perfectly matching the $\chi^2$-divergence for $\lambda > 0$.
When $\lambda = 0$, \Cref{thm:continuous_time_convergence} recovers the exponential decay of KL divergence along $\chi^2$ flow in \eqref{eq:chi2_converge}.
\begin{rem}\label{rem:continous_time}
(i) The second condition assumes that $\mu_t$ and $\pi$ have densities. \vspace{1mm}\\
(ii) The third condition that $\frac{ d\mu_t}{d \pi} - 1 \in \operatorname{Ran} ( \mathcal{T}_\pi^{r} )$ is a regularity condition on the density ratio so that it can be well approximated by the witness function $h_{\mu_t, \pi}^\ast$. 
This assumption is known as the range assumption in the literature of kernel ridge regression~\citep{cucker2007learning, fischer2020sobolev}. 
We posit that this assumption can be relaxed to $\frac{ d\mu_t}{d \pi} - 1 \in L^2(\pi)$ as in \Cref{prop:chard_interpolation} if only asymptotic convergence is needed with no explicit rate. \vspace{1mm} \\
(iii) The fourth condition is another regularity condition on the density ratio along the flow. This condition is automatically satisfied under a stronger range condition ($r = \frac{1}{2}$) in the third condition, i.e., $\frac{ d\mu_t}{d \pi} - 1 \in \calH$, along with a moment condition on the score function $\nabla  \log \pi$. 
To see this, notice that we can further write (derivations are provided in \Cref{appsec:proof_continuous_time_convergence})
\begin{align}\label{eq:fourth_condition}
\begin{aligned}
    \left\| \nabla \left( \log \pi \right)^\top \nabla \left( \frac{d \mu_t}{d \pi} \right) \right\|_{L^2(\pi)} &\leq \sqrt{K_{1d}} \| q_t \|_{L^2(\pi)} \left\| \nabla  \log \pi \right\|_{L^2(\pi)} \\
     \left\| \Delta \left(\frac{d \mu_t}{d \pi} \right)  \right\|_{L^2(\pi)} & \leq \sqrt{K_{2d} } \| q_t \|_{L^2(\pi)} .
\end{aligned}
\end{align}
As an illustration, we examine the DrMMD flow under a Gaussian approximation in \Cref{sec:example}, following \citet{lambert2022variational,liu2024towards}, with a Gaussian kernel and Gaussian target $\pi$, for which explicit upper bounds $\mathcal{I}_t$ and $\mathcal{J}_t$ can be derived. \\
(iv) The fifth condition is a boundary condition that allows integration by parts equality in the proof. We highlight that many works (e.g., Theorem 1 of \cite{he2022regularized}, Theorem 2 of \cite{nitanda2022convex}, Lemma 6 of \cite{vempala2019rapid}) on Wasserstein gradient flow apply integration by parts without explicitly stating this condition. 
\end{rem}
Additionally, if $\| q_t \|_{L^2(\pi)} \leq Q$,\,$\mathcal{J}_t \leq \mathcal{J},\, \mathcal{I}_t \leq \mathcal{I}$  for all $0 \leq t \leq T$, then \Cref{thm:continuous_time_convergence} will ensure KL convergence of the $\dmmd$ flow up to a controllable barrier term, e.g. \emph{near global convergence}.
\begin{cor}[Near global convergence of the $\dmmd$ flow]\label{cor:continuous_time_convergence}
In addition to the assumptions of \Cref{thm:continuous_time_convergence}, if $\left\| q_t \right\|_{L^2(\pi)} \leq Q$,  $\mathcal{J}_t \leq \mathcal{J}, \mathcal{I}_t \leq \mathcal{I}$ for all $0 \leq t \leq T$, where $Q, \mathcal{J},$ and $\mathcal{I}$ are universal constants independent of $\lambda$, then for any $T\ge 0$,
\begin{align*}
    \kl(\mu_T \| \pi) \leq \exp\left( -\frac{ 2 (1+\lambda)}{C_P} T\right) \kl(\mu_0 \| \pi) + 2 \lambda^{r}  C_P Q \left( \mathcal{J} + \mathcal{I} \right) .
\end{align*}
\end{cor}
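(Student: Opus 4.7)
The plan is to directly invoke \Cref{thm:continuous_time_convergence} and use the uniform bounds to collapse the integral term into a closed-form barrier. First, I would substitute the hypotheses $\|q_t\|_{L^2(\pi)} \leq Q$, $\mathcal{J}_t \leq \mathcal{J}$, $\mathcal{I}_t \leq \mathcal{I}$ into the bound from the theorem, pulling these constants outside the integral. This transforms the right-hand side into
\begin{align*}
    \exp\left( -\frac{ 2 (1+\lambda)}{C_P} T\right) \kl(\mu_0 \| \pi) + 4 (1+\lambda) \lambda^r Q (\mathcal{J} + \mathcal{I}) \int_0^T \exp\left( -\frac{ 2(1+\lambda)}{C_P} (T-t) \right) dt.
\end{align*}

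Next, I would evaluate the remaining time integral explicitly. A direct computation with the substitution $s = T - t$ gives
\begin{align*}
    \int_0^T \exp\left( -\frac{ 2(1+\lambda)}{C_P} (T-t) \right) dt = \frac{C_P}{2(1+\lambda)} \left[ 1 - \exp\left( -\frac{2(1+\lambda)}{C_P} T \right) \right] \leq \frac{C_P}{2(1+\lambda)}.
\end{align*}
Plugging this inequality back in, the factors of $(1+\lambda)$ cancel cleanly and the remaining prefactors simplify to $2 \lambda^r C_P Q (\mathcal{J} + \mathcal{I})$, which yields exactly the stated bound.

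There is no real obstacle here: the corollary is essentially a bookkeeping specialization of \Cref{thm:continuous_time_convergence} under uniform-in-time regularity of the density ratio. The only thing to double-check is that replacing $\int_0^T \exp(-\tfrac{2(1+\lambda)}{C_P}(T-t)) dt$ by its supremum $\tfrac{C_P}{2(1+\lambda)}$ (rather than keeping the sharper $1 - e^{-2(1+\lambda)T/C_P}$ factor) is acceptable; I would keep the looser form since the goal of the corollary is to exhibit a clean, time-independent barrier that vanishes as $\lambda \to 0$ at rate $\lambda^r$. This makes the interpretation transparent: the first term is the exponential contraction inherited from the Poincaré inequality, while the second term is the irreducible bias from working with $\dmmd$ instead of the genuine $\chi^2$ divergence.
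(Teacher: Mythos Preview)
Your proposal is correct and follows essentially the same approach as the paper, which explicitly states that the corollary follows by upper bounding the integral term of \eqref{eq:KL_continuous} with the universal constants and evaluating the resulting integral in closed form. Your computation of the integral and the cancellation of the $(1+\lambda)$ factors are exactly what is needed.
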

The proof of \Cref{cor:continuous_time_convergence} follows directly from upper-bounding the second term of \eqref{eq:KL_continuous} with universal constants $\mathcal{Q}$, $\mathcal{J}$, and $\mathcal{I}$, and using the closed-form expression of the resulting integral. 
\Cref{cor:continuous_time_convergence} provides a condition under which the $\dmmd$ flow will exhibit an exponential rate of convergence (linear convergence) in terms of KL divergence up to an extra approximation error term which vanishes as $\lambda \to 0$. 
If $\left\| q_t \right\|_{L^2(\pi)} \leq Q, \mathcal{J}_t \leq \mathcal{J}, \mathcal{I}_t \leq \mathcal{I}$ for all $0 \leq t \leq T$ as shown in \Cref{cor:continuous_time_convergence}, the approximation error is of explicit order $\calO(\lambda^r)$. Therefore, in the continuous time regime, to have a smaller approximation error, it is beneficial to use a small (de)-regularization parameter $\lambda$, so that  $\dmmd$ flow operates closer to the regime of $\chi^2$ flow.
However, as we will see in the next section, when it comes to time-discretized $\dmmd$ flow, i.e., $\dmmd$ gradient descent, there is a trade-off between the approximation error and the time discretization error such that the selection of $\lambda$ would require more careful analysis to strike a good balance.
Finally, a smaller Poincar\'{e} constant $C_P$ results in both a faster rate of convergence and a smaller barrier.

Previously, \cite{arbel2019maximum} established (sublinear) global convergence of the $\mmd$ flow in terms of $\mmd$ distance by assuming that a Lojasiewicz inequality (or a variant of it if additionally performing noise injection, see \citealp[Proposition 8]{arbel2018gradient}) holds along the flow. 
Our result thus complements that of \cite{{arbel2019maximum}} by showing that MMD-type functionals can achieve near-global convergence for targets satisfying a Poincar\'{e} inequality regardless of whether such inequalities hold, by studying their behavior in the $f$-divergence interpolation regime.


\begin{rem}\label{rem:one}
    Since $\dmmd$ is asymmetric in its arguments, the reader may wonder why we focus on the gradient flow of $\dmmd(\cdot||\pi)$ instead of $\dmmd(\pi||\cdot)$. 
    From the convergence standpoint, \Cref{thm:continuous_time_convergence} shows that $\dmmd(\cdot||\pi)$ converges globally with an exponential rate up to a small barrier when $\pi$ satisfies Poincar\'{e} inequality along with an extra regularity condition on the density ratio. 
    This favorable convergence property is no longer true for $\dmmd(\pi||\cdot)$. 
    Practically speaking, the Wasserstein gradient of $\dmmd$ requires inverting a kernel integral operator.
    It is more efficient to do this just once with a kernel integral operator with respect to $\pi$ as in \eqref{eq:dmmd_with_inverse_operator}, rather than with respect to $\mu_t$, which would happen at each time (as done by \citealt{he2022regularized}). 
\end{rem}

Computing the $\dmmd$ flow is intractable since the dynamics are in continuous-time, and in practice, we do not have access to $\pi$, but only samples from it.
In the next two sections, we build a tractable approximation of the DrMMD flow which provably achieves near-global convergence under similar assumptions as the ones in \Cref{sec:chard_flow}. Compared to the $\dmmd$ flow, this approximation combines a discretization in time (introduced in \Cref{sec:chard_particle_flow}) with a particle-based space-discretization (introduced in \Cref{sec:space_dis}). While the space-discretization techniques that we used are well-known in the Wasserstein gradient flow literature, our time-discretized scheme deviates from standard approaches, which are insufficient to guarantee near-global convergence in our case.
\section{Time-discretized $\dmmd$ flow} \label{sec:chard_particle_flow}
In this section, we first construct and analyze the forward Euler scheme of \eqref{eq:cont_eqn}, a simple time-discretization of
the $\dmmd$ flow which we call $\dmmd$ Descent.  Compared to the $\dmmd$ flow, the convergence of  $\dmmd$ descent is affected by an additional smoothness-related time-discretization error that blows up as $\lambda$ approaches 0, thus preventing near-global convergence. 
To address this issue, we propose in \Cref{subsec:time_dis_adaptive} an alternative discrete-time scheme that adapts the value of the regularization coefficient $\lambda$  across the descent iterates, which we call \emph{Adaptive $\dmmd$ Descent}. We show that this scheme converges in the KL divergence up to a barrier term that vanishes as the discretization step size goes to zero.

\subsection{$\dmmd$ Descent}\label{subsec:time_dis}
The forward Euler discretization of the $\dmmd$ flow (or $\dmmd$ \emph{Descent} in short) with\newpage \noindent step size $\gamma >0$ consists of sequence of probabilities $(\mu_n)_{n \in \N}$ defined by the  recursion
\begin{align}\label{eq:time_discretized_flow}
\mu_{n+1} = \left(\Id - \gamma(1+\lambda) \nabla h_{\mu_n,\pi}^\ast \right)_{\#} \mu_n , \quad \mu_{n=0} = \mu_0,
\end{align}
where $h_{\mu_n,\pi}^\ast = 2\left(\Sigma_\pi + \lambda \Id\right)^{-1} \left(m_{\mu_n} - m_{\pi} \right)$ is the $\dmmd$ witness function.
This scheme was previously considered in \citet{arbel2019maximum}; in particular, Proposition 4 of \citet{arbel2019maximum} shows that the discrete-time $\mmd$ dissipation rate along the $\mmd$ Descent iterates follows the (continous time) rate of the $\mmd$ flow up to an error term proportional to the step size $\gamma$ and the smoothness parameters of the problem\footnote{In the $\mmd$ flow case, the main smoothness parameters is the Lipschitz constant of the kernel.}. 

Next, we turn to study the convergence of $\dmmd$ descent. One way is to treat $\dmmd$ as $\mmd$ with a regularized kernel $\tilde{k}$ and follow Proposition 4 of \cite{arbel2019maximum}, however this does not quantitatively take into account the role of (de)-regularization parameter $\lambda$ that balances the trade-off between geodesic convexity and smoothness of $\calF_{\dmmd}$.
Instead, we adopt the same strategy of \Cref{sec:chard_flow} that exploits the interpolation property of $\dmmd$ towards $\chi^2$-divergence: in the following proposition, we study the dissipation of KL divergence along the $\dmmd$ Descent when the target $\pi$ satisfies a Poincar\'{e} inequality, in which the role of (de)-regularization parameter $\lambda$ is highlighted in the approximation and discretization errors.

\begin{prop}[Descent lemma in KL]\label{prop:descent_lemma_KL}
Suppose $k$ satisfies \Cref{assumption:universal} and \ref{assumption:bounded_kernel}, and suppose the target $\pi$ and $\dmmd$ gradient descent iterates $\left(\mu_n \right)_{n \in \N}$ satisfy the following:
\begin{enumerate}[itemsep=5.0pt,topsep=5pt,leftmargin=*]
\item $\pi$ satisfies a Poincar\'{e} inequality with constant $C_P$  and its potential is $\beta$-smooth, i.e.,  $\pi \propto \exp(-V)$ with $\bH V \preceq \beta \Id$.
\item $\mu_n, \pi \ll \calL^d$.
\item $\frac{ d\mu_n}{d \pi} - 1 \in \operatorname{Ran} ( \mathcal{T}_\pi^{r} )$ with $r > 0$, i.e., there exists $ q_n \in L^2(\pi) $ such that $\frac{d\mu_n }{ d\pi} - 1 = \calT_\pi^{r} q_n$.
\item $\|q_n\|_{L^2(\pi)} \leq Q$, $\left\| \nabla V^\top \nabla \left( \frac{d \mu_n}{d\pi} \right) \right\|_{L^2(\pi)} \leq \mathcal{J}$, $\left\| \Delta \left( \frac{d \mu_n}{d\pi} \right) \right\|_{L^2(\pi)} \leq \mathcal{I}$ for all $n = 1, \cdots, n_{max}$. 
\item For all $i = 1, \ldots, d$, $\lim\limits_{x\to \infty} \left( h_{\mu_n, \pi}^\ast(x) - 2 \frac{d \mu_n}{ d \pi} (x) \right) \left( \partial_i \frac{d \mu_n}{ d \pi} (x) \right) \pi(x) \to 0$.
\item There exists a constant $1 < \zeta < 2$ such that for all $n = 1, \ldots, n_{\max}$, the step size $\gamma$ satisfies 
\begin{align}\label{appeq:gamma_condition_1}
\gamma  \leq \frac{\zeta-1}{2\zeta (1+\lambda) \sqrt{\chi^2\left(\mu_n \| \pi\right) \frac{K_{2 d}}{\lambda } } } .
\end{align}
\end{enumerate}
Then for all $ 0 \leq n \leq n_{\max}$ and $0 < \lambda \leq 1$,
\begin{align}\label{eq:KL_descent_1}
\begin{aligned}
\kl(\mu_{n+1} \| \pi) - \kl(\mu_{n} \| \pi) &\leq -\frac{2}{C_P} \chi^2(\mu_n \| \pi) \gamma \\
&+  \underbrace{ 4 \gamma \lambda^{r} Q \left( \mathcal{J} +\mathcal{I} \right)}_{\text{Approximation error}}  +  \underbrace{ 8 \gamma^2 (\beta + \zeta^2) \chi^2(\mu_n \| \pi) \frac{K_{1d}+K_{2d} }{\lambda}}_{\text{Discretization error}}. 
\end{aligned}
\end{align}
\end{prop}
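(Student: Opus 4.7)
The strategy is to adapt the dissipation analysis underlying \Cref{thm:continuous_time_convergence} to the discrete regime, with the discretization error arising from a second-order Taylor expansion of the one-step transport map. Setting $v_n \defines (1+\lambda)\nabla h_{\mu_n,\pi}^\ast$ and $T_n = \Id - \gamma v_n$, the pushforward change of variables gives
\begin{align*}
\kl(\mu_{n+1}\|\pi) - \kl(\mu_n\|\pi) = \int \bigl[-\log|\det \nabla T_n(x)| + \log\pi(x) - \log\pi(T_n(x))\bigr] \d\mu_n(x),
\end{align*}
and I would Taylor-expand each integrand to second order in $\gamma$. After integration by parts (justified by boundary condition~5, exactly as in the proof of \Cref{thm:continuous_time_convergence}), the two first-order contributions combine into the instantaneous dissipation $-\gamma \int \langle \nabla \log(d\mu_n/d\pi), v_n\rangle \d\mu_n$, while the remainder carries all $\gamma^2$, $\beta$, and kernel-smoothness factors.

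For the first-order dissipation, I would follow the argument of \Cref{thm:continuous_time_convergence} essentially verbatim: decompose $h_{\mu_n,\pi}^\ast = 2(d\mu_n/d\pi - 1) + r_n$, where the main piece contributes $-2\gamma(1+\lambda)\|\nabla(d\mu_n/d\pi)\|_{L^2(\pi)}^2$, bounded above via the Poincar\'{e} inequality applied to $d\mu_n/d\pi - 1$ by $-\tfrac{2(1+\lambda)}{C_P}\chi^2(\mu_n\|\pi) \leq -\tfrac{2}{C_P}\chi^2(\mu_n\|\pi)$. For the residual $r_n$, the spectral identity $(\Sigma_\pi + \lambda\Id)^{-1}\Sigma_\pi - \Id = -\lambda(\Sigma_\pi + \lambda\Id)^{-1}$, combined with the range assumption $d\mu_n/d\pi - 1 = \calT_\pi^r q_n$ and the uniform bounds of condition~4, yields the approximation error $4\gamma\lambda^r Q(\mathcal{J}+\mathcal{I})$ after absorbing a $(1+\lambda)\leq 2$ factor.

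For the discretization error, the key inputs are pointwise bounds of the form $\|v_n\|_\infty^2, \|\nabla v_n\|_F^2 \lesssim \chi^2(\mu_n\|\pi)/\lambda$. These follow from the reproducing property $\nabla h_{\mu_n,\pi}^\ast(x) = 2\langle\nabla_1 k(x,\cdot),(\Sigma_\pi + \lambda\Id)^{-1}(m_{\mu_n} - m_\pi)\rangle_\calH$, the operator bound $\|(\Sigma_\pi + \lambda\Id)^{-1/2}\|_{op} \leq \lambda^{-1/2}$, \Cref{assumption:bounded_kernel}, and the inequality $\dmmd(\mu_n\|\pi) \leq (1+\lambda)\chi^2(\mu_n\|\pi)$ which is immediate from spectral calculus on $\calT_\pi$ in \Cref{def:chard}. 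The quadratic remainder of $\log\pi\circ T_n$ is controlled by $\beta$ via $\bH V \preceq \beta\Id$, producing a contribution scaling as $\gamma^2\beta\|v_n\|_\infty^2$. The main obstacle is the quadratic remainder of $-\log|\det(\Id - \gamma\nabla v_n)|$, whose naive expansion blows up as $\lambda\to 0$ because $\|\nabla v_n\|_F^2 \sim \chi^2/\lambda$; step-size condition~\eqref{appeq:gamma_condition_1} is calibrated precisely so that $\gamma\|\nabla v_n\|_F \leq (\zeta-1)/\zeta$, ensuring $T_n$ remains a diffeomorphism and allowing the higher-order Taylor series to be summed geometrically with a rescaling producing the $\zeta^2$ factor. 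Combining all pieces yields the claimed $8\gamma^2(\beta+\zeta^2)(K_{1d}+K_{2d})\chi^2(\mu_n\|\pi)/\lambda$ discretization term. This $\gamma^2/\lambda$ dependence has no continuous-time analogue in \Cref{thm:continuous_time_convergence}, and its interplay with the negative drift $\propto \gamma$ realizes the geodesic smoothness/convexity trade-off flagged in \Cref{rem:convex_smooth_trade_off}, motivating the adaptive schedule of \Cref{subsec:time_dis_adaptive}.
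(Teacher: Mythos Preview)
Your proposal is correct and follows essentially the same route as the paper. The paper writes the one-step KL difference as the exact Taylor expansion $\kl(\rho_\gamma\|\pi)-\kl(\rho_0\|\pi)=\gamma\frac{d}{dt}\big|_{t=0}\kl(\rho_t\|\pi)+\int_0^\gamma(\gamma-t)\frac{d^2}{dt^2}\kl(\rho_t\|\pi)\,dt$ along the path $\rho_t=(\Id-t(1+\lambda)\nabla h)_\#\mu_n$, handles the first-order term exactly as you describe (Poincar\'e plus the $\lambda^r$ residual bound from the range assumption), and for the second-order term invokes Villani's Example~15.9 to obtain $\frac{d^2}{dt^2}\kl(\rho_t\|\pi)=(1+\lambda)^2\int\nabla h^\top\bH V(\varphi_t)\nabla h\,d\mu_n+(1+\lambda)^2\int\|\bH h(\Id-t(1+\lambda)\bH h)^{-1}\|_F^2\,d\mu_n$, which is precisely the expansion of your $V\circ T_n$ and $-\log|\det\nabla T_n|$ pieces; the Neumann-series control under~\eqref{appeq:gamma_condition_1} and the pointwise bounds $\|\nabla h\|^2,\|\bH h\|_F^2\lesssim\chi^2(\mu_n\|\pi)/\lambda$ are packaged in \Cref{lem:hessian_terms}, derived there via the Mercer decomposition rather than your operator-norm route through $\dmmd\le(1+\lambda)\chi^2$, but with identical constants.
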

The proof is provided in \Cref{appsec:proof_kl_descent_lemma}. 
Conditions 1-5 of \Cref{prop:descent_lemma_KL} are similar to  conditions 1-5 of \Cref{thm:continuous_time_convergence}, which assumes a functional inequality on the target $\pi$ and  regularity on the density ratio $\frac{d \mu_n}{d \pi} - 1$. 
In a similar spirit to \Cref{thm:continuous_time_convergence}, the fourth regularity condition is automatically satisfied under a stronger range assumption on $ \frac{ d\mu_n}{d \pi} - 1 $.
For the sake of brevity, we directly assume uniform upper bounds in the fourth condition rather than writing it out in a separate corollary like \Cref{cor:continuous_time_convergence}.
Compared to the continuous time regime, two extra conditions are necessary. The first one is a smoothness condition on the potential, $\bH V \preceq \beta \Id$, which is commonly used in the convergence analysis of discrete-time Langevin-based samplers 
~\citep{dalalyan2017further,dalalyan2019user,durmus2019analysis, dalalyan2022bounding,vempala2019rapid}.
It can be relaxed to $\nabla V$ being Hölder-continuous with exponent $s \in [0, 1]$~\citep{chatterji2020langevin}. 
The second one, \eqref{appeq:gamma_condition_1}, is an upper bound on the step size $\gamma$, aligning with the principle that step size should be small enough for the discrete-time scheme to inherit the properties of its continuous analog. This condition will be more thoroughly discussed in \Cref{rem:step_size} when all the conditions on $\gamma$ in \Cref{thm:discrete_time_KL} and \Cref{thm:discrete_time_KL_2} are presented.
%
\begin{rem}[Approximation-discretization trade-off of $\dmmd$ Descent]
$  $ \\ 
If we compare the discrete-time KL dissipation of \eqref{eq:KL_descent_1} with its continuous-time counterpart in \eqref{eq:kl_d_dt_final_main}, we see that the first two terms on the RHS of \eqref{eq:KL_descent_1} admit continuous-time analogs present in \eqref{eq:kl_d_dt_final_main}. The discrete-time KL dissipation contains an additional (positive) term representing the \emph{time discretization error}:  unlike the approximation error term that vanishes as $\lambda $ approaches 0, this term actually diverges as $\lambda $ approaches 0. Therefore, replicating the arguments of the continuous-time result of \Cref{thm:continuous_time_convergence} in the discrete-time regime would yield a barrier that does not vanish as $\lambda \to 0$, hinting at a trade-off between approximation and discretization similar to that of \Cref{rem:convex_smooth_trade_off}. In the next section, we propose a refined adaptive discrete-time descent scheme that addresses the convergence issues.
\end{rem}

\subsection{Adaptive $\dmmd$ Descent}\label{subsec:time_dis_adaptive}
The $\dmmd$ flow and descent dynamics are defined for a value of $\lambda$ that remains fixed throughout time.
The KL dissipation provided in \Cref{prop:descent_lemma_KL} is
a function of $\lambda$, however; thus, to obtain a sequence of measures with better convergence guarantees than the $\dmmd$ descent, we now construct and analyze a sequence of iterates obtained by selecting, at each iteration, the value of $\lambda$ minimizing the sum of the approximation error and time-discretization error presented in \eqref{eq:KL_descent_1}. This sequence, which we term \emph{Adaptive $\dmmd$ descent}, is given by
\begin{align}\label{eq:adaptive_time_discretized_flow}
\mu_{n+1} = \left(\Id - \gamma(1+\lambda) \nabla h_{\mu_n,\pi}^\ast \right)_{\#} \mu_n , \quad  \lambda_n = \left( 2 \gamma \chi^2(\mu_n \| \pi) \frac{( \beta + \zeta^2 ) (K_{1d} + K_{2d})}{ Q (\mathcal{J} + \mathcal{I}) } \right)^{\frac{1}{r + 1}}.
\end{align}
The particular choice of $\lambda_n$ above minimizes the sum of the approximation error and time-discretization error presented in \eqref{eq:KL_descent_1}. 
This optimal choice indicates that $\lambda_n$ should shrink towards $0$ as $\chi^2(\mu_n\|\pi)$ decreases along $\dmmd$ gradient descent: at the early stages of the scheme, it is desirable to have a larger $\lambda_n$, corresponding to a smoother objective functional and enabling larger step sizes; then as $\dmmd$ gradient descent iterates $\mu_n$ get closer to $\pi$, a smaller $\lambda_n$ enables the scheme to operate closer to the $\chi^2$ flow regime, which metrizes a stronger topology and can better witness the difference between $\mu_n$ and $\pi$. 
Additionally, as the potential $V$ becomes less smooth, i.e., $\beta$ gets larger, then $\lambda_n$ should also increase to account for the loss of smoothness from $V$. 
Unlike the $\dmmd$ descent in \Cref{subsec:time_dis}, since the Wasserstein gradient updating $\mu_n$ comes from a different $\dmmd$ at each iteration, the adaptive scheme of \eqref{eq:adaptive_time_discretized_flow} constitutes a significant departure from related works in Wasserstein gradient descent \citep{glaser2021kale, arbel2019maximum, korba2021kernel, hertrich2023generative, hertrich2023wasserstein, chewi2020svgd}. 

\begin{rem}[Adaptive kernel]
Recent applications of MMD-based generative modeling algorithms with adaptive kernels (in particular, time-dependent kernel hyperparameters) demonstrate improved empirical performance over fixed kernels in both Wasserstein gradient flow on $\mmd$~\citep{galashov2024deep} and generative adversarial networks with an $\mmd$ critic~\citep{li2017mmd,arbel2018gradient}: the latter can be related to gradient flow on the critic where $\mu_n$ is restricted to the output of a generator network \citep[see e.g.][]{franceschi2023unifying}. As the $\dmmd$ is an $\mmd$ with a regularized kernel $\tilde{k}$ that depends on $\lambda_n$, the Adaptive $\dmmd$ Descent thus falls into the former category.
\citet[Proposition 3.1]{galashov2024deep} demonstrates faster convergence for $\mmd$ gradient flow with an adaptive kernel, for the parametric setting of Gaussian distributions $\pi$ and $\mu_t$.
Our analysis is the first to prove theoretically that adaptive kernels can result in improved convergence for more general nonparametric settings. We believe that the theoretical analysis of adaptivity by varying other hyperparameters (such as the kernel bandwidth for RBF kernels) remains an interesting avenue for future work.
\end{rem}


By leveraging the quasi-descent lemma in KL divergence in \Cref{prop:descent_lemma_KL}, we are able to establish the following theorem, which provides a near-global convergence result of the Adaptive $\dmmd$ gradient descent iterates in KL divergence. 

\begin{thm}[Near-global convergence of adaptive $\dmmd$ gradient descent]\label{thm:discrete_time_KL}
$  $ \\
Suppose $k$ satisfies \Cref{assumption:universal} and  \ref{assumption:bounded_kernel} and $K \leq 1$, and suppose the target $\pi$ and adaptive $\dmmd$ gradient descent iterates $\left(\mu_n \right)_{n\in \N}$ satisfy the following conditions:
\begin{enumerate}[itemsep=3.0pt,topsep=5pt,leftmargin=*]
\item $\pi$ satisfies a Poincar\'{e} inequality with constant $C_P$  and its potential is $\beta$-smooth, i.e.  $\pi \propto \exp(-V)$ with $\bH V \preceq \beta \Id$.
\item $\mu_n, \pi \ll \calL^d$.
\item $\frac{ d\mu_n}{d \pi} - 1 \in \operatorname{Ran} ( \mathcal{T}_\pi^{r} )$ with $r > 0$, i.e., there exists $ q_n \in L^2(\pi) $ such that $\frac{d\mu_n }{ d\pi} - 1 = \calT_\pi^{r} q_n$.
\item $\|q_n\|_{L^2(\pi)} \leq Q$, $\left\| \nabla V^\top \nabla \left( \frac{d \mu_n}{d\pi} \right) \right\|_{L^2(\pi)} \leq \mathcal{J}$, $\left\| \Delta \left( \frac{d \mu_n}{d\pi} \right) \right\|_{L^2(\pi)} \leq \mathcal{I}$ for all $n = 1, \cdots, n_{max}$. 
\item For all $i = 1, \ldots, d$, $\lim\limits_{x\to \infty} \left( h_{\mu_n, \pi}^\ast(x) - 2 \frac{d \mu_n}{ d \pi} (x) \right) \left( \partial_i \frac{d \mu_n}{ d \pi} (x) \right) \pi(x) \to 0$.
\item There exists a constant $1 < \zeta < 2$ such that the step size $\gamma$ satisfies 
\hspace{-100pt}
\begin{align}\label{eq:gamma_condition}
    \gamma \leq \frac{1}{8} \left( \frac{\zeta - 1}{\zeta} \right)^{\frac{2r+ 2}{2r+1} } \left( \frac{1}{Q} \right) \left( \frac{1}{\mathcal{J} + \mathcal{I}} \right)^{\frac{1}{2r + 1}} \left( \frac{1}{K_{2d}} \frac{1}{ \beta + \zeta^2} \right)^{\frac{r}{2r + 1}} 
    \wedge \frac{1}{4} \frac{\zeta - 1}{\zeta} \frac{1}{Q^2 K_{2d}}  
    \wedge \frac{C_P}{2} \wedge 1 .
\end{align}
\end{enumerate}
Then by taking (de)-regularization parameter $\lambda_n = \left( 2\gamma \chi^2(\mu_n \| \pi) \frac{( \beta + \zeta^2 ) (K_{1d} + K_{2d})}{ Q (\mathcal{J} + \mathcal{I}) } \right)^{\frac{1}{r + 1}} \wedge 1$, we have
\begin{align}\label{eq:discrete_time_rate_1}
    \kl(\mu_{n_{max} } \| \pi) &\leq \exp\left(- \frac{ 2 n_{max} \gamma}{C_P} \right) \kl(\mu_0 \| \pi) \nonumber \\
    &+ 4 \gamma^{ \frac{ r }{ r + 1 } } C_P Q^{\frac{2r+1}{r+1}} \Big( (K_{1d} + K_{2d})(\beta + \zeta^2) \Big)^{\frac{ r }{ r + 1 }} ( \mathcal{J} +\mathcal{I})^{\frac{ 1 }{ r + 1 }}  .
\end{align}
\end{thm}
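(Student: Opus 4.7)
\noindent\textbf{Proof plan for Theorem~\ref{thm:discrete_time_KL}.}

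The plan is to build the result on top of the quasi descent lemma (Proposition~\ref{prop:descent_lemma_KL}), by substituting the adaptive regularization $\lambda_n$ that was engineered to balance the two error terms in~\eqref{eq:KL_descent_1}, and then iterating the resulting one-step recursion. I first check that the six hypotheses of Proposition~\ref{prop:descent_lemma_KL} are satisfied at each iterate of the adaptive scheme; five of them are identical to those of Theorem~\ref{thm:discrete_time_KL}, so the only nontrivial task is verifying the step-size condition \eqref{appeq:gamma_condition_1} when $\lambda$ is replaced by $\lambda_n$. Since $\lambda_n$ depends on $\chi^2(\mu_n\|\pi)$, this becomes an implicit condition; I would first use the range assumption together with $\|\calT_\pi^r\|_{\mathrm{op}}\le K^r\le 1$ (using $K\le 1$) to obtain the uniform bound $\chi^2(\mu_n\|\pi)\le Q^2$, then split into two cases depending on whether the formula $(2\gamma\chi^2(\mu_n\|\pi)(\beta+\zeta^2)(K_{1d}+K_{2d})/(Q(\mathcal J+\mathcal I)))^{1/(r+1)}$ is above or below $1$. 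In the ``below $1$'' regime, plugging $\lambda_n$ into the right-hand side of \eqref{appeq:gamma_condition_1} reduces the condition to a constraint of the form $\gamma^{(2r+1)/(2(r+1))}\lesssim \text{const}\cdot Q^{-r/(r+1)}((\mathcal J+\mathcal I)Q)^{-1/(2(r+1))}$, which is exactly what the first branch of~\eqref{eq:gamma_condition} delivers; in the ``above $1$'' regime (i.e.\ $\lambda_n=1$), \eqref{appeq:gamma_condition_1} reduces to the simpler condition $\gamma\le (\zeta-1)/(4\zeta Q\sqrt{K_{2d}})$, which is handled by the second branch of \eqref{eq:gamma_condition}.

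Once the quasi descent lemma is applicable, I substitute $\lambda_n$ into the approximation and discretization errors of \eqref{eq:KL_descent_1}. A direct computation shows that both errors, evaluated at $\lambda_n$, become equal (as expected from the minimization that defines $\lambda_n$), so that their sum equals $C_r\,\gamma^{(2r+1)/(r+1)}\chi^2(\mu_n\|\pi)^{r/(r+1)}[Q(\mathcal J+\mathcal I)]^{1/(r+1)}[(\beta+\zeta^2)(K_{1d}+K_{2d})]^{r/(r+1)}$ for an explicit numerical constant $C_r$. Applying the uniform bound $\chi^2(\mu_n\|\pi)\le Q^2$ collapses this into a step-$n$-independent error, and crucially still carries the favourable factor $\gamma^{(2r+1)/(r+1)}$, one extra $\gamma$ relative to the useful descent term.

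Next, I convert the descent term into one involving KL divergence using the elementary inequality $\kl(\mu_n\|\pi)\le\chi^2(\mu_n\|\pi)$ (valid for $\mu_n\ll\pi$). This turns~\eqref{eq:KL_descent_1} into the linear recursion
\[
\kl(\mu_{n+1}\|\pi)\le\Bigl(1-\tfrac{2\gamma}{C_P}\Bigr)\kl(\mu_n\|\pi)+E_\gamma,
\]
with $E_\gamma$ the uniform error from the previous paragraph. The conditions $\gamma\le C_P/2$ and $\gamma\le 1$ from \eqref{eq:gamma_condition} ensure $0\le 1-2\gamma/C_P<1$, so iterating the recursion and summing the geometric series gives
\[
\kl(\mu_{n_{\max}}\|\pi)\le\Bigl(1-\tfrac{2\gamma}{C_P}\Bigr)^{n_{\max}}\kl(\mu_0\|\pi)+\tfrac{C_P}{2\gamma}E_\gamma.
\]
The factor $1/\gamma$ absorbs exactly one $\gamma$ from $E_\gamma$, leaving the announced $\gamma^{r/(r+1)}$ dependence in the barrier. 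Bounding $(1-2\gamma/C_P)^{n_{\max}}\le\exp(-2n_{\max}\gamma/C_P)$ then yields \eqref{eq:discrete_time_rate_1}.

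The main obstacle will be the case analysis needed to verify the implicit step-size condition \eqref{appeq:gamma_condition_1} at every iterate, since $\lambda_n$ depends on $\chi^2(\mu_n\|\pi)$ which is itself evolving along the scheme; this is why Theorem~\ref{thm:discrete_time_KL} states four separate upper bounds on $\gamma$ in \eqref{eq:gamma_condition} (one for each regime of $\lambda_n$, one for the contraction factor, and one for boundedness). The remaining arithmetic---tracking the exponents of $Q$, $\mathcal J+\mathcal I$, $K_{1d}+K_{2d}$ and $\beta+\zeta^2$ through the substitution---is essentially routine once the correct case decomposition is in place.
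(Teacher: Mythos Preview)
Your proposal is correct and follows essentially the same approach as the paper: verify the step-size condition~\eqref{appeq:gamma_condition_1} of Proposition~\ref{prop:descent_lemma_KL} via the same case split on whether $\lambda_n$ equals the formula or is capped at $1$ (using $\chi^2(\mu_n\|\pi)\le K^{2r}Q^2\le Q^2$ from the range assumption), substitute the adaptive $\lambda_n$ into~\eqref{eq:KL_descent_1} to collapse the two error terms, bound $\chi^2(\mu_n\|\pi)^{r/(r+1)}\le Q^{2r/(r+1)}$, use $\kl\le\chi^2$ to obtain a linear recursion with contraction factor $1-2\gamma/C_P$, iterate, and bound $(1-2\gamma/C_P)^{n_{\max}}\le\exp(-2n_{\max}\gamma/C_P)$. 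The paper's proof is structured identically.
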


The proof can be found in \Cref{appsec:proof_discrete_time_convergence}. 
The conditions 1-5 of \Cref{thm:discrete_time_KL} are the same as conditions 1-5 of \Cref{prop:descent_lemma_KL}. Since the RHS of \eqref{eq:gamma_condition} are all constants, condition 6 is satisfied when the step size $\gamma$ is small enough.

The implication of \Cref{thm:discrete_time_KL} is that $\dmmd$ gradient descent exhibits an exponential rate of convergence (linear convergence) in terms of KL divergence up to an extra barrier of order $\calO (\gamma^{\frac{r}{r+1}})$.
The barrier term shows up as the result of picking the optimal regularization parameter $\lambda_n$ that best trades off the approximation error and discretization error in \Cref{prop:descent_lemma_KL}.
\Cref{thm:discrete_time_KL} is reminiscent of the convergence result of the Langevin Monte Carlo sampling algorithm, whose KL divergence also decreases exponentially up to an extra barrier, but of order $\calO(\gamma)$ ~\citep[Theorem 2]{vempala2019rapid}.
Unlike the continuous-time result of \Cref{thm:continuous_time_convergence}, in which the barrier can be made arbitrarily small by taking small enough regularization, taking the step size $\gamma$ in \Cref{thm:discrete_time_KL} to be arbitrarily small will significantly impact the rate of convergence, even though it is exponential in terms of $n_{\max}$. 
By making the step sizes adaptive with the number of iterations and imposing an extra condition, the barrier term actually vanishes, as demonstrated in the following theorem. 

\begin{thm}[Global convergence of $\dmmd$ gradient descent]\label{thm:discrete_time_KL_2}
Suppose that $k$ satisfies Assumptions \ref{assumption:universal} and \ref{assumption:bounded_kernel}, and that the conditions in \Cref{thm:discrete_time_KL} on $\dmmd$ gradient descent iterates $(\mu_n)_{n\in \N}$, target distribution $\pi$, regularization coefficient $\lambda_n$ and step size $\gamma_n$ are satisfied.
If additionally, the step size $\gamma_n$ satisfies
\begin{align}\label{eq:gamma_condition_2}
    &\gamma_n \leq  \frac{1}{(K_{1d} + K_{2d})(\beta + \zeta^2)} \left( \frac{1}{Q( \mathcal{J} +\mathcal{I}) } \right)^{\frac{1}{r}} \left( \frac{1}{8 C_P} \right)^{\frac{r+1}{r}} \chi^2(\mu_n \| \pi)^{\frac{1}{r}},
\end{align}
for all $n = 1, \cdots, n_{max}$,
then
\begin{align}\label{eq:discrete_time_rate_2}
\kl(\mu_{n_{max}} \| \pi) \leq \prod_{n=1}^{n_{max}} \left(1 -\frac{1}{C_P} \gamma_n \right) \kl(\mu_{0} \| \pi) . 
\end{align}
\end{thm}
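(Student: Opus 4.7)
The proof builds directly on the quasi-descent lemma in KL of Proposition \ref{prop:descent_lemma_KL}. The key observation is that the new step size condition \eqref{eq:gamma_condition_2}, unlike \eqref{eq:gamma_condition}, depends on the current iterate through $\chi^2(\mu_n\|\pi)^{1/r}$ and shrinks as the iterates approach $\pi$. This iterate-dependent rule is tailored to make the combined approximation–plus–discretization error be absorbed into the leading negative term, eliminating the asymptotic barrier that appeared in Theorem \ref{thm:discrete_time_KL}.

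\paragraph{Step 1: Invoke the quasi-descent lemma.} I would apply Proposition \ref{prop:descent_lemma_KL} at each iteration $n$ with step size $\gamma_n$ and the adaptive regularization $\lambda_n$ of Theorem \ref{thm:discrete_time_KL} (all assumptions carry over by hypothesis). This gives
\begin{align*}
\kl(\mu_{n+1}\|\pi) - \kl(\mu_n\|\pi)
\;\leq\; -\frac{2\gamma_n}{C_P}\chi^2(\mu_n\|\pi)
\;+\; 4\gamma_n \lambda_n^r Q(\mathcal{J}+\mathcal{I})
\;+\; 8\gamma_n^2(\beta+\zeta^2)\chi^2(\mu_n\|\pi)\,\frac{K_{1d}+K_{2d}}{\lambda_n}.
\end{align*}
Because $\lambda_n$ is chosen via $\lambda_n^{r+1} = 2\gamma_n\chi^2(\mu_n\|\pi)(\beta+\zeta^2)(K_{1d}+K_{2d})/(Q(\mathcal{J}+\mathcal{I}))$, the two error terms coincide and their sum simplifies to $8\gamma_n Q(\mathcal{J}+\mathcal{I})\lambda_n^r$.

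\paragraph{Step 2: Absorb the combined error using \eqref{eq:gamma_condition_2}.} Writing $C := (\beta+\zeta^2)(K_{1d}+K_{2d})$ and $D := Q(\mathcal{J}+\mathcal{I})$ and substituting $\lambda_n^r = (2\gamma_n\chi^2(\mu_n\|\pi) C/D)^{r/(r+1)}$, the combined error takes the closed form $8\gamma_n D\,(2\gamma_n\chi^2(\mu_n\|\pi) C/D)^{r/(r+1)}$. Raising the step size condition \eqref{eq:gamma_condition_2} to the $r$-th power and rearranging is (up to universal constants) precisely the statement that this combined error is at most $\gamma_n\chi^2(\mu_n\|\pi)/C_P$. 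Substituting back gives
\begin{align*}
\kl(\mu_{n+1}\|\pi) - \kl(\mu_n\|\pi) \;\leq\; -\frac{\gamma_n}{C_P}\chi^2(\mu_n\|\pi).
\end{align*}

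\paragraph{Step 3: Convert $\chi^2$ to KL and telescope.} Applying the elementary inequality $\kl(\mu_n\|\pi)\le \chi^2(\mu_n\|\pi)$ (immediate from $\log t\le t-1$ integrated against $\mu_n$) yields the one-step contraction
\begin{align*}
\kl(\mu_{n+1}\|\pi) \;\leq\; \Bigl(1 - \tfrac{\gamma_n}{C_P}\Bigr)\kl(\mu_n\|\pi),
\end{align*}
and iterating this inequality over $n=1,\dots,n_{\max}$ delivers exactly \eqref{eq:discrete_time_rate_2}. Note that the step size constraint $\gamma_n\le C_P/2\wedge 1$ inherited from \eqref{eq:gamma_condition} ensures the per-step factor $1-\gamma_n/C_P$ lies in $[0,1)$, so the telescoping is well-defined.

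\paragraph{Main obstacle.} The substantive content lies in the algebraic bookkeeping of Step 2: tracking the exponents $r/(r+1)$, $(r+1)/r$, and the constants $C, D, C_P$ to verify that \eqref{eq:gamma_condition_2} is exactly the threshold making the combined error harmless. A secondary technical point is to verify that the iterate-dependent rule \eqref{eq:gamma_condition_2} is compatible with the (uniform) step size constraint \eqref{appeq:gamma_condition_1} required by Proposition \ref{prop:descent_lemma_KL}; since the constants $Q,\mathcal{J},\mathcal{I}$ are assumed to be uniform upper bounds, this should reduce to checking that $\chi^2(\mu_n\|\pi)^{1/r}$ factors do not cause \eqref{eq:gamma_condition_2} to exceed \eqref{appeq:gamma_condition_1}, which follows from the uniform bound $\chi^2(\mu_n\|\pi)\le Q^2\|\calT_\pi^r\|_{\mathrm{op}}^2$ afforded by the range assumption on the density ratio. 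Also, one should verify that the cap $\lambda_n\le 1$ in the adaptive rule is not binding along the trajectory (or, if it is, that one falls back to Theorem \ref{thm:discrete_time_KL}); this follows from the iterate-dependent smallness of $\gamma_n$ enforced by \eqref{eq:gamma_condition_2} combined with the boundedness of $\chi^2(\mu_n\|\pi)$.
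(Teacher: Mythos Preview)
Your proposal is correct and follows essentially the same route as the paper: start from the quasi-descent lemma (Proposition~\ref{prop:descent_lemma_KL}), plug in the adaptive $\lambda_n$ so that the approximation and discretization errors collapse into the single term $8\gamma_n Q(\mathcal J+\mathcal I)\lambda_n^r$ (equivalently the second line of \eqref{eq:kl_descent_2} with $\gamma$ replaced by $\gamma_n$), use \eqref{eq:gamma_condition_2} to bound this combined error by $\tfrac{1}{C_P}\gamma_n\chi^2(\mu_n\|\pi)$, then apply $\kl\le\chi^2$ and iterate. The paper's proof is exactly this, just phrased more tersely by citing \eqref{eq:kl_descent_2} directly rather than re-deriving the collapse of the two error terms; your ``Main obstacle'' remarks about the $\lambda_n\le 1$ cap and compatibility with \eqref{appeq:gamma_condition_1} are not treated separately in the paper's argument either (they are implicitly subsumed by the standing hypotheses of Theorem~\ref{thm:discrete_time_KL}).
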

The proof can be found in \Cref{appsec:proof_discrete_time_convergence_2}. 
Compared with \eqref{eq:discrete_time_rate_1}, \eqref{eq:discrete_time_rate_2} provides a cleaner upper bound without the barrier term and leads to global convergence. 

\begin{rem}[Iteration complexity]
We now turn to analyze the iteration complexity of $\dmmd$ gradient descent from Theorems \ref{thm:discrete_time_KL} and \ref{thm:discrete_time_KL_2}. \vspace{1mm} \\
(i) From \Cref{thm:discrete_time_KL}, given an error threshold $\delta > 0$, $\dmmd$ descent would reach $\mathrm{KL}(\mu_{n_{\max}} \| \pi) \leq \delta$ after $n_{max} \geq \frac{C_P}{2 \gamma} \log \frac{\mathrm{KL}(\mu_0 \| \pi)}{\delta} = \calO ( (\frac{1}{\delta})^{\frac{r+1}{r} } \log \frac{1}{\delta} )$ iterations. By comparison, when $\pi$ satisfies a Poincar\'{e} inequality, Langevin Monte Carlo (LMC) has an iteration complexity of $\mathcal{O}(\frac{1}{\delta})$ up to logarithmic terms~\citep[Theorem 7]{chewi2024analysis}. 
    As an approximation to the $\chi^2$ flow, the iteration complexity of DrMMD flow is worse than LMC, because at each iterate of DrMMD flow, there is an extra approximation error in addition to time-discretization error (see our Proposition 5.1). With the optimal choice of regularization $\lambda$ that balances these two errors, the overall per-step error is of order $\calO(\gamma^{1+ \frac{r}{r+1}})$. 
    In contrast, at each iteration, LMC only incurs the time-discretization error, which is of order $\calO(\gamma^2)$, smaller than that of DrMMD flow. 
    As a result, LMC exhibits better iteration complexity than our DrMMD flow; however, LMC requires knowledge of the score of $\pi$, while DrMMD flow only requires samples from $\pi$.
\vspace{1mm} \\
(ii) From \Cref{thm:discrete_time_KL_2}, we consider two cases. On the one hand, if there exists a threshold $N_0$ such that $\chi^2(\mu_n \| \pi) \geq n^{-r}$ holds for all $n \geq N_0$, then we select step size $\gamma_n \asymp C_P n^{-1}$ for all $n \geq N_0$ such that both \eqref{eq:gamma_condition} and \eqref{eq:gamma_condition_2} are satisfied, and consequently $\prod_{n=N_0}^{n_{max}} \left( 1 -\frac{1}{C_P} \gamma_n \right) = \calO \left( \frac{1}{n_{\max}} \right) \to 0$ so the iteration complexity of $\dmmd$ gradient descent is $\calO\left( \frac{1}{\delta} \right)$. On the other hand, if such a threshold $N_0$ does not exist, then there exists a subsequence $n_1, n_2, \ldots, n_S, \ldots$ such that $ \chi^2(\mu_{n_s} \| \pi) \leq n_s^{-r}$ for all $s \geq 1$. Since KL divergence is smaller than $\chi^2$-divergence~\citep{van2014renyi}, we have $ \kl(\mu_{n_s} \| \pi) \leq n_s^{-r}$ for all $s \geq 1$. Notice that KL divergence is monotonically decreasing based on \eqref{eq:kl_monotone}, we have $ \kl(\mu_n \| \pi) \leq n_s^{-r}$ for all $n_s \leq n \leq n_{s+1}$ so that $\lim_{n \to \infty} \kl(\mu_n \| \pi) = 0$. 
Unfortunately, we are not able to derive iteration complexity in this case because the growth rate of $\{n_s\}_{s \geq 1}$ is unknown. 
\end{rem}

\begin{rem}[Step size $\gamma$]\label{rem:step_size}
\Cref{thm:discrete_time_KL} imposes a condition on the step size $\gamma$ in \eqref{eq:gamma_condition} and \Cref{thm:discrete_time_KL_2} imposes an additional condition in \eqref{eq:gamma_condition_2}.
These conditions subsume the condition \eqref{appeq:gamma_condition_1} on step size in \Cref{prop:descent_lemma_KL}. (See derivations in \Cref{appsec:proof_discrete_time_convergence}.) The conditions \eqref{eq:gamma_condition} and \eqref{eq:gamma_condition_2} become more stringent as the potential $V$ becomes less smooth, i.e., when $\beta$ gets larger, similar to the analysis in Langevin Monte Carlo~\citep{balasubramanian2022towards, vempala2019rapid} and Stein Variational Gradient Descent~\citep{korba2020non}.
The condition also becomes more stringent as the density ratio becomes less regular, i.e., when $r$ gets closer to $0$ and $Q, \mathcal{J}, \mathcal{I}$ get larger, similar to \cite{he2022regularized}. 
\end{rem}


The adaptive $\dmmd$ descent schemes of \eqref{eq:time_discretized_flow} 
and \eqref{eq:adaptive_time_discretized_flow} defined via push-forward operations can be equivalently expressed by the following update scheme that defines a trajectory of samples $(y_n)_{n \in \N}$ whose distributions are precisely the Adaptive $\dmmd$ descent iterates $(\mu_n)_{n \in \N}$,
\begin{align}\label{eq:time_discretized_particle_system}
    y_{n+1} = y_n -\gamma (1+\lambda_n)\nabla h_{\mu_n,\pi}^\ast(y_n), \quad y_0 \sim \mu_0.
\end{align}
%
Unfortunately, \eqref{eq:time_discretized_particle_system} is still intractable in practice because $h_{\mu_n,\pi}^\ast$ depends on the unknown distribution $\mu_n$. Therefore, an additional discretization in space is needed to approximate \eqref{eq:time_discretized_particle_system} within a tractable algorithm. We propose to do so in the next section through a system of interacting particles, i.e., $\dmmd$ particle descent.

\section{$\dmmd$ particle descent}\label{sec:space_dis}
Suppose we have $M$ samples from the target distribution $\{x^{(i)}\}_{i=1}^M \sim \pi$ and $N$ samples from the initial distribution $\{y^{(i)}_0\}_{i=1}^N \sim \mu_0$.
The $\dmmd$ particle descent is defined as:\newpage
\begin{align}\label{eq:particle_biased}
  y_{n+1}^{(i)}=y_n^{(i)}-\gamma(1+\lambda_n) \nabla h_{\hat{\mu}_n, \hat{\pi}}^\ast (y_n^{(i)}),
\end{align}
where $\hat{\mu}_n = \frac{1}{N} \sum_{i=1}^N \delta_{ y_n^{(i)} }$ and $\hat{\pi} = \frac{1}{M} \sum_{i=1}^M \delta_{ x^{(i)} }$ denote respectively the empirical distribution of the particles at time step $n$ and the target, and
where $h_{\hat{\mu}_n, \hat{\pi}}^\ast = 2 \left(\Sigma_{\hat{\pi}} + \lambda_n \Id \right)^{-1} ( m_{\hat{\mu}_n} -  m_{\hat{\pi}} )$. Unfortunately, the KL divergence is ill-defined on empirical distributions $\hat{\mu}_n$, which means the analysis of \Cref{thm:discrete_time_KL} is no longer applicable to study the convergence of $\dmmd$ particle descent. Therefore, in the next theorem, we instead resort to the Wasserstein-2 distance to analyze the convergence of $\dmmd$ particle descent. 
For simplicity of presentation below, we assume $K \leq 1$.

\begin{thm}\label{thm:final_rate}
Suppose that $k$ satisfies Assumptions \ref{assumption:universal} and \ref{assumption:bounded_kernel} with $K \leq 1$, and that all the conditions in \Cref{thm:discrete_time_KL} on $\dmmd$ gradient descent iterates $(\mu_n)_{n\in \N}$, target distribution $\pi$, regularization coefficient $\lambda_n$ and step size $\gamma$ are satisfied. In addition, suppose $(\mu_n)_{n\in \N}$ has bounded fourth moment and the target $\pi$ satisfies a Talagrand-2 inequality with constant $C_T$. 
Let the number of samples $M, N$ satisfy
\begin{align}\label{eq:M_N_condition}
    M &\gtrsim \left( \frac{1}{\gamma}\right)^2 \left( \frac{1}{ \min\limits_{i = 1, \ldots, n_{\max}} \kl(\mu_i \| \pi) Z \wedge 1 } \right)^{\frac{2}{r+1}} \exp\left( \frac{ 8 n_{\max} \gamma^{\frac{r}{ r + 1}} R }{ \left( \min\limits_{i = 1, \ldots, n_{\max}} \kl(\mu_i \| \pi) Z \right)^{\frac{1}{r+1}} \wedge 1 } \right), \nonumber \\
    N &\gtrsim \left( \frac{1}{\gamma} \right)^{\frac{2r}{r+1}} \exp\left( \frac{ 8 n_{\max} \gamma^{\frac{r}{ r + 1}} R }{ \left( \min\limits_{i = 1, \ldots, n_{\max}} \kl(\mu_i \| \pi) Z \right)^{\frac{1}{r+1}} \wedge 1 } \right) \vee \left( \frac{1}{\gamma} \right)^{\frac{r (d \vee 4)}{2r + 2} },
\end{align}
where $\gtrsim$ means $\geq$ up to constants, $R = K_{1 d} + \sqrt{K K_{2 d}}$ is a constant that only depends on the kernel, and $Z$ is a constant that only depends on $\beta, \zeta, K_{1d}, K_{2d}, Q, \mathcal{J}, \mathcal{I}$. 
Then we have
\begin{align*}
    \E \left[W_2 \left(\hat{\mu}_{n_{\max}}, \pi \right)\right] \leq \sqrt{ 2 C_T} \exp \left(-\frac{ n_{\max} \gamma}{C_P}\right) \sqrt{ \kl\left(\mu_0 \| \pi\right)} + \calO \left( \gamma^{\frac{r}{2r + 2}} \right),
\end{align*}
where the expectation is taken over initial samples $\{y_0^{(i)} \}_{i=1}^N$ drawn from $\mu_0$.
\end{thm}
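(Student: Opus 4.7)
The plan is to control $W_2(\hat{\mu}_{n_{\max}}, \pi)$ via the triangle inequality
\begin{align*}
    W_2(\hat{\mu}_{n_{\max}}, \pi) \leq W_2(\hat{\mu}_{n_{\max}}, \mu_{n_{\max}}) + W_2(\mu_{n_{\max}}, \pi),
\end{align*}
and handle the population term and the particle-approximation term separately. The second term is immediate: the Talagrand-$2$ inequality gives $W_2(\mu_{n_{\max}}, \pi) \leq \sqrt{2 C_T \kl(\mu_{n_{\max}} \| \pi)}$, and \Cref{thm:discrete_time_KL} bounds $\kl(\mu_{n_{\max}}\|\pi)$ by $\exp(-2 n_{\max}\gamma/C_P) \kl(\mu_0\|\pi) + \calO(\gamma^{r/(r+1)})$. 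Using $\sqrt{a+b} \leq \sqrt{a}+\sqrt{b}$ converts the $\calO(\gamma^{r/(r+1)})$ KL barrier into the $\calO(\gamma^{r/(2r+2)})$ Wasserstein barrier, and yields the leading $\sqrt{2C_T}\exp(-n_{\max}\gamma/C_P)\sqrt{\kl(\mu_0\|\pi)}$ term announced in the conclusion.

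The first (particle) term is controlled by an iterative coupling argument between the particle dynamics and its population counterpart. At each step $n$, both $\hat{\mu}_{n+1}$ and $\mu_{n+1}$ are push-forwards of $\hat{\mu}_n$ and $\mu_n$ by $T_n^{\mathrm{part}} \coloneqq \Id - \gamma(1+\lambda_n)\nabla h_{\hat{\mu}_n,\hat{\pi}}^\ast$ and $T_n^{\mathrm{pop}} \coloneqq \Id - \gamma(1+\lambda_n)\nabla h_{\mu_n,\pi}^\ast$ respectively. Coupling optimally and exploiting the spatial Lipschitz property of $\nabla h_{\mu_n,\pi}^\ast$---whose Lipschitz constant $L_n$ is controlled via the Hessian bounds in \Cref{assumption:bounded_kernel} and scales like $1/\lambda_n$ through the resolvent $(\Sigma_\pi+\lambda_n\Id)^{-1}$---yields a recurrence of the form
\begin{align*}
    W_2(\hat{\mu}_{n+1}, \mu_{n+1}) \leq \bigl(1+\gamma(1+\lambda_n)L_n\bigr)\, W_2(\hat{\mu}_n,\mu_n) + \gamma(1+\lambda_n)\, \epsilon_n,
\end{align*}
where $\epsilon_n$ is the $L^2(\mu_n)$ gap between the population and empirical witness functions. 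Splitting $\epsilon_n$ into (i) fluctuations of $m_{\hat{\mu}_n}-m_{\mu_n}$ and $m_{\hat{\pi}}-m_\pi$, bounded by standard mean-embedding concentration at rates $\calO(N^{-1/2})$ and $\calO(M^{-1/2})$, and (ii) a resolvent perturbation of $(\Sigma_{\hat{\pi}}+\lambda_n\Id)^{-1}$ around $(\Sigma_\pi+\lambda_n\Id)^{-1}$, which brings an extra $1/\lambda_n$ factor, quantifies $\epsilon_n$ in terms of $M, N$, and $\lambda_n$.

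Unrolling the recurrence produces an exponential $\exp\!\bigl(c\sum_n \gamma/\lambda_n\bigr)$ pre-factor times the cumulative sampling errors plus an initial Fournier-Guillin term $\E[W_2(\hat{\mu}_0,\mu_0)] \lesssim N^{-1/(d \vee 4)}$ (possible under the fourth-moment hypothesis). Since the adaptive choice satisfies $\lambda_n \gtrsim \gamma^{1/(r+1)}\chi^2(\mu_n\|\pi)^{1/(r+1)} \gtrsim \gamma^{1/(r+1)}\kl(\mu_n\|\pi)^{1/(r+1)}$ (using $\kl \leq \chi^2$), the worst case across iterations is driven by $\min_{i} \kl(\mu_i\|\pi)$, explaining that quantity in the denominators of the conditions on $M$ and $N$. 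The polynomial prefactors $(1/\gamma)^2$ and $(1/\gamma)^{2r/(r+1)}$ arise from asking that the per-step sampling contribution, after being amplified by the $1/\lambda_n$ factor, be $\calO(\gamma^{r/(2r+2)})$---matching the target accuracy from the population side. Taking expectations over the initial samples then produces the stated bound.

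The main obstacle is the bookkeeping of $\lambda_n$: the Lipschitz constant of $\nabla h_{\mu_n,\pi}^\ast$, the resolvent perturbation, and the witness-function sampling error all deteriorate as $\lambda_n \to 0$, while the population half of the analysis \emph{requires} $\lambda_n \to 0$ to achieve the $\calO(\gamma^{r/(2r+2)})$ barrier. Reconciling these opposing requirements---and obtaining a clean Gronwall-type recursion despite the step-dependent $\lambda_n$ and $L_n$---is what forces the exponential-in-$n_{\max}\gamma/\lambda_{\min}$ dependence and the precise polynomial powers appearing in the conditions on $M$ and $N$.
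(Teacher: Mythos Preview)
Your overall plan is sound, and the population half (Talagrand-2 plus \Cref{thm:discrete_time_KL}) matches the paper exactly. The particle half, however, has a genuine gap.

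You claim that $m_{\hat{\mu}_n}-m_{\mu_n}$ can be ``bounded by standard mean-embedding concentration at rate $\calO(N^{-1/2})$.'' This is false for $n\geq 1$: the particles $\{y_n^{(i)}\}_{i=1}^N$ in $\hat{\mu}_n$ have interacted through the \emph{empirical} witness $h_{\hat{\mu}_n,\hat{\pi}}^\ast$, so they are \emph{not} i.i.d.\ draws from $\mu_n$, and no Hoeffding-type bound applies. If instead you control $\|m_{\hat{\mu}_n}-m_{\mu_n}\|_\calH$ by $\sqrt{K_{1d}}\,W_2(\hat{\mu}_n,\mu_n)$ and absorb it into the Lipschitz expansion, then the only $N$-dependence left is the initial Fournier--Guillin term $W_2(\hat\mu_0,\mu_0)\lesssim N^{-1/(d\vee 4)}$, which your recursion amplifies by the full exponential factor. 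This does not reproduce the two-part condition on $N$ in \eqref{eq:M_N_condition}: the $(1/\gamma)^{2r/(r+1)}\exp(\cdots)$ part genuinely requires an $N^{-1/2}$ contribution, and the $(1/\gamma)^{r(d\vee 4)/(2r+2)}$ part must be \emph{un}-amplified.

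The paper resolves this by inserting an auxiliary non-interacting system $\bar\mu_n=\frac{1}{N}\sum_i\delta_{\bar y_n^{(i)}}$, where the particles start from the same $y_0^{(i)}$ but are pushed by the \emph{population} witness $h_{\mu_n,\pi}^\ast$ (\eqref{eq:particle_unbiased}). Because each $\bar y_n^{(i)}$ is then an i.i.d.\ sample from $\mu_n$, one gets both $\|m_{\bar\mu_n}-m_{\mu_n}\|_\calH\lesssim N^{-1/2}$ (legitimately) and $\E[W_2(\bar\mu_n,\mu_n)]\lesssim N^{-1/(d\vee 4)}$ at step $n$ directly, without exponential amplification. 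The triangle inequality becomes three-way,
\[
\E[W_2(\hat\mu_n,\pi)]\le \E[W_2(\hat\mu_n,\bar\mu_n)]+\E[W_2(\bar\mu_n,\mu_n)]+W_2(\mu_n,\pi),
\]
and only the first term---a particle-by-particle comparison between two systems with identical initial positions---carries the Gronwall exponential (\Cref{prop:space_discretized}). Introducing this intermediate system is the missing idea in your argument.
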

The proof is provided in  \Cref{sec:proof_final}. \Cref{thm:final_rate} shows that for sufficiently large sample size $M$ and $N$, $\dmmd$ particle descent exhibits an exponential rate of convergence (linear convergence) in terms of Wasserstein-2 distance up to an extra barrier of order $\calO \left( \gamma^{\frac{r}{2r + 2}} \right)$. 

\begin{rem}[Talagrand-2 inequality]
We say that the target distribution $\pi$ satisfies a Talagrand-2 inequality with constant $C_T$ if for any $\nu \in \calP_2(\R^d)$, 
\begin{align*}
    W_2(\mu, \pi) \leq \sqrt{2 C_T \kl(\mu \| \pi)} .
\end{align*}
A Talagrand-2 inequality implies the Poincar\'{e} inequality in \eqref{eqn:poincare} with constant $C_P \leq C_T$, so the condition that $\pi$ satisfies a Talagrand-2 inequality is stronger than the condition in Theorems~\ref{thm:continuous_time_convergence} and \ref{thm:discrete_time_KL}.
A Talagrand-2 inequality allows linking of the two key components of \Cref{thm:final_rate}: the population convergence in terms of KL divergence proved in \Cref{thm:discrete_time_KL}, and the finite-particle propagation of chaos bound in terms of Wasserstein-2 distance proved in \Cref{prop:space_discretized}.
Talagrand inequality is widely used in finite-particle convergence analysis of Wasserstein gradient flows~\citep{shi2024finite}. 

\end{rem}
\begin{rem}[Iteration and sample complexity]
The choice of $\gamma = \calO \left( \delta^{\frac{2r + 2}{r}} \right)$ yields 
$\E \left[W_2 \left(\hat{\mu}_{n_{\max}}, \pi \right)\right] \leq \delta$ with an iteration complexity of $n_{\max } = \mathcal{O} \left(\left(\frac{1}{\delta}\right)^{\frac{2r + 2}{r}} \log \frac{1}{\delta}\right)$, which equals the square of the iteration complexity in \Cref{thm:discrete_time_KL} because Wasserstein-2 distance is of the same order as the square root of KL divergence. 
From \Cref{thm:discrete_time_KL}, we have
\begin{align*}
    \min\limits_{i = 1, \ldots, n_{\max}} \kl(\mu_i \| \pi) \leq \kl(\mu_{n_{\max}} \| \pi) = \calO\left( \delta^2 \right). 
\end{align*}
Therefore, from \eqref{eq:M_N_condition}, the sample complexity is at least
\begin{align*}
    M = \operatorname{poly} \exp \left(\delta^{ -\frac{2}{r} -\frac{2}{r + 1 } } \right) , \quad N = \operatorname{poly} \exp \left(\delta^{-\frac{2}{r} -\frac{2}{r + 1 }} \right) \calO \left( \delta^{- { d \vee 4} } \right).
\end{align*}
The poly-exponential sample complexity originates from the propagation of chaos in interacting particle systems~\citep{kac1956foundations}. Although $\calO \left(\delta^{-d \vee 4 } \right)$ is subsumed by the poly-exponential term, we still make it explicit to show that $N$ suffers from the curse of dimensionality as expected from the Wasserstein-2 distance between an empirical distribution and a continuous distribution~\citep{kloeckner2012approximation,lei2020convergence}.
For comparison, the sample complexity of SVGD in Theorem 3 of \cite{shi2024finite} is $\calO(\exp \exp\left( \delta^{-2} \right))$.
Similar results have also been established for mean-field Langevin dynamics~\citep{suzuki2023uniform, chen2024uniform}.

Recently, \citet{balasubramanian2024improved} proposed a refined finite-particle analysis of Stein Variational Gradient Descent (SVGD), which gives a uniform-in-time convergence bound, i.e., the bound does not blow up exponentially fast as the number of iterations $n_{\max}$ grows. 
The analysis of \citet{balasubramanian2024improved}, however, heavily relies on the relation that the time derivative of the KL divergence in the course of SVGD equals the squared kernel Stein discrepancy (KSD), a fact which does not hold for our DrMMD flow.
Recently, \citet{chen2025stationary} extended that analysis to finite-particle MMD gradient descent, but their analysis requires noise injection. 
We leave a more refined analysis of our finite-particle convergence result to future work.
\end{rem}

Having established the convergence of $\dmmd$ gradient flow/descent,  we next show that $\dmmd$ particle descent admits a closed-form implementation.
\Cref{prop:empirical_witness} shows that $h_{\hat{\mu}_n, \hat{\pi}}^\ast$ in \eqref{eq:particle_biased}, defined through the inverse of covariance operators, is computable using Gram matrices.
\begin{prop}\label{prop:empirical_witness}
Given empirical distributions $\hat{\mu}_n = \frac{1}{N} \sum_{i=1}^N y_n^{(i)}$, $\hat{\pi} = \frac{1}{M} \sum_{i=1}^M x^{(i)}$ and Gram matrices $K_{xx} = k(x^{1:M}, x^{1:M}) \in \R^{M \times M}$ and $ K_{xy} = k(x^{1:M}, y_n^{1:N}) \in \R^{M \times N}$, the witness function $h_{\hat{\mu}_n, \hat{\pi}}^\ast$ can be computed as:
\begin{align}\label{eq:empirical_witness}
h_{\hat{\mu}_n, \hat{\pi}}^\ast(\cdot) &= \frac{2}{N \lambda_n} k(\cdot, y^{1:N}) \one_N - \frac{2}{M \lambda_n} k(\cdot, x^{1:M}) \one_M - \frac{2}{N \lambda_n} k(\cdot, x^{1:M}) (M \lambda_n\Id + K_{xx} )^{-1} K_{xy} \one_N \nonumber \\
&+\frac{2}{M \lambda_n} k(\cdot, x^{1:M})(M \lambda_n\Id+K_{xx} )^{-1} K_{xx} \one_M, 
\end{align}
where $\one_M \in \R^M, \one_N \in \R^N$ are column vectors of ones.
\end{prop}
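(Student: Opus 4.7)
My plan is to reduce the claimed formula to an operator-valued Woodbury identity applied to the empirical covariance operator $\Sigma_{\hat{\pi}}$. The key structural observation is that $\Sigma_{\hat{\pi}}$ is a finite-rank operator, so its regularized inverse can be expressed via the $M\times M$ Gram matrix $K_{xx}$, and this is exactly what produces the $(M\lambda_n \Id+K_{xx})^{-1}$ factor appearing in \eqref{eq:empirical_witness}.

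First, I would introduce the sampling operator $\Phi:\mathbb{R}^M\to\mathcal{H}$ defined by $\Phi\mathbf{c}=\sum_{j=1}^{M} c_j\,k(x^{(j)},\cdot)$, whose adjoint $\Phi^\ast:\mathcal{H}\to\mathbb{R}^M$ acts by pointwise evaluation, $\Phi^\ast f=(f(x^{(j)}))_{j=1}^{M}$, using the reproducing property. Then one immediately has the identities $\Sigma_{\hat{\pi}}=\tfrac{1}{M}\Phi\Phi^\ast$ and $\Phi^\ast\Phi=K_{xx}$. Writing the empirical mean embeddings as $m_{\hat{\mu}_n}(\cdot)=\tfrac{1}{N}k(\cdot,y_n^{1:N})\mathbf{1}_N$ and $m_{\hat{\pi}}(\cdot)=\tfrac{1}{M}k(\cdot,x^{1:M})\mathbf{1}_M$, I would also compute $\Phi^\ast m_{\hat{\mu}_n}=\tfrac{1}{N}K_{xy}\mathbf{1}_N$ and $\Phi^\ast m_{\hat{\pi}}=\tfrac{1}{M}K_{xx}\mathbf{1}_M$.

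Second, I would establish the operator Woodbury identity
\begin{equation*}
(\Sigma_{\hat{\pi}}+\lambda_n\Id)^{-1}=\frac{1}{\lambda_n}\Id-\frac{1}{\lambda_n}\Phi\bigl(M\lambda_n\Id+K_{xx}\bigr)^{-1}\Phi^\ast,
\end{equation*}
which can be verified by direct multiplication against $\lambda_n\Id+\tfrac{1}{M}\Phi\Phi^\ast$ and using $\Phi^\ast\Phi=K_{xx}$ to collapse the middle term. This is the only nontrivial ingredient, but it is elementary once one commits to the factored form of $\Sigma_{\hat{\pi}}$.

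Finally, I would substitute this expression into $h_{\hat{\mu}_n,\hat{\pi}}^\ast=2(\Sigma_{\hat{\pi}}+\lambda_n\Id)^{-1}(m_{\hat{\mu}_n}-m_{\hat{\pi}})$, distribute to get four terms, and for each one express the action of $\Phi$ in feature form, namely $\Phi\mathbf{v}=k(\cdot,x^{1:M})\mathbf{v}$, to arrive at the four explicit summands in \eqref{eq:empirical_witness}. Evaluating the resulting expression at a generic point yields precisely the stated identity. I do not anticipate a genuine obstacle here; the only place care is needed is to confirm the Woodbury identity in the infinite-dimensional RKHS setting, but this is immediate because the correction term has finite rank and the computation is purely algebraic.
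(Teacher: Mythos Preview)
Your proposal is correct and follows essentially the same approach as the paper: introduce a sampling operator factoring $\Sigma_{\hat{\pi}}$, apply the operator Woodbury identity to $(\Sigma_{\hat{\pi}}+\lambda_n\Id)^{-1}$, and substitute into the witness function. The only difference is cosmetic normalization---the paper defines $S_x:\mathcal{H}\to\mathbb{R}^M$ with a $\tfrac{1}{\sqrt{M}}$ factor so that $\Sigma_{\hat{\pi}}=S_x^\ast S_x$ directly, whereas your $\Phi$ is unnormalized with $\Sigma_{\hat{\pi}}=\tfrac{1}{M}\Phi\Phi^\ast$; the resulting Woodbury identity and four-term expansion are identical.
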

The proof of \Cref{prop:empirical_witness} can be found in \Cref{appsec:proof_empirical_chard}.
The gradient of $h_{\hat{\mu}_n, \hat{\pi}}^\ast$ can be  obtained using automatic differentiation libraries such as \textrm{JAX} \citep{jax2018github}. 

As indicated in \eqref{eq:adaptive_time_discretized_flow}, the regularization parameter $\lambda_n$ should be chosen to be proportional to $\chi^2(\mu_n \| \pi)^{\frac{1}{r+1}}$. In practice, however, both $\chi^2(\mu_n \| \pi)$ and $r$ are not accessible and $\chi^2(\mu_n \| \pi)$ is not even well-defined for the particle descent algorithm. To address this, we use $\mathrm{DrMMD}(\hat{\mu}_n \| \hat{\pi})$ as a proxy for $\chi^2(\mu_n \| \pi)$  (see Algorithm 1), which admits a closed-form expression with particles. The parameter $r$ is picked via a search over a pre-defined set $\{0.1, 0.5, 1.0\}$. The step size $\gamma$ should be chosen to satisfy the upper bound (21), which contains several constants that cannot generally be computed. In practice, our approach has been to select $\gamma$ to be sufficiently small for the flow to converge empirically. 
The final algorithm is summarized in \Cref{alg:dmmd_flow}.

At every iteration, computing $h_{\hat{\mu}_n, \hat{\pi}}^\ast$ with adaptive regularization $\lambda_n$ has a time complexity of $\calO(M^3 + NM + N^2)$ due to matrix inversion and multiplication. For $\dmmd$ particle descent with fixed $\lambda$, however,  the total computational cost can be reduced to $\calO(NM + N^2)$, which is exactly the same as $\mmd$ flow, because inversion of the $M\times M$ Gram matrix is only required once, and so it can be pre-computed at initialization (see \Cref{rem:one}). 
In contrast, when $N=M$, the complexity of Sinkhorn flow is $\calO(N^2/ \epsilon^3)$~\citep{feydy2019interpolating} with $\epsilon$ being the hyperparameter in Sinkhorn divergence, and the complexity of KALE flow is $\calO(N^3)$~\citep{glaser2021kale}.

\begin{algorithm}[t]\caption{$\dmmd$ particle descent}
\label{alg:dmmd_flow}
\textbf{Input:} Target samples $\{x^{(i)}\}_{i=1}^M \sim \pi$ and initial source samples $\{y^{(i)}_0\}_{i=1}^N \sim \mu_0$. Hyperparameters: step size $\gamma$, initial (de)-regularization coefficient $\lambda_0$, maximum number of iterations $n_{max}$ and regularity $r$. \\
\textbf{For} $n=0$ to $n_{max}$: \\
\hspace*{\algorithmicindent} 1. Compute witness function $h_{\hat{\mu}_n, \hat{\pi}}$ from \eqref{eq:empirical_witness}. \\ 
\hspace*{\algorithmicindent} 2. Compute $\dmmd(\hat{\mu}_n, \hat{\pi})$ with $h_{\hat{\mu}_n, \hat{\pi}}$ from \eqref{appeq:dmmd_samples}. \\
\hspace*{\algorithmicindent} 3. Rescale regularization coefficient $\lambda_n \propto \dmmd(\hat{\mu}_n, \hat{\pi})^{\frac{1}{r + 1}}$. \\
\hspace*{\algorithmicindent} 4. Update particles using \eqref{eq:particle_biased}:
\vspace{-5pt}
\begin{align*}
    y_{n+1}^{(i)}=y_n^{(i)}-\gamma(1+\lambda_n) \nabla h_{\hat{\mu}_n, \hat{\pi}}^\ast (y_n^{(i)})
\vspace{-10pt}
\end{align*}
\textbf{EndFor} \\
\textbf{Output:} $\{y^{(i)}_{n_{max}}\}_{i=1}^N$.
\end{algorithm}

\section{Related Work}\label{sec:related_work}
In this section, we discuss the works in the literature that are related to our proposed $\dmmd$ flow and spectral (de)-regularization.
\subsection{Gradient flows}

Stein Variational Gradient Descent (SVGD) is a popular algorithm for sampling from distributions using only an unnormalized density.
It can be written as either a gradient flow of the Kullback-Leibler (KL) divergence where the Wasserstein gradient of the KL is preconditioned by $\calT_{\mu_t}$ \citep{liu2016stein, liu2017stein, korba2020non}, or as a gradient flow  of the $\chi^2$-divergence whose Wasserstein gradient is preconditioned by $\calT_{\pi}$ \citep{chewi2020svgd},
\begin{align*}
    \frac{\partial \mu_t}{\partial t} = \nabla \cdot \left( \mu_t \cT_{\mu_t} \nabla \log \frac{\d\mu_t}{\d\pi} \right) = \nabla \cdot \left(\mu_t \cT_{\pi} \nabla \frac{\d\mu_t}{\d\pi} \right).
\end{align*}
Since SVGD may smooth the trajectory too much, \cite{he2022regularized} considered a (de)-regularized SVGD flow, 
\begin{equation}\label{eq:regularized_svgd}
    \frac{\partial \mu_t}{\partial t} = \nabla \cdot \left( \mu_t (\cT_{\mu_t} + \lambda \Id )^{-1} \calT_{\mu_t} \nabla \log \frac{\d\mu_t}{\d\pi} \right),
\end{equation}
which approaches the KL gradient flow (Langevin diffusion) as $\lambda\rightarrow 0$, demonstrating faster convergence than SVGD. A key difference between (de)-regularization in \eqref{eq:regularized_svgd} of \cite{he2022regularized} and our $\dmmd$ flow is that the flow in \eqref{eq:regularized_svgd} is driven by the regularized version of the Wasserstein gradient of KL divergence while $\dmmd$ flow is driven by the Wasserstein gradient of the regularized $\chi^2$-divergence.
An alternative interpretation of this difference is that the flow in \eqref{eq:regularized_svgd} is the gradient flow of KL divergence w.r.t.~the regularized Stein geometry \citep{duncan2019geometry}, whereas the $\dmmd$ flow is the gradient flow of regularized $\chi^2$-divergence w.r.t.~the Wasserstein geometry.

In addition to sampling from unnormalized distributions, Wasserstein gradient flows (particularly $\mmd$ flows) are widely used in the field of generative modelling~\citep{birrell2022f, gu2022lipschitz,hertrich2023wasserstein, hertrich2023generative, hertrich2024wasserstein, galashov2024deep}.
The MMD flow (with a smooth kernel)~\citep{arbel2019maximum} can be written as
\begin{align}\label{eq:mmd_flow}
    \frac{\partial \mu_t}{\partial t} = \nabla \cdot \left( \mu_t \nabla \cT_\pi \left( \frac{\d\mu_t}{\d\pi} - 1 \right) \right).
\end{align}
The MMD flow is known to get trapped in local minima, and several modifications have been proposed to avoid this in practice, such as noise injection \citep[see][Proposition 8]{arbel2019maximum} or non-smooth kernels, e.g., based on negative distances \citep{sejdinovic13energy}. 
 $\mmd$ gradient flows with non-smooth kernels have better empirical performance~\citep{hertrich2024wasserstein, hertrich2023generative}, but they do not preserve discrete measure and rely on approximating implicit time discretizations \citep{hertrich2024wasserstein} or slicing \citep{hertrich2023generative}; and they have no local minima apart from the global one~\citep{boufadene2023global}.

Recall that our $\dmmd$ flow takes the form
\begin{align*}
    \frac{\partial \mu_t}{\partial t} = \nabla \cdot \left(\mu_t \nabla (\calT_\pi + \lambda \Id)^{-1} \calT_\pi \left( \frac{\d\mu_t}{\d\pi} - 1 \right) \right), 
\end{align*}
which (de)-regularizes the $\mmd$ flow similarly to how \eqref{eq:regularized_svgd} (de)-regularizes SVGD. 
It is both proved theoretically in \Cref{thm:continuous_time_convergence}, \ref{thm:discrete_time_KL}, \ref{thm:final_rate}, and verified empirically in \Cref{sec:experiments}, that (de)-regularization results in faster convergence than $\mmd$ flow. 

Another closely related flow called LAWGD is considered in \cite{chewi2020svgd}, which swaps the gradient and integral operators of SVGD, leading to the following flow:
\begin{align*}
    \frac{\partial \mu_t}{\partial t} = \nabla \cdot \left(\mu_t \nabla \cT_{\pi} \frac{\d\mu_t}{\d\pi} \right). 
\end{align*}
LAWGD closely resembles the MMD flow in \eqref{eq:mmd_flow}, but \cite{chewi2020svgd} proposes to replace $\cT_{\pi}$ with an inverse diffusion operator, which requires computing the eigenspectrum of the latter and is unlikely to scale in high dimensions.

KALE~\citep{glaser2021kale} kernelizes the variational formulation of the KL divergence in a similar way as $\dmmd$ kernelizes the $\chi^2$-divergence in \eqref{eq:drmmd_variational}, but the KALE witness function does not have a closed form expression, so it requires solving a convex optimization problem, which makes the simulation of KALE gradient flow with particles computationally more expensive. 
Recently, \cite{neumayer2024wasserstein} studied kernelized variational formulation of $f$-divergences (referred to as Moreau envelopes of $f$-divergences in RKHS), which subsume both KALE \citep{glaser2021kale} and $\dmmd$. They prove that these functionals are lower semi-continuous and that their Wasserstein gradient flows are well-defined for smooth kernels. They do not study the convergence properties of their proposed flows, however.

The proposed DrMMD interpolates between MMD and $\chi^2$-divergence by using a specific spectral regularization known as Tikhnov regularization $(1+\lambda)(\calT + \lambda\Id)^{-1}\calT$, which interpolates between the identity operator $\Id$ as $\lambda \to 0$ and $\calT$ as $\lambda\to\infty$. 
DrMMD and its associated Wasserstein gradient flow can be easily extended to other spectral regularization strategies, such as the Showalter regularization, Landweber iteration, or cutoff regularization \citep{engl1996regularization} using the techniques in \citep{BAUER200752,hagrass2022spectral}. Alternative approximations to $\chi^2$-divergence have been proposed in the literature based on the idea of mollifiers, whose Wasserstein gradient flows have been constructed and for which convergence of the flows has been analyzed \citep{li2023sampling, craig2023blob,craig2023nonlocal}. Compared to DrMMD flow, these gradient flows rely on additional approximations, such as the use of log-sum-exp in \citet{li2023sampling} and the use of numerical integration to estimate convolution in \citet{craig2023blob,craig2023nonlocal}—and are not directly applicable in generative modeling settings where only samples are available. 

\subsection{Comparison with  diffusion-based generative models}
Diffusion-based generative models have been widely adopted in practice and are closely related to Wasserstein gradient flows~\citep{song2020score,ho2020denoising}. These models generate high-quality samples by reversing a pre-defined forward diffusion process, which gradually corrupts data with noise. To implement the reverse process,  the established practice is to estimate the score function via denoising score matching \citep{song2020score}. In contrast, Wasserstein gradient flows directly construct a trajectory by descending the objective in the steepest direction with respect to the Wasserstein metric. In particular, our proposed DrMMD gradient flow offers a tractable velocity field with a consistent finite-sample estimator without solving an additional optimization problem like score matching.

We emphasize that the main contribution of our paper is to establish convergence of the DrMMD flow, and that diffusion models for image generation require additional implementation details---most notably, the inductive biases introduced by deep neural networks. 
One possible avenue for future work is to simulate DrMMD gradient flows with kernels induced by learned deep neural network features on the data. In the case of MMD, this idea has been explored by~\citet{galashov2024deep}, who demonstrate generation performance comparable to established diffusion models. 
Another promising direction, proposed by~\citet{hertrich2023generative}, involves first using MMD gradient flow with Riesz kernels to generate particle trajectories, and then distilling these trajectories into a neural network-based generator. Both approaches would be of interest to extend our DrMMD gradient flow to domains such as image generation, as a topic for future work.

\subsection{(De)-regularization for supervised learning and hypothesis testing}
The idea of (de)-regularization is not new, and has been used in kernel Fisher discriminant analysis~\citep{mika99fisher} and kernel ridge regression \citep{caponnetto2007optimal, scholkopf2002learning}.
Subsequently, \cite{eric2007testing} employed this statistic in two-sample testing, where they constructed a test statistic that (de)-regularizes $\mmd(\mu \| \pi)$ with both covariance operators $\Sigma_\mu, \Sigma_\pi$. This work has been recently generalized in \cite{hagrass2022spectral} to more general spectral regularizations.
A (de)-regularized statistic is also employed by \cite{balasubramanian2017optimality,hagrass2023spectralgof} in the context of a goodness-of-fit test. \cite{balasubramanian2017optimality} refers to (de)-regularized $\mmd$ as `Moderated MMD'. To the best of our knowledge, the present work represents the first instance of the (de)-regularized $\mmd$ being used as a distance functional in  Wasserstein gradient flow. 
By only (de)-regularizing with $\Sigma_\pi$, $\dmmd$ approaches the $\chi^2$-divergence in the limit, a crucial property that is exploited in the proofs of the convergence results of Theorems~\ref{thm:continuous_time_convergence} and \ref{thm:discrete_time_KL}.
\section{Experiments}\label{sec:experiments}
In this section, we demonstrate the superior empirical performance of the proposed $\dmmd$ descent in various experimental settings.

\tikzset{example/.style={draw, circle, fill=orange, inner sep=1pt}}

\subsection{Three ring experiment}
We follow the experimental set-up in \cite{glaser2021kale} in which the target distribution $\pi$ ($\textcolor[rgb]{0.121, 0.467, 0.706}{\bullet}$) is defined on a manifold in $\R^2$ consisting of three non-overlapping rings. 
The initial source distribution $\mu_0$ ($\textcolor{orange}{\bullet}$) is a Gaussian distribution close to the vicinity of the first ring.
In this setting, all $f$-divergence gradient flows, including Langevin diffusion, are ill-defined because the target $\pi$ is not absolutely continuous with respect to the initial source $\mu_0$.
Nevertheless, we will simulate $\chi^2$ flow with an existing implementation of \cite{liu2023variational} that estimates the velocity field with a local linear estimator as one of the baseline methods.
In contrast, kernel-based gradient flows like $\mmd$, $\kale$, and $\dmmd$ gradient flows are well-defined in this setting, and are also used as baseline methods for comparison.

We sample $N=M=300$ samples from the initial source and the target distributions and run $\dmmd$ descent with adaptive $\lambda$ for $n_{max} = 100,000$ iterations, at which point all methods have converged. As in \cite{glaser2021kale}, we use a Gaussian kernel $ k(x, x^\prime)=\exp \left(- 0.5 \|x-x^\prime \|^2 / l^2 \right)$ with  bandwidth $l=0.3$. The step size for $\mmd$ descent is $\gamma = 10^{-2}$ and the step size for $\kale$ and $\dmmd$ descent is $\gamma = 10^{-3}$. We enforce a positive lower bound $\tilde{\lambda} = 10^{-3}$ for numerical stability and the regularity hyperparameter $r$ is optimized over the set of $\{0.1, 0.5, 1.0\}$.

From \Cref{fig:three_ring} \textbf{Left} and \textbf{Middle}, we can see that $\dmmd$ descent outperforms $\mmd$, $\kale$, and $\chi^2$ descent in terms of all dissimilarity metrics with respect to the target $\pi$: $\mmd$ and Wasserstein-2 distance.
\Cref{fig:three_ring_animation} is an animation plot visualizing the evolution of particles under these descent schemes, which demonstrates that both $\kale$ and $\dmmd$ descent are sensitive to the mismatch of support and stay concentrated in the support of the target $\pi$, while particles of $\mmd$ descent can diffuse outside the support of $\pi$. 
Note that "$\chi^2$" denotes an alternate estimate of the $\chi^2$ divergence due to ~\citet{liu2023variational}: being an $f$-divergence, we would expect "$\chi^2$ descent" to match the support of the target (as in $\kale$ and $\dmmd$). This is not the case due to bias in the velocity field being learned from samples.
Compared to $\kale$ descent, $\dmmd$ descent does not suffer from the numerical approximation error of the optimization routine when solving the velocity field of $\kale$, which explains its improved performance.

\begin{figure}
    \centering
    \includegraphics[width=\linewidth]{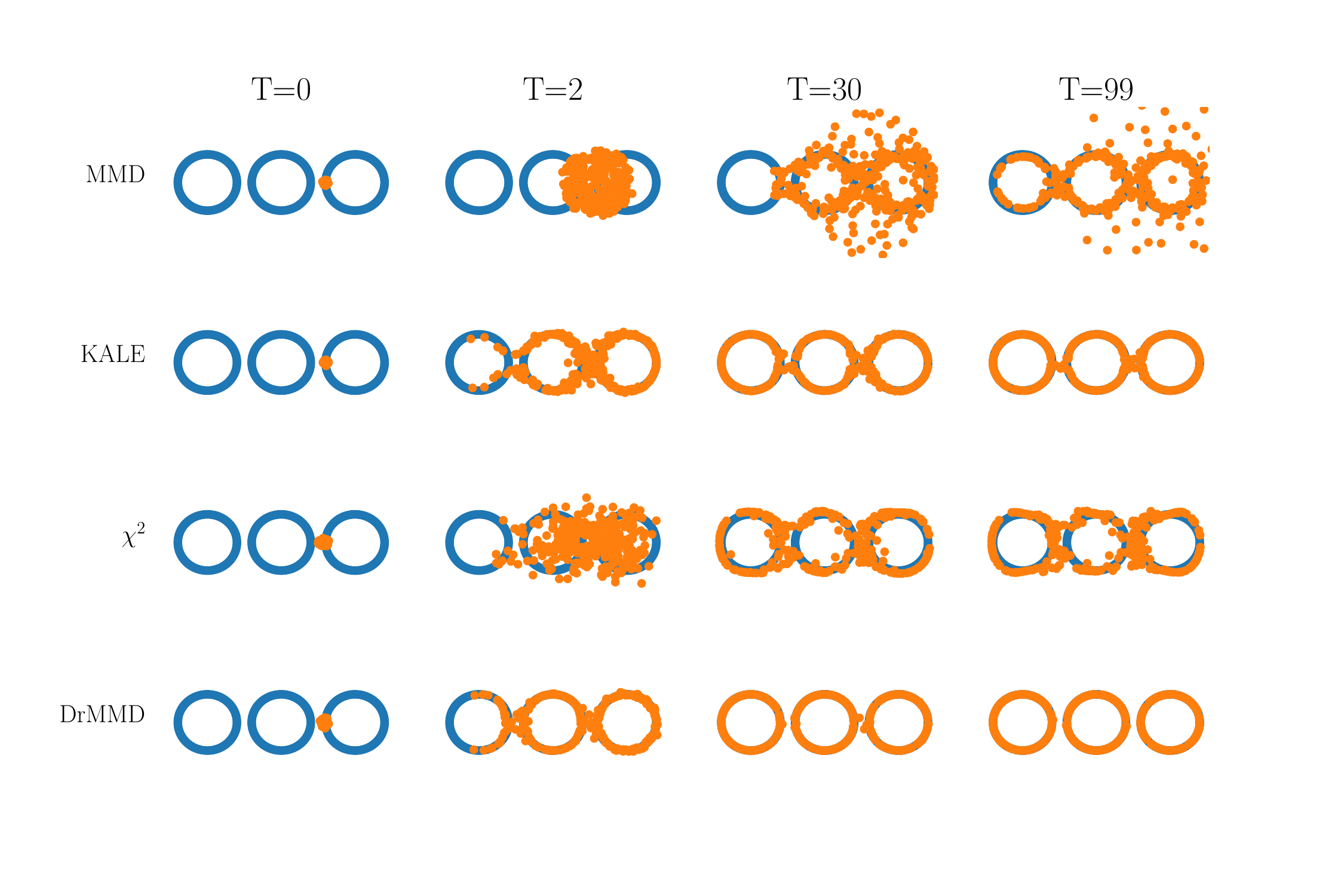}
    \vspace{-25pt}
    \caption{Animation of MMD, KALE, $\chi^2$ and DrMMD gradient descent on the Three-ring dataset.}
    \label{fig:three_ring_animation}
\end{figure}

\subsection{Gradient flow for training student/teacher networks}\label{sec:student_teacher}
Next, we consider a large-scale setting following \cite{arbel2019maximum}, where a student network is trained to imitate the outputs of a
teacher network. 
We consider a two-layer neural network of the form
\begin{align*}
    \psi(z, x) = G \left(b^1+W^1 \sigma\left(W^0 z+b^0\right)\right) ,
\end{align*}
where $\sigma$ is the ReLU non-linearity and $x$ is the concatenation of all network parameters $\left(b^1, W^1, b^0, W^0\right) \in \R^d$. $G$ is an element-wise non-linear function $G : \R \to \R, x \mapsto \exp(-\frac{1}{4} x^2)$.
The teacher network is of the form: $\Psi^{T}(z, \pi) = \int \psi(z, x) d\pi(x)$ where $\pi$ denotes the teacher distribution, and the student network is $\Psi^{S}(z, \mu) = \int \psi(z, x) d\mu(x)$ where $\mu$ denotes the student distribution.
Here we consider Gaussian distributed $\mu$ and $\pi$  for simplicity. 
The student network can imitate the behavior of the teacher network by minimizing the  objective\footnote{Note that our setting is slightly different from \cite{chizat2018global} in which $\mu,\pi$ are measures over the hidden neurons, while our setting follows \cite{arbel2019maximum} in which $\mu,\pi$ are measures over all the network parameters.} 
\begin{align}\label{eq:student_teacher_1}
    \min_{\mu \in \calP_2(\R^d)} \E_{z \sim \Pb_\text{data}} \left(\Psi^{T}(z, \pi) - \Psi^{S}(z, \mu) \right)^2,
\end{align}
where $\Pb_\text{data}$ is the distribution of the input data.
If we define the kernel as the inner product of the neural network feature maps,
\begin{align*}
    k(x, x^{\prime}) = \E_{z \sim \Pb_\text{data}} [\psi(z, x)^\top \psi(z, x^{\prime})], 
\end{align*}
then the objective of \eqref{eq:student_teacher_1} can be equivalently expressed as
\begin{align*}
    \min_{\mu \in \calP_2(\R^d)} \iint_\calX k(x, x^\prime) d (\pi - \mu)(x) d (\pi - \mu)(x^\prime) ,
\end{align*}
which is precisely the $\mmd(\mu \| \pi)^2$  under the kernel $k$. Since $G(x) = \exp(-\frac{1}{4} x^2)$, the kernel is bounded and so the $\mmd$ is well-defined. Also, since the kernel $k$ has bounded first and second-order derivatives, it satisfies the \Cref{assumption:bounded_kernel}.

\begin{figure}[t]
\vspace{-8pt}
    \begin{minipage}{0.65\textwidth}
    \hspace{-10pt}
    \includegraphics[width=220pt]{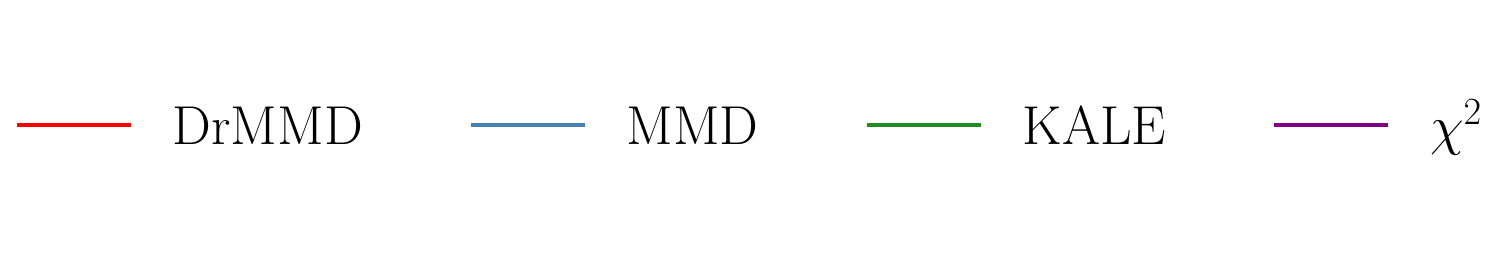}
    \vspace{-10pt}
    \end{minipage}
    \hspace{-50pt}
    \begin{minipage}{0.30\textwidth}
    \centering
    \includegraphics[width=180pt]{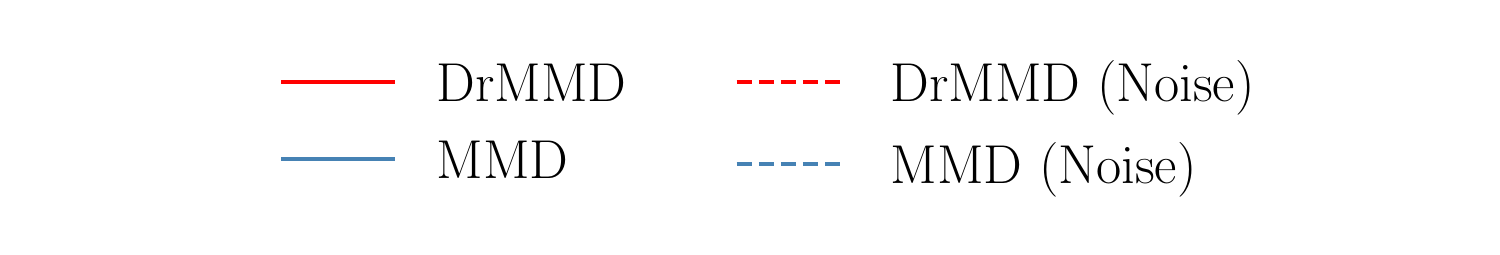}
    \vspace{-20pt}
    \end{minipage}
    \centering
    \begin{subfigure}{0.66\textwidth}
        \centering
        \includegraphics[width=1.0\linewidth]{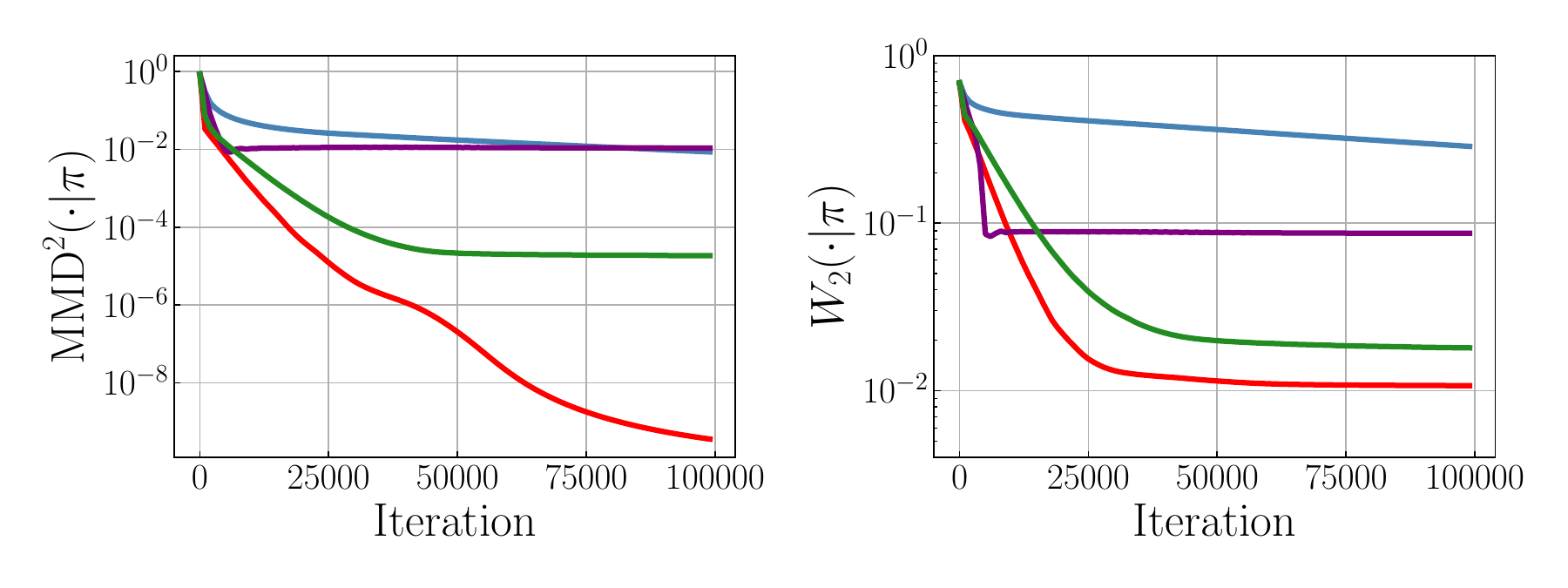}
        \label{fig:three_mmd_wass}
    \end{subfigure}%
    \hspace{-5pt}
    \begin{subfigure}{0.33\textwidth}
        \centering
        \includegraphics[width=1.0\linewidth]{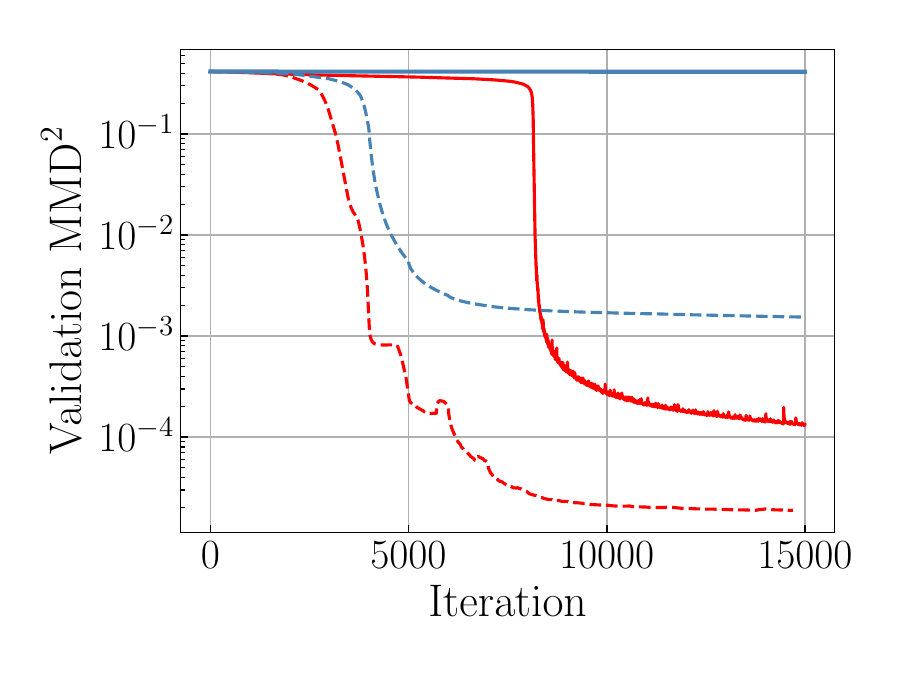}
        \label{fig:student_teacher}
    \end{subfigure}%
    \vspace{-20pt}
    \caption{\textbf{Left \& Middle:} Comparison of $\dmmd$ descent with adaptive $\lambda$, $\mmd$ and $\kale$ descent on three-ring synthetic data in terms of $\mmd$ and Wasserstein-2 distance with respect to the target $\pi$. \textbf{Right:} Comparison of MMD descent with and without noise injection, $\dmmd$ descent with and without noise injection on training
    student/teacher networks in terms of validation $\mmd^2$ distance.}
    \label{fig:three_ring}
    \vspace{-15pt}
\end{figure}

Therefore, the training of the student network with objective \eqref{eq:student_teacher_1} can be treated as an optimization problem of $\mmd^2$ distance in $\calP_2(\R^d)$, i.e., as $\mmd$ gradient flow. 
It is shown in \cite{arbel2019maximum} that $\mmd$ flow and its descent scheme will generally get stuck in  local optima because 
$\mmd$ is not \emph{geodesically} convex; therefore, noise injection has been proposed to escape these local optima. 

With support from Theorems \ref{thm:continuous_time_convergence} and \Cref{thm:discrete_time_KL} on the convergence of $\dmmd$ flow and its descent scheme,  we propose to minimize $\dmmd(\mu\|\pi)$ and use $\dmmd$ descent rather than minimizing  $\mmd(\mu\|\pi)^2$ directly. 
Although this does not directly minimize the objective in \eqref{eq:student_teacher_1}, the favorable convergence performance of $\dmmd$ descent should result in a smaller $\mmd(\mu\|\pi)^2$ at convergence. 

In our experimental setting, we are given $M=10$ particles $x^{(1)}, \cdots, x^{(M)}$ from the teacher distribution $\pi = \calN(0,\Id)$ and $N=1000$ particle $y_0^{(1)}, \cdots, y_0^{(N)}$ from the initial student distribution $\mu_0 = \calN(0, 10^{-3} \Id)$. The teacher particles are fixed while
the student particles are  updated according to \Cref{alg:dmmd_flow} at each time step. 
The initial (de)-regularization parameter is $\lambda_0 = 0.1$, the step size is $\gamma = 0.1$, we apply a lower bound $\tilde{\lambda} = 10^{-3}$, and the regularity $r$ is optimized over the set of $\{ 0.1, 0.5, 1.0\}$.
For the architecture of the neural network, there are $3$ neurons in the hidden layer, and the output dimension is $1$. 
The data distribution $\Pb_{\text{data} }$  is a uniform distribution on the sphere in $\R^p$ with $p = 50$.
$2000$ data are sampled from $\Pb_{\text{data} }$ with $1000$ as training dataset and another $1000$ as validation dataset.
The kernel $k(x, x^\prime) = \mathbb{E}_{z \sim \Pb_{\text{data} } } \left[\psi(z, x)^{\top} \psi\left(z, x^{\prime}\right)\right]$ is estimated by the average over $100$ randomly selected samples from the training dataset at each iteration. 
$\dmmd$ and $\mmd$ descent stop after $n_{max}=15, 000$ iterations when both converge.
The final performance is evaluated in terms of $\mmd^2(\mu_{n_{max}} \| \pi)$ with kernel $k$ estimated by the average of $1000$ samples in the validation dataset.

In \Cref{fig:three_ring} \textbf{Right}, we report the performance of $\mmd$ descent (with and without noise injection) along with the $\dmmd$ descent (with and without noise injection) in terms of $\mmd$ distance on the validation dataset.
We can see that the $\dmmd$ descent does not get stuck in a local optimum, and leads to much lower validation $\mmd(\mu_{n_{max}} \| \pi)^2$ even without noise injection. 
We also run $\dmmd$ descent with the noise injection scheme and find that noise injection can further improve the performance of $\dmmd$ descent and outperforms $\mmd$ descent with noise injection.
Although it is unclear whether the density ratio has enough regularity to meet the condition of \Cref{thm:discrete_time_KL}, the kernel $k$ satisfies the boundedness and smoothness conditions of \Cref{assumption:bounded_kernel} and the target $\pi$ satisfies the Poincar\'{e} inequality since it is Gaussian. The $\dmmd$ descent benefits from more favorable convergence properties, which explains its superior performance.


The code to reproduce all the experiments can be found in the following GitHub repository. \url{https://github.com/hudsonchen/DrMMD}.

\section{Discussion}
In this paper, we introduced (de)-regularization of the MMD (called $\dmmd$) and its associated Wasserstein gradient flow. 
As an interpolation between the $\mmd$ and $\chi^2$-divergence, the $\dmmd$ gradient flow inherits strengths from both sides: it is easy to simulate in closed form with particles, and it has an exponential rate of convergence towards the global minimum up to a controllable barrier term when the target $\pi$ satisfies a Poincar\'{e} inequality.
Additionally, we provide the optimal adaptive selection of a regularization coefficient
that best balances the approximation and time discretization errors in $\dmmd$ gradient descent. 
Our work is the first to prove theoretically that an adaptive kernel through adaptive regularization can result in improved convergence of MMD gradient flow. The theoretical results are consistent with the empirical evidence in several numerical experiments. 

Following our work, there remain a number of interesting open problems. For example, 
(i) Since the kernel bandwidth has been known to play an important role in the performance of kernel-based algorithms, it is of interest to study the adaptive choice of kernel bandwidth in the context of $\dmmd$ gradient flow. (ii) To generalize our convergence analysis to the Wasserstein gradient flow of all Moreau envelopes of $f$-divergences in reproducing kernel Hilbert space, even when they do not have a closed-form expression as $\dmmd$. (iii) While the current work proposes an approximation to the $\chi^2$-squared flow in the generative modeling setting, i.e., where the target distribution $\pi$ is known only through samples, it will be interesting to construct approximations to $\chi^2$-flow in the sampling setting, i.e., where $\pi$ is known in closed form (at least up to normalization).




\section{Proofs} \label{sec:proofs}
In this section, we present the proofs of the results presented in Sections~\ref{sec:chard}--\ref{sec:space_dis}.
\subsection{Proof of \Cref{prop:mmd_chi2}}\label{appsec:mmd_regularized_chi2}
Note that
\begin{align}
    \mmd(\mu \| \pi)^2 &= \left\| m_\mu - m_\pi \right\|_\calH^2 = \left\| \int k(\cdot, x) d\mu(x) - \int k(\cdot, x) d\pi(x) \right\|_\calH^2 \nonumber \\ 
    &=\left\| \int k(\cdot,x) \left(\frac{d\mu}{d\pi}(x)-1\right)\,d\pi(x) \right\|^2_\calH=\left\|\id^*_\pi \left(\frac{d\mu}{d\pi} - 1\right)\right\|^2_\calH\nonumber\\
    &=\left\langle \mathcal{T}_\pi\left(\frac{d\mu}{d\pi} - 1\right),\frac{d\mu}{d\pi} - 1 \right\rangle_{L^2(\pi)}
    = \left\| \calT_\pi^{\frac{1}{2} } \left(\frac{d \mu}{d \pi}-1\right)\right\|_{L^2(\pi)}^2 .\nonumber
\end{align}
Also recall that the $\chi^2$-divergence between $\mu$ and $\pi$ is
\begin{align*}
    \chi^2(\mu \| \pi) = \int \left( \frac{d\mu}{d\pi} - 1 \right)^2 d\pi = \left\|\mathrm{I}\left(\frac{d \mu}{d \pi}-1\right)\right\|_{L^2(\pi)}^2.\nonumber
\end{align*}

\subsection{Proof of \Cref{prop:drmmd_representation_no_ratio}}\label{appsec:proof_chard_also}
Let $\mu \ll \pi$. 
In order to prove the alternative form of $\dmmd$ in \eqref{eq:drmmd_representation_no_ratio}, we start from \eqref{eq:drmmd_representation_no_ratio} and show that it recovers \eqref{eq:dmmd_with_inverse_operator}.
\begin{align}\label{eq:drmmd_derivations}
\dmmd(\mu || \pi) &= ( 1+\lambda) \left\|\left(\Sigma_\pi + \lambda \Id\right)^{-\frac{1}{2}}\left(m_\mu - m_\pi \right) \right\|_{\calH}^2 \nonumber \\
& =(1+\lambda)\left\|\left(\iota_\pi^\ast \iota_\pi + \lambda \Id \right)^{-\frac{1}{2}} \iota_\pi^\ast \left( \frac{d \mu}{ d \pi} - 1 \right) \right\|_{\calH}^2 \nonumber \\
& =(1+\lambda)\left\langle\left(\iota_\pi^\ast \iota_\pi + \lambda \Id \right)^{-\frac{1}{2}} \iota_\pi^\ast \left( \frac{d \mu}{ d \pi} - 1 \right),\left(\iota_\pi^\ast \iota_\pi + \lambda \Id \right)^{-\frac{1}{2}} \iota_\pi^\ast \left( \frac{d \mu}{ d \pi} - 1 \right)\right\rangle_{\calH}\nonumber \\
& =(1+\lambda)\left\langle \iota_\pi \left(\iota_\pi^\ast \iota_\pi + \lambda \Id \right)^{-1} \iota_\pi^\ast \left( \frac{d \mu}{ d \pi} - 1 \right), \frac{d \mu}{ d \pi} - 1 \right\rangle_{L^2(\pi)} \nonumber \\
&=(1+\lambda)\left\langle \iota_\pi \iota_\pi^\ast\left(\iota_\pi \iota_\pi^\ast+\lambda \Id \right)^{-1} \left( \frac{d \mu}{ d \pi} - 1 \right), \frac{d \mu}{ d \pi} - 1 \right\rangle_{L^2(\pi)},  
\end{align}
where the last equality follows by noticing $\iota_\pi \left( \iota_\pi^* \iota_\pi + \lambda \Id \right)^{-1} \iota_\pi^* = \iota_\pi \iota_\pi^* \left( \iota_\pi \iota_\pi^* + \lambda \Id \right)^{-1} $.
Therefore,
\begin{align*}
\quad \dmmd(\mu || \pi) &= (1+\lambda)\left\langle \calT_\pi \left( \calT_\pi + \lambda \Id \right)^{-1} \left( \frac{d \mu}{ d \pi} - 1 \right),  \frac{d \mu}{ d \pi} - 1 \right\rangle_{L^2(\pi)} \\ 
&= (1+\lambda)\left\|\left( \left( \mathcal{T}_\pi+\lambda \mathrm{I}\right)^{-1} \mathcal{T}_\pi\right)^{1 / 2}\left(\frac{d \mu}{d \pi}-1\right)\right\|_{L^2(\pi)}^2,
\end{align*}
which follows from the positivity and self-adjointness of $\calT_\pi \left( \calT_\pi + \lambda \Id \right)^{-1}$.
So \eqref{eq:drmmd_representation_no_ratio} is proved. 
Next, we are going to prove the variational formulation in \eqref{eq:drmmd_variational}. Similarly, we start from \eqref{eq:drmmd_variational} and show it recovers \eqref{eq:dmmd_with_inverse_operator}. Consider
\begin{align}
    &\quad (1+\lambda) \sup _{h \in \calH}\left\{\int h \md \mu-\int\left(\frac{h^2}{4}+h \right) \md \pi - \frac{\lambda}{4}\|h\|_{\calH}^2\right\} \nonumber \\
    & =-(1+\lambda) \inf _{h \in \calH}\left\{\int \frac{h^2}{4} \md \pi - \int h (\md \mu - \md \pi) + \frac{\lambda}{4} \|h\|_{\calH}^2\right\}  \nonumber \\ 
    & = -(1+\lambda) \inf _{h \in \calH}\left\{\frac{1}{4} \left\langle h, \Sigma_\pi h \right \rangle_{\calH} - \left \langle h, m_\mu - m_\pi \right\rangle_{\calH}+\frac{\lambda}{4}\|h\|_{\calH}^2\right\}  \nonumber \\
    &= -(1+\lambda) \inf _{h \in \calH} \left\{\left\|\left(\frac{1}{4} \Sigma_\pi+\frac{\lambda}{4} \Id\right)^{1 / 2} h -\frac{1}{2}\left(\frac{1}{4} \Sigma_\pi+\frac{\lambda}{4} \Id\right)^{-1 / 2} (m_\mu - m_\pi) \right\|_{\calH}^2\right\}  \nonumber \\
    &\qquad\qquad+ \frac{(1+\lambda)}{4}\left\|\left(\frac{1}{4} \Sigma_\pi +\frac{\lambda}{4} \Id\right)^{-1 / 2} (m_\mu - m_\pi) \right\|_{\calH}^2 . \label{appeq:kernel_chi2_supremum}
\end{align}
The last equality follows from completing the squares, based on which it is easy to see that the infimum is achieved at $h_{\mu, \pi}^\ast = 2 \left(\Sigma_\pi + \lambda \Id\right)^{-1}\left( m_\mu - m_\pi \right)$. 
For $\mu \ll \pi$, following the same derivations in \eqref{eq:drmmd_derivations}, $h_{\mu, \pi}^\ast$ can be alternatively expressed as
\begin{align}\label{eq:witness_alternative}
    h_{\mu, \pi}^\ast = 2 \left( \mathcal{T}_\pi+\lambda \mathrm{I}\right)^{-1} \mathcal{T}_\pi \left( \frac{d\mu}{d \pi} - 1 \right)
\end{align}
Plugging $h_{\mu, \pi}^\ast$ back into \eqref{appeq:kernel_chi2_supremum} recovers \eqref{eq:dmmd_with_inverse_operator}, so \eqref{eq:drmmd_variational} is proved.

\subsection{$\dmmd$ is $\mmd$ with a regularized kernel $\tilde{k}$}\label{appsec:drmmd_is_mmd}
Given the definition of $\tilde{k}(x, x^\prime) = \left\langle \left(\Sigma_\pi + \lambda \Id\right)^{-\frac{1}{2} } k(\cdot,x),\left(\Sigma_\pi + \lambda \Id\right)^{-\frac{1}{2} } k(\cdot,x')\right\rangle_\calH$, it is clear that $\tilde{k}$ is symmetric and positive definite so it has a unique associated reproducing kernel Hilbert space $\tilde{\calH}$~\citep[Theorem 4.21]{steinwart2008support} with canonical feature map $\tilde{k}(x, \cdot)$. Therefore,
\begin{align*}
&\quad \dmmd(\mu \| \pi) \\
& = (1 + \lambda) \left\Vert \left(\Sigma_\pi + \lambda \Id\right)^{-\frac{1}{2} } (m_\mu-m_\pi) \right\Vert^2_\calH \\
&= (1 + \lambda) \left\langle \left(\Sigma_\pi + \lambda \Id\right)^{-\frac{1}{2} } \int k(x, \cdot) d\left( \pi - \mu \right)(x),  \left(\Sigma_\pi + \lambda \Id\right)^{-\frac{1}{2} } \int k(x^\prime, \cdot) d\left( \pi - \mu \right)(x^\prime) \right\rangle_\calH \\
&= (1 + \lambda) \left\langle  \int \left(\Sigma_\pi + \lambda \Id\right)^{-\frac{1}{2} } k(x, \cdot) d\left( \pi - \mu \right)(x),  \int \left(\Sigma_\pi + \lambda \Id\right)^{-\frac{1}{2} } k(x^\prime, \cdot) d\left( \pi - \mu \right) (x^\prime) \right\rangle_\calH \\
&= (1 + \lambda) \iint \left\langle  \left(\Sigma_\pi + \lambda \Id\right)^{-\frac{1}{2} } k(x, \cdot) , \left(\Sigma_\pi + \lambda \Id\right)^{-\frac{1}{2} } k(x^\prime, \cdot) \right\rangle_\calH d \left( \pi- \mu \right) (x) d\left( \pi - \mu \right) (x^\prime) \\
&=(1 + \lambda) \iint \tilde{k}(x, x^\prime) d\left( \pi- \mu \right) (x) d\left( \pi - \mu \right) (x^\prime) \\
&= (1 + \lambda) \left\Vert \int \tilde{k}(x, \cdot) d(\mu-\pi)(x) \right \Vert^2_{\tilde{\calH}}.
\end{align*}
In the third and fourth equality above, we are using the fact that $\left(\Sigma_\pi + \lambda \Id\right)^{-\frac{1}{2} } k(x, \cdot) \in \calH$ is Bochner integrable and the Bochner integral preserves inner product structure. So $\dmmd$ is essentially $\mmd^2$ with a different kernel $\tilde{k}$ up to a multiplicative factor of $1 + \lambda$.

Next, we present the Mercer decomposition of $\tilde{k}$. 
Notice that $\{e_i\}_{i \geq 1}$ are the eigenfunctions of $\calT_\pi$, so $\{ \sqrt{\varrho_i} e_i\}_{i \geq 1}$ are the eigenfunctions of $\Sigma_\pi$.
For $x$ and $x^\prime$ in the support of $\pi$, $\tilde{k}$ also enjoys a pointwise convergent Mercer decomposition
\begin{align}\label{appeq:tilde_k_mercer}
    \tilde{k}(x, x^\prime) &= \left\langle \left(\Sigma_\pi + \lambda \Id\right)^{-\frac{1}{2} } k(\cdot,x),\left(\Sigma_\pi + \lambda \Id\right)^{-\frac{1}{2} } k(\cdot,x')\right\rangle_\calH \nonumber \\
    &= \left\langle \left(\Sigma_\pi + \lambda \Id\right)^{-\frac{1}{2} } \left( \sum_{i \geq 1} \varrho_i e_i(x) e_i \right), \left(\Sigma_\pi + \lambda \Id\right)^{-\frac{1}{2} } \left( \sum_{i \geq 1} \varrho_i e_i(x^\prime) e_i \right) \right\rangle_\calH \nonumber \\
    &= \left\langle \sum_{i \geq 1} \frac{\varrho_i}{\sqrt{\varrho_i + \lambda} } e_i(x) e_i , \sum_{i \geq 1} \frac{\varrho_i}{\sqrt{\varrho_i + \lambda} } e_i(x^\prime) e_i \right\rangle_\calH \nonumber \\
    &= \sum_{i \geq 1} \frac{\varrho_i}{\varrho_i+\lambda} e_i(x) e_i(x^{\prime}).
\end{align}
More properties of the regularized kernel $\tilde{k}$ are provided in \Cref{lem:k_tilde}.

\subsection{Proof of \Cref{prop:chard_interpolation}}\label{appsec:proof_chard_interpolation}
Given that $\frac{d \mu}{ d \pi} - 1 \in L^2(\pi)$, so 
\begin{align*}
    \dmmd(\mu \| \pi) = (1+\lambda) \left\|\left(\left(\mathcal{T}_\pi + \lambda \mathrm{I} \right)^{-1} \mathcal{T}_\pi\right)^{1 / 2}\left(\frac{d \mu}{d \pi}-1\right)\right\|_{L^2(\pi)}^2 \leq (1+\lambda) \left\|\frac{d \mu}{ d \pi} - 1 \right\|_{L^2(\pi)}^2,
\end{align*}
which is finite for any $\lambda \geq 0$. 
We are allowed to interchange the limit and integration according to the dominated convergence theorem~\citep{rudin1976principles} to achieve, 
\begin{align*}
    \lim_{\lambda \to 0} \dmmd(\mu \| \pi) &=  \left\| \lim_{\lambda \to 0} \left(\left(\mathcal{T}_\pi + \lambda \mathrm{I} \right)^{-1} \mathcal{T}_\pi\right)^{1 / 2}\left(\frac{d \mu}{d \pi}-1\right)\right\|_{L^2(\pi)}^2 \\
    &= \left\|  \frac{d \mu}{ d \pi} - 1 \right\|_{L^{2}(\pi)}^{2} = \chi^{2}(\mu \| \pi).
\end{align*}
From \eqref{eq:drmmd_representation_no_ratio}, we have that,
\begin{align}
    &\sqrt{\dmmd (\mu \| \pi)} 
    = \sqrt{1+\lambda} \left\|\left(\Sigma_{\pi} + \lambda \Id\right)^{-\frac{1}{2} }(m_{\mu} - m_{\pi})  \right\|_{\calH} \nonumber \\
    &\leq \sqrt{1+\lambda} \left\|\left(\Sigma_{\pi} + \lambda \Id\right)^{-\frac{1}{2} }\right\|_{op}\left\|m_{\mu} - m_{\pi}  \right\|_{\calH}
    \leq \sqrt{\frac{1+\lambda}{\lambda}} \times \mmd (\mu,\pi) , \label{appeq:dmmd_one_tide}
\end{align}
where the last inequality follows by noticing that $\Sigma_\pi = \iota_\pi^\ast \iota_\pi$ shares the same 
eigenvalues as $\calT_\pi = \iota_\pi \iota_\pi^\ast$, and hence the eigenvalues of $(\Sigma_\pi + \lambda \Id)^{-1}$ are ${(\varrho_i + \lambda)}^{-1}$ which all smaller than $\frac{1}{\lambda}$.
Therefore, $\dmmd(\mu \| \pi) \leq \frac{1+\lambda}{\lambda} \mmd^{2}(\mu \| \pi)$. 

\vspace{1em}
\noindent
On the other hand, using Lemma A.10 from \citep{hagrass2022spectral}, we have
\begin{align}
    &\mmd \left(\mu \| \pi \right) = \left\|m_\mu - m_\pi \right\|_{\calH} 
    \leq \left\| ( \Sigma_\pi+\lambda \Id\right ) ^{\frac{1}{2} } \|_{o p} \left\|\left(\Sigma_{\pi} + \lambda \Id\right)^{-\frac{1}{2} }(m_{\mu} - m_{\pi})  \right\|_{\calH} \nonumber  \\
    &= \sqrt{ \frac{1}{1+\lambda} }  \left\| ( \Sigma_\pi+\lambda \Id\right ) ^{\frac{1}{2} } \|_{o p} \sqrt{\dmmd \left(\mu \| \pi \right)} 
    \leq \sqrt{ \frac{K + \lambda}{1+\lambda} } \sqrt{ \dmmd \left(\mu \| \pi \right)} , \label{appeq:dmmd_tecond_tide}
\end{align}
where $K$ is the upper bound on kernel $k$ in \Cref{assumption:bounded_kernel} and hence an upper bound on the operator norm of $\Sigma_\pi$.
Combining \eqref{appeq:dmmd_one_tide} and \eqref{appeq:dmmd_tecond_tide}, we have 
\begin{align*}
    \frac{1 + \lambda}{K + \lambda} \mmd^{2}(\mu \| \pi) \leq \dmmd \left(\mu \| \pi \right) \leq \frac{1 + \lambda}{\lambda} \mmd^{2}(\mu \| \pi) .
\end{align*}
Therefore, $\lim_{\lambda \to \infty} \dmmd(\mu \| \pi) = \mmd^2 (\mu \| \pi) $, and the proposition is proved.

\subsection{Proof of \Cref{prop:topology_chard}}\label{appsec:proof_of_topology_chard}
To show that $\dmmd$ is a probability divergence, we need to show that $\dmmd$ enjoys non-negativity and definiteness.
It is easy to see that $\dmmd(\mu \| \pi)$ is non-negative from its definition in \Cref{def:chard}.
Then, we prove definiteness, i.e., $\dmmd(\mu || \pi)=0 $ if and only if $ \mu = \pi$.
For the first direction, assume $\dmmd(\mu \| \pi)=0$, so $\| (\Sigma_\pi+\lambda \Id)^{-1 / 2} \left(m_\mu-m_\pi\right)\|_{\calH}^2=0$. 
Since $\left(\Sigma_\pi+\lambda \Id\right)^{-1 / 2}$ is a non-singular operator, we must have that $m_\mu=m_\pi$ which implies $\mu=\pi$ as $k$ is $c_0$-universal and hence characteristic~\citep{sriperumbudur2011universality}. 
For the other direction, when $\mu = \pi$, immediately we can see $\dmmd(\mu || \pi) = 0$.

Then we prove that $\dmmd$ metrizes weak convergence. 
For the first direction, we know from \eqref{appeq:dmmd_one_tide} that $\dmmd(\mu_n \| \pi) \leq \frac{1 + \lambda}{\lambda} \mmd^2(\mu_n \| \pi)$ and $\mmd^2(\mu_n \| \pi) \to 0$ as $\mu_n$ converges weakly to $\pi$~\citep{simon2023metrizing}.
For the converse direction, we assume that $\dmmd\left(\mu_n \| \pi \right) \to 0$.
From \eqref{appeq:dmmd_tecond_tide}, we know that $\mmd^2 \left(\mu \| \pi \right) \leq \frac{K + \lambda}{1+\lambda} \dmmd \left(\mu \| \pi \right) $, therefore $\dmmd\left(\mu_n \| \pi \right) \to 0$ implies $\mmd \left(\mu_n \| \pi \right) \to 0$, implying the weak convergence of $\mu_n$ to $\pi$, if $k$ is characteristic~\citep{simon2023metrizing}.

\subsection{Proof of \Cref{prop:chard_gradient_flow}}\label{appsec:proof_chard_gradient_flow}
In order to show that $\calF_{\dmmd}(\cdot) = \dmmd(\cdot \|\pi)$ admits a well-defined gradient flow, we follow the same techniques in Proposition 7 of \cite{glaser2021kale} and Lemma B.2 of \cite{chizat2018global}, where the key is to show that $(1 + \lambda) \nabla h_{\mu, \pi}^\ast$ is the Fr\'{e}chet subdifferential of $\calF_{\dmmd}$ evaluated at $\mu$.\footnote{Although $\dmmd$ can be viewed as squared $\mmd$ with a regularized kernel $\tilde{k}$, we are not using the technique in \cite{arbel2019maximum} because it relies on Lemma 10.4.1 of \cite{ambrosio2005gradient} which only provides the Fr\'{e}chet subdifferential on probability measures $\mu$ that admit density functions. To construct the Wasserstein gradient flow of $\calF_{\dmmd}$ up to full generality, we resort to the techniques of \cite{glaser2021kale} and \cite{chizat2018global} instead.} 
According to Definition 10.1.1 of \cite{ambrosio2005gradient}, it is equivalent to prove that, for any $\mu \in \calP_2(\R^d)$ and $\phi \in C_c^\infty(\R^d)$, 
\begin{align}\label{appeq:target_subdiff}
&\dmmd \left( (\Id + \nabla \phi)_\# \mu \| \pi \right) - \dmmd \left(\mu \| \pi\right) \geq (1 + \lambda) \int \nabla \phi(x)^\top \nabla h_{\mu, \pi}^\ast(x) d \mu(x) \nonumber\\
&\qquad\qquad\qquad\qquad\qquad+ o\left( \| \nabla \phi \|_{L^2(\mu)} \right) .
\end{align}
Define $\rho_t = (\Id + t \nabla \phi)_\# \mu $, $\varphi_t : \R^d \to \R^d, x \mapsto x + t \nabla \phi(x)$ and $g(t) = \dmmd( \rho_t || \pi)$. Then from \Cref{lem:wass_gradient_hess_chard} we know that
$g(t)$ is continuous and differentiable with respect to $t$ and
\begin{align*}
    \frac{d}{dt}\Big|_{t=0} g(t)  = (1 + \lambda) \int \nabla \phi(x)^\top \nabla h_{\mu, \pi}^\ast(x) d\mu(x) .
\end{align*}
Since $t \mapsto g(t)$ is differentiable, using Taylor's theorem and mean value theorem \citep{rudin1976principles}, we know that there exists $0 < \kappa < 1$ such that
\begin{align*}
&\quad \dmmd \left( (\Id + \nabla \phi)_\# \mu \| \pi \right) - \dmmd \left(\mu \| \pi\right) = g(1) - g(0) = \frac{d}{dt}\Big|_{t=0} g(t)  + \frac{d^2}{dt^2} \Big|_{t=\kappa} g(t).
\end{align*}
Therefore, to prove \eqref{appeq:target_subdiff}, the goal is to prove that $\frac{d^2}{dt^2} \big|_{t=\kappa} g(t) \geq o\left( \| \nabla \phi \|_{L^2(\mu)} \right)$. To this end, since we know from \Cref{lem:wass_gradient_hess_chard} that $t \mapsto \frac{d}{dt} g(t)$ is continuous and differentiable, we have 
\begin{small}
\begin{align*}
    &\quad \frac{d^2}{d t^2} \dmmd(\rho_t||\pi) = 2 (1+\lambda) \iint \nabla \phi(x)^{\top} \nabla_1 \nabla_2 \tilde{k}\left(\varphi_t(x), \varphi_t(y)\right) \nabla \phi(y) d \mu(x) d \mu(y) \\
    &+ 2 (1+\lambda) \int \nabla \phi(x)^\top \left( \int\bH_1 \tilde{k} \left( \varphi_t(x), \varphi_t(y) \right) d \mu(y) -  \int \bH_1 \tilde{k} \left( \varphi_t(x), y \right) d \pi(y) \right) \nabla \phi(x) d\mu(x),
\end{align*}
\end{small}
where $\tilde{k}$ is the regularized kernel defined in \eqref{eq:tilde_k} and $\tilde{\calH}$ is the associated RKHS.
Using \Cref{lem:k_tilde}, the first term above can be upper-bounded by,
\begin{align*}
    &\quad  \left|2(1+\lambda) \iint \nabla \phi(x)^{\top} \nabla_1 \nabla_2 \tilde{k}\left(\varphi_t(x), \varphi_t(y)\right) \nabla \phi(y) d \mu(x) d \mu(y) \right| \\
    &\leq 2(1+\lambda) \iint \left\| \nabla \phi(x) \right\| \left\|  \nabla_1 \nabla_2 \tilde{k}\left(\varphi_t(x), \varphi_t(y)\right) \right\|_F \left\| \nabla \phi(y) \right\| d \mu(x) d \mu(y) \\
    &\leq 2(1+\lambda) \frac{K_{1d}}{\lambda} \left( \int \left\| \nabla \phi(y) \right\| d \mu(y) \right)^2 
    \leq 2(1+\lambda) \frac{K_{1d}}{\lambda} \|\nabla \phi \|_{L^2(\mu)}^2 .
\end{align*}
Using \Cref{lem:k_tilde} again, the second term can be upper-bounded by
\begin{align*}
    &\quad \left| \int \nabla \phi(x)^\top \left( \int\bH_1 \tilde{k} \left( \varphi_t(x),  \varphi_t(y) \right) d \mu(y) -  \int \bH_1 \tilde{k} \left( \varphi_t(x), y \right) d \pi(y) \right) \nabla \phi(x) d\mu(x) \right| \\
    &\leq \left| \int \nabla \phi(x)^\top \left( \int \bH_1 \tilde{k} \left( \varphi_t(x),  \varphi_t(y) \right) d \mu(y) \right) \nabla \phi(x) d\mu(x) \right| \\
    &\qquad\qquad+ \left| \int \nabla \phi(x)^\top \left( \int \bH_1 \tilde{k} \left( \varphi_t(x), y  \right) d \pi(y) \right) \nabla \phi(x) d\mu(x) \right| \\
    &\leq \int \left\| \nabla \phi(x) \right\| \left( \int \left\|  \bH_1 \tilde{k} \left( \varphi_t(x),  \varphi_t(y) \right) \right\|_F d \mu(y) \right) \left\|\nabla \phi(x) \right\| d\mu(x) \\
    &\qquad\qquad+ \int \left\| \nabla \phi(x) \right\| \left( \int \left\|  \bH_1 \tilde{k} \left( \varphi_t(x), y \right) \right\|_F d \pi(y) \right) \left\|\nabla \phi(x) \right\| d\mu(x) \\
    &\leq 2 \frac{\sqrt{K K_{2d}}}{\lambda} \|\nabla \phi \|_{L^2(\mu)}^2 .
\end{align*}
Combining the above two inequalities to have
\begin{align}\label{appeq:dmmd_hessian_bounds}
    \left| \frac{d^2}{d t^2} \dmmd(\rho_t||\pi) \right| \leq 2(1+\lambda) \frac{K_{1d}}{\lambda} \|\nabla \phi \|_{L^2(\mu)}^2 + 4(1+\lambda) \frac{\sqrt{K K_{2d}}}{\lambda} \|\nabla \phi \|_{L^2(\mu)}^2 . 
\end{align}
Therefore, we have $\frac{d^2}{d t^2}\big|_{t=\kappa} g(t) = \calO(\|\nabla \phi \|_{L^2(\mu)}^2)=o(\|\nabla \phi \|_{L^2(\mu)})$ as $\|\nabla \phi \|_{L^2(\mu)} \to 0$.
So \eqref{appeq:target_subdiff} is proved, which means that $(1 + \lambda) \nabla h_{\mu, \pi}^\ast(x)$ is the Fr\'{e}chet subdifferential of $\calF_{\dmmd}$ evaluated at $\mu$.
According to Definition 11.1.1 of \cite{ambrosio2005gradient}, there exists a solution $(\mu_t)_{t\geq 0}$ such that the following equation holds in the sense of distributions,
\begin{align*}
    \partial_t \mu_t - (1 + \lambda) \nabla \cdot \left(\mu_t  \nabla h_{\mu_t, \pi}^\ast \right)=0,
\end{align*}
and such $(\mu_t)_{t\geq 0}$ is indeed the $\dmmd$ gradient flow, so existence is proved.

Next, we are going to prove uniqueness. 
\eqref{appeq:dmmd_hessian_bounds} indicates that $\calF_{\dmmd}$ is geodesically $-(1+\lambda) \frac{ 4 \sqrt{K K_{2 d}} + 2 K_{1 d}}{\lambda}$ semiconvex. Therefore the uniqueness of $(\mu_t)_t$ follows from Theorem 11.2.1 of \cite{ambrosio2005gradient}.
\eqref{eq:dmmd_semi_convex} is proved in \eqref{appeq:dmmd_hessian_bounds}.

\subsection{Proof of \Cref{prop:geometric_prop}} \label{appsec:proof_geometric}
Given $\mu \in \calP_2\left(\mathbb{R}^d\right)$ and $\phi \in C_c^\infty\left(\mathbb{R}^d\right)$, consider the path $(\rho_t)_{0 \leq t \leq 1}$ from $\mu$ to $(\Id + \nabla \phi)_{\#} \mu$ given by $\rho_t = (\Id + t \nabla \phi)_{\#} \mu$. $(\rho_t)_{0 \leq t \leq 1}$ is a constant-time geodesic in the Wasserstein-2 space by construction (\Cref{appsec:wass_discussion}).
Define $\varphi_t: \R^d \to \R^d, x \mapsto x + t \nabla \phi(x)$. We know from \citep[Example 15.9]{villani2009optimal} (by taking $m=2$) that, 
\begin{align}\label{appeq:hess_chi2}
    \frac{d^2}{d t^2}\Big|_{t=0} \chi^2(\rho_t \| \pi) &= \int \frac{d \mu}{ d \pi} (x) \left( \nabla V(x)^\top \nabla \phi(x) - \nabla \cdot \nabla \phi(x) \right)^2 d\mu(x) \nonumber \\
    &\qquad+ \int \frac{d \mu}{ d \pi} (x) \nabla \phi(x)^\top \bH V(x) \nabla \phi(x) d\mu(x) \nonumber \\ 
    &\qquad\qquad+ \int \frac{d \mu}{ d \pi} (x) \left\| \bH \phi(x) \right\|_F^2 d \mu(x) .
\end{align}
$V$ is twice differentiable, so $\nabla V$ as a function from $\R^d$ to $\R^d$ is continuous.
$\phi \in C_c^\infty(\R^d)$ has compact support, so $x \mapsto \nabla V(x)^\top \nabla \phi(x) - \nabla \cdot \nabla \phi(x)$ is a continuous function over a compact domain, so its image is also compact and hence bounded. 
Using similar arguments, $x \mapsto \nabla \phi(x)^\top \bH V(x) \nabla \phi(x)$ and $x \mapsto \| \bH \phi(x) \|_F^2$ are also continuous functions over compact domains, so they are all bounded. 

Since $\frac{d \mu}{d \pi} - 1 \in \calH \subset L^2(\pi)$ and $\int \frac{d \mu}{ d \pi} (x) d \mu(x) = \| \frac{d \mu}{d \pi} - 1 \|_{L^2(\pi)}^2 + 1 < \infty$, we have
\begin{align}\label{appeq:hess_chi2_bounded}
    \frac{d^2}{d t^2}\big|_{t=0} \chi^2(\rho_t \| \pi) < \infty .
\end{align}
Next, from \Cref{lem:wass_chi2} we know that $\frac{d^2}{d t^2}\big|_{t=0} \chi^2(\rho_t \| \pi)$ can be alternatively expressed as,
\begin{align}\label{appeq:hessian_chi2_copy}
    \frac{d^2}{d t^2}\Big|_{t=0} \chi^2(\rho_t \| \pi) &= 2 \int \left( \nabla \cdot \left( \nabla \phi(x) \mu(x) \right) \frac{1}{\pi(x)} \right)^2 \pi(x) d x \nonumber \\
    & \qquad\qquad+ 2 \int \nabla \phi(x)^\top \bH \left(\frac{\mu(x) }{\pi(x)}\right) \nabla \phi(x) \mu(x) d x .
\end{align}
Recall from \Cref{lem:wass_gradient_hess_chard} that
\begin{align}\label{appeq:chard_wass_hessian_copy}
    &\quad \frac{d^2}{d t^2} \Big|_{t=0} \dmmd(\rho_t||\pi) \nonumber \\
    &=  2 (1+\lambda) \iint \nabla \phi(x)^{\top} \nabla_1 \nabla_2 \tilde{k}\left(x, y \right) \nabla \phi(y) d \mu(x) d \mu(y) \nonumber \\
    &\qquad+ 2 (1+\lambda) \int \nabla \phi(x)^\top \left( \int\bH_1 \tilde{k} \left( x, y \right) d \mu(y) -  \int \bH_1 \tilde{k} \left( x, y \right) d \pi(y) \right) \nabla \phi(x) d\mu(x) .
\end{align}
To prove \eqref{eq:hessian_lower_bound}, our aim then is to compare and bound the difference of $\frac{d^2}{d t^2}\big|_{t=0} \chi^2(\rho_t \| \pi)$ in \eqref{appeq:hessian_chi2_copy} and $\frac{1}{1+\lambda} \frac{d^2}{d t^2}\big|_{t=0} \dmmd(\rho_t ||\pi)$ in \eqref{appeq:chard_wass_hessian_copy}, so we compare and bound their first and second term separately. 

The first term of \eqref{appeq:hessian_chi2_copy} can be rewritten as
\begin{align}
&\int \left(\nabla \cdot( \nabla \phi(x) \mu(x) ) \frac{1}{\pi(x)} \right)^2 \pi(x) d x = \sum_{i \geq 1} \PSi{\nabla \cdot( \nabla \phi \mu ) \frac{1}{\pi}, e_i}{L^2(\pi)}^2 \nonumber \\
&= \sum_{i \geq 1} \left( \int \nabla \cdot \left( \nabla \phi(x) \mu(x) \right) e_i(x) dx \right)^2 
= \sum_{i \geq 1} \left( \int \nabla \phi(x)^\top \nabla e_i(x) \mu(x)  dx \right)^2,
\label{appeq:chi2_hessian_first_term}
\end{align}
where we use integration by parts in the last line since $\phi \in C_c^\infty(\R^d)$.
And the first term of \eqref{appeq:chard_wass_hessian_copy} after rescaling by $\frac{1}{1 + \lambda}$ can be rewritten as,
\begin{align}
&\quad 2 \iint \nabla \phi(x)^{\top} \nabla_1 \nabla_2 \tilde{k}\left(x , y \right) \nabla \phi(y) d \mu(x) d \mu(y) \nonumber \\
&= 2 \iint \nabla \phi(x)^{\top} \nabla_x \nabla_y \left( \sum_{i \geq 1} \frac{\varrho_i}{\varrho_i + \lambda} e_i(x) e_i(y) \right) \nabla \phi(y) d \mu(x) d \mu(y) \nonumber \\
&=  2 \iint \Big( \nabla \cdot \left( \nabla \phi(x) \mu(x)  \right) \Big) \Big( \nabla \cdot \left( \nabla \phi(y) \mu(y) \right) \Big) \sum_{i \geq 1} \frac{\varrho_i}{\varrho_i + \lambda} e_i(x) e_i(y) dx dy .
\label{appeq:infinite_tum_integral}
\end{align}
Since $\pi \ll \calL^d$ so \eqref{appeq:tilde_k_mercer} is true for all $x, y \in \R^d$, hence the second equality is true, and the last equality uses integration by parts since $\phi \in C_c^\infty(\R^d)$. Notice that
\begin{align*}
    &\quad \sum_{i\geq 1} \frac{\varrho_i}{\varrho_i + \lambda} \iint \left| e_i(x) e_i(y) \Big( \nabla \cdot ( \nabla \phi(x) \mu(x)) \Big) \Big( \nabla \cdot ( \nabla \phi(y) \mu(y) ) \Big) \right| dx dy \\
    &= \sum_{i\geq 1} \frac{\varrho_i}{\varrho_i + \lambda} \left(  \int \left| \Big( \nabla \cdot \left( \nabla \phi(x) \mu(x)  \right) \Big) e_i(x) \right| dx \right)^2 \\
    &\leq \frac{1}{\lambda} \sum_{i\geq 1} \varrho_i \left( \int \left( \frac{\nabla \cdot \left( \nabla \phi(x) \mu(x)  \right)}{\mu(x)} \right)^2 \mu(x) dx \right) \left( \int e_i(x)^2 \mu(x) dx \right)  \\
    &= \frac{1}{\lambda} \left( \int \left( \nabla \log \mu(x)^\top \nabla \phi(x) + \nabla \cdot \nabla \phi(x) \right)^2 d \mu(x) \right) \left( \int \sum_{i\geq 1} \varrho_i e_i(x)^2 d \mu(x) \right) \\
    &\leq \frac{K}{\lambda} \left( \int \left( \nabla \log \mu( x )^\top \nabla \phi(x) + \nabla \cdot \nabla \phi(x) \right)^2 d\mu(x) \right) 
    < +\infty .
\end{align*}
The first inequality uses Cauchy-Schwartz, the second inequality uses $\sum_{i \geq 1} \varrho_i (e_i(x) )^2 = k(x,x) \leq K$. 
The last quantity is finite because $x \to \nabla \log \mu( x )$ is a continuous function and $\phi \in C_c^\infty(\R^d)$ has compact support, hence the integral of a continuous function over a compact domain is always finite. 
Then, by using Fubini's theorem~\citep{rudin1976principles}, 
we are allowed to interchange the infinite sum and integration of \eqref{appeq:infinite_tum_integral} to reach,
\begin{align*}
    &\quad 2 \iint \Big( \nabla \cdot \left( \nabla \phi(x) \mu(x)  \right) \Big) \Big( \nabla \cdot \left( \nabla \phi(y) \mu(y) \right) \Big) \sum_{i \geq 1} \frac{\varrho_i}{\varrho_i + \lambda} e_i(x) e_i(y) dx dy \nonumber \\
    &= 2 \sum_{i \geq 1} \frac{\varrho_i}{\varrho_i + \lambda} \iint \Big( \nabla \cdot \left( \nabla \phi(x) \mu(x)  \right) \Big) \Big( \nabla \cdot \left( \nabla \phi(y) \mu(y) \right) \Big) e_i(x) e_i(y) dx dy \nonumber \\
    &= 2 \sum_{i \geq 1} \frac{\varrho_i}{\varrho_i + \lambda} \left(  \int \Big( \nabla \cdot \left( \nabla \phi(x) \mu(x)  \right) \Big) e_i(x)  dx \right)^2 \nonumber \\
    &= 2 \sum_{i \geq 1} \frac{\varrho_i}{\varrho_i + \lambda} \left( \int \nabla \phi(x)^\top \nabla e_i(x) \mu(x)  dx \right)^2,
\end{align*}
where the last equality uses integration by parts.

So the difference between the first term of \eqref{appeq:hessian_chi2_copy} and \eqref{appeq:chard_wass_hessian_copy} rescaled by $\frac{1}{1+\lambda}$ is,
\begin{align}
&\quad 2 \left| \sum_{i \geq 1} \left( \int \nabla \phi(x)^\top \nabla e_i(x) \mu(x)  dx \right)^2 - \sum_{i \geq 1} \frac{\varrho_i}{\varrho_i + \lambda} \left( \int \nabla \phi(x)^\top \nabla e_i(x) \mu(x)  dx \right)^2 \right| \nonumber \\
&= 2 \sum_{i \geq 1} \frac{\lambda}{\varrho_i + \lambda} \left( \int \nabla \phi(x)^\top \nabla e_i(x) \mu(x)  dx \right)^2.
\label{appeq:diff_first_term}
\end{align}
Now we turn to the second term. 
The second term of \eqref{appeq:hessian_chi2_copy} can be rewritten as
\begin{align}
&\quad 2 \int \nabla \phi(x)^\top \bH \left( \frac{d \mu}{ d \pi} (x) \right) \nabla \phi(x) \mu(x) dx \nonumber \\
&= 2 \int \nabla \phi(x)^\top \bH \left( \frac{d \mu}{ d \pi} (x) - 1 \right) \nabla \phi(x) \mu(x) dx \nonumber \\
&= 2 \int \nabla \phi(x)^\top \bH \left( \sum_{i \geq 1} \PSi{\frac{d \mu}{d \pi}-1, e_i}{L^2(\pi)} e_i(x) \right) \nabla \phi(x) \mu(x) dx, \label{appeq:chi2_hessian_tecond_term}
\end{align}
and the second term of \eqref{appeq:chard_wass_hessian_copy} rescaled by $\frac{1}{1+\lambda}$ can be rewritten as,
\begin{align}
&\quad 2 \int \nabla \phi(x)^\top \left( \int\bH_1 \tilde{k} \left(x, y\right) d \mu(y) -  \int \bH_1 \tilde{k} \left(x, y\right) d \pi(y) \right) \nabla \phi(x) d\mu(x) \nonumber \\
&= 2 \int \nabla \phi(x)^\top \bH \left( \int \tilde{k} \left(x, y\right) d \mu(y) -  \int \tilde{k} \left(x, y\right) d \pi(y) \right) \nabla \phi(x) d\mu(x) \nonumber \\
&= 2 \int \nabla \phi(x)^\top  \bH \left( \sum_{i \geq 1} \frac{\varrho_i}{\varrho_i + \lambda} e_i(x) \int e_i(y)d (\mu- \pi)(y)  \right) \nabla \phi(x) d\mu(x) \nonumber \\
&= 2 \int \nabla \phi(x)^\top  \bH \left( \sum_{i \geq 1} \frac{\varrho_i}{\varrho_i + \lambda} \PSi{\frac{d \mu}{d \pi} - 1, e_i}{L^2(\pi)} e_i(x)  \right) \nabla \phi(x) d\mu(x) . \label{appeq:chard_hessian_tecond_term}
\end{align}
Since $\pi \ll \calL^d$, so \eqref{appeq:tilde_k_mercer} is true for all $x, y \in \R^d$ hence the third equality is true.
From  \Cref{lem:k_tilde}, we have $\int \tilde{k}(x,y) d\mu(y) \leq \frac{K}{\lambda}$, $x\mapsto \tilde{k}(x,y)$ is second-order differentiable, $\sup_{x} |\bH_1 \tilde{k}(x,y)| \leq \frac{\sqrt{K K_{2d}}}{\lambda}$. So we are allowed to interchange integration and Hessian in the second equality using the differentiation lemma~\citep[Theorem 6.28]{klenke2013probability}.
Consider the difference of \eqref{appeq:chard_hessian_tecond_term} and \eqref{appeq:chi2_hessian_tecond_term}, we have
\begin{align}
& \quad 2 \Bigg| \int \nabla \phi(x)^\top \bH \left( \sum_{i \geq 1} \PSi{\frac{d \mu}{d \pi}-1, e_i}{L^2(\pi)} e_i(x) \right) \nabla \phi(x) \mu(x) dx \nonumber \\
&\qquad\qquad- \int \nabla \phi(x)^\top  \bH \left( \sum_{i \geq 1} \frac{\varrho_i}{\varrho_i + \lambda} \PSi{\frac{d \mu}{d \pi} - 1, e_i}{L^2(\pi)} e_i(x)  \right) \nabla \phi(x) d\mu(x) \Bigg| \nonumber \\
&= 2 \left| \int \nabla \phi(x)^\top \bH \left( \sum_{i \geq 1}\frac{\lambda}{\varrho_i + \lambda} \PSi{\frac{d \mu}{d \pi}-1, e_i}{L^2(\pi)} e_i(x) \right) \nabla \phi(x) \mu(x) dx \right| \label{appeq:chi2_drmmd_diff}.
\end{align}
Given $\frac{d \mu}{d\pi} - 1 \in \calH$, there exists $q \in L^2(\pi)$ such that $\frac{d \mu}{d\pi} - 1 =\mathcal{T}^{1/2}_\pi q $ so that $\langle \frac{d \mu}{d\pi} - 1, e_i \rangle = \varrho_i^{1/2} \langle q, e_i \rangle$ for all $i$. 
For $j, r \in \{1, \cdots, d\}$, we have 
\begin{align*}
    g_{M_0}(x) &:= \left| \sum_{i \geq M_0} \frac{\lambda}{\varrho_i + \lambda} \PSi{\frac{d \mu}{d \pi} - 1, e_i}{L^2(\pi)} \partial_j \partial_r e_i(x) \right| =  \left| \sum_{i \geq M_0} \frac{\lambda}{\varrho_i + \lambda} \varrho_i^{1/2} \PSi{q, e_i}{L^2(\pi)} \partial_j \partial_r e_i(x) \right|  \nonumber \\
    & \leq \left( \sum_{i \geq M_0} \left( \frac{\lambda}{\varrho_i + \lambda} \right)^2 \PSi{q, e_i}{L^2(\pi)}^2 \right)^{1/2}
    \left( \sum_{i \geq M_0} \varrho_i \left( \partial_j \partial_r e_i(x) \right)^2 \right)^{1/2}  \nonumber \\
    &\leq \left( \sum_{i \geq M_0} \PSi{q, e_i}{L^2(\pi)}^2 \right)^{1/2}
    \left( \sum_{i \geq M_0} \varrho_i  \left( \partial_j \partial_r e_i(x) \right)^2 \right)^{1/2}  
    \leq \sqrt{ K_{2d} } \left( \sum_{i \geq M_0} \PSi{q, e_i}{L^2(\pi)}^2 \right)^{1/2}. 
\end{align*}
The final inequality holds because,
\begin{align}\label{appeq:sum_kernel_hessian}
    \sum_{j,r = 1}^d \sum_{i \geq 1} \varrho_i \left( \partial_j \partial_r e_i(x) \right)^2 &\leq \sum_{j,r = 1}^d \sum_{i \geq 1} \varrho_i \PSi{\partial_j \partial_r k(x, \cdot), e_i}{\calH}^2 = \sum_{j,r = 1}^d \sum_{i \geq 1} \PSi{\partial_j \partial_r k(x, \cdot), \sqrt{\varrho_i} e_i}{\calH}^2 \nonumber \\
    &= \sum_{j,r = 1}^d \left\| \partial_j \partial_r k(x, \cdot) \right\|_\calH^2 = \left\| \bH_1 k(x,\cdot) \right\|_{\calH^{d \times d}}^2 \leq K_{2d} .
\end{align}
Since $\left\| sdq \right\|_{L^2(\pi)}$ is bounded, so $\sum_{i \geq M_0} \PSi{q, e_i}{L^2(\pi)}^2$ converges to $0$ uniformly as $M_0 \to \infty$ and hence $g_{M_0}(x)$ converge to $0$ uniformly. Therefore, we are allowed to interchange the Hessian and the infinite sum~\citep{rudin1976principles} in \eqref{appeq:chi2_drmmd_diff} to achieve,
\begin{align}
 &\quad 2 \left| \int \nabla \phi(x)^\top \bH \left( \sum_{i \geq 1}\frac{\lambda}{\varrho_i + \lambda} \PSi{\frac{d \mu}{d \pi}-1, e_i}{L^2(\pi)} e_i(x) \right) \nabla \phi(x) \mu(x) dx \right| \nonumber \\
 &= 2 \left| \int \nabla \phi(x)^\top  \left( \sum_{i \geq 1}\frac{\lambda}{\varrho_i + \lambda} \varrho_i^{1/2} \PSi{q, e_i}{L^2(\pi)} \bH e_i(x) \right) \nabla \phi(x) \mu(x) dx \right| \nonumber \\
&\leq 2 \left( \sum_{i \geq 1} \left( \frac{\lambda }{\varrho_i + \lambda}\right)^2 \PSi{q, e_i}{L^2(\pi)}^2  \right)^{1/2} \left( \sum_{i \geq 1} \varrho_i \left( \int \nabla \phi(x)^\top  \bH e_i(x) \nabla \phi(x) \mu(x) dx \right)^2 \right)^{1/2} \nonumber \\
&\leq 2 \left( \sum_{i \geq 1} \left( \frac{\lambda }{\varrho_i + \lambda}\right)^2 \PSi{q, e_i}{L^2(\pi)}^2  \right)^{1/2} \left( \sum_{i \geq 1} \varrho_i \left\| \bH e_i(x) \right\|_{op}^2 \left( \int \nabla \phi(x)^\top \nabla \phi(x) \mu(x) dx \right)^2 \right)^{1/2} \nonumber \\
&\leq 2 \left( \sum_{i \geq 1} \left( \frac{\lambda }{\varrho_i + \lambda}\right)^2 \PSi{q, e_i}{L^2(\pi)}^2  \right)^{1/2} \left( \sum_{i \geq 1} \varrho_i \left\| \bH e_i(x) \right\|_{op}^2 \right)^{1/2} \|\nabla \phi \|_{L^2(\mu)}^2  \nonumber \\
&\leq 2 \left( \sum_{i \geq 1} \left( \frac{\lambda }{\varrho_i + \lambda}\right)^2 \PSi{q, e_i}{L^2(\pi)}^2  \right)^{1/2} \left( \sum_{i \geq 1} \varrho_i \left\| \bH e_i(x) \right\|_{F}^2 \right)^{1/2} \|\nabla \phi \|_{L^2(\mu)}^2  \nonumber \\
&\leq 2 \left( \sum_{i \geq 1} \left( \frac{\lambda }{\varrho_i + \lambda}\right)^2 \PSi{q, e_i}{L^2(\pi)}^2  \right)^{1/2} \sqrt{K_{2d}} \|\nabla \phi \|_{L^2(\mu)}^2 . \label{appeq:diff_tecond_term}
\end{align}
The first inequality uses Cauchy Schwartz, the second last inequality uses that matrix operator norm is smaller than matrix Frobenius norm, and the last inequality uses \eqref{appeq:sum_kernel_hessian}.
Combining together \eqref{appeq:diff_first_term} and \eqref{appeq:diff_tecond_term}, we reach
\begin{align*}
    &\quad \left| \frac{1}{1 + \lambda} \frac{d^2}{d t^2}\Big|_{t=0}  \dmmd(\rho_t||\pi) - \frac{d^2}{d t^2} \Big|_{t=0} \chi^2(\rho_t ||\pi) \right| \\
    &\leq 2 \sum_{i \geq 1} \frac{\lambda}{\varrho_i + \lambda} \left( \int \nabla \phi(x)^\top \nabla e_i(x) \mu(x)  dx \right)^2 \\
    &\qquad\qquad+ 2 \left( \sum_{i \geq 1} \left( \frac{\lambda }{\varrho_i + \lambda}\right)^2 \PSi{q, e_i}{L^2(\pi)}^2  \right)^{1/2} \sqrt{K_{2d}} \|\nabla \phi \|_{L^2(\mu)}^2  \\
    & =: \bar{R}(\lambda, \mu, \nabla \phi) .
\end{align*}
Therefore, 
\begin{align*}
    \frac{d^2}{d t^2}\Big|_{t=0}  \dmmd(\rho_t ||\pi) & \geq (1+\lambda) \frac{d^2}{d t^2}\Big|_{t=0}  \chi^2(\rho_t || \pi) - (1+\lambda) \bar{R}(\lambda, \mu, \nabla \phi) \\
    & \geq (1+\lambda) \int \frac{d \mu}{ d \pi} (x) \nabla \phi(x)^{\top} \mathbf{H} V(x) \nabla \phi(x) d \mu(x) - (1+\lambda) \bar{R}(\lambda, \mu, \nabla \phi) \\
    &\geq (1+\lambda) \alpha \int \frac{d \mu}{d \pi}(x)\|\nabla \phi(x)\|^2 d \mu(x) - (1+\lambda) \bar{R} (\lambda, \mu, \nabla \phi) ,
\end{align*}
where the second inequality is using \eqref{appeq:hess_chi2} and the last inequality is using $\bH V \succeq \alpha \Id$. 
So \eqref{eq:hessian_lower_bound} is proved.

\vspace{1em}
\noindent
Define $R(\lambda, \mu, \nabla \phi) := (1+\lambda) \bar{R} (\lambda, \mu, \nabla \phi)$.
The final thing to check is $\lim_{\lambda \to 0} R(\lambda, \mu, \nabla \phi) = 0$, which is equivalent to check that $\lim_{\lambda \to 0} \bar{R}(\lambda, \mu, \nabla \phi) = 0$.
Since we know from \eqref{appeq:hess_chi2_bounded} and \eqref{appeq:chi2_hessian_first_term} that 
\begin{align*}
    \sum_{i \geq 1} \left( \int \nabla \phi(x)^\top \nabla e_i(x) \mu(x)  dx \right)^2  < \frac{d^2}{dt^2}\Big|_{t=0} \chi^2(\rho_t \| \pi) < \infty,
\end{align*}
using the dominated convergence theorem~\citep{rudin1976principles}, we are allowed to interchange infinite sum and taking limits,
\begin{align*}
    \lim_{\lambda \to 0} \sum_{i \geq 1} \frac{\lambda}{\varrho_i + \lambda} \left( \int \nabla \phi(x)^\top \nabla e_i(x) \mu(x)  dx \right)^2 &= \sum_{i \geq 1} \lim_{\lambda \to 0} \frac{\lambda}{\varrho_i + \lambda} \left( \int \nabla \phi(x)^\top \nabla e_i(x) \mu(x)  dx \right)^2 \\
    &= 0.
\end{align*}
Similarly, because $\sum_{i \geq 1} \left( \frac{\lambda }{\varrho_i + \lambda}\right)^2 \PSi{q, e_i}{L^2(\pi)}^2 < \left\| q \right\|_{L^2(\pi)}^2 < \infty$, using dominated convergence theorem~\citep{rudin1976principles} again, we have,
\begin{align*}
    \lim_{\lambda \to 0}  \sum_{i \geq 1} \left( \frac{\lambda }{\varrho_i + \lambda}\right)^2 \PSi{q, e_i}{L^2(\pi)}^2  = \sum_{i \geq 1} \lim_{\lambda \to 0} \left( \frac{\lambda }{\varrho_i + \lambda}\right)^2 \PSi{q, e_i}{L^2(\pi)}^2 = 0 .
\end{align*}
Therefore, we have that
\begin{align*}
    \lim_{\lambda \to 0} R(\lambda, \mu, \nabla \phi) = \lim_{\lambda \to 0} \bar{R}(\lambda, \mu, \nabla \phi) = 0 .
\end{align*}
And the proof of the proposition is finished.

\subsection{Proof of \Cref{thm:continuous_time_convergence}}\label{appsec:proof_continuous_time_convergence}
Considering the time derivative of $\kl(\mu_t \| \pi)$, we have
\begin{small}
\begin{align}
&\quad \frac{d}{dt} \kl(\mu_t \| \pi) \nonumber \\
&= -(1+\lambda) \int \nabla h_{\mu_t, \pi}^\ast(x)^\top \nabla \log \frac{d \mu_t}{ d \pi} (x) \mu_t(x) dx \nonumber \\
&= -(1+\lambda) \int \nabla h_{\mu_t, \pi}^\ast(x)^\top \nabla \frac{d \mu_t}{ d \pi} (x) \pi(x) dx \nonumber \\
&= -(1+\lambda) \int \left( \nabla h_{\mu_t, \pi}^\ast(x) - 2 \nabla \frac{d \mu_t}{ d \pi} (x) \right)^\top \nabla \frac{d \mu_t}{ d \pi} (x) \pi(x) dx - 2(1+\lambda) \int \pi(x) \left\| \nabla \frac{d \mu_t}{ d \pi} (x) \right\|^2 dx \nonumber \\
&= -(1+\lambda) \int \left( \nabla h_{\mu_t, \pi}^\ast(x) - 2 \nabla \left( \frac{d \mu_t}{ d \pi} (x) -1 \right) 
 \right)^\top \nabla \frac{d \mu_t}{ d \pi} (x) \pi(x) dx \nonumber \\
&- 2(1+\lambda) \int \pi(x) \left\| \nabla \frac{d \mu_t}{ d \pi} (x) \right\|^2 dx . \label{eq:kl_d_dt}
\end{align}
\end{small}
\textit{Case one:} $\int \pi(x) \left\| \nabla \frac{d \mu_t}{ d \pi} (x) \right\|^2 dx < \infty$. 
We use integration by parts for the first term in \eqref{eq:kl_d_dt} and we can safely ignore the boundary term due to condition 5 that for $i = 1, \ldots, d$, $\lim\limits_{x\to \infty} \left( h_{\mu_t, \pi}^\ast(x) - 2 \frac{d \mu_t}{ d \pi} (x) \right) \left( \partial_i \frac{d \mu_t}{ d \pi} (x) \right) \pi(x) \to 0$. So, we obtain
\begin{align}
&\quad \frac{d}{dt} \kl(\mu_t \| \pi) = (1+\lambda) \int \left( h_{\mu_t, \pi}^\ast(x) - 2 \left( \frac{d \mu_t}{ d \pi} (x) -1 \right)  \right) \nabla \cdot \left( \pi(x) \nabla \frac{d \mu_t}{ d \pi} (x) \right) dx \nonumber \\
&\qquad\qquad- 2(1+\lambda) \int \pi(x) \left\| \nabla \frac{d \mu_t}{ d \pi} (x) \right\|^2 dx \nonumber \\
&= (1+\lambda) \int \left( h_{\mu_t, \pi}^\ast(x) - 2 \left( \frac{d \mu_t}{ d \pi} (x) -1 \right)  \right) \frac{ \nabla \cdot \left( \pi(x) \nabla \frac{d \mu_t}{ d \pi} (x) \right) }{ \pi(x) } \pi(x) dx \nonumber \\
&\qquad\qquad- 2(1+\lambda) \int \pi(x) \left\| \nabla \frac{d \mu_t}{ d \pi} (x) \right\|^2 dx \nonumber \\
&\leq (1+\lambda) \left\|  h_{\mu_t, \pi}^\ast - 2 \left( \frac{d \mu_t}{d \pi} -1 \right) \right\|_{L^2(\pi)} \left\| \frac{ \nabla \cdot \left( \pi \nabla \frac{d \mu_t}{ d \pi} \right) }{ \pi } \right\|_{L^2(\pi)} - \frac{ 2(1+\lambda)}{ C_P} \kl(\mu_t \| \pi) \label{appeq:continous_term},
\end{align}
where the first part of the last inequality holds by using Cauchy Schwartz, and the second part holds by the fact that $\kl(\mu_t \| \pi) \leq \chi^2(\mu_t \| \pi)$~\citep{van2014renyi} and by applying the Poincar\'{e} inequality with $f = \frac{d \mu_t}{d \pi} - 1$ (notice that $ \| \nabla f \|_{L^2(\pi)} < \infty$ from \textit{Case one} and $\|f\|_{L^2(\pi)} < \infty$ from condition 3),
\begin{align*}
    \int \pi(x) \left\| \nabla \frac{d \mu_t}{ d \pi} (x) \right\|^2 dx \geq \frac{1}{C_P} \chi^2(\mu_t \| \pi) \geq \frac{1}{C_P} \kl(\mu_t \| \pi).
\end{align*}
Since $\frac{ d\mu_t}{d \pi} - 1 \in \operatorname{Ran} \left( \mathcal{T}_\pi^r \right)$ with $r > 0$, using \Cref{lem:h_difference} we have
\begin{align}\label{appeq:continous_term_1}
    \left\| h_{\mu_t, \pi}^\ast  - 2 \left( \frac{d \mu_t }{d \pi} -1 \right) \right\|_{L^2(\pi)} \leq 2 \lambda^{r} \left\|q_t \right\|_{L^2(\pi)}. 
\end{align}
Then, notice that
\begin{align}
&\left\| \frac{ \nabla \cdot \left( \pi \nabla \frac{d \mu_t}{ d \pi}  \right) }{ \pi } \right\|_{L^2(\pi)}^2 = \int \frac{ \left[ \nabla \cdot\left( \pi(x) \nabla \frac{d \mu_t}{ d \pi} (x)\right) \right]^2 }{\pi(x)^2} d \pi(x) \nonumber \\
& = \int \frac{ \left(  \nabla \pi(x)^\top \nabla \frac{d \mu_t}{ d \pi} (x) + \pi(x) \nabla \cdot \nabla \frac{d \mu_t}{ d \pi} (x) \right)^2 }{\pi(x)^2} d \pi(x) \nonumber \\
&= \int  \left( \nabla \log \pi(x)^\top \nabla \frac{d \mu_t}{ d \pi} (x) + \Delta \frac{d \mu_t}{ d \pi} (x) \right)^2 d \pi(x) \nonumber \\
&\leq \int 2 \left( \nabla \log \pi(x)^\top \nabla \frac{d \mu_t}{ d \pi} (x) \right)^2 + 2 \left( \Delta \frac{d \mu_t}{ d \pi} (x) \right)^2 d \pi(x) \nonumber \\
&= 2 \left\| \nabla \left(\log \pi\right)^\top \nabla \left( \frac{d \mu_t}{d\pi} \right) \right\|_{L^2(\pi)}^2 + 2 \left\| \Delta \left( \frac{d \mu_t}{d\pi} \right) \right\|_{L^2(\pi)}^2 
\leq 2 \mathcal{J}_t^2 + 2 \mathcal{I}_t^2 . \label{appeq:continous_term_2}
\end{align}
Therefore, plugging \eqref{appeq:continous_term_1} and \eqref{appeq:continous_term_2} back to \eqref{appeq:continous_term}, we have
\begin{align}\label{eq:kl_d_dt_final}
    \frac{d}{dt} \kl(\mu_t \| \pi) \leq 4 (1+\lambda)\lambda^{r} \left\| q_t \right\|_{L^2(\pi)} \left( \mathcal{J}_t +\mathcal{I}_t \right) - \frac{ 2 (1+\lambda)}{ C_P} \kl(\mu_t \| \pi).
\end{align}
\textit{Case two:} $\int \pi(x) \left\| \nabla \frac{d \mu_t}{ d \pi} (x) \right\|^2 dx = \infty$. 
The first term of \eqref{eq:kl_d_dt} remains the same, and the second term of \eqref{eq:kl_d_dt} now equals infinity which is larger than the finite $\frac{ 2 (1+\lambda)}{ C_P} \kl(\mu_t \| \pi)$, so we also obtain \eqref{eq:kl_d_dt_final} as in \textit{Case one}.

Therefore, both \textit{Case one} and \textit{Case two} result in \eqref{eq:kl_d_dt_final}. Using the Gronwall lemma, we obtain that for any $T > 0$,
\begin{align*}
\begin{aligned}
     \kl(\mu_T \| \pi) & \leq \exp\left( -\frac{ 2 (1+\lambda)}{ C_P} T\right) \kl(\mu_0 \| \pi) \nonumber \\
    &+ 4 (1+\lambda) \lambda^r \int_0^T \exp\left( -\frac{ 2(1+\lambda)}{C_P} (T-t) \right)\left\| q_t \right\|_{L^2(\pi)} (\mathcal{J}_t + \mathcal{I}_t )  dt,
    \end{aligned}
\end{align*}
which concludes the proof of \Cref{thm:continuous_time_convergence}.

\subsubsection{Derivation of \eqref{eq:fourth_condition} under stronger range assumption $r = \frac{1}{2}$.}
Notice that for any $x \in \R^d$, since $k$ is differentiable
\begin{align*}
    \nabla \left( \frac{d \mu_t}{d \pi} -1 \right)(x) = \left\langle \nabla k(x, \cdot), \frac{d \mu_t}{d \pi} -1 \right\rangle_{\calH^d} \leq \sqrt{K_{1d}} \left\| \frac{d \mu_t}{d \pi} -1 \right\|_\calH .
\end{align*}
And since $\frac{d \mu_t}{d \pi} -1 \in \calH$, there exists $q_t \in L^2(\pi)$ such that $\left\langle\frac{d \mu_t}{d \pi}-1, e_i\right\rangle_{L^2(\pi)}=\varrho_i^{1/2} \left\langle q_t, e_i\right\rangle_{L^2(\pi)} $ for all $i$, so
\begin{align*}
    \left\| \frac{d \mu_t}{d \pi} -1 \right\|_\calH^2 = \sum_{i \geq 1} \frac{1}{\varrho_i} \left\langle \frac{d \mu_t}{d \pi} -1, e_i \right\rangle_{L^2(\pi)}^2 = \sum_{i \geq 1} \left\langle q_t, e_i \right\rangle_{L^2(\pi)}^2 = \| q_t \|_{L^2(\pi)}^2 .
\end{align*}
Combining the above two equations, we have
\begin{align}
    \left\| \nabla \left( \log \pi \right)^\top \nabla \left( \frac{d \mu_t}{d \pi} -1 \right) \right\|_{L^2(\pi)} \leq \sqrt{K_{1d}} \| q_t \|_{L^2(\pi)} \left\| \nabla  \log \pi \right\|_{L^2(\pi)} .\nonumber
\end{align}
Also, for the other one, notice that
\begin{align*}
    \Delta \left(\frac{d \mu_t}{d \pi} \right) (x) &\leq \left\| \bH \left(\frac{d \mu_t}{d \pi} -1 \right) (x) \right\|_F = \left\| \left\langle \bH k(x, \cdot ) , \frac{d \mu_t}{d \pi} -1 \right\rangle \right\|_F \\
    &\leq \left\| \bH k(x, \cdot ) 
    \right\|_{\calH^{d \times d}} \left\| \frac{d \mu_t}{d \pi} -1 \right\|_\calH \leq \sqrt{K_{2d} } \| q_t \|_{L^2(\pi)} .
\end{align*}
Therefore, 
\begin{align}
    \left\| \Delta \left(\frac{d \mu_t}{d \pi} \right)  \right\|_{L^2(\pi)} \leq \sqrt{K_{2d} } \| q_t \|_{L^2(\pi)} .\nonumber
\end{align}

\subsection{Proof of \Cref{prop:descent_lemma_KL}}\label{appsec:proof_kl_descent_lemma}
We know that $\mu_{n+1} = (\Id-\gamma (1+\lambda) \nabla h)_{\#} \mu_n$ where we drop the subscripts of the witness function $h_{\mu_n ,\pi}$ when it causes no ambiguity.
Denote $\rho_t = (\Id- t (1+\lambda) \nabla h)_{\#} \mu_n$, so $\rho_0 = \mu_n$, and $\rho_{\gamma} = \mu_{n+1}$. 
Consider the difference of KL divergence between the two iterates $\mu_{n+1}$ and $\mu_n$ along the time-discretized $\dmmd$ flow:
\begin{align}
    \kl(\mu_{n+1} \| \pi) - \kl(\mu_{n} \| \pi) &= \kl(\rho_{\gamma} \| \pi) - \kl(\rho_0 \| \pi)  \nonumber
    \\ 
    &= \frac{d }{d t}\Big|_{t=0} \kl(\rho_t \| \pi) \gamma + \int_0^{\gamma} (\gamma - t)\frac{d^2 }{d t^2} \kl(\rho_t \| \pi) dt . \label{appeq:discrete_all}
\end{align}
For the first term of \eqref{appeq:discrete_all},
\begin{small}
\begin{align}\label{appeq:first_term}
    &\quad \frac{d }{d t}\Big|_{t=0} \kl(\rho_t \| \pi) \nonumber \\
    &= -(1 + \lambda)
    \E_{\mu_n} \left[ \nabla \log \frac{d \mu_n}{ d \pi} ^\top \nabla h \right] = - (1 + \lambda) \E_{\pi} \left[ \nabla \frac{d \mu_n}{ d \pi} ^\top \nabla h \right] \nonumber \\
    &= -2 (1 + \lambda) \left\| \nabla \frac{d \mu_n}{ d \pi}  \right\|_{L^2(\pi)}^2 + (1 + \lambda) \E_{\pi} \left[ \nabla \frac{d \mu_n}{ d \pi} ^\top \left( 2 \nabla \frac{d \mu_n}{ d \pi}  - \nabla h \right) \right] \nonumber \\
    &= -2 (1 + \lambda) \left\| \nabla \frac{d \mu_n}{ d \pi}  \right\|_{L^2(\pi)}^2 + (1 + \lambda) \int \left( \nabla \frac{d \mu_n}{ d \pi} (x) \right)^\top \left( 2 \nabla \left( \frac{d \mu_n}{ d \pi} (x) -1 \right) - \nabla h(x) \right) \pi(x) dx \nonumber \\
    &= -2 (1 + \lambda) \left\| \nabla \frac{d \mu_n}{ d \pi}  \right\|_{L^2(\pi)}^2 - (1 + \lambda) \int \left( 2 \left( \frac{d \mu_n}{ d \pi} (x) - 1 \right) - h(x) \right) \nabla \cdot \left( \pi(x) \nabla \frac{d \mu_n}{ d \pi} (x) \right) dx \nonumber \\
    &\leq - (1 + \lambda) \frac{2}{C_P} \chi^2(\mu_n \| \pi) + (1 + \lambda) \left\|  h - 2 \left( \frac{d \mu_n}{d \pi} -1 \right) \right\|_{L^2(\pi)} \left\| \frac{ \nabla \cdot \left( \pi \nabla \frac{d \mu_n}{ d \pi}  \right) }{ \pi } \right\|_{L^2(\pi)},
\end{align}
\end{small}
where the fourth equality uses an integration by parts, and the last inequality uses Poincar\'{e} inequality for the first term under similar arguments in \Cref{appsec:proof_continuous_time_convergence} and uses Cauchy-Schwarz for the second term. 
Using \Cref{lem:h_difference} and the derivations in \eqref{appeq:continous_term_2}, \eqref{appeq:first_term} can be further upper bounded by
\begin{align}\label{appeq:first_term_cont}
    \frac{d }{d t}\Big|_{t=0} \kl(\rho_t \| \pi) \leq -(1 + \lambda) \frac{2}{C_P} \chi^2(\mu_n \| \pi)  + 2(1 + \lambda) \lambda^{r} Q \left( \mathcal{J} +\mathcal{I} \right). 
\end{align}
Then, for the second term of \eqref{appeq:discrete_all}, we know from Example 15.9 of \cite{villani2009optimal} (taking $m=1$) that,
\begin{align*}
\frac{d^2 }{d t^2} \kl(\rho_t \| \pi) &= (1 + \lambda)^2 \int \nabla h( x )^\top \bH V( \varphi_t(x) ) \nabla h(x) d \mu_n(x) \\
&+ (1 + \lambda)^2 \int \left\| \bH h(x) \left(\Id - t (1 + \lambda) \bH h(x) \right)^{-1} \right\|_F^2 d\mu_n(x).
\end{align*}
Because $2(1+\lambda) \gamma \sqrt{\chi^2\left(\mu_0 \| \pi\right) \frac{K_{2 d}}{\lambda}} \leq \frac{\zeta-1}{\zeta}$ for $1 < \zeta < 2$, applying \Cref{lem:hessian_terms} we have,
\begin{align}\label{appeq:second_term}
    \frac{d^2 }{d t^2} \kl(\rho_t \| \pi) \leq 4 (1 + \lambda)^2 \beta \chi^2(\mu_n \| \pi) \frac{K_{1d}}{\lambda} + 4 (1 + \lambda)^2 \zeta^2 \chi^2(\mu_n \| \pi) \frac{K_{2d}}{\lambda}.
\end{align}
Combining the above two inequalities \eqref{appeq:first_term_cont} and \eqref{appeq:second_term} and plugging them back into \eqref{appeq:discrete_all}, we obtain
\begin{align*}
\kl(\mu_{n+1} \| \pi) - \kl(\mu_{n} \| \pi) &\leq -\frac{2}{C_P} (1 + \lambda ) \chi^2(\mu_n \| \pi) \gamma + 2 (1 + \lambda )\gamma \lambda^{r} Q \left( \mathcal{J} +\mathcal{I} \right) \\
&+ 2(1 + \lambda )^2 \gamma^2 (\beta + \zeta^2) \chi^2(\mu_n \| \pi) \frac{K_{1d}+K_{2d} }{\lambda} \\
& \leq -\frac{2}{C_P} \chi^2(\mu_n \| \pi) \gamma \\
&+ 4 \gamma \lambda^{r} Q \left( \mathcal{J} +\mathcal{I} \right) + 8 \gamma^2 (\beta + \zeta^2) \chi^2(\mu_n \| \pi) \frac{K_{1d}+K_{2d} }{\lambda},
\end{align*}
where the last inequality holds by using $0 < \lambda \leq 1$, and the result follows.

\subsection{Proof of \Cref{thm:discrete_time_KL}}\label{appsec:proof_discrete_time_convergence}
In order to use \Cref{prop:descent_lemma_KL} in the proof, first we are going to show that \Cref{prop:descent_lemma_KL} holds under the conditions of \Cref{thm:discrete_time_KL}.
Notice that conditions 1-4 of \Cref{prop:descent_lemma_KL} are precisely the conditions 1-4 of \Cref{thm:discrete_time_KL}. To use \Cref{prop:descent_lemma_KL} in the proof of \Cref{thm:discrete_time_KL}, the only thing left is to check that the condition of step size $\gamma$ in \eqref{appeq:gamma_condition_1} is satisfied. 

In \Cref{thm:discrete_time_KL}, $\lambda_n$ is selected to be $\left( 2 \gamma \frac{\chi^2(\mu_n \| \pi) ( \beta + \zeta^2 ) (K_{1d} + K_{2d})}{ Q( \mathcal{J} +\mathcal{I}) } \right)^{\frac{1}{r+1}} \wedge 1$. 
If $\lambda_n$ is taken to be the former, then
\begin{align}
    &\quad 2 \gamma (1+\lambda_n) \sqrt{ \chi^2(\mu_n \| \pi) \frac{K_{2d}}{\lambda_n} } \leq 4 \gamma \sqrt{ \chi^2(\mu_n \| \pi) \frac{K_{2d}}{\lambda_n} } \nonumber \\
    &= 2 (2 \gamma)^{\frac{2r + 1}{2 r + 2 }} \chi^2\left(\mu_n \| \pi\right)^{\frac{r }{2 r +  2 } } \left( Q (\mathcal{J} + \mathcal{I} ) \right)^{\frac{1}{2r + 2 }} \left(  \frac{ 1 }{K_{1d} + K_{2d}}  \frac{ 1 }{\beta + \zeta^2 } \right)^{\frac{1}{2r + 2}} K_{2d}^{\frac{1}{2}} \nonumber  \\
    &\stackrel{(*)}{\leq} \left(8 \gamma\right)^{\frac{2r + 1}{2 r + 2 }} Q^{\frac{2r + 1}{2r + 2}} (\mathcal{J} + \mathcal{I} )^{\frac{1}{2r + 2 }} \left(\frac{ 1 }{\beta + \zeta^2 } \right)^{\frac{1}{2r + 2}} K_{2d}^{\frac{r}{2r + 2 }}  \nonumber  \\
    &\leq \frac{\zeta - 1 }{\zeta} \left(\frac{ 1 }{\beta + \zeta^2 } \right)^{\frac{r}{2r + 2}} \leq \frac{\zeta - 1 }{\zeta}  . \label{appeq:dummy_2}
\end{align}
$(*)$ holds because $\frac{K_{2d}}{K_{1d} + K_{2d}} \leq 1$ and
\begin{align}\label{appeq:KQ}
    \chi^2\left(\mu_n \| \pi\right) = \left\| \frac{d \mu_n}{d \pi} - 1\right\|_{L^2(\pi)}^2 = \left\| \calT_\pi^{r} q_n \right\|_{L^2(\pi)}^2 \leq K^{2r} Q^2 \leq Q^2.
\end{align}
The second last inequality of \eqref{appeq:dummy_2} holds due to the constraint on $\gamma$ in \eqref{eq:gamma_condition}, and the last inequality of \eqref{appeq:dummy_2} holds because $\beta + \zeta^2 \geq 1$.

\vspace{1em}
\noindent
On the other hand, if $\lambda_n$ is chosen to be $1$, similarly based on the constraint on $\gamma$ in \eqref{eq:gamma_condition}, we have
\begin{align*}
    2 \gamma (1+\lambda_n) \chi^2(\mu_n \| \pi) \frac{K_{2d}}{\lambda_n} &= 4 \gamma \chi^2(\mu_n \| \pi) K_{2d} \leq \frac{\zeta-1}{\zeta}.
\end{align*}
Therefore, all the conditions of \Cref{prop:descent_lemma_KL} have been verified. 
So, if we select $\lambda_n = \left( 2 \gamma \frac{\chi^2(\mu_n \| \pi) ( \beta + \zeta^2 ) (K_{1d} + K_{2d})}{ Q( \mathcal{J} +\mathcal{I}) } \right)^{\frac{1}{r+1}} \wedge 1$, then 
\begin{align}\label{appeq:KL_descent_1}
\kl(\mu_{n+1} \| \pi) - \kl(\mu_{n} \| \pi) &\leq - \gamma \frac{2}{C_P} \chi^2(\mu_n \| \pi) \nonumber \\
&\qquad+ \underbrace{ 4 \gamma \lambda_n^{r} Q \left( \mathcal{J} +\mathcal{I} \right) }_{(\Delta_1)} + \underbrace{ 8 \gamma^2 (\beta + \zeta^2) \chi^2(\mu_n \| \pi) \frac{K_{1d}+K_{2d} }{\lambda_n} }_{(\Delta_2)}.
\end{align}
By observing \eqref{appeq:KL_descent_1}, the first term on the right-hand side $-\frac{2}{C_P} \chi^2(\mu_n \| \pi) \gamma $ is strictly negative and is decreasing KL divergence at each iteration $n$ of the $\dmmd$ gradient descent. 
In contrast, the second term $(\Delta_1) := 4 \gamma \lambda_n^{r} Q( \mathcal{J} +\mathcal{I})$ and the third term $(\Delta_2) := 8 \gamma^2 (\beta + \zeta^2) \chi^2(\mu_n \| \pi) \frac{K_{1d}+K_{2d} }{\lambda_n}$ are positive and prevent the KL divergence from decreasing. 
Denote $G(\lambda_n) = (\Delta_1) + (\Delta_2)$ and the optimal $\lambda_n$ is achieved by taking $\frac{d}{ d \lambda_n} G(\lambda_n) = 0$, which leads to $\lambda_n = \left( 2 \gamma \frac{\chi^2(\mu_n \| \pi) ( \beta + \zeta^2 ) (K_{1d} + K_{2d})}{ Q( \mathcal{J} +\mathcal{I}) } \right)^{\frac{1}{r+1}}$.
Plugging the value of $\lambda_n$ back to \eqref{appeq:KL_descent_1} to obtain,
\begin{align}\label{eq:kl_descent_2}
&\quad \kl(\mu_{n+1} \| \pi) - \kl(\mu_{n} \| \pi) \nonumber \\
&\le -\frac{2}{C_P} \chi^2(\mu_n \| \pi) \gamma + 4 \gamma \left( 2 \gamma\right)^{\frac{r}{r + 1} } \left( \chi^2(\mu_n \| \pi) ( \beta + \zeta^2 ) (K_{1d} + K_{2d}) \right)^{\frac{ r }{ r + 1 }}  \left( Q( \mathcal{J} +\mathcal{I}) \right)^{\frac{ 1 }{ r + 1 }} \nonumber \\
&\leq -\frac{2}{C_P} \chi^2(\mu_n \| \pi) \gamma + 8 \gamma^{1 + \frac{ r }{ r + 1 }} \left( \chi^2(\mu_n \| \pi) ( \beta + \zeta^2 ) (K_{1d} + K_{2d}) \right)^{\frac{ r }{ r + 1 }}  \left( Q( \mathcal{J} +\mathcal{I})\right)^{\frac{ 1 }{ r + 1 }} \nonumber \\
&\leq -\frac{2}{C_P} \chi^2(\mu_n \| \pi) \gamma + 8 \gamma^{1 + \frac{ r }{ r + 1 } } Q^{\frac{2r+1}{r+1}} \Big( (K_{1d} + K_{2d})(\beta + \zeta^2) \Big)^{\frac{ r }{ r + 1 }} ( \mathcal{J} + \mathcal{I})^{\frac{ 1 }{ r + 1 }} ,
\end{align}
where the last inequality holds because of \eqref{appeq:KQ}.
Since $ \chi^2(\mu_n \| \pi) \geq \kl(\mu_n \| \pi)$~\citep[Equation (7)]{van2014renyi}, we have
\begin{align*}
    \kl(\mu_{n+1} \| \pi)& \leq \left(1 - \gamma \frac{2}{C_P} \right) \kl(\mu_{n} \| \pi) \nonumber\\
    &\qquad\qquad+ 8 \gamma^{1 + \frac{ r }{ r + 1 } } Q^{\frac{2r+1}{r+1}} \Big( (K_{1d} + K_{2d})(\beta + \zeta^2) \Big)^{\frac{ r }{ r + 1 }} ( \mathcal{J} +\mathcal{I})^{\frac{ 1 }{ r + 1 }}  .
\end{align*}
After iterating, we obtain
\begin{align*}
    &\quad \kl(\mu_{n_{max} } \| \pi) \\
    &\leq \left(1 - \gamma 
    \frac{2}{C_P} \right)^{n_{max}} \kl(\mu_0 \| \pi) + 4 \gamma^{ \frac{ r }{ r + 1 } } C_P Q^{\frac{2r+1}{r+1}} \Big( (K_{1d} + K_{2d})(\beta + \zeta^2) \Big)^{\frac{ r }{ r + 1 }} ( \mathcal{J} +\mathcal{I})^{\frac{ 1 }{ r + 1 }} \\
    &\leq \exp\left(- \frac{ 2 n_{max} \gamma}{C_P} \right) \kl(\mu_0 \| \pi) 
    + 4 \gamma^{ \frac{ r }{ r + 1 } } C_P Q^{\frac{2r+1}{r+1}} \Big( (K_{1d} + K_{2d})(\beta + \zeta^2) \Big)^{\frac{ r }{ r + 1 }} ( \mathcal{J} +\mathcal{I})^{\frac{ 1 }{ r + 1 }}
\end{align*}
and the result follows.

\subsection{Proof of \Cref{thm:discrete_time_KL_2}}\label{appsec:proof_discrete_time_convergence_2}
The proof of \Cref{thm:discrete_time_KL_2} is also based on \Cref{prop:descent_lemma_KL} proved in the last section. 
Recalling \eqref{eq:kl_descent_2} yet with adaptive step size $\gamma_n$, we have
\begin{align*}
    &\quad \kl(\mu_{n+1} \| \pi) - \kl(\mu_{n} \| \pi) \\
    &\leq -\frac{2}{C_P} \chi^2(\mu_n \| \pi) \gamma_n + 8 \gamma_n^{1 + \frac{ r }{ r + 1 }} \left( \chi^2(\mu_n \| \pi) ( \beta + \zeta^2 ) (K_{1d} + K_{2d}) \right)^{\frac{ r }{ r + 1 }}  \left( Q( \mathcal{J} +\mathcal{I})\right)^{\frac{ 1 }{ r + 1 }}.
\end{align*}
From \eqref{eq:gamma_condition_2}, we have 
\begin{align*}
    8 \gamma_n^{\frac{ r }{ r + 1 } } \Big( (K_{1d} + K_{2d})(\beta + \zeta^2) \Big)^{\frac{ r }{ r + 1 }} ( Q( \mathcal{J} +\mathcal{I}) )^{\frac{ 1 }{ r + 1 }} \leq \frac{1}{C_P} \chi^2(\mu_n \| \pi)^{\frac{1}{r + 1}},
\end{align*}
so that we have
\begin{align*}
    \kl(\mu_{n+1} \| \pi) - \kl(\mu_{n} \| \pi) \leq -\frac{1}{C_P} \chi^2(\mu_n \| \pi) \gamma_n \leq -\frac{1}{C_P} \kl(\mu_n \| \pi) \gamma_n.
\end{align*}
Hence
\begin{align}\label{eq:kl_monotone}
    \kl(\mu_{n+1} \| \pi) \leq \left( 1 - \frac{1}{C_P} \gamma_n \right) \kl(\mu_n \| \pi) .
\end{align}
After iterating $n$ from $1$ to $n_{\max}$, the theorem is proved.

\subsection{Proof of \Cref{thm:final_rate}}\label{sec:proof_final}
In order to analyze the error of space discretization, we introduce another particle descent scheme using the population witness function $h_{\mu_n,\pi}^\ast$ defined in \eqref{eq:time_discretized_flow} starting from the same initialization as that of \eqref{eq:particle_biased},
\vspace{-3pt}
\begin{align}\label{eq:particle_unbiased}
    \overline{y}_{n+1}^{(i)} = \overline{y}_n^{(i)}-\gamma(1+\lambda_n) \nabla h_{\mu_n,\pi}^\ast (\overline{y}_n^{(i)}), \quad \overline{y}_0^{(i)} = y_0^{(i)}.
\vspace{-3pt}
\end{align}
The corresponding empirical distribution of the particles at time step $n$ is defined as $\overline{\mu}_n = \frac{1}{N} \sum_{i=1}^N \overline{y}_n^{(i)}$. 
Note that \eqref{eq:particle_unbiased} is an unbiased sampled version (since it is composed of $N$ i.i.d. realizations) of \eqref{eq:time_discretized_flow}. 
The following proposition shows that $W_2(\overline{\mu}_n, \hat{\mu}_n) \to 0$ as $N,M \to \infty$, i.e., with a sufficient number of samples from $\mu_0$ and $\pi$,
\eqref{eq:particle_biased} can approximate \eqref{eq:particle_unbiased} with
arbitrary precision. 
The proof of \Cref{prop:space_discretized} is provided in \Cref{appsec:proof_space_discretized}.

\begin{prop}\label{prop:space_discretized}
Suppose $k$ satisfies \Cref{assumption:universal} and  \ref{assumption:bounded_kernel}. Given initial particles $\{y_0^{(i)}\}_{i=1}^N$ that are i.i.d sampled from $\mu_0$, a sequence $\left(\overline{\mu}_n\right)_{n\in \mathbb{N}}$  of empirical distributions arising from \eqref{eq:particle_unbiased}, and  a sequence $\left(\hat{\mu}_n\right)_{n\in \mathbb{N}}$  arising from \eqref{eq:particle_biased}, then for all $n \geq 1$, we have
\begin{align*}
    \E [W_2 \left(\hat{\mu}_{n}, \overline{\mu}_{n} \right)] \leq A \left(\frac{ K }{\sqrt{M} \tilde{\lambda} } + \frac{ 1}{\sqrt{M}} + \frac{1}{\sqrt{N}} \right) \left( \exp\left( \gamma n \frac{ 2 (1 + \tilde{\lambda} ) R }{\tilde{\lambda} } \right) - 1 \right),
\end{align*}
where $A = \frac{2 \sqrt{K K_{1 d}}}{ \sqrt{K K_{2 d}} + K_{1 d}}$ and $R = K_{1 d} + \sqrt{K K_{2 d}}$ are constants that only depend on the kernel, and $\tilde{\lambda} = \min\limits_{i = 1, \ldots, n } \lambda_i$ denotes the smallest regularization coefficient. 
\end{prop}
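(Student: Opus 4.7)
The plan is to control $\E W_2(\hat{\mu}_n, \overline{\mu}_n)$ via the canonical coupling induced by the shared initialization $y_0^{(i)} = \overline{y}_0^{(i)}$, so that
\begin{align*}
W_2^2(\hat{\mu}_n, \overline{\mu}_n) \leq \frac{1}{N}\sum_{i=1}^N \|y_n^{(i)} - \overline{y}_n^{(i)}\|^2,
\end{align*}
reducing the task to bounding the particle-wise deviations along the iterations. Subtracting \eqref{eq:particle_biased} and \eqref{eq:particle_unbiased} for each index $i$ and adding/subtracting $\nabla h_{\mu_n,\pi}^\ast(y_n^{(i)})$ gives
\begin{align*}
y_{n+1}^{(i)} - \overline{y}_{n+1}^{(i)} &= \bigl(y_n^{(i)} - \overline{y}_n^{(i)}\bigr) - \gamma(1+\lambda_n)\bigl[\nabla h_{\mu_n,\pi}^\ast(y_n^{(i)}) - \nabla h_{\mu_n,\pi}^\ast(\overline{y}_n^{(i)})\bigr] \\
&\qquad - \gamma(1+\lambda_n)\bigl[\nabla h_{\hat{\mu}_n,\hat{\pi}}^\ast(y_n^{(i)}) - \nabla h_{\mu_n,\pi}^\ast(y_n^{(i)})\bigr].
\end{align*}
The first bracket is a smoothness term, controlled by the Lipschitz constant of $x\mapsto \nabla h_{\mu_n,\pi}^\ast(x) = 2(\Sigma_\pi + \lambda_n \Id)^{-1}(m_{\mu_n}-m_\pi)$ evaluated at the RKHS-valued feature $\nabla_1 k(x,\cdot)$. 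Assumption \ref{assumption:bounded_kernel} (together with the resolvent bound $\|(\Sigma_\pi+\lambda_n \Id)^{-1}\|_{\mathrm{op}} \leq 1/\lambda_n$) yields a Lipschitz constant of order $R/\lambda_n$ where $R = K_{1d} + \sqrt{KK_{2d}}$, matching the exponent in the claimed bound after using $\lambda_n \geq \tilde{\lambda}$.

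The second bracket is the sample approximation error, which I would decompose via
\begin{align*}
h_{\hat{\mu}_n,\hat{\pi}}^\ast - h_{\mu_n,\pi}^\ast &= 2(\Sigma_{\hat{\pi}}+\lambda_n\Id)^{-1}\bigl[(m_{\hat{\mu}_n}-m_{\overline{\mu}_n}) + (m_{\overline{\mu}_n}-m_{\mu_n}) + (m_\pi - m_{\hat{\pi}})\bigr] \\
&\qquad + 2\bigl[(\Sigma_{\hat{\pi}}+\lambda_n\Id)^{-1} - (\Sigma_\pi+\lambda_n\Id)^{-1}\bigr](m_{\mu_n}-m_\pi),
\end{align*}
and treat each piece separately. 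The first piece is bounded in $\calH$ by $\frac{\sqrt{K_{1d}}}{N\lambda_n}\sum_i\|y_n^{(i)} - \overline{y}_n^{(i)}\|$, feeding back into the recursion. For the second and third pieces, since the $\overline{y}_n^{(i)}$ are i.i.d.\ samples from $\mu_n$ by construction of \eqref{eq:particle_unbiased}, standard mean-embedding concentration gives $\E\|m_{\overline{\mu}_n}-m_{\mu_n}\|_\calH \lesssim \sqrt{K/N}$ and $\E\|m_{\hat{\pi}}-m_\pi\|_\calH \lesssim \sqrt{K/M}$, each amplified by $1/\lambda_n$ through the resolvent. The fourth piece uses the operator identity $(\Sigma_{\hat{\pi}}+\lambda_n\Id)^{-1} - (\Sigma_\pi+\lambda_n\Id)^{-1} = (\Sigma_{\hat{\pi}}+\lambda_n\Id)^{-1}(\Sigma_\pi-\Sigma_{\hat{\pi}})(\Sigma_\pi+\lambda_n\Id)^{-1}$ together with $\E\|\Sigma_{\hat{\pi}} - \Sigma_\pi\|_{\HS} \lesssim K/\sqrt{M}$, giving a contribution of order $K/(\sqrt{M}\lambda_n^2)$. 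Converting the RKHS norm of the witness gap into a pointwise gradient-norm bound reintroduces a factor of $\sqrt{K_{1d}}$ by reproducing $\partial_i k(x,\cdot)$, matching the numerator of the constant $A = 2\sqrt{KK_{1d}}/(\sqrt{KK_{2d}}+K_{1d})$.

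Let $a_n := \E W_2(\hat{\mu}_n,\overline{\mu}_n)$. Combining the two brackets, taking expectation, using the triangle inequality and Cauchy--Schwarz to pass between $\ell^2$ particle averages and the Wasserstein coupling, produces a discrete affine recursion of the form
\begin{align*}
a_{n+1} \leq \Bigl(1 + \gamma\,\tfrac{2(1+\tilde{\lambda})R}{\tilde{\lambda}}\Bigr)\, a_n + \gamma\, B \left(\tfrac{K}{\sqrt{M}\tilde{\lambda}} + \tfrac{1}{\sqrt{M}} + \tfrac{1}{\sqrt{N}}\right),
\end{align*}
for a constant $B$ absorbing $A$ and the $1+\tilde{\lambda}$ factor. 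Unfolding with the geometric-series identity $\sum_{k=0}^{n-1}(1+\alpha)^k = ((1+\alpha)^n - 1)/\alpha$, bounding $(1+\alpha)^n \leq e^{n\alpha}$, and using $a_0 = 0$, gives the stated exponential bound.

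The main obstacle will be two-fold. First, the covariance perturbation: the cascade of two resolvents and the operator concentration must be combined so that the resulting factor is $1/\tilde{\lambda}$ (not $1/\tilde{\lambda}^2$), which requires canceling one $\lambda$-factor via a careful application of the source condition $\|(\Sigma_\pi+\lambda\Id)^{-1}(m_{\mu_n}-m_\pi)\|_\calH = \tfrac{1}{2}\|h_{\mu_n,\pi}^\ast\|_\calH$ and Jensen. Second, justifying that $\overline{y}_n^{(i)}$ remain i.i.d.\ from $\mu_n$ for all $n$ is essential for the $1/\sqrt{N}$ rate; this holds because the population drift $\nabla h_{\mu_n,\pi}^\ast$ depends only on $\mu_n$ and $\pi$, not on the other particles, but it must be stated before invoking the concentration bound recursively at each step.
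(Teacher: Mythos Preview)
Your approach is essentially the same as the paper's: bound $W_2$ by the root-mean-square particle deviation, split the update error into a Lipschitz/smoothness term and a sampling error term, decompose the witness difference into mean-embedding and covariance-operator perturbations, and close via a discrete Gronwall recursion. The specific add/subtract point (you use $\nabla h_{\mu_n,\pi}^\ast(y_n^{(i)})$; the paper uses $\nabla h_{\hat\mu_n,\hat\pi}^\ast(\overline y_n^{(i)})$) and the telescoping of the witness difference are slightly different but equivalent.

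Two remarks. First, your ``main obstacle'' about the covariance perturbation is a non-issue: the paper simply keeps the raw bound $\|(\Sigma_{\hat\pi}+\lambda\Id)^{-1}-(\Sigma_\pi+\lambda\Id)^{-1}\|_{\HS}\|m_{\bar\mu_n}-m_{\hat\pi}\|_\calH \le \tfrac{K}{\sqrt{M}\lambda^2}\cdot 2\sqrt{K}$, which indeed produces a $1/\lambda^2$ in the additive term $\beta$ of the recursion. The cancellation to $1/\lambda$ happens automatically when Gronwall divides $\beta$ by the growth rate $\alpha = \gamma(1+\lambda)\,2R/\lambda$; no source condition or further ingenuity is needed, and your proposed fix via $\|h_{\mu_n,\pi}^\ast\|_\calH$ is unnecessary. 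Second, to make the recursion close cleanly you should track $c_n := \bigl(\tfrac{1}{N}\sum_i \E\|y_n^{(i)}-\overline y_n^{(i)}\|^2\bigr)^{1/2}$ rather than $a_n = \E W_2(\hat\mu_n,\overline\mu_n)$: the Lipschitz step and the $\hat\mu_n\to\overline\mu_n$ mean-embedding step both feed back $c_n$, not $a_n$, and only at the end do you invoke $\E W_2 \le c_n$. The paper does exactly this, using Minkowski in $\ell^2$ over both particles and expectation.
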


Now we are ready to prove \Cref{thm:final_rate}. By the triangle inequality, we have,
\begin{align*}
    \E \left[W_2 \left(\hat{\mu}_n, \pi \right)\right] \leq \E \left[W_2 \left(\hat{\mu}_n, \bar{\mu}_n \right)\right] + \E \left[W_2 \left(\bar{\mu}_n, \mu_n \right)\right] + W_2 \left(\mu_n, \pi \right).
\end{align*}
From \Cref{prop:space_discretized}, the first term is upper bounded by
\begin{align*}
    \E [W_2 \left(\hat{\mu}_{n}, \overline{\mu}_{n} \right)] &\leq A \left(\frac{ K }{\sqrt{M} \tilde{\lambda} } + \frac{ 1}{\sqrt{M}} + \frac{1}{\sqrt{N}} \right) \left( \exp\left( n \gamma \frac{ 2 (1 + \tilde{\lambda} ) R }{ \tilde{\lambda} } \right) - 1 \right) \\
    &\leq A \left(\frac{1}{\sqrt{M} \tilde{\lambda} } + \frac{1}{\sqrt{N}} \right) \left( \exp\left( \frac{ 4 n \gamma R }{ \tilde{\lambda} } \right) - 1 \right) ,
\end{align*}
where the second inequality uses $\tilde{\lambda} \leq 1$ and $K \leq 1$. Since $(\mu_n)_n$ has finite fourth moment, then by taking $p=2, q=4$ in \cite[Theorem 3.1]{lei2020convergence} and \cite[Theorem 1]{fournier2015rate}, the second term is upper bounded by,
\begin{align*}
    \E \left[W_2 \left( \bar{\mu}_n, \mu_n \right)\right] = \calO \left( N^{-\frac{1}{d \vee 4}} \right) .
\end{align*}
For the third term, since the Wasserstein-2 distance is upper bounded by the square root of the KL divergence, if the target $\pi$ that satisfies Talagrand-2 inequality with constant $C_T$~\citep[Definition 22.1]{villani2009optimal}, we have
\begin{align*}
    W_2 \left(\mu_n, \pi \right) \leq \sqrt{ 2 C_T} \sqrt{ \kl \left(\mu_n \| \pi \right)} \leq \sqrt{ 2 C_T} \exp \left(-\frac{ n \gamma}{C_P}\right) \sqrt{ \kl\left(\mu_0 \| \pi\right) } + \calO \left( \gamma^{\frac{r}{2r + 2}} \right) ,
\end{align*}
where the last inequality follows from \Cref{thm:discrete_time_KL}.
Combining the above three terms, we obtain
\begin{align}\label{eq:W_2_mu_n_pi}
    \E \left[W_2 \left(\hat{\mu}_n, \pi \right)\right] &\leq A \left( \frac{1}{\sqrt{M} \tilde{\lambda} } + \frac{1}{\sqrt{N}} \right) \exp\left( \frac{ 4 n \gamma R }{ \tilde{\lambda} } \right) + \calO\left( N^{-\frac{1}{d \vee 4} } \right) \nonumber \\
    &+ \sqrt{ 2 C_T} \exp \left(-\frac{n \gamma}{C_P}\right) \sqrt{ \kl\left(\mu_0 \| \pi\right) } + \calO \left( \gamma^{\frac{r}{2r + 2}} \right) ,
\end{align}
where $A = \frac{ 2 \sqrt{K K_{1 d}}}{ \sqrt{K K_{2 d}} + K_{1 d}}$. 
Recall from \Cref{thm:discrete_time_KL} that $\lambda_i = \left( \gamma \chi^2(\mu_i \| \pi) Z \right)^{\frac{1}{r+1}} \wedge 1$ for $i = 1, \ldots, n$ where $Z$ is the constant that depends on $\beta, \zeta, K_{1d}, K_{2d}, Q, \mathcal{J}, \mathcal{I}$.
From the condition on the number of samples $M$ and $N$ in \eqref{eq:M_N_condition}, we obtain that if $\tilde{\lambda} = \lambda_j = \left( \gamma \chi^2(\mu_j \| \pi) Z \right)^{\frac{1}{r+1}}$ for some $j \in \{1, \ldots, n \}$,
\begin{align*}
    \sqrt{M} &\gtrsim \left( \frac{1}{\gamma}\right) \left( \frac{1}{\min \limits_{i = 1, \ldots, n} \kl(\mu_i \| \pi) Z \wedge 1 } \right)^{\frac{1}{r+1}} \exp\left( \frac{ 4 n \gamma^{\frac{r}{ r + 1}} R }{ \left( \min \limits_{i = 1, \ldots, n} \kl(\mu_i \| \pi) Z \right)^{\frac{1}{r+1}} \wedge 1 } \right) \\
    &\geq \frac{1}{\gamma} \left( \frac{1}{ \min \limits_{i = 1, \ldots, n} \kl(\mu_i \| \pi) Z }  \right)^{ \frac{1}{r+1} } \exp\left( \frac{ 4 n \gamma^{\frac{r}{ r + 1}} R }{ \left( \min \limits_{i = 1, \ldots, n} \kl(\mu_i \| \pi) Z \right)^{\frac{1}{r+1}} } \right) \\
    &\geq \frac{1}{\gamma} \left( \frac{1}{\kl(\mu_j \| \pi) Z }  \right)^{ \frac{1}{r+1} } \exp\left( \frac{ 4 n \gamma^{\frac{r}{ r + 1}} R }{ \left( \kl(\mu_j \| \pi) Z \right)^{\frac{1}{r+1}} } \right)  \\ 
    &\geq \frac{1}{\gamma} \left( \frac{1}{\chi^2(\mu_j \| \pi) Z } \right)^{\frac{1}{r+1}} \exp\left( \frac{ 4 n \gamma^{\frac{r}{ r + 1}} R }{ \left( \chi^2(\mu_j \| \pi) Z \right)^{\frac{1}{r+1}} } \right) \\
    &= A \gamma^{-\frac{r}{r + 1}} \frac{1}{ \tilde{ \lambda} } \exp\left( \frac{ 4 n \gamma R }{\tilde{ \lambda} } \right)  
    \geq \gamma^{-\frac{r}{2r + 2}} \frac{1}{ \tilde{ \lambda} } \exp\left( \frac{ 4 n \gamma R }{\tilde{ \lambda} } \right) .
\end{align*}
On the other hand, if $\tilde{ \lambda} = 1$, since $\gamma \leq 1$,
\begin{align*}
    \sqrt{M} & \gtrsim \left( \frac{1}{\gamma}\right) \left( \frac{1}{\min \limits_{i = 1, \ldots, n} \kl(\mu_i \| \pi) Z \wedge 1 } \right)^{\frac{1}{r+1}} \exp\left( \frac{ 4 n \gamma^{\frac{r}{ r + 1}} R }{ \left( \min \limits_{i = 1, \ldots, n} \kl(\mu_i \| \pi) Z \right)^{\frac{1}{r+1}} \wedge 1 } \right) \\
    &\geq \frac{1}{\gamma} \exp\left( 4 n \gamma^{\frac{r}{ r + 1}} R  \right) 
    \geq \gamma^{-\frac{r}{2r + 2}}  \exp\left( 4 n \gamma R  \right)
    = \gamma^{-\frac{r}{2r + 2}} \frac{1}{ \tilde{ \lambda} } \exp\left( \frac{ 4 n \gamma R }{ \tilde{ \lambda} } \right).
\end{align*}
Similarly for $N$, we have
\begin{align*}
    \sqrt{N} &\gtrsim \gamma^{-\frac{r}{2r + 2}} \exp\left( \frac{ 4 n \gamma R }{ \tilde{ \lambda} } \right) \quad\text{and}\quad N^{-\frac{1}{d \vee 4} } \lesssim  \gamma^{\frac{r}{2r + 2}} .
\end{align*}
Plugging them back to \eqref{eq:W_2_mu_n_pi}, we obtain
\begin{align*}
    \E \left[W_2 \left(\hat{\mu}_n, \pi \right)\right] \leq \sqrt{ 2 C_T} \exp \left(-\frac{ n \gamma}{C_P}\right) \sqrt{ \kl\left(\mu_0 \| \pi\right) } + \calO \left( \gamma^{\frac{r}{2r + 2}} \right),
\end{align*}
which completes the proof.

\subsection{Proof of \Cref{prop:space_discretized}}\label{appsec:proof_space_discretized}
Since the proof below works for any regularization coefficient $\lambda$, we use a fixed $\lambda$ for the majority of the analysis and resort back to adaptive $\lambda_n$ at the end of the proof.
For empirical distributions $\hat{\mu}_n = \frac{1}{N} \sum_{i=1}^N y_{n}^{(i)}$ and $\bar{\mu}_n = \frac{1}{N} \sum_{i=1}^N \overline{y}_{n}^{(i)}$ defined in \eqref{eq:particle_biased} and \eqref{eq:particle_unbiased}, note that
\begin{align*}
    \begin{aligned}
        \E W_2^2(\hat{\mu}_n, \bar{\mu}_n) \leq \frac{1}{N} \sum_{i=1}^N \E\left[\left\|y_{n}^{(i)}-\overline{y}_{n}^{(i)}\right\|^2\right] := c_n^2 .
    \end{aligned}
\end{align*}
Consider
\begin{align*}
    c_{n+1} &= \sqrt{\frac{1}{N} \sum_{i=1}^N \E\left[\left\|y_{n+1}^{(i)}-\overline{y}_{n+1}^{(i)}\right\|^2\right]} \\
    &= \sqrt{\frac{1}{N} \sum_{i=1}^N \E \left[\left\|y_{n}^{(i)}-\overline{y}_{n}^{(i)} - \gamma (1 + \lambda) \left(\nabla h_{\hat{\mu}_n, \hat{\pi}}^\ast(y_n^{(i)}) -  \nabla h_{{\mu}_n, {\pi}}^\ast(\overline{y}_n^{(i)}) \right)\right\|^2\right]} \\
    &\leq \sqrt{\frac{1}{N} \sum_{i=1}^N \E \left\|y_{n}^{(i)}-\overline{y}_{n}^{(i)} \right\|^2} \\
    &\qquad\qquad+ \sqrt{ \frac{1}{N} \E \sum_{i=1}^N \left\| \gamma (1 + \lambda) \left(\nabla h_{\hat{\mu}_n, \hat{\pi}}^\ast(y_n^{(i)}) -  \nabla h_{{\mu}_n, {\pi}}^\ast(\overline{y}_n^{(i)}) \right)\right\|^2\ } \\
    &= c_n + \frac{\gamma (1 + \lambda)}{\sqrt{N}} \sqrt{\sum_{i=1}^N \E \left\| \nabla h_{\hat{\mu}_n, \hat{\pi}}^\ast(y_n^{(i)}) -  \nabla h_{{\mu}_n, {\pi}}^\ast(\overline{y}_n^{(i)}) \right\|^2},
\end{align*}
where we used Minkowski's inequality,
\begin{align*}
    \sqrt{\sum_{i=1}^N \| a_i + b_i \|^2} \leq \sqrt{\sum_{i=1}^N \| a_i\|^2} + \sqrt{\sum_{i=1}^N \| b_i \|^2}
\end{align*}
in the above inequalities.
Again, by Minkowski's inequality, we have
\begin{align*}
c_{n+1} \leq c_n+\gamma(1+\lambda)\Bigg(
\underbrace{\frac{1}{\sqrt{N}} \sqrt{\sum_{i=1}^N \E\left[\left\|\nabla h_{\hat{\mu}_n, \hat{\pi}}^\ast(y_n^{(i)})-\nabla h_{\hat{\mu}_n, \hat{\pi}}^\ast(\overline{y}_n^{(i)})\right\|^2\right]}}_{(i)} 
\\
+\underbrace{\frac{1}{\sqrt{N}} \sqrt{\sum_{i=1}^N \E\left[\left\|\nabla h_{\hat{\mu}_n, \hat{\pi}}^\ast(\overline{y}_n^{(i)})-\nabla h_{{\mu}_n, {\pi}}^\ast(\overline{y}_n^{(i)})\right\|^2\right]}}_{(i i)} \Bigg) .
\end{align*}

\subsubsection{Controlling $(i)$:}
\begin{align*}
    &\sum_{i=1}^N \E \left\|\nabla h_{\hat{\mu}_n, \hat{\pi}}^\ast(y_n^{(i)})-\nabla h_{\hat{\mu}_n, \hat{\pi}}^\ast(\overline{y}_n^{(i)})\right\|^2 
    = \sum_{i=1}^N \E \left[ \sum_{j=1}^d \PSi{\partial_j k(y_n^{(i)}, \cdot) - \partial_j k(\overline{y}_n^{(i)}, \cdot), h_{\hat{\mu}_n, \hat{\pi}}^\ast}{\calH}^2 \right] \\
    &\leq \sum_{i=1}^N \E \left[ \sum_{j=1}^d \left\| \partial_j k(y_n^{(i)}, \cdot) - \partial_j k(\overline{y}_n^{(i)}, \cdot) \right\|_\calH^2 \right] \left\| h_{\hat{\mu}_n, \hat{\pi}}^\ast \right\|_\calH^2 
    \leq \frac{4 K K_{2d}}{\lambda^2} \sum_{i=1}^N \E \left\|y_n^{(i)} - \overline{y}_n^{(i)} \right\|^2,
\end{align*}
where the second inequality uses Cauchy-Schwarz inequality and the third follows from using \Cref{lem:k_tilde} and \Cref{lem:lipschitz_witness}. 
So we have
\begin{align*}
(i) = \frac{1}{\sqrt{N}} \sqrt{\sum_{i=1}^N \E \left\|\nabla h_{\hat{\mu}_n, \hat{\pi}}^\ast(y_n^{(i)})-\nabla h_{\hat{\mu}_n, \hat{\pi}}^\ast(\overline{y}_n^{(i)})\right\|^2} \leq \frac{2 \sqrt{K K_{2d}}}{\lambda} c_n .
\end{align*}
\subsubsection{Controlling $(ii)$:}
First, we introduce some auxiliary witness functions,
\begin{align*}
    h_{\bar{\mu}_n, \hat{\pi}}^\ast = 2 \left(\Sigma_{\hat{\pi}} + \lambda \Id \right)^{-1} \left(m_{\bar{\mu}_n} - m_{\hat{\pi}} \right), \quad 
    \overline{\overline{h}}_n^\ast = 2 \left(\Sigma_{\pi} + \lambda \Id \right)^{-1} \left(m_{\bar{\mu}_n} - m_{\hat{\pi}} \right) ,
\end{align*}
and for completeness, we recall the witness function we are interested in:
\begin{align*}
    h_{{\mu}_n, {\pi}}^\ast = 2 \left(\Sigma_{\pi} + \lambda \Id \right)^{-1} \left(m_{\mu_n} - m_{\pi} \right), \quad h_{\hat{\mu}_n, \hat{\pi}}^\ast = 2 \left(\Sigma_{\hat{\pi}} + \lambda \Id \right)^{-1} \left(m_{\hat{\mu}_n} - m_{\hat{\pi}} \right) .
\end{align*}
We know that 
\begin{align*}
    (ii) &= \frac{1}{\sqrt{N}} \sqrt{\sum_{i=1}^N \E\left[\left\|\nabla h_{\hat{\mu}_n, \hat{\pi}}^\ast(\overline{y}_n^{(i)}) - \nabla h_{{\mu}_n, {\pi}}^\ast(\overline{y}_n^{(i)})\right\|^2\right]} 
    \\
    &\leq \frac{1}{\sqrt{N}} \sqrt{\sum_{i=1}^N \E\left[\left\|\nabla h_{\hat{\mu}_n, \hat{\pi}}^\ast(\overline{y}_n^{(i)}) - \nabla h_{\bar{\mu}_n, \hat{\pi}}^\ast(\overline{y}_n^{(i)})\right\|^2\right]} \\ 
    &\quad\qquad+ \frac{1}{\sqrt{N}} \sqrt{\sum_{i=1}^N \E\left[\left\|\nabla h_{\bar{\mu}_n, \hat{\pi}}^\ast(\overline{y}_n^{(i)}) - \nabla \overline{\overline{h}}_n^\ast(\overline{y}_n^{(i)})\right\|^2\right]} \\
    &\quad\qquad\quad\qquad+ \frac{1}{\sqrt{N}} \sqrt{\sum_{i=1}^N \E\left[\left\|\nabla \overline{\overline{h}}_n^\ast(\overline{y}_n^{(i)})-\nabla h_{{\mu}_n, {\pi}}^\ast(\overline{y}_n^{(i)})\right\|^2\right]} \\
    &\leq \frac{1}{\sqrt{N}} \Bigg( \sqrt{\sum_{i=1}^N K_{1d} \E \left\|h_{\hat{\mu}_n, \hat{\pi}}^\ast - h_{\bar{\mu}_n, \hat{\pi}}^\ast \right\|_\calH^2 } +  \sqrt{\sum_{i=1}^N K_{1d} \E \left\|h_{\bar{\mu}_n, \hat{\pi}}^\ast - \overline{\overline{h}}_n^\ast \right\|_\calH^2 } \\
    &\quad\qquad+ \sqrt{\sum_{i=1}^N K_{1d} \E \left\|\overline{\overline{h}}_n^\ast - h_{{\mu}_n, {\pi}}^\ast \right\|_\calH^2 } \Bigg) \\
    &= \sqrt{K_{1d}} \left( \sqrt{\E \left\|h_{\hat{\mu}_n, \hat{\pi}}^\ast - h_{\bar{\mu}_n, \hat{\pi}}^\ast \right\|_\calH^2} + \sqrt{\E \left\|h_{\bar{\mu}_n, \hat{\pi}}^\ast - \overline{\overline{h}}_n^\ast \right\|_\calH^2} + \sqrt{\E \left\|\overline{\overline{h}}_n^\ast - h_{{\mu}_n, {\pi}}^\ast \right\|_\calH^2} \right),
\end{align*}
where the first inequality follows from Minkowski's inequality, and the second inequality uses the fact that for $h_0, h_1 \in \calH$,
\begin{align*}
\begin{aligned}
    \left\|\nabla h_1 - \nabla h_0 \right\|_{\calH^d}^2 
    \leq \left\|\nabla_1 k(x,\cdot) \right\|_{\calH^d}^2 \left\| h_1 - h_0 \right\|_\calH^2 \leq K_{1d} \left\| h_1 - h_0 \right\|_\calH^2 . \\
\end{aligned}
\end{align*}
Next, we will bound $\sqrt{\E \|h_{\hat{\mu}_n, \hat{\pi}}^\ast - h_{\bar{\mu}_n, \hat{\pi}}^\ast \|_\calH^2}, \sqrt{\E \|h_{\bar{\mu}_n, \hat{\pi}}^\ast - \overline{\overline{h}}_n^\ast \|_\calH^2}$, and $ \sqrt{\E \|\overline{\overline{h}}_n^\ast - h_{{\mu}_n, {\pi}}^\ast \|_\calH^2}$ separately.

First, by noticing that $h_{\hat{\mu}_n, \hat{\pi}}^\ast = 2 \left( \Sigma_{\hat{\pi}} + \lambda \Id \right)^{-1} \left( m_{\hat{\mu}_n} - m_{\hat{\pi}} \right)$ is the witness function associated with $\dmmd(\hat{\mu}_n || \hat{\pi})$, and $h_{\bar{\mu}_n, \hat{\pi}}^\ast = 2 \left( \Sigma_{\hat{\pi}} + \lambda \Id \right)^{-1} \left( m_{\bar{\mu}_n} - m_{\hat{\pi}} \right)$ is the witness function associated with $\dmmd(\bar{\mu}_n || \hat{\pi})$, by using \Cref{lem:lipschitz_witness}, we have
\begin{align}
    \sqrt{\E \left\|h_{\hat{\mu}_n, \hat{\pi}}^\ast - h_{\bar{\mu}_n, \hat{\pi}}^\ast \right\|_\calH^2} \leq \sqrt{\E \frac{4 K_{1 d}}{\lambda^2} W_2^2\left(\bar{\mu}_n, \hat{\mu}_n \right)} \leq \frac{2 \sqrt{K_{1 d}}}{\lambda} c_n . \label{appeq:first}
\end{align}
Second,
\begin{align}
    &\sqrt{\E \left\|h_{\bar{\mu}_n, \hat{\pi}}^\ast - \overline{\overline{h}}_n^\ast \right\|_\calH^2}
    = \sqrt{ \E \left\| 2 \left( \Sigma_{\hat{\pi}} + \lambda \Id \right)^{-1} \left( m_{\bar{\mu}_n} - m_{\hat{\pi}} \right) - 2 \left( \Sigma_{\pi} + \lambda \Id \right)^{-1} \left( m_{\bar{\mu}_n} - m_{\hat{\pi}} \right) \right \|_\calH^2} \nonumber \\
    &\leq 2 \sqrt{ \E \left\| \left( \Sigma_{\hat{\pi}} + \lambda \Id \right)^{-1} -  \left( \Sigma_{\pi} + \lambda \Id \right)^{-1} \right \|_{\HS}^2 \left\| m_{\bar{\mu}_n} - m_{\hat{\pi}} \right\|_\calH^2 } \nonumber \\ 
    &\leq 4 \sqrt{K} \sqrt{\E \left\| \left( \Sigma_{\hat{\pi}} + \lambda \Id \right)^{-1} - \left( \Sigma_{\pi} + \lambda \Id \right)^{-1} \right\|^2_{\HS}} \nonumber \\
    &= 4 \sqrt{K} \sqrt{\E \left\| \left( \Sigma_{\hat{\pi}} + \lambda \Id \right)^{-1} \Big( (\Sigma_{\hat{\pi}} + \lambda \Id) - (\Sigma_{\pi} + \lambda \Id) \Big) \left( \Sigma_{\pi} + \lambda \Id \right)^{-1} \right\|^2_{\HS}} \nonumber \\
    & \leq 4 \sqrt{K} \frac{1}{\lambda^2} \sqrt{\E \left\|  \Sigma_{\hat{\pi}} - \Sigma_{\pi} \right\|^2_{\HS}} 
     \leq 4 \sqrt{K} \frac{1}{\lambda^2} \sqrt{ \frac{K^2}{M}},\label{appeq:second}
\end{align}
where the last inequality follows from using \Cref{lem:hoeffding} and the fact that $\| k(x, \cdot) \otimes  k(x, \cdot) \|_{\HS} \leq K$.

Third,
\begin{align}
        &\sqrt{\E \left\| \overline{\overline{h}}_n^\ast - h_{{\mu}_n, {\pi}}^\ast \right\|_\calH^2} = \sqrt{ \E \left\| 2 \left( \Sigma_{\pi} + \lambda \Id \right)^{-1} \left( m_{\bar{\mu}_n} - m_{\hat{\pi}} \right) - 2 \left( \Sigma_{\pi} + \lambda \Id \right)^{-1} \left( m_{\mu_n} - m_{\pi} \right) \right \|_\calH^2} \nonumber \\
        &\leq \frac{2}{\lambda} \sqrt{\E \left\| \left( m_{\bar{\mu}_n} - m_{\hat{\pi}} \right) - \left( m_{\mu_n} - m_{\pi} \right) \right \|_\calH^2} \nonumber \\
        &\leq \frac{2}{\lambda} \left( \sqrt{\E \left\|  m_{\bar{\mu}_n} - m_{\mu_n}  \right \|_\calH^2} + \sqrt{\E \left\| m_{\hat{\pi}} - m_{\pi} \right \|_\calH^2} \right) \leq \frac{4}{\lambda} \left( \sqrt{\frac{K}{N}} + \sqrt{\frac{K}{M}} \right),\label{appeq:third}
\end{align}
where the first inequality follows from Cauchy-Schwartz, and the last inequality from \Cref{lem:hoeffding} since $\| k(x, \cdot) \|_\calH \leq \sqrt{K}$.
Therefore, combining \eqref{appeq:first}, \eqref{appeq:second} and \eqref{appeq:third}, we have
\begin{align*}
\begin{aligned}
    (ii) \leq 2 \sqrt{K_{1d}} \left(\frac{ \sqrt{K_{1 d}}}{\lambda} c_n + \frac{2 K^{3/2}}{\sqrt{M} \lambda^2} + \frac{ 2 \sqrt{K}}{\sqrt{N} \lambda} + \frac{2 \sqrt{K}}{\sqrt{M} \lambda}  \right).
\end{aligned}
\end{align*}
Combining $(i)$ and $(ii)$, we have
\begin{small}
\begin{align*}
    c_{n+1} \leq c_n \left(1 + \gamma (1 + \lambda) \frac{2 \sqrt{K K_{2d}}  + 2 K_{1d} }{\lambda} \right) + 2 \gamma (1 + \lambda) \sqrt{K_{1d}} \left(\frac{ 2 K^{3/2} }{\sqrt{M} \lambda^2} + \frac{2 \sqrt{K}}{\sqrt{M} \lambda} + \frac{2 \sqrt{K}}{\sqrt{N} \lambda} \right) .
\end{align*}
\end{small}
Denoting $A = \frac{2 \sqrt{K K_{1 d}}}{ \sqrt{K K_{2 d}} + K_{1 d}}$ and $R = K_{1 d} + \sqrt{K K_{2 d}}$ as constants that only depend on the kernel, and using the discrete Gronwall lemma (Lemma 26 from \citealt{arbel2019maximum}) along with $c_0 = 0$, we obtain
\begin{align*}
    c_{n_{\max}} \leq A \left(\frac{ K }{\sqrt{M} \lambda } + \frac{ 1}{\sqrt{M}} + \frac{1}{\sqrt{N}} \right) \left( \exp\left( \gamma n_{\max} \frac{ 2 (1 + \lambda ) R }{\lambda } \right) - 1 \right) .
\end{align*}
Since $\E W_2(\hat{\mu}_n, \bar{\mu}_n) \leq \sqrt{\E W_2^2(\hat{\mu}_n, \bar{\mu}_n)} \leq c_n$, we reach
\begin{align*}
    \E W_2(\hat{\mu}_{n_{max}}, \bar{\mu}_{n_{max}}) \leq A \left(\frac{ K }{\sqrt{M} \lambda } + \frac{ 1}{\sqrt{M}} + \frac{1}{\sqrt{N}} \right) \left( \exp\left( \gamma n_{\max} \frac{ 2 (1 + \lambda ) R }{\lambda } \right) - 1 \right) .
\end{align*}
Finally, the proof is completed by noting that the r.h.s.~is monotonically decreasing in $\lambda$ and therefore the r.h.s.~can be bounded by replacing $\lambda$ with $\tilde{\lambda} = \min\limits_{i = 1, \ldots, n_{\max}} \lambda_i$.

\subsection{Proof of \Cref{prop:empirical_witness}}\label{appsec:proof_empirical_chard}
By defining the following operators,
\begin{align*}
    &S_x: \calH \rightarrow \mathbb{R}^M, \quad f \rightarrow \frac{1}{\sqrt{M}}\left[ f ( x^{(1)}), \ldots, f ( x^{(M)} ) \right]^{\top}, \\
    &S_x^\ast : \R^M \rightarrow \calH , \quad \alpha \rightarrow \frac{1}{\sqrt{M} } \sum_{i=1}^M \alpha_i k( x^{(i)}, \cdot), \\
    &S_y: \calH \rightarrow \mathbb{R}^N, \quad f \rightarrow \frac{1}{\sqrt{N}}\left[ f ( y^{(1)}), \ldots, f ( y^{(N)} ) \right]^{\top}, \\
    &S_y^\ast : \R^N \rightarrow \calH , \quad \alpha \rightarrow \frac{1}{\sqrt{N}} \sum_{i=1}^N \alpha_i k( y^{(i)}, \cdot) .
\end{align*}
Then we have
\begin{align*}
    \Sigma_{\hat{\pi}} = S_x^\ast S_x, \quad K_{xx} = M S_x S_x^\ast, \quad K_{xy} = \sqrt{MN} S_x S_y^\ast.
\end{align*}
Using these, note that
\begin{align}
h_{\hat{\mu}, \hat{\pi}}^\ast &= 2 \left(\Sigma_{\hat{\pi}}+\lambda \Id\right)^{-1}\left( m_{\hat{\mu}} - m_{\hat{\pi}} \right) \nonumber \\
& =2\left(\frac{1}{M} \sum_{i=1}^M k\left(x^{(i)}, \cdot\right) \otimes k\left(x^{(i)}, \cdot\right)+\lambda \Id\right)^{-1}\left(\frac{1}{N} \sum_{i=1}^N k\left(y^{(i)}, \cdot\right)-\frac{1}{M} \sum_{i=1}^M k\left(x^{(i)}, \cdot\right)\right) \nonumber \\
& =2 \Big( S_x^\ast S_x + \lambda \Id \Big)^{-1}\left( \frac{1}{\sqrt{N} } S_y^\ast \one_N - \frac{1}{\sqrt{M}} S_x^\ast \one_M \right) . \label{eq:empirical_h_temp} 
\end{align}
From the Woodbury inversion lemma, we have that
\begin{align*}
    \Big( S_x^\ast S_x + \lambda \Id \Big)^{-1} = \frac{1}{\lambda} \Id - \frac{1}{\lambda} S_x^\ast (S_x S_x^\ast + \lambda\Id )^{-1} S_x .
\end{align*}
Plugging the above into \eqref{eq:empirical_h_temp}, we obtain
\begin{align}
    h_{\hat{\mu}, \hat{\pi}}^\ast &= 2 \left( \frac{1}{\lambda} \Id - \frac{1}{\lambda} S_x^\ast (S_x S_x^\ast + \lambda\Id )^{-1}  S_x \right) \left( \frac{1}{\sqrt{N} } S_y^\ast \one_N - \frac{1}{\sqrt{M}} S_x^\ast \one_M \right) \nonumber \\
    &= \frac{2}{\lambda} \left( \frac{1}{\sqrt{N} } S_y^\ast \one_N - \frac{1}{\sqrt{M}} S_x^\ast \one_M \right) - \frac{2}{\lambda} S_x^\ast \left( \frac{1}{M} K_{xx} + \lambda\Id \right)^{-1}  \frac{1}{N \sqrt{M} } K_{xy} \one_N \nonumber \\
    &\qquad\qquad+ \frac{2}{\lambda} S_x^\ast \left( \frac{1}{M} K_{xx} + \lambda\Id \right)^{-1} \frac{1}{M \sqrt{M} } K_{xx} \one_M \nonumber \\
    &=\frac{2}{N \lambda} k\left(\cdot, y^{1:N}\right) \one_N - \frac{2}{M \lambda} k\left(\cdot, x^{1:M}\right) \one_M - \frac{2}{N \lambda} k\left(\cdot, x^{1:M}\right) \left( K_{xx} + M \lambda \Id  \right)^{-1} K_{xy} \one_N \nonumber \\
    &\qquad\qquad +\frac{2}{M \lambda} k\left(\cdot, x^{1:M} \right)\left( K_{xx} + M \lambda \Id \right)^{-1} K_{xx} \one_M. \label{appeq:empirical_h}
\end{align}
Obtaining $\dmmd(\hat{\mu} \| \hat{\pi})$ is then easy with $h_{\hat{\mu}, \hat{\pi}}^\ast$ shown in \eqref{appeq:empirical_h}.
\begin{align}\label{appeq:dmmd_samples}
    &\quad \dmmd(\hat{\mu} \| \hat{\pi}) = (1+\lambda)\left\|\left(\Sigma_{\hat{\pi}} + \lambda \Id \right)^{-\frac{1}{2}}\left( m_{\hat{\mu}} - m_{\hat{\pi}} 
    \right)\right\|_{\calH}^2 \nonumber \\
    &= (1 + \lambda) \left\langle\frac{1}{2} h_{\hat{\mu}, \hat{\pi}}^*, m_{\hat{\mu}} - m_{\hat{\pi}} \right\rangle_{\calH} \nonumber \\
    & = (1 + \lambda) \left\langle\frac{1}{2} h_{\hat{\mu}, \hat{\pi}}^*, \frac{1}{N} \sum_{i=1}^N k\left(y^{(i)}, \cdot\right) - \frac{1}{M} \sum_{i=1}^M k\left(x^{(i)}, \cdot\right) \right\rangle_{\calH} \nonumber \\ 
    & =\frac{1 + \lambda }{\lambda} 
    \Bigg( \frac{1}{N^2} \one_N^{\top} K_{yy} \one_N + \frac{1}{M^2} \one_M^{\top} K_{xx} \one_M - \frac{2}{M N} \one_M^{\top} K_{xy}  \one_N \nonumber \\
    &\qquad- \frac{1}{N^2} \one_N^{\top} K_{xy}^\top \left(K_{xx} + M \lambda \Id\right)^{-1} K_{xy}  \one_N +\frac{2}{N M} \one_M^{\top} K_{xx}\left(K_{xx} + M \lambda \Id\right)^{-1} K_{xy}  \one_N \nonumber \\ 
    & \qquad\qquad-\frac{1}{M^2} \one_M^{\top} K_{xx}\left( K_{xx} + M \lambda \Id \right)^{-1} K_{xx} \one_M \Bigg) .
\end{align}

\acks{We would like to thank Gabriele Steidl and Viktor Stein for fruitful discussions on $\dmmd$ and related functionals. Zonghao Chen is supported by the Engineering and Physical Sciences Research Council (EPSRC) through grant [EP/S021566/1]. Aratrika Mustafi and Bharath K. Sriperumbudur are partially supported by the National Science Foundation (NSF) grant DMS-2413425 and NSF CAREER award DMS-1945396. Pierre Glaser and Arthur Gretton are supported by the Gatsby Charitable Foundation. Anna Korba  thanks Google
for their academic support in the form of a gift in support of her
academic research.}
\bibliography{main}
\renewcommand{\clearpage}{}

\begin{appendices}

\crefalias{section}{appendix}
\crefalias{subsection}{appendix}
\crefalias{subsubsection}{appendix}

\setcounter{equation}{0}
\renewcommand{\theequation}{\thesection.\arabic{equation}}
\newcommand{\appsection}[1]{
  \refstepcounter{section}
  \section*{Appendix \thesection: #1}
  \addcontentsline{toc}{section}{Appendix \thesection: #1}
}

\onecolumn

\appsection{Further Background on $(\calP_2(\R^d), W_2)$ }\label{appsec:wass_discussion}
Let $\mu$ and $\pi$ be two probability measures in $\cP_2(\R^d)$ and let $\Pi (\mu,\pi)$ denote the set of all admissible transport plans between $\mu$ and $\pi$, i.e., $\Pi (\mu, \pi) =\{ \Gamma \in \cP(\R^d\times \mathbb{R}^d); \left( \operatorname{proj}_{1}\right)_{\#} \Gamma=\mu,\left(\operatorname{proj}_{2}\right)_{\#} \Gamma=\pi\}$, where $\operatorname{proj}_{1}$ and $\operatorname{proj}_{2}$ respectively stand for projection maps $(x, y) \mapsto x$ and $(x, y) \mapsto y$, and $\#$ is the pushforward operator. The Wasserstein-2 distance between $\mu$ and $\pi$ is then defined as
\begin{equation}\label{appeq:Wasserstein_2_distance}
W_2(\mu, \pi)=\left(\inf _{\Gamma \in \Pi(\mu, \pi)} \int \| x-y \|^2 d \Gamma(x, y)\right)^{\frac{1}{2}},\nonumber
\end{equation}
and $(\cP_2(\R^d), W_2)$ is a metric space called the Wasserstein space \citep{panaretos2020wasserstein}.
Brenier's theorem guarantees that if $\mu$ is an absolutely continuous measure, then the optimal transport map is unique and is of the form $\Gamma^{\ast} = (\Id, T)_\# \mu$, i.e., $T_\# \mu = \pi$~\citep{santambrogio2017euclidean}. $T$ can also be expressed as $T(x) = x + \nabla \phi(x)$, where $\phi$ is known as the Kantorovich potential function and is differentiable $\mu-$a.e. 

For an absolutely continuous $\mu$ and the optimal transport plan $T$ such that $T_\# \mu = \pi$, the shortest path $(\rho_{t})_{0\leq t \leq 1}$ from $\mu$ to $\pi$ is called the (Wasserstein) geodesic given by the following form:
\begin{align*}
\rho_{t} = \left((1-t) \Id + t T \right)_{\#} \mu = \left( \Id + t \nabla \phi \right)_{\#} \mu. 
\end{align*}
Therefore, in this paper, we always use $(\Id + t \nabla \phi)_\# \mu$ with 
$\phi \in C_c^\infty(\R^d)$\footnote{$\phi$ is compactly supported because the tangent space of $\mu \in\calP_2(\R^d)$ is $\overline{ \{\nabla \psi, \psi \in C_c^{\infty} (\R^d)\} }^{L^2(\mu)}$~\citep[Definition 8.4.1]{ambrosio2005gradient}. } to define a geodesic curve that starts at $\mu$.
Define $\varphi_t: \R^d \to \R^d, x \mapsto x + t \nabla \phi(x)$, then $\omega_t: \R^d \to \R^d, x \mapsto \left[\nabla \phi \circ \varphi_t^{-1}\right](x)$ becomes the optimal transport map from $\rho_t$ to $\pi$. Notice that $\| \omega_t \|_{L^2(\rho_t)} = \| \nabla \phi \|_{L^2(\mu)}$ for all $t\in [0,1]$, so
$(\rho_{t})_{0\leq t \leq 1}$ is also a constant-speed geodesic.
The notion of a constant-speed geodesic is crucial in the introduction of geodesic convexity below.

A functional $\mathcal{F}: \calP_2(\R^d) \to \R$ is geodesically convex if for any $\mu$ and $\pi$, the following inequality holds:
\begin{align}\label{appeq:displacement_convex_defi}
    \mathcal{F}\left(\rho_t\right) \leq(1-t) \mathcal{F}\left(\mu \right)+t \mathcal{F}\left(\pi \right), \quad \forall t \in[0,1],
\end{align}
where $\left(\rho_t\right)_{t \in[0,1]}$ is the constant-speed geodesic between $\mu$ and $\pi$. 
The geodesic convexity of $\calF$ can be equivalently characterized through the Wasserstein Hessian~\citep{villani2009optimal}. 
The geodesic convexity ought not to be confused with mixture convexity, which replaces displacement geodesic $\rho_t$ in \eqref{appeq:displacement_convex_defi} with the mixture geodesic $\nu_t = (1-t) \mu + t \pi$. 
\setcounter{equation}{0}
\appsection{Auxiliary Results}
In this appendix, we collect all technical results required to prove the main results of the paper.
\begin{lem}\label{lem:variation_chi2}
For $\mu \ll \pi$, $\chi^2$-divergence admits the following variational form:
\begin{align*}
    \chi^2(\mu \| \pi)=\sup _{h \in L^2(\pi)} \int h d \mu-\int\left(h+\frac{1}{4} h^2\right) d \pi,
\end{align*}
where it is sufficient to restrict the variational set to $L^2(\pi)$ in contrast to the set of all measurable functions for general $f$-divergences \citep{nowozin2016f, nguyen2010estimating}.
\end{lem}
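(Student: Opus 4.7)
The plan is to prove both directions by a completion-of-squares argument, treating separately the cases where $\chi^2(\mu\|\pi)$ is finite or infinite.

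First, since $\mu\ll\pi$, let $f = \tfrac{d\mu}{d\pi}$ so that $\int h\,d\mu = \int hf\,d\pi$, and rewrite the functional for any $h \in L^2(\pi)$ as
\begin{align*}
J(h) := \int h\,d\mu - \int\!\left(h + \tfrac14 h^2\right)d\pi = \int h(f-1)\,d\pi - \tfrac14\int h^2\,d\pi.
\end{align*}
The key identity is a completion of squares: assuming $f - 1 \in L^2(\pi)$, one has the pointwise equality $h(f-1) - \tfrac14 h^2 = (f-1)^2 - \tfrac14\bigl(h - 2(f-1)\bigr)^2$, so that
\begin{align*}
J(h) = \chi^2(\mu\|\pi) - \tfrac14\,\bigl\|h - 2(f-1)\bigr\|_{L^2(\pi)}^2.
\end{align*}

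Case 1 ($\chi^2(\mu\|\pi) < \infty$): then $f - 1 \in L^2(\pi)$, so $h^\ast := 2(f-1)$ lies in $L^2(\pi)$; the displayed identity shows $J(h) \le \chi^2(\mu\|\pi)$ for every $h \in L^2(\pi)$, with equality at $h^\ast$. This simultaneously identifies the optimizer and proves the variational formula.

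Case 2 ($\chi^2(\mu\|\pi) = +\infty$): I would show the $L^2(\pi)$-supremum is also $+\infty$ by truncation. Set $h_n := 2(f\wedge n - 1)$, which is bounded and hence lies in $L^2(\pi)$. A direct computation (splitting according to $\{f \le n\}$ and $\{f > n\}$) gives
\begin{align*}
J(h_n) = \int_{\{f \le n\}} (f-1)^2\,d\pi + \int_{\{f > n\}} (n-1)(2f - 1 - n)\,d\pi.
\end{align*}
For $n \ge 1$ the second integrand is nonnegative on $\{f > n\}$, while the first integral converges monotonically to $\int (f-1)^2\,d\pi = +\infty$ by the monotone convergence theorem. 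Hence $\sup_{h\in L^2(\pi)} J(h) \ge \lim_n J(h_n) = +\infty$, matching $\chi^2(\mu\|\pi)$.

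The argument is essentially routine; the only mild subtlety is the truncation step in Case 2, which is needed because the problem explicitly claims that restricting the variational set from $\mathcal M(\R^d)$ to $L^2(\pi)$ is lossless even when $\chi^2 = \infty$. The choice $h_n = 2(f\wedge n - 1)$ is the natural truncation of the formal optimizer $h^\ast = 2(f-1)$, and monotone convergence on the sub-level set $\{f \le n\}$ handles it cleanly.
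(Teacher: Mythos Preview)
Your proof is correct and uses the same completion-of-squares idea as the paper's own proof. The paper handles only the finite case explicitly (identifying $h^\ast = 2(\tfrac{d\mu}{d\pi}-1)$ and noting this lies in $L^2(\pi)$ precisely when $\chi^2<\infty$), whereas you also treat the case $\chi^2(\mu\|\pi)=+\infty$ via the truncation $h_n = 2(f\wedge n - 1)$; this extra step makes your argument strictly more complete, since the paper's concluding sentence does not actually verify that the $L^2(\pi)$-supremum is $+\infty$ in that case.
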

\begin{proof}
For $\mu \ll \pi$, we have:
\begin{align*}
    &\chi^2(\mu \| \pi)  =\sup_h\left\{\int h d \mu-\int\left(\frac{h^2}{4} + h\right) d \pi\right\} \\ & =\sup_h\left\{\int h \frac{d \mu}{d \pi} d \pi-\int\left(\frac{h^2}{4}+h\right) d \pi\right\} 
     =\sup_h\left\{\int\left(h \frac{d \mu}{d \pi} -\frac{h^2}{4}-h\right) d \pi\right\} \\ 
    & =-\inf _h \left\{\int\left[\frac{h^2}{4}- h \left( \frac{d \mu}{d \pi} - 1 \right) \right] d \pi\right\} \\ & =-\inf _h \left\{\int\left[\frac{h}{2}- \left( \frac{d \mu}{d \pi} -1 \right) \right]^2 d \pi\right\} + \int \left( \frac{d \mu}{d \pi} - 1 \right)^2 d \pi .
\end{align*}
Clearly, the above equation is minimized at $h^*=2(\frac{d \mu}{d \pi} - 1)$ and $\chi^2(\mu \| \pi)=\int(\frac{d \mu}{d \pi} - 1)^2 d \pi$ which is finite if and only if $\frac{d \mu}{d \pi} -1 \in L^2(\pi)$. 
Therefore, it is sufficient to consider the above maximization over $L^2(\pi)$.
\end{proof}
\begin{lem}\label{lem:k}
Under \Cref{assumption:universal} and \ref{assumption:bounded_kernel}, the mappings  $x \mapsto k(x, \cdot)$ and $x \mapsto \nabla_1 k(x, \cdot)$ are differentiable and Lipschitz:
\begin{align*}
\begin{aligned}
\left\|k(x, \cdot) - k(y,\cdot) \right\|_\calH & \leq \sqrt{K_{1 d}}\|x-y\|, \\
\left\|\nabla_1 k(x, \cdot) - \nabla_1 k(y, \cdot) \right\|_{\calH^d} & \leq \sqrt{K_{2 d}}\|x-y\| . 
\end{aligned}
\end{align*}
\end{lem}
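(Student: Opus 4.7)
The plan is to prove each Lipschitz bound by applying the fundamental theorem of calculus in the Bochner-integral sense to the Hilbert-space-valued maps $x\mapsto k(x,\cdot)$ and $x\mapsto \nabla_1 k(x,\cdot)$, and then using the uniform RKHS bounds on $\nabla_1 k(x,\cdot)$ and $\mathbf{H}_1 k(x,\cdot)$ supplied by \Cref{assumption:bounded_kernel}.

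First I would verify differentiability of the feature map $x\mapsto k(x,\cdot)\in \mathcal H$. Since $\partial_i \partial_{i+d} k$ exists and is continuous by \Cref{assumption:bounded_kernel}, standard RKHS calculus (e.g.\ \cite[Lemma 4.34]{steinwart2008support}) gives that $\partial_i k(x,\cdot)\in\mathcal H$ with $\|\partial_i k(x,\cdot)\|_{\mathcal H}^2 = \partial_i \partial_{i+d} k(x,x)$. This already ensures, via the reproducing property, that $x\mapsto k(x,\cdot)$ is Fréchet differentiable as an $\mathcal H$-valued map with derivative $\nabla_1 k(x,\cdot)\in\mathcal H^d$. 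The analogous argument using $\partial_i \partial_j \partial_{i+d}\partial_{j+d} k$ yields that $x\mapsto \nabla_1 k(x,\cdot)$ is itself Fréchet differentiable as an $\mathcal H^d$-valued map with derivative $\mathbf H_1 k(x,\cdot)\in\mathcal H^{d\times d}$.

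Next, for the first bound, I would write
\begin{equation*}
k(x,\cdot)-k(y,\cdot)\;=\;\int_0^1 \nabla_1 k\bigl(y+t(x-y),\cdot\bigr)^{\!\top}(x-y)\,dt
\end{equation*}
as a Bochner integral in $\mathcal H$. Taking $\mathcal H$-norms, applying the triangle inequality (in its integral form) and Cauchy--Schwarz in $\mathbb R^d$ coordinate-wise against the $\mathcal H^d$-norm of the gradient gives
\begin{equation*}
\|k(x,\cdot)-k(y,\cdot)\|_{\mathcal H}\;\le\;\int_0^1 \|\nabla_1 k(y+t(x-y),\cdot)\|_{\mathcal H^d}\,\|x-y\|\,dt\;\le\;\sqrt{K_{1d}}\,\|x-y\|,
\end{equation*}
using the uniform bound from \Cref{assumption:bounded_kernel}. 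The second bound is obtained by exactly the same argument applied one derivative higher: represent $\nabla_1 k(x,\cdot)-\nabla_1 k(y,\cdot)$ as a Bochner integral of $\mathbf H_1 k(y+t(x-y),\cdot)(x-y)$ in $\mathcal H^d$, and use $\|\mathbf H_1 k(z,\cdot)\|_{\mathcal H^{d\times d}}\le\sqrt{K_{2d}}$.

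The only mildly delicate point is justifying the integral-form mean value identity in the vector-valued setting and the componentwise passage between the $\mathcal H^d$ norm and the Euclidean norm on $\mathbb R^d$; both are routine once differentiability in the Bochner sense is in hand, which is why I set up the first step as above. No other nontrivial obstacle is anticipated.
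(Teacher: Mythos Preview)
Your proposal is correct and gives the standard mean-value-in-Bochner-integral argument; the paper itself does not reproduce any proof at all but simply cites Lemma~7 of \cite{glaser2021kale}, whose argument is essentially the one you sketch. So your write-up is strictly more informative than the paper's own proof.
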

\begin{proof}
This is Lemma 7 from \cite{glaser2021kale}. 
\end{proof}

\begin{lem}\label{lem:k_tilde}
Under \Cref{assumption:universal} and \ref{assumption:bounded_kernel}, the regularized kernel $\tilde{k}$ defined in \eqref{eq:tilde_k} satisfies the following properties:\vspace{2mm}
\begin{enumerate}[itemsep=5.0pt,topsep=0pt,leftmargin=*]
\item $\big| \tilde{k}(x, y) \big| \leq \frac{K}{\lambda} $;
\item $\partial_i \tilde{k}(x, y)  =  \big\langle (\Sigma_\pi+\lambda \Id)^{-1} \partial_i k(x, \cdot), k(y, \cdot) \big\rangle_{\calH} $;
\item $ \big\| \nabla_1 \tilde{k}(x, y) \big\|^2 = \sum_{i=1}^d \partial_i \tilde{k}(x, y)^2 \leq \frac{ K K_{1d} }{\lambda^2}$;
\item $\partial_i \partial_{i+d} \tilde{k}(x, y) = \big\langle \left(\Sigma_\pi+\lambda \Id\right)^{-1} \partial_i k(x, \cdot), \partial_i k(y, \cdot) \big\rangle_{\calH} $; 
\item $ \big\| \nabla_1 \nabla_2 \tilde{k}(x, y) \big \|_F^2 = \sum_{i=1}^d \partial_i \partial_{i+d} \tilde{k}(x, y)^2 \leq \frac{K_{1d} }{\lambda^2 }$;
\item $\partial_i \partial_j \tilde{k}(x, y) = \big \langle \left(\Sigma_\pi+\lambda \Id\right)^{-1} \partial_i \partial_j k(x, \cdot), k(y, \cdot) \big\rangle_{\calH} $;
\item $\big\| \bH_1 \tilde{k}(x, y) \big\|_F^2 = \sum_{i,j = 1}^d \partial_i \partial_{j} \tilde{k}(x, y)^2 \leq \frac{ K K_{2d} }{\lambda^2}$; 
\item $\big\|\nabla_1 \tilde{k} \left(x, x^{\prime}\right) - \nabla_1 \tilde{k} \left(y, y^{\prime}\right) \big\| \leq \frac{\sqrt{K K_{2d}} }{\lambda}  \left( \left\| x - y \right\| + \left\| x^\prime - y^\prime \right\| \right)$.
\end{enumerate}
\end{lem}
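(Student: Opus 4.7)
\textbf{Proof plan for \Cref{lem:k_tilde}.} The starting point is to rewrite the defining expression of $\tilde k$ in a form that is easier to manipulate. Using the self-adjointness of the bounded operator $(\Sigma_\pi+\lambda\Id)^{-1/2}$ on $\calH$, I would observe that
\begin{align*}
\tilde k(x,y) \;=\; \bigl\langle (\Sigma_\pi+\lambda\Id)^{-1/2}k(x,\cdot),\,(\Sigma_\pi+\lambda\Id)^{-1/2}k(y,\cdot)\bigr\rangle_\calH \;=\; \bigl\langle (\Sigma_\pi+\lambda\Id)^{-1}k(x,\cdot),\,k(y,\cdot)\bigr\rangle_\calH.
\end{align*}
Item (1) then follows immediately from Cauchy--Schwarz together with $\|(\Sigma_\pi+\lambda\Id)^{-1}\|_{op}\le 1/\lambda$ (a consequence of the spectral theorem applied to the positive self-adjoint operator $\Sigma_\pi$) and the reproducing property $\|k(x,\cdot)\|_\calH^2=k(x,x)\le K$ from \Cref{assumption:universal}.

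For items (2), (4), (6) the plan is to move the partial derivatives inside the inner product. Because $(\Sigma_\pi+\lambda\Id)^{-1}$ is a bounded operator on $\calH$, this reduces to differentiating the Bochner-valued map $x\mapsto k(x,\cdot)$. Under \Cref{assumption:bounded_kernel}, the first- and second-order partial derivatives $\partial_i k(x,\cdot),\,\partial_i\partial_j k(x,\cdot)$ exist in $\calH$ (see e.g.\ \cite[Lemma 4.34]{steinwart2008support}) and satisfy the uniform norm bounds stated in \Cref{assumption:bounded_kernel}. Combined with the continuity of $\langle\cdot,\cdot\rangle_\calH$, this justifies an application of the differentiation lemma (dominated convergence), yielding the stated formulas; applying the same argument to the second argument of $\tilde k$ handles (4), and chaining two derivatives on the first argument handles (6).

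The bounds (3), (5) and (7) then follow by applying Cauchy--Schwarz inside $\calH$ to the formulas of (2), (4) and (6) respectively, and summing over coordinate indices. For (3), $|\partial_i\tilde k(x,y)|\le\lambda^{-1}\|\partial_i k(x,\cdot)\|_\calH\sqrt K$, and summing the squares together with the elementary fact that $\sum_i a_i^2\le(\sum_i a_i)^2$ for nonnegative $a_i$ converts the $\ell^1$-type bound $\sum_i\|\partial_i k(x,\cdot)\|_\calH\le\sqrt{K_{1d}}$ of \Cref{assumption:bounded_kernel} into the required $\ell^2$-type bound $\sum_i\|\partial_i k(x,\cdot)\|_\calH^2\le K_{1d}$. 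The same trick applied to the double-index bound $\sum_{i,j}\|\partial_i\partial_j k(x,\cdot)\|_\calH\le\sqrt{K_{2d}}$ yields (7); (5) is analogous. The one subtlety worth flagging is that the summation indices stated for (5) sum only the diagonal entries $\partial_i\partial_{i+d}\tilde k$, so care is needed to interpret $\|\nabla_1\nabla_2\tilde k\|_F^2$ consistently; otherwise the estimate goes through.

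Finally, item (8), the Lipschitz property, will be proved by a standard ``add and subtract'' argument. Writing
\begin{align*}
\partial_i\tilde k(x,x')-\partial_i\tilde k(y,y') \;=\; \bigl\langle (\Sigma_\pi+\lambda\Id)^{-1}\bigl[\partial_i k(x,\cdot)-\partial_i k(y,\cdot)\bigr],\,k(x',\cdot)\bigr\rangle_\calH + \bigl\langle (\Sigma_\pi+\lambda\Id)^{-1}\partial_i k(y,\cdot),\,k(x',\cdot)-k(y',\cdot)\bigr\rangle_\calH,
\end{align*}
applying Cauchy--Schwarz and the $1/\lambda$ operator-norm bound, and then invoking the Lipschitz bounds of \Cref{lem:k} on $x\mapsto k(x,\cdot)$ and $x\mapsto\nabla_1 k(x,\cdot)$, produces the claimed inequality after squaring and summing in $i$. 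The main conceptual obstacle throughout the lemma is the rigorous justification of interchanging derivatives with the inner product defining $\tilde k$; once the standard Bochner/dominated-convergence machinery is in place, all seven remaining statements are direct consequences of Cauchy--Schwarz and the uniform bounds of Assumptions~\ref{assumption:universal}--\ref{assumption:bounded_kernel}.
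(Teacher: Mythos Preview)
Your proposal is correct and follows essentially the same route as the paper: rewrite $\tilde k(x,y)=\langle(\Sigma_\pi+\lambda\Id)^{-1}k(x,\cdot),k(y,\cdot)\rangle_\calH$, push partial derivatives inside using the differentiability of $x\mapsto k(x,\cdot)$ and boundedness of $(\Sigma_\pi+\lambda\Id)^{-1}$, then bound everything via Cauchy--Schwarz, the operator-norm estimate $\|(\Sigma_\pi+\lambda\Id)^{-1}\|_{op}\le 1/\lambda$, and the constants from Assumptions~\ref{assumption:universal}--\ref{assumption:bounded_kernel}; item~(8) is handled by the same add-and-subtract/triangle-inequality argument combined with \Cref{lem:k}. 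Your explicit remark on converting the $\ell^1$-type bounds of \Cref{assumption:bounded_kernel} to $\ell^2$ via $\sum_i a_i^2\le(\sum_i a_i)^2$ makes precise a step the paper leaves implicit.
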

\begin{proof}
Notice that
\begin{align*}
    \tilde{k}(x, y) =  \left\langle\left(\Sigma_\pi+\lambda \Id\right)^{-1} k(x, \cdot), k(y, \cdot) \right\rangle_{\calH} \leq \frac{1}{\lambda} \left\| k(x, \cdot) \right\|_\calH \left\| k(y, \cdot) \right\|_\calH \leq \frac{K}{\lambda},
\end{align*}
so the first bullet point is proved.
Before we prove the second bullet point, we first prove the differentiability of $x \mapsto \tilde{k}(x,y)$.
For $i \in \{1, \cdots, d\}$, consider $h \in \R$ and denote $\Delta_i \in \R^d$ as a vector of all $0$ except the value at $i$ being equal to $h$. Then, for any $y \in \R^d$,
\begin{align*}
    &\lim_{h \to 0}  \frac{ \tilde{k}(x + \Delta_i, y) - \tilde{k}(x, y)}{h} = \lim_{h \to 0} \frac{\left\langle (\Sigma_\pi + \lambda \Id)^{-1} \left( k(x + \Delta_i, \cdot) - k(x, \cdot)\right), k(y, \cdot) \right\rangle_\calH}{h} \\
    &\leq \lim_{h \to 0} \frac{\sqrt{K} }{\lambda}  \frac{\left\|k(x, \cdot) - k(x + \Delta_i,\cdot) \right\|_\calH}{h} 
    \leq \lim_{h \to 0} \frac{\sqrt{K} }{\lambda} \sqrt{K_{1d}} \frac{\| \Delta_i \| }{h} 
    = \frac{\sqrt{K} }{\lambda} \sqrt{K_{1d}}.
\end{align*}
So $x \mapsto \tilde{k}(x,y)$ is differentiable for any $y \in \R^d$.
Since the kernel $k$ is differentiable per \Cref{assumption:bounded_kernel}, for any 
$f \in \calH$, $\partial_{x_i} f(x) = \big \langle \partial_i k(x, \cdot) , f \big\rangle_\calH$. Hence,
\begin{align*}
    \partial_i \tilde{k}(x,y) = \partial_{x_i} \left\langle  k(x, \cdot), \left(\Sigma_\pi+\lambda \Id\right)^{-1} k(y, \cdot) \right\rangle_{\calH} = \left\langle  \partial_i k(x, \cdot), \left(\Sigma_\pi+\lambda \Id\right)^{-1} k(y, \cdot) \right\rangle_{\calH}.
\end{align*}
So the second bullet point is proved.

Next, notice that
\begin{align*}
    \left\| \nabla_1 \tilde{k}(x, y) \right\|^2 &= \sum_{i=1}^d \left\langle \partial_i k(x, \cdot), \left(\Sigma_\pi+\lambda \Id\right)^{-1} k(y, \cdot) \right\rangle_{\calH}^2 
    \leq \frac{1}{\lambda^2} \sum_{i=1}^d \left\| \partial_i k(x, \cdot) \right\|_\calH^2 \left\|  k(y, \cdot) \right\|_\calH^2 \\
    &\leq \frac{1}{\lambda^2} \left\| \nabla_1 k(x, \cdot) \right\|_{\calH^d}^2 \left\|  k(y, \cdot) \right\|_\calH^2 
    \leq \frac{K K_{1d}}{\lambda^2}.
\end{align*}
So the third bullet point is proved. 
Similar arguments above lead to bullet points $4$ to $7$. 

\noindent
Finally, to prove bullet point $8$, notice that
\begin{align*}
    &\left\|\nabla_1 \tilde{k} (x, x^{\prime}) - \nabla_1 \tilde{k} (y, x^{\prime} ) \right\|^2 = \sum_{i=1}^d \left \langle \left( \partial_i k(x, \cdot) - \partial_i k(y, \cdot) \right), \left( \Sigma_\pi+\lambda \Id \right)^{-1} k(x^\prime,\cdot) \right\rangle_{\calH}^2 \\
    &\leq \frac{1}{\lambda^2} \sum_{j=1}^d \left\| \partial_i k(x, \cdot) - \partial_i k(y, \cdot)  \right\|_\calH^2 \left\| k(x^\prime,\cdot)  \right\|_\calH^2 
    \leq \frac{1}{\lambda^2} \left\|\nabla_1 k(x, \cdot) - \nabla_1 k(y, \cdot)\right\|_{\calH^d}^2 \left\| k(x^\prime,\cdot)  \right\|_\calH^2 \\
    &\leq \frac{ K K_{2d} }{\lambda^2} \left\| x - y \right\|^2 ,
\end{align*}
where the last inequality uses \Cref{lem:k}.
Therefore,
\begin{align*}
    \left\|\nabla_1 \tilde{k} (x, x^{\prime}) - \nabla_1 \tilde{k} (y, y^{\prime}) \right\| &\leq \left\|\nabla_1 \tilde{k} (x, x^{\prime}) - \nabla_1 \tilde{k} (y, x^{\prime}) \right\| + \left\|\nabla_1 \tilde{k} (y, x^{\prime}) - \nabla_1 \tilde{k} (y, y^{\prime}) \right\| \\
    &\leq \frac{\sqrt{K K_{2d}} }{\lambda}  \left( \| x - y \| + \| x^\prime - y^\prime \| \right) .
\end{align*}
So the bullet point $8$ is proved.
\end{proof}
\begin{lem}\label{lem:lipschitz_witness}
    For any two distributions $\mu_0, \mu_1 \in \mathcal{P}_2(\R^d) $, with associated $\dmmd$ witness functions $h_{\mu_0, \pi}^\ast, h_{\mu_1, \pi}^\ast$ defined in \Cref{prop:drmmd_representation_no_ratio}, we have
\begin{align*}
    \left\|h_{\mu_1, \pi}^\ast - h_{\mu_0, \pi}^\ast \right\|_\calH \leq \frac{2 \sqrt{K_{1d} }}{\lambda} W_2\left(\mu_0, \mu_1\right) ,\quad \text{and}\quad \left\|h_{\mu_0, \pi}^\ast \right\|_\calH \leq \frac{2 \sqrt{K}}{\lambda} .
\end{align*}
Both the witness function $h_{\mu, \pi}^\ast$ and its gradient $\nabla h_{\mu, \pi}^\ast$ are Lipschitz continuous, i.e.,
\begin{align*}
    \Big| h_{\mu_1, \pi}^\ast(x) - h_{\mu_0, \pi}^\ast(y) \Big| &\leq L  \Big( W_2(\mu_0, \mu_1) + \left\| x - y \right\| \Big); \\
    \Big\| \nabla h_{\mu_1, \pi}^\ast(x) - \nabla h_{\mu_0, \pi}^\ast(y) \Big\| &\leq L \Big( W_2(\mu_0, \mu_1) + \left\| x - y \right\| \Big),
\end{align*}
where the constant $L= \frac{ 1}{\lambda} \max \left\{ 2 \sqrt{K K_{1 d}}, 2 \sqrt{K K_{2 d}}, 2 K_{1d} \right\}$. 
\end{lem}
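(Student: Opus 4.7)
\noindent\textbf{Proof Plan for Lemma \ref{lem:lipschitz_witness}.}

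The starting observation is that the witness function admits the closed form $h_{\mu,\pi}^\ast = 2(\Sigma_\pi + \lambda \Id)^{-1}(m_\mu - m_\pi)$ from Proposition \ref{prop:drmmd_representation_no_ratio}, and the operator $(\Sigma_\pi + \lambda \Id)^{-1}$ has operator norm bounded by $\lambda^{-1}$ since $\Sigma_\pi$ is positive semi-definite. So the core of the argument reduces to controlling differences of mean embeddings $m_{\mu_1} - m_{\mu_0}$ in the RKHS norm, and then lifting these to pointwise and gradient bounds via the reproducing property.

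First, for $\|h_{\mu_1,\pi}^\ast - h_{\mu_0,\pi}^\ast\|_\calH$, I would write $h_{\mu_1,\pi}^\ast - h_{\mu_0,\pi}^\ast = 2(\Sigma_\pi + \lambda \Id)^{-1}(m_{\mu_1} - m_{\mu_0})$ and apply the operator norm bound to get $\|h_{\mu_1,\pi}^\ast - h_{\mu_0,\pi}^\ast\|_\calH \leq \tfrac{2}{\lambda} \|m_{\mu_1} - m_{\mu_0}\|_\calH$. For the mean embedding difference, pick any coupling $\Gamma \in \Pi(\mu_0,\mu_1)$ and write $m_{\mu_1} - m_{\mu_0} = \int (k(x_1,\cdot) - k(x_0,\cdot)) d\Gamma(x_0,x_1)$; by Jensen's inequality and the Lipschitz bound $\|k(x_1,\cdot) - k(x_0,\cdot)\|_\calH \leq \sqrt{K_{1d}} \|x_1 - x_0\|$ from Lemma \ref{lem:k}, Cauchy-Schwarz yields $\|m_{\mu_1} - m_{\mu_0}\|_\calH \leq \sqrt{K_{1d}} (\int \|x_1-x_0\|^2 d\Gamma)^{1/2}$. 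Infimizing over $\Gamma$ gives $\sqrt{K_{1d}} W_2(\mu_0,\mu_1)$, and the first claim follows. The bound $\|h_{\mu_0,\pi}^\ast\|_\calH \leq 2\sqrt{K}/\lambda$ follows analogously from $\|m_{\mu_0} - m_\pi\|_\calH \leq \sqrt{K}$, which I would get by bounding $\|m_{\mu_0}\|_\calH, \|m_\pi\|_\calH$ via $\|k(x,\cdot)\|_\calH = \sqrt{k(x,x)} \leq \sqrt{K}$.

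For the pointwise Lipschitz bound on $h_{\mu,\pi}^\ast$, I would apply the triangle inequality across the intermediate point $h_{\mu_0,\pi}^\ast(x)$:
\begin{align*}
|h_{\mu_1,\pi}^\ast(x) - h_{\mu_0,\pi}^\ast(y)| \leq |h_{\mu_1,\pi}^\ast(x) - h_{\mu_0,\pi}^\ast(x)| + |h_{\mu_0,\pi}^\ast(x) - h_{\mu_0,\pi}^\ast(y)|.
\end{align*}
For the first term, the reproducing property gives $|h_{\mu_1,\pi}^\ast(x) - h_{\mu_0,\pi}^\ast(x)| = |\langle h_{\mu_1,\pi}^\ast - h_{\mu_0,\pi}^\ast, k(x,\cdot)\rangle_\calH|$, which by Cauchy-Schwarz and the first claim is $\leq \tfrac{2\sqrt{K K_{1d}}}{\lambda} W_2(\mu_0,\mu_1)$. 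For the second term, $|h_{\mu_0,\pi}^\ast(x) - h_{\mu_0,\pi}^\ast(y)| = |\langle h_{\mu_0,\pi}^\ast, k(x,\cdot) - k(y,\cdot)\rangle_\calH| \leq \|h_{\mu_0,\pi}^\ast\|_\calH \cdot \sqrt{K_{1d}} \|x-y\|$ by Lemma \ref{lem:k}, which gives a bound of order $\tfrac{\sqrt{K K_{1d}}}{\lambda}\|x-y\|$. Taking the max of the two constants yields the claimed $L$.

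The gradient Lipschitz bound follows in exactly the same spirit, except that we now use the differentiated reproducing formulas $\partial_i h(x) = \langle \partial_i k(x,\cdot), h\rangle_\calH$ for $h \in \calH$. Splitting $\|\nabla h_{\mu_1,\pi}^\ast(x) - \nabla h_{\mu_0,\pi}^\ast(y)\|$ through $\nabla h_{\mu_0,\pi}^\ast(x)$ as above, the first summand gives $\|\nabla h_{\mu_1,\pi}^\ast(x) - \nabla h_{\mu_0,\pi}^\ast(x)\| \leq \|\nabla_1 k(x,\cdot)\|_{\calH^d} \|h_{\mu_1,\pi}^\ast - h_{\mu_0,\pi}^\ast\|_\calH \leq \tfrac{2 K_{1d}}{\lambda} W_2(\mu_0,\mu_1)$, and the second summand gives $\|\nabla h_{\mu_0,\pi}^\ast(x) - \nabla h_{\mu_0,\pi}^\ast(y)\| \leq \|\nabla_1 k(x,\cdot) - \nabla_1 k(y,\cdot)\|_{\calH^d} \|h_{\mu_0,\pi}^\ast\|_\calH \leq \tfrac{2\sqrt{K K_{2d}}}{\lambda} \|x-y\|$ using the second Lipschitz bound of Lemma \ref{lem:k}. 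Combining these three summand constants produces the stated $L$. There is no genuinely hard step here; the main care is just in tracking constants and remembering to invoke Lemma \ref{lem:k} (which handles both the $K_{1d}$ and $K_{2d}$ bounds) at the right places.
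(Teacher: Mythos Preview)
Your proposal is correct and follows essentially the same route as the paper's proof: both use the operator norm bound $\|(\Sigma_\pi+\lambda\Id)^{-1}\|_{op}\le \lambda^{-1}$, control $\|m_{\mu_1}-m_{\mu_0}\|_\calH$ via an optimal coupling and Lemma~\ref{lem:k}, and then split the pointwise and gradient differences through the intermediate point $h_{\mu_0,\pi}^\ast(x)$ using the reproducing property and its differentiated version. One small slip: bounding $\|m_{\mu_0}\|_\calH,\|m_\pi\|_\calH\le\sqrt{K}$ and applying the triangle inequality gives $\|m_{\mu_0}-m_\pi\|_\calH\le 2\sqrt{K}$, not $\sqrt{K}$ (the paper writes the final bound $2\sqrt{K}/\lambda$ directly without spelling out this step), so double-check that constant when you write it up.
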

\begin{proof}
Let $\gamma \in \Gamma(\mu_0, \mu_1)$ be the optimal coupling between $\mu_0$ and $\mu_1$. Then
\begin{align*}
\begin{aligned}
    &\left\| h_{\mu_1, \pi}^\ast - h_{\mu_0, \pi}^\ast \right\|_\calH^2 = 4 \left\| \Big( \Sigma_\pi + \lambda \Id \Big)^{-1} \left(\int k(x, \cdot) d (\mu_1 - \mu_0) \right)\right\|^2_\calH \\
    &\leq \frac{4}{\lambda^2} \left\| \int k(x, \cdot) - k(y, \cdot) d \gamma(x,y) \right\|^2_\calH 
    \leq \frac{4 K_{1d} }{\lambda^2} \int \left\|x - y\right\|^2 d \gamma(x,y) 
    = \frac{4 K_{1d} }{\lambda^2} W_2^2(\mu_0, \mu_1),
\end{aligned}
\end{align*}
where the first inequality holds because $\Sigma_\pi$ is a positive and self-adjoint operator, and the second inequality uses \Cref{lem:k}. Also note that
\begin{align*}
    \begin{aligned}
        \left\|h_{\mu_0, \pi}^\ast \right\|_\calH = \left\| 2 \Big( \Sigma_\pi + \lambda \Id \Big)^{-1} \left( \int k(x, \cdot) d \mu_0 - \int k(x, \cdot) d \pi \right) \right\|_\calH \leq \frac{2 \sqrt{K}}{\lambda}.
    \end{aligned}
\end{align*}
So the first part has been proved. Furthermore, note that
\begin{align*}
    &\Big| h_{\mu_1, \pi}^\ast(x) - h_{\mu_0, \pi}^\ast(y) \Big| \leq \Big| h_{\mu_1, \pi}^\ast(x) - h_{\mu_0, \pi}^\ast(x) \Big| + \Big| h_{\mu_0, \pi}^\ast(x) - h_{\mu_0, \pi}^\ast(y) \Big| \\ 
    &\leq \left\| k(x, \cdot)\right\|_\calH \left\| h_{\mu_1, \pi}^\ast - h_{\mu_0, \pi}^\ast \right\|_\calH + \left\|k(x,\cdot) - k(y,\cdot) \right\|_\calH \left\|h_{\mu_0, \pi}^\ast \right\|_\calH \\
    &\leq \frac{2\sqrt{K K_{1d} }}{\lambda} W_2(\mu_0,\mu_1) + \frac{2 \sqrt{K K_{1d} }}{\lambda} \left\| x - y \right\| \\
    &\leq \frac{2\sqrt{K K_{1d} }}{\lambda} \Big( W_2(\mu_0, \mu_1) + \left\| x - y \right\| \Big) 
    \leq L \Big( W_2(\mu_0, \mu_1) + \left\| x - y \right\| \Big)
\end{align*}
and
\begin{align*}
    &\quad \Big\| \nabla h_{\mu_1, \pi}^\ast(x) - \nabla h_{\mu_0, \pi}^\ast(y) \Big\| \leq \Big\| \nabla h_{\mu_1, \pi}^\ast(x) - \nabla h_{\mu_0, \pi}^\ast(x) \Big\| + \Big\| \nabla h_{\mu_0, \pi}^\ast(x) - \nabla h_{\mu_0, \pi}^\ast(y) \Big\| \\ 
    &\leq \left\| \nabla_1 k(x, \cdot)\right\|_{\calH^d} \left\| h_{\mu_1, \pi}^\ast - h_{\mu_0, \pi}^\ast \right\|_\calH + \left\|\nabla_1 k(x,\cdot) - \nabla_1 k(y,\cdot) \right\|_{ \calH^d } \left\|h_{\mu_0, \pi}^\ast \right\|_\calH \\
    &\leq \frac{2 K_{1d} }{\lambda} W_2(\mu_0,\mu_1) + \frac{2 \sqrt{K K_{2d}} }{\lambda} \left\| x - y \right\| 
    \leq L \Big( W_2(\mu_0, \mu_1) + \left\| x - y \right\| \Big)
\end{align*}
and the result follows.
\end{proof}
\begin{lem}\label{lem:h_difference}
Given two probability measures $\mu \ll \pi$ that are both absolutely continuous with respect to Lebesgue measure. If $\frac{ d\mu }{d \pi} - 1 \in \operatorname{Ran} ( \mathcal{T}_\pi^{r} )$ with $r > 0$, i.e., there exists $q \in L^2(\pi)$ such that $\frac{d\mu}{d\pi} - 1 = \calT_\pi^{r} q$, then
\begin{align*}
    \left\| h  - 2 \left( \frac{d \mu}{d \pi} -1 \right) \right\|_{L^2(\pi)} \leq 2 \lambda^{r} \left\|q \right\|_{L^2(\pi)},
\end{align*}
where $h = 2(\Sigma_\pi + \lambda \Id )^{-1}(m_\pi - m_{\mu})$ is defined in \Cref{prop:drmmd_representation_no_ratio}.
\end{lem}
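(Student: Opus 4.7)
\textbf{Proof plan for \Cref{lem:h_difference}.}

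The plan is to reduce the claim to a scalar spectral bound on the Tikhonov filter $\lambda(\calT_\pi + \lambda\Id)^{-1}\calT_\pi^{r}$ acting on the pre-image $q$. First I would rewrite the witness function in a form that exposes the density ratio: since $\mu \ll \pi$ and $\frac{d\mu}{d\pi}-1 \in L^2(\pi)$, by \eqref{eq:witness_alternative} we have $h = 2(\calT_\pi + \lambda \Id)^{-1}\calT_\pi \bigl(\frac{d\mu}{d\pi}-1\bigr)$. A one-line algebraic identity then gives
\begin{equation*}
    h - 2\!\left(\tfrac{d\mu}{d\pi}-1\right) = 2\bigl[(\calT_\pi + \lambda\Id)^{-1}\calT_\pi - \Id\bigr]\!\left(\tfrac{d\mu}{d\pi}-1\right) = -2\lambda(\calT_\pi + \lambda\Id)^{-1}\!\left(\tfrac{d\mu}{d\pi}-1\right).
\end{equation*}
Plugging in the range assumption $\frac{d\mu}{d\pi}-1 = \calT_\pi^{r} q$ with $q \in L^2(\pi)$, this becomes $h - 2(\frac{d\mu}{d\pi}-1) = -2\lambda(\calT_\pi + \lambda\Id)^{-1}\calT_\pi^{r}q$.

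Next, I would expand in the eigenbasis $\{e_i\}_{i\geq 1}$ of $\calT_\pi$ provided by the Mercer representation \eqref{eq:mercer}. Since $\calT_\pi$, $\calT_\pi^{r}$, and $(\calT_\pi+\lambda\Id)^{-1}$ all commute and share this eigenbasis,
\begin{equation*}
    \left\|h - 2\!\left(\tfrac{d\mu}{d\pi}-1\right)\right\|_{L^2(\pi)}^2 = 4\sum_{i\geq 1}\left(\frac{\lambda\,\varrho_i^{r}}{\varrho_i + \lambda}\right)^{\!2}\langle q, e_i\rangle_{L^2(\pi)}^2.
\end{equation*}
Hence it suffices to bound the scalar multiplier $\frac{\lambda\varrho_i^{r}}{\varrho_i+\lambda}$ by $\lambda^{r}$ uniformly in $i$.

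The core analytic step is the spectral inequality $\frac{\lambda\,\varrho_i^{r}}{\varrho_i+\lambda}\leq \lambda^{r}$. For $r\in(0,1]$ this follows from Young's inequality applied with conjugate exponents $1/r$ and $1/(1-r)$: writing $\lambda\varrho_i^{r} = \lambda^{r}\cdot\lambda^{1-r}\varrho_i^{r}$, Young gives $\lambda^{1-r}\varrho_i^{r}\leq (1-r)\lambda + r\varrho_i \leq \lambda+\varrho_i$, so $\lambda\varrho_i^{r}\leq \lambda^{r}(\varrho_i+\lambda)$. Combining this with the eigen-expansion above yields
\begin{equation*}
    \left\|h - 2\!\left(\tfrac{d\mu}{d\pi}-1\right)\right\|_{L^2(\pi)}^2 \leq 4\lambda^{2r}\sum_{i\geq 1}\langle q, e_i\rangle_{L^2(\pi)}^2 = 4\lambda^{2r}\|q\|_{L^2(\pi)}^2,
\end{equation*}
which, after taking square roots, is the stated bound.

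The main obstacle is really only the $r>1$ regime, where the scalar inequality $\frac{\lambda\varrho^{r}}{\varrho+\lambda}\leq \lambda^{r}$ fails at large $\varrho$; in that case I would exploit the kernel boundedness from \Cref{assumption:universal}, namely $\varrho_i\leq K$, to write $\frac{\lambda\varrho_i^{r}}{\varrho_i+\lambda} \leq \lambda\varrho_i^{r-1}\cdot\frac{\varrho_i}{\varrho_i+\lambda}\leq \lambda K^{r-1}$ and then absorb the $K^{r-1}$ into the uniform constants used throughout the paper (so that the displayed $\lambda^{r}$ factor has the same small-$\lambda$ order). All other steps are routine spectral calculus; the only substantive work is the scalar bound.
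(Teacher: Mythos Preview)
Your proposal is correct and follows essentially the same route as the paper. The paper also reduces to the eigen-expansion and the scalar bound $\frac{\lambda\varrho_i^{r}}{\varrho_i+\lambda}\leq\lambda^{r}$; the only cosmetic difference is that where you invoke Young's inequality, the paper writes the equivalent factorisation $\frac{\lambda\varrho_i^{r}}{\varrho_i+\lambda}=\bigl(\frac{\varrho_i}{\varrho_i+\lambda}\bigr)^{r}\bigl(\frac{\lambda}{\varrho_i+\lambda}\bigr)^{1-r}\lambda^{r}\leq\lambda^{r}$, which is the same argument rearranged. Your explicit flagging of the $r>1$ regime is apt: the paper's factorisation, like your Young step, also silently needs $r\leq 1$ for the second factor to be bounded by $1$, so on this point you are more careful than the original.
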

\begin{proof}
Given the assumption that $\frac{ d\mu }{d \pi} - 1 \in \operatorname{Ran} \left( \mathcal{T}_\pi ^{r} \right)$ with $r > 0$, there exists $q \in L^2(\pi)$ such that $\frac{d\mu}{d\pi} - 1 = \calT_\pi^{r} q$ and $\langle \frac{ d\mu }{d \pi} - 1, e_i \rangle_{L^2(\pi)} = \varrho_i^{r} \PSi{ q, e_i}{L^2(\pi)}$. 
Since $\mu \ll \pi$, the Mercer decomposition of $k$ in \eqref{eq:mercer} holds for any $x$ in the support of $\mu$ and in the support of $\pi$,
\begin{align}\label{appeq:m_mu_m_pi}
    m_{\mu} - m_\pi = \int k(x, \cdot) d (\mu - \pi)(x) = \sum_{i \geq 1} \varrho_i \left( \int e_i(x) d( \mu - \pi)(x) \right) e_i = \varrho_i  \PSi{\frac{d \mu}{d \pi}- 1, e_i}{L^2(\pi)} e_i.
\end{align}
So we have, 
\begin{align*}
    &\left\| h - 2 \left( \frac{d \mu}{d \pi} -1 \right) \right\|_{L^2(\pi)} = 2 \left\| \left(\Sigma_{\pi}+\lambda \Id\right)^{-1}\left(m_{\mu} - m_{\pi}\right) - \left( \frac{d \mu}{d \pi} -1 \right) \right\|_{L^2(\pi)} \nonumber \\
    &= 2 \left\| \sum_{i \geq 1} \frac{\varrho_i}{\varrho_i + \lambda} \PSi{\frac{d \mu}{d \pi}- 1, e_i}{L^2(\pi)} e_i - \sum_{i \geq 1} \PSi{\frac{d \mu}{d \pi}- 1, e_i}{L^2(\pi)} e_i \right\|_{L^2(\pi)} \nonumber \\
    &= 2 \left\| \sum_{i \geq 1} \frac{\lambda}{\varrho_i + \lambda} \PSi{\frac{d \mu}{d \pi}- 1, e_i}{L^2(\pi)} e_i \right\|_{L^2(\pi)} 
    = 2 \left\| \sum_{i \geq 1} \frac{\lambda \varrho_i^{r}}{\varrho_i + \lambda} \PSi{q, e_i}{L^2(\pi)} e_i \right\|_{L^2(\pi)} \nonumber \\
    &\leq 2 \lambda^{r} \left\| q \right\|_{L^2(\pi)}, 
\end{align*}
where the last inequality is obtained by using 
\begin{align*}
    \frac{\lambda \varrho_i^{r}}{\varrho_i + \lambda} = \left( \frac{\varrho_i}{\varrho_i + \lambda} \right)^{r} \left( \frac{\lambda}{\varrho_i + \lambda} \right)^{1 - r} \lambda^{r} \leq \lambda^{r}.
\end{align*}
Hence the proof.
\end{proof}

\begin{lem}\label{lem:wass_gradient_hess_chard}
Let $\rho \in \calP_2\left(\mathbb{R}^d\right)$ and $\phi \in C_c^\infty\left(\mathbb{R}^d\right)$. Consider the path $(\rho_s)_{0 \leq s \leq 1}$ from $\rho$ to $(\Id + \nabla \phi)_{\#} \rho$ given by $\rho_s = (\Id + s \nabla \phi)_{\#} \rho$. 
Define $\varphi_s: \R^d \to \R^d, x \mapsto x + s \nabla \phi(x)$. Let 
$\tilde{k}$ be the regularized kernel defined in \eqref{eq:tilde_k} along with its associated RKHS $\tilde{\calH}$. 
The mapping $s \mapsto \dmmd(\rho_s \| \pi)$ is continuous and differentiable, and its first-order time derivative is given by
\begin{align}\label{appeq:chard_wass_gradient}
     &\quad \frac{d}{d s} \dmmd(\rho_s||\pi) \nonumber \\
     &= 2(1+\lambda) \int \nabla \phi(x)^\top \left( \int  \nabla_1 \tilde{k}(\varphi_s(x ), \varphi_s(z) ) d \rho (z) - \int \nabla_1 \tilde{k}(\varphi_s(x ), z) d \pi(z) \right) d \rho(x).
\end{align}
Moreover, 
\begin{align}\label{appeq:chard_wass_gradient_0}
     &\frac{d}{d s} \Big|_{s=0} \dmmd(\rho_s||\pi) \nonumber\\
     &= 2(1+\lambda) \int \nabla \phi(x)^\top \left( \int  \nabla_1 \tilde{k}(x, z) d \rho(z) - \int \nabla_1 \tilde{k}(x, z) d \pi(z) \right) d \rho(x) .
\end{align}
Additionally, the mapping $s \to \frac{d}{ds} \dmmd(\rho_s \| \pi)$ is continuous and differentiable, and the second-order time derivative of $\dmmd(\rho_s \| \pi)$ is given by
\begin{small}
\begin{align}\label{appeq:chard_wass_hessian}
    &\quad \frac{d^2}{d s^2} \dmmd(\rho_s||\pi) = 2 (1+\lambda) \iint \nabla \phi(x)^\top \nabla_1 \nabla_2 \tilde{k}(\varphi_s(x ), \varphi_s(z)) \nabla \phi(z) d\rho(x) d\rho(z) \\
    & + 2 (1+\lambda) \int \nabla \phi( x )^\top \left( \int\bH_1 \tilde{k} \left(\varphi_s(x ), \varphi_s(z)  \right) d \rho(z) -  \int \bH_1 \tilde{k} \left(\varphi_s(x ), z\right) d \pi(z) \right) \nabla \phi( x) d\rho(x) \nonumber  ,
\end{align}
\end{small}
with
\begin{align}\label{appeq:chard_wass_hessian_0}
    &\quad \frac{d^2}{d s^2}\Big|_{s=0} \dmmd(\rho_s||\pi) =  2 (1+\lambda) \iint \nabla \phi(x)^\top \nabla_1 \nabla_2 \tilde{k}(x, z) \nabla \phi(z) d\rho(x) d\rho(z) \nonumber \\
    &+ 2 (1+\lambda) \int \nabla \phi(x)^\top \left( \int\bH_1 \tilde{k} \left(x, z\right) d \rho(z) -  \int \bH_1 \tilde{k} \left(x, z\right) d \pi(z) \right) \nabla \phi(x) d\rho(x) .
\end{align}

\end{lem}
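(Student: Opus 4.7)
The plan is to exploit the identity established in \Cref{appsec:drmmd_is_mmd}, namely that $\dmmd(\nu\|\pi) = (1+\lambda)\iint \tilde{k}(x,y)\,d(\nu-\pi)(x)\,d(\nu-\pi)(y)$ for the regularized kernel $\tilde{k}$ in \eqref{eq:tilde_k}. Substituting $\nu=\rho_s$ and using the change of variables formula for push-forward measures, one obtains
\begin{align*}
\tfrac{1}{1+\lambda}\dmmd(\rho_s\|\pi) &= \iint \tilde{k}(\varphi_s(x),\varphi_s(y))\,d\rho(x)\,d\rho(y) \\
&\quad - 2\iint \tilde{k}(\varphi_s(x),y)\,d\rho(x)\,d\pi(y) + \iint \tilde{k}(x,y)\,d\pi(x)\,d\pi(y).
\end{align*}
The last term is constant in $s$, so the task reduces to differentiating the first two integrals under the integral sign with respect to $s$.

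First I would differentiate once. Since $\partial_s\varphi_s(x)=\nabla\phi(x)$ and $x\mapsto \tilde{k}(x,y)$ is continuously differentiable with $\|\nabla_1\tilde{k}\|\leq \sqrt{KK_{1d}}/\lambda$ uniformly (\Cref{lem:k_tilde}, bullet 3), the differentiation lemma~\cite[Theorem 6.28]{klenke2013probability} applies. Using the chain rule and the symmetry $\tilde{k}(x,y)=\tilde{k}(y,x)$ (which gives a factor of $2$ in the symmetric double integral), one obtains \eqref{appeq:chard_wass_gradient}, with \eqref{appeq:chard_wass_gradient_0} following by evaluation at $s=0$ where $\varphi_0=\Id$. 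Continuity of $s\mapsto \frac{d}{ds}\dmmd(\rho_s\|\pi)$ follows from continuity of $\nabla_1\tilde{k}$ combined with dominated convergence, again controlled by the uniform bound in \Cref{lem:k_tilde}.

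Next I would differentiate a second time. Formally, differentiating the integrand of \eqref{appeq:chard_wass_gradient} with respect to $s$ produces two types of terms: a cross term $\nabla\phi(x)^\top\nabla_1\nabla_2\tilde{k}(\varphi_s(x),\varphi_s(y))\nabla\phi(y)$ arising from differentiating the second argument of $\tilde{k}$ inside the $\rho$-integral, and diagonal Hessian terms $\nabla\phi(x)^\top\bH_1\tilde{k}(\varphi_s(x),\cdot)\nabla\phi(x)$ arising from differentiating the first argument. Justifying the interchange of differentiation and integration here requires uniform bounds on $\nabla_1\nabla_2\tilde{k}$ and $\bH_1\tilde{k}$, which are precisely bullets 5 and 7 of \Cref{lem:k_tilde} ($\sqrt{K_{1d}}/\lambda$ and $\sqrt{KK_{2d}}/\lambda$ respectively). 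Assembling the terms produces \eqref{appeq:chard_wass_hessian}, and \eqref{appeq:chard_wass_hessian_0} follows by evaluating at $s=0$.

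The main technical obstacle is the rigorous justification of the interchange of differentiation and double integration, especially for the second derivative. This requires three ingredients: (i) measurability of the partial derivatives in $(x,y)$, which follows from the continuity established in \Cref{assumption:bounded_kernel} and the Mercer-type representation of $\tilde{k}$ (bullets 2, 4, 6 of \Cref{lem:k_tilde}); (ii) integrable dominating functions independent of $s$, which come from the uniform bounds in \Cref{lem:k_tilde} together with $\phi\in C_c^\infty(\R^d)$ ensuring $\nabla\phi$ is bounded and $\rho$-integrable; and (iii) care with the symmetry argument, since the double integral with respect to $\rho\otimes\rho$ produces two equal contributions that combine into the factor of $2(1+\lambda)$, while the mixed integral with respect to $\rho\otimes\pi$ contributes only once (the $\pi$-argument is not transported). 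Once these are verified, the two formulas follow directly.
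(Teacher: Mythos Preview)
Your proposal is correct and follows the same strategy as the paper: identify $\dmmd$ with $(1+\lambda)\,\mmd^2$ for the regularized kernel $\tilde{k}$, then differentiate the double integral under the integral sign using the uniform derivative bounds on $\tilde{k}$ from \Cref{lem:k_tilde}. The only difference is cosmetic: the paper, after checking that $\tilde{k}$ satisfies the relevant regularity assumptions via \Cref{lem:k_tilde}, cites Lemmas~22 and~23 of \cite{arbel2019maximum} (which carry out exactly the differentiation-under-the-integral argument you sketch for a general $\mmd^2$), whereas you spell that argument out directly.
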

\begin{proof}
Recall that $\dmmd$ is, up to a multiplicative factor of $(1+\lambda) $, $\mmd^2$ of the regularized kernel $\tilde{k}$ defined in \eqref{eq:tilde_k} and the associated RKHS $\tilde{\calH}$.
From \Cref{lem:k_tilde}, we know that assumptions (A) and (B) of \cite{arbel2019maximum} are satisfied for the regularized kernel $\tilde{k}$, so using Lemma 22 and Lemma 23 from \cite{arbel2019maximum}, \eqref{appeq:chard_wass_gradient} and \eqref{appeq:chard_wass_hessian} are proved.
Then \eqref{appeq:chard_wass_gradient_0} and \eqref{appeq:chard_wass_hessian_0} are subsequently proved by taking $s=0$.
\end{proof}

\begin{lem}\label{lem:hessian_terms}
For $\mu_0 \ll \pi$, define $h = 2(\Sigma_\pi + \lambda \Id )^{-1}(m_\pi - m_{\mu_0})$ and $\varphi_t: \R^d \to \R^d, x \mapsto x - t (1+\lambda) \nabla h(x)$. Suppose $\pi \propto \exp(-V)$, $\bH V \preceq \beta \Id$, and the step size $\gamma$ satisfies 
\begin{align*}
    2 (1+\lambda) \gamma \sqrt{ \chi^2(\mu_0 \| \pi) \frac{K_{2d}}{\lambda} } \leq \frac{\zeta - 1}{\zeta}
\end{align*}
for some constant $1 < \zeta < 2$. Then, for  $t \in [0, \gamma]$, the following inequalities hold,
\begin{align}
    \int \nabla h( x )^\top \bH V( \varphi_t(x) ) \nabla h(x) d \mu_0(x) & \leq 4 \beta \chi^2(\mu_0 \| \pi) \frac{K_{1d}}{\lambda}; \label{eq:hessian_terms_1} \\
    \int \left\| \bH h(x) \left(\Id -t (1 + \lambda) \bH h(x) \right)^{-1} \right\|_F^2 d\mu_0(x) &\leq 4 \zeta^2 \chi^2(\mu_0 \| \pi) \frac{K_{2d}}{\lambda} . \label{eq:hessian_terms_2} 
\end{align}
\end{lem}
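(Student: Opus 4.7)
My approach reduces both bounds to a single key estimate on the RKHS norm of the witness function:
\begin{align*}
\|h\|_\calH^2 \le \frac{4\chi^2(\mu_0\|\pi)}{\lambda}.
\end{align*}
WLOG assume $\mu_0 \ll \pi$, since otherwise $\chi^2 = +\infty$ and both bounds are trivial. To establish the key estimate, I would start from $\|h\|_\calH^2 = 4\|(\Sigma_\pi+\lambda\Id)^{-1}(m_\pi-m_{\mu_0})\|_\calH^2$ and peel off one factor of $(\Sigma_\pi+\lambda\Id)^{-1/2}$, whose operator norm is at most $1/\sqrt{\lambda}$. By \Cref{prop:drmmd_representation_no_ratio}, the remaining norm equals $\dmmd(\mu_0\|\pi)/(1+\lambda)$; and since the Tikhonov operator $(\calT_\pi+\lambda\Id)^{-1}\calT_\pi$ has all eigenvalues in $[0,1]$, $\dmmd(\mu_0\|\pi) \le (1+\lambda)\chi^2(\mu_0\|\pi)$. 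Combining yields the key estimate with the right constant.

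For \eqref{eq:hessian_terms_1}, the smoothness assumption $\bH V \preceq \beta\Id$ gives the pointwise inequality $\nabla h(x)^\top \bH V(\varphi_t(x))\nabla h(x) \le \beta\|\nabla h(x)\|^2$. Using $\partial_i h(x) = \langle \partial_i k(x,\cdot), h\rangle_\calH$ together with Cauchy--Schwarz and \Cref{assumption:bounded_kernel} yields the pointwise RKHS reproducing bound $\|\nabla h(x)\|^2 \le K_{1d}\|h\|_\calH^2$. Integrating against the probability measure $\mu_0$ and invoking the key estimate gives the desired $4\beta K_{1d}\chi^2/\lambda$.

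For \eqref{eq:hessian_terms_2}, the analogous reproducing-property argument yields $\|\bH h(x)\|_F \le \sqrt{K_{2d}}\|h\|_\calH \le 2\sqrt{K_{2d}\chi^2(\mu_0\|\pi)/\lambda}$ pointwise. Then, for $t \in [0,\gamma]$, the step-size assumption on $\gamma$ produces
\begin{align*}
t(1+\lambda)\|\bH h(x)\|_{\mathrm{op}} \le \gamma(1+\lambda)\|\bH h(x)\|_F \le 2(1+\lambda)\gamma\sqrt{K_{2d}\chi^2/\lambda} \le \frac{\zeta-1}{\zeta},
\end{align*}
so a Neumann series argument gives $\|(\Id - t(1+\lambda)\bH h(x))^{-1}\|_{\mathrm{op}} \le \zeta$. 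Finally, the submultiplicativity bound $\|AB\|_F \le \|A\|_F\|B\|_{\mathrm{op}}$ applied pointwise, integrated against $\mu_0$, and combined with the key estimate yields the claimed $4\zeta^2 K_{2d}\chi^2/\lambda$. No serious obstacle is expected; the main delicate point is that the factor of $2$ in the step-size hypothesis and the factor of $4$ in the key estimate are precisely calibrated so that the Neumann bound produces exactly $\zeta$ (rather than $2\zeta/(\zeta+1)$), which is what makes the final constant $4\zeta^2$ tight.
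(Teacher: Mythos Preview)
Your proposal is correct and reaches the same pointwise bounds $\|\nabla h(x)\|^2 \le 4K_{1d}\chi^2/\lambda$ and $\|\bH h(x)\|_F^2 \le 4K_{2d}\chi^2/\lambda$ as the paper, but by a genuinely shorter route. The paper expands $m_{\mu_0}-m_\pi$ in the Mercer eigenbasis, writes $\nabla h$ as an infinite sum $\sum_i \frac{\varrho_i}{\varrho_i+\lambda}\langle \tfrac{d\mu_0}{d\pi}-1,e_i\rangle\nabla e_i$, justifies interchanging $\nabla$ with the series via a uniform-convergence argument, and then applies Cauchy--Schwarz in $\ell^2$ together with $\frac{\varrho_i}{(\varrho_i+\lambda)^2}\le \frac{1}{\lambda}$ and $\sum_i\varrho_i\|\nabla e_i(x)\|^2\le K_{1d}$. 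You bypass all of this by first proving the single scalar bound $\|h\|_\calH^2\le 4\chi^2/\lambda$ (via one operator-norm peel of $(\Sigma_\pi+\lambda\Id)^{-1/2}$ and the trivial inequality $\dmmd\le(1+\lambda)\chi^2$), and then invoking the derivative reproducing property $\partial_i h(x)=\langle\partial_i k(x,\cdot),h\rangle_\calH$ directly. This is more elementary and avoids the eigenfunction bookkeeping entirely. For the Neumann step, you control $\|(\Id-t(1+\lambda)\bH h)^{-1}\|_{\mathrm{op}}$ and then use $\|AB\|_F\le\|A\|_F\|B\|_{\mathrm{op}}$; the paper instead writes the Neumann bound in Frobenius norm, which is slightly looser in its intermediate step (since $\|\Id\|_F=\sqrt{d}$) though it recovers the same final constant. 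Your organization around a single ``key estimate'' is a clean improvement.
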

\begin{proof}
First, recall \eqref{appeq:m_mu_m_pi},
\begin{align}
    \left\| \nabla h (x) \right\|^2 &= 4 \left\| \nabla (\Sigma_\pi + \lambda \Id)^{-1} (m_\pi - m_{\mu_0})(x) \right\|^2 \nonumber \\
    &= 4 \left\| \nabla \left( \sum_{i \geq 1} \frac{\varrho_i}{\varrho_i + \lambda} \PSi{\frac{ d \mu_0}{ d \pi} - 1, e_i}{L^2(\pi)} e_i(x) \right) \right\|^2. \label{appeq:nabla_h_1}
\end{align}
For any $j \in \{1, \cdots, d\}$, consider 
\begin{align*}
    g_{M_0}(x) &:= \sum_{i \geq M_0} \left| \varrho_i^{1/2} \left\langle \frac{d \mu_0}{ d \pi} - 1, e_i \right\rangle_{L^2(\pi)} \partial_j e_i(x)\right| \\
    &\leq \left( \sum_{i \geq M_0} \varrho_i \partial_j e_i(x)^2 \right)^{\frac{1}{2}} \left( \sum_{i \geq M_0} \PSi{ \frac{d \mu_0}{ d \pi} - 1 , e_i}{L^2(\pi)} ^2 \right)^{\frac{1}{2}} \\
    &\leq \left( \sum_{i \geq M_0} \varrho_i \partial_j e_i(x)^2 \right)^{\frac{1}{2}} \left\| \frac{d \mu_0}{ d \pi} - 1 \right\|_{L^2(\pi)}.
\end{align*}
Since
\begin{align}\label{appeq:sum_kernel_gradient}
    \sum_{i \geq 1} \sum_{j=1}^d \varrho_i \left( \partial_j e_i(x) \right)^2 &= \sum_{i \geq 1} \sum_{j=1}^d \varrho_i \PSi{\partial_j k(x, \cdot), e_i}{\calH}^2 = \sum_{i \geq 1} \sum_{j=1}^d \PSi{\partial_j k(x, \cdot), \sqrt{\varrho_i} e_i}{\calH}^2 \nonumber \\
    &= \sum_{j=1}^d \left\| \partial_j k(x, \cdot) \right\|_{\calH}^2 = \left\| \nabla_1 k(x,\cdot) \right\|_\calH^2 \leq K_{1d} ,
\end{align}
so $ \sum_{i \geq M_0} \varrho_i (\partial_j e_i(x))^2$ converges uniformly to $0$, and hence $g_{M_0}(x)$ also converges uniformly to $0$. Therefore, we are allowed to interchange the derivative and the infinite sum \citep{rudin1976principles} in \eqref{appeq:nabla_h_1} to achieve,
\begin{align}
    \left\| \nabla h (x) \right\|^2 &= 4 \left\| \sum_{i \geq 1} \frac{\varrho_i}{\varrho_i + \lambda} \PSi{\frac{d \mu_0}{ d \pi} - 1, e_i}{L^2(\pi)} \nabla e_i(x) \right\|^2 \nonumber \\
    &\leq 4 \left( \sum_{i \geq 1} \frac{\varrho_i}{(\varrho_i + \lambda)^2 } \PSi{\frac{ d \mu_0}{d \pi} - 1, e_i}{L^2(\pi)}^2 \right) \left( \sum_{i\geq 1 } \varrho_i \left\| \nabla e_i(x) \right\|^2 \right) \nonumber \\
    &\leq \frac{4}{\lambda} \left( \sum_{i \geq 1} \PSi{\frac{ d \mu_0}{d \pi} - 1, e_i}{L^2(\pi)}^2 \right) \left( \sum_{i\geq 1 } \varrho_i \left\| \nabla e_i(x) \right\|^2 \right) 
    \leq 4 \chi^2(\mu_0 \| \pi) \frac{K_{1d}}{\lambda} . \label{eq:nabla_h}
\end{align}
The first inequality follows from Cauchy Schwartz, the penultimate inequality follows by noting that $\frac{\varrho_i}{(\varrho_i + \lambda)^2} \leq \frac{1}{\lambda}$, and the last inequality follows from \eqref{appeq:sum_kernel_gradient}.
Given $\bH V \preceq \beta \Id$, \eqref{eq:hessian_terms_1} is proved by the following,
\begin{align*}
    \int \nabla h( x )^\top \bH V( \varphi_t(x) ) \nabla h(x) d \mu_0(x) \leq \beta \int \left\| \nabla h (x) \right\|^2 d\mu_0(x) \leq 4 \beta \chi^2(\mu_0 \| \pi) \frac{K_{1d}}{\lambda} .
\end{align*}
We now turn to proving the second statement. Similarly to \eqref{eq:nabla_h}, we have
\begin{align}\label{appeq:hess_h}
    \left\| \bH h(x) \right\|_F^2 \leq 4 \chi^2(\mu_0 \| \pi) \frac{K_{2d}}{\lambda} .
\end{align}
Using $2 (1+\lambda) \gamma \sqrt{ \chi^2(\mu_0 \| \pi) \frac{K_{2d}}{\lambda} } \leq \frac{\zeta - 1}{\zeta}$ for some constant $1 < \zeta < 2$, the inverse of $ \Id - t (1+\lambda) \bH h(x)$ can be represented by the Neumann series, and hence
\begin{align}\label{appeq:hessian_hs}
    \left\|(\Id - t (1+\lambda) \bH h(x))^{-1} \right\|_F &\leq \sum_{m \geq 0} \|t (1+\lambda) \bH h(x)\|_F^m \leq \sum_{m \geq 0} \left( \gamma(1+\lambda) 2 \sqrt{ \chi^2(\mu_0 \| \pi) \frac{K_{2d}}{\lambda} } \right)^m \nonumber \\
    &\leq \sum_{m \geq 0} \left( \frac{\zeta - 1}{\zeta} \right)^m = \zeta.
\end{align}
Therefore, \eqref{eq:hessian_terms_2} is proved by combining \eqref{appeq:hess_h} and \eqref{appeq:hessian_hs},
\begin{align*}
    &\int \left\| \bH h(x) \left(\Id -t (1+\lambda) \bH h(x) \right)^{-1} \right\|_F^2 d\mu_0(x) \\
    &\leq \int \left\| \bH h(x) \right\|_F^2 \left\|(\Id - t (1+\lambda) \bH h(x))^{-1} \right\|_F^2 d\mu_0(x) 
    \leq 4 \zeta^2 \chi^2(\mu_0 \| \pi) \frac{K_{2d}}{\lambda} 
\end{align*}
and the result follows.
\end{proof}
\vspace{-15pt}

\begin{lem}\label{lem:hoeffding}
Let $\calH$ be a separable Hilbert space, $\xi_1, \ldots, \xi_n: \Omega \rightarrow \calH$ are $n$ identical independent $\calH$-valued random variables satisfying $\left\| \xi_i \right\|_\calH \leq B$. Then
\begin{align*}
    \E \left\|\frac{1}{n} \sum_{i=1}^n \xi_i - \E[\xi_1] \right\|_\calH \leq \frac{\sqrt{2 \pi} B}{2 \sqrt{n}} ,\quad\text{and} \quad \quad \E \left\|\frac{1}{n} \sum_{i=1}^n \xi_i - \E[\xi_1] \right\|_\calH^2 \leq \frac{B^2}{n} .
\end{align*}
\end{lem}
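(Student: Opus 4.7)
The plan is to reduce both bounds to a single direct second-moment computation, exploiting the Hilbert-space structure and independence.

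First I would center the variables by setting $\eta_i := \xi_i - \E[\xi_1]$, so that $\eta_1,\dots,\eta_n$ are i.i.d., mean-zero, and $\mathcal{H}$-valued (Bochner integrability is automatic since $\|\xi_i\|_{\mathcal H}\le B$, and $\mathcal H$ is separable). Then I would expand the squared norm using the inner product:
\begin{align*}
\E\left\|\tfrac{1}{n}\sum_{i=1}^n \xi_i - \E[\xi_1]\right\|_{\mathcal H}^2
= \tfrac{1}{n^2}\sum_{i,j=1}^n \E\langle \eta_i,\eta_j\rangle_{\mathcal H}.
\end{align*}
For $i\neq j$, independence plus $\E[\eta_i]=0$ gives $\E\langle \eta_i,\eta_j\rangle_{\mathcal H}=\langle \E\eta_i,\E\eta_j\rangle_{\mathcal H}=0$, so only the $n$ diagonal terms survive and the sum collapses to $\tfrac{1}{n}\E\|\eta_1\|_{\mathcal H}^2$. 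A standard bias-variance identity then gives $\E\|\eta_1\|_{\mathcal H}^2 = \E\|\xi_1\|_{\mathcal H}^2 - \|\E\xi_1\|_{\mathcal H}^2 \le \E\|\xi_1\|_{\mathcal H}^2 \le B^2$, which proves the second inequality.

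For the first inequality I would apply Jensen's inequality to the concave function $t\mapsto\sqrt{t}$:
\begin{align*}
\E\left\|\tfrac{1}{n}\sum_{i=1}^n \xi_i - \E[\xi_1]\right\|_{\mathcal H}
\le \sqrt{\E\left\|\tfrac{1}{n}\sum_{i=1}^n \xi_i - \E[\xi_1]\right\|_{\mathcal H}^2}
\le \frac{B}{\sqrt{n}} \le \frac{\sqrt{2\pi}\,B}{2\sqrt{n}},
\end{align*}
where the final step uses $\sqrt{2\pi}/2>1$ to match the constant stated in the lemma.

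There is no real obstacle here: the only point requiring minor care is justifying that $\E\langle \eta_i,\eta_j\rangle_{\mathcal H} = \langle \E\eta_i,\E\eta_j\rangle_{\mathcal H}$ for independent Bochner-integrable random elements, which follows from Fubini's theorem applied to the product measure together with the continuity of the inner product. Everything else is a one-line computation.
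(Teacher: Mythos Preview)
Your proof is correct and takes a genuinely different route from the paper. The paper invokes a Hoeffding-type concentration inequality for Hilbert-space valued random variables (Corollary 6.15 of Steinwart, 2008),
\[
\Pb\!\left(\left\|\tfrac{1}{n}\sum_{i=1}^n \xi_i - \E[\xi_1]\right\|_{\calH} \ge t\right) \le 2\exp(-2nt^2/B^2),
\]
and then integrates the tail to obtain the first-moment bound; the second-moment bound is obtained analogously. By contrast, you compute the second moment directly via the variance expansion and then derive the first-moment bound from Jensen's inequality. Your argument is more elementary (no black-box concentration result needed) and in fact yields the sharper constant $B/\sqrt{n}$ for the first moment, which you then relax to match the paper's $\sqrt{2\pi}B/(2\sqrt{n})$. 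The paper's route has the side benefit of giving an exponential tail bound along the way, but for the purposes of this lemma your direct computation is cleaner and entirely sufficient.
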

\begin{proof}
We know from Corollary 6.15 of \cite{steinwart2008support} that
\begin{align*}
    \Pb \left( \left\|\frac{1}{n} \sum_{i=1}^n \xi_i - \mathbb{E} \left[\xi_1 \right] \right\|_\calH \geq t\right) \leq 2 \exp \left(-2 n t^2 /B^2 \right) .
\end{align*}
Then, denote $R := \| \frac{1}{n} \sum_{i=1}^n \xi_i - \E[\xi_1] \|_\calH$, we know that
\begin{align*}
    \E[R] &= \int_0^\infty \Pb(R \geq t) dt  \leq 2 \int_0^\infty \exp(- 2 n t^2 / B^2 ) dt = \frac{\sqrt{2 \pi} B}{2 \sqrt{n}}.
\end{align*}
The other part is proved similarly.
\end{proof}
\vspace{-15pt}

\definecolor{c_1}{rgb}{1.0, 0.7, 0.8}
\definecolor{c_2}{rgb}{0.17254901960784313, 0.6274509803921569, 0.17254901960784313}
\definecolor{c_3}{rgb}{0.5803921568627451, 0.403921568627451, 0.7411764705882353}
\definecolor{c_4}{rgb}{0.5490196078431373, 0.33725490196078434, 0.29411764705882354}
\definecolor{c_5}{rgb}{0.0, 0.6, 0.8}
\definecolor{c_6}{rgb}{1.0, 0.6, 0.2}
\definecolor{c_7}{rgb}{1.0, 0.0, 0.0}
\definecolor{c_8}{rgb}{0.4, 0.0, 0.6}

\begin{lem}[Wasserstein Hessian of $\calF_{\chi^2}$]\label{lem:wass_chi2}
Let $\rho \in \mathcal{P}_2(\R^d) $ and $\phi \in C_c^\infty (\R^d) $. Consider the path $(\rho_s)_{0\leq s \leq 1}$ from $\rho$ to $(\Id + \nabla \phi)_{\#} \rho$ given by $\rho_s = (\Id + s \nabla \phi)_{\#} \rho$. 
Define $\varphi_s: \R^d \to \R^d, x \mapsto x + s \nabla \phi(x)$ and $\omega_s: \R^d \to \R^d, x \mapsto \left[\nabla \phi \circ \varphi_s^{-1}\right](x)$.
For $\pi(x) \propto \exp(-V(x))$, the second order derivative of $s \to \chi^2(\rho_s \| \pi)$ is given by
\begin{align}
    & \quad \frac{d^2}{d s^2} \chi^2(\rho_s \| \pi) = \int \frac{\rho_s(x)}{\pi(x)} \left( \nabla V(x)^\top \omega_s(x) - \nabla \cdot \omega_s(x) \right)^2 d\rho_s(x)\nonumber\\
    &\quad+  \int \frac{\rho_s(x)}{\pi(x)} \omega_s(x)^\top \bH V(x) \omega_s(x) d\rho_s(x) +  \int \frac{\rho_s(x)}{\pi(x)} \left\| \nabla \omega_s(x) \right\|_F^2 d \rho_s(x) . \label{appeq:wess_chi2_1}
\end{align}
Equivalently, the second order derivative of $s \to \chi^2(\rho_s \| \pi)$ can also be written as
\begin{align}\label{appeq:wess_chi2_2}
    \frac{d^2}{d s^2} \chi^2(\rho_s \| \pi) &= 2 \int\left( \nabla \cdot \left( \omega_s(x) \rho_s(x) \right) \frac{1}{\pi(x)} \right)^2 \pi(x) d x \nonumber\\
    &\qquad\qquad+ 2 \int \omega_s(x)^\top \bH \left(\frac{\rho_s(x) }{\pi(x)}\right) \omega_s(x) \rho_s(x) dx.
\end{align}
\end{lem}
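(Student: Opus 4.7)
The proof naturally splits into two parts, one for each equivalent expression. The cleanest strategy is to derive \eqref{appeq:wess_chi2_1} first by invoking a known formula (Villani's Example 15.9), then derive \eqref{appeq:wess_chi2_2} directly from the continuity equation, and finally observe the two expressions coincide. An alternative, somewhat more economical, strategy would be to derive only one of them from first principles and then verify the two are algebraically equivalent via integration by parts.

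\textbf{Step 1 (first expression \eqref{appeq:wess_chi2_1}).} I would note that the shifted path $(\rho_{s+u})_{u \in [0, 1-s]}$ is itself a constant-speed Wasserstein geodesic starting at $\rho_s$, with initial velocity field $\omega_s = \nabla\phi \circ \varphi_s^{-1}$ (cf.~\Cref{appsec:wass_discussion}). Formula \eqref{appeq:hess_chi2} from Villani's Example 15.9 with $m=2$ therefore applies \emph{at} $\rho_s$ along $\omega_s$, instead of at $\mu$ along $\nabla\phi$ as originally stated. Since $\frac{d^2}{du^2}\big|_{u=0}\chi^2(\rho_{s+u}\|\pi)=\frac{d^2}{ds^2}\chi^2(\rho_s\|\pi)$, substituting $(\mu,\nabla\phi)\mapsto(\rho_s,\omega_s)$ in \eqref{appeq:hess_chi2} yields \eqref{appeq:wess_chi2_1} directly.

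\textbf{Step 2 (second expression \eqref{appeq:wess_chi2_2}).} Write $f_s \defines d\rho_s/d\pi$ so that $\chi^2(\rho_s\|\pi)=\int f_s^2\,d\pi - 1$. The path $(\rho_s)$ satisfies the continuity equation $\partial_s \rho_s + \nabla\cdot(\omega_s \rho_s) = 0$. Differentiating once and integrating by parts (the boundary terms vanish since $\phi\in C_c^\infty$ implies $\omega_s$ is compactly supported for $s$ small), one gets
\begin{equation*}
    \tfrac{d}{ds}\chi^2(\rho_s\|\pi) \;=\; 2\int \omega_s\cdot \nabla f_s\, d\rho_s.
\end{equation*}
Differentiating once more produces three contributions (from $f_s$, $\omega_s$, and $\rho_s$). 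For the $\omega_s$-contribution I would invoke the pressureless Euler equation $\partial_s \omega_s + (D\omega_s)\,\omega_s = 0$, which characterizes constant-speed Wasserstein geodesics in $\calP_2(\R^d)$. Combining the three contributions and applying integration by parts once more collapses the cross-terms into the two quantities appearing in \eqref{appeq:wess_chi2_2}: the first term $2\int (\nabla\cdot(\omega_s\rho_s))^2/\pi\,dx$ comes from the $f_s$-derivative and half of the $\rho_s$-derivative, while the Hessian term $2\int \omega_s^\top \bH(\rho_s/\pi)\omega_s\,\rho_s\,dx$ comes from combining the $\omega_s$-derivative with the remainder.

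\textbf{Main obstacle.} The delicate point is Step 2: keeping track of three simultaneously $s$-dependent objects ($f_s$, $\omega_s$, $\rho_s$) and correctly applying the Euler equation for $\omega_s$. A safer route, which I would likely use in the final write-up, is to perform Step 1 as above and then verify \eqref{appeq:wess_chi2_1} equals \eqref{appeq:wess_chi2_2} purely by integration by parts, exploiting the identity $\nabla V^\top \omega_s - \nabla\cdot\omega_s = -\pi^{-1}\nabla\cdot(\omega_s \pi)$ (which follows from $V=-\log\pi+\text{const}$). Squaring this identity recovers the first term of \eqref{appeq:wess_chi2_2} after an integration by parts that absorbs the $\pi^{-1}$ factor against $\rho_s/\pi$, while the $\bH V$ and $\|\nabla\omega_s\|_F^2$ terms in \eqref{appeq:wess_chi2_1} combine (again via integration by parts) to the $\omega_s^\top \bH(\rho_s/\pi)\omega_s$ term in \eqref{appeq:wess_chi2_2}. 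This algebraic route avoids invoking the Euler equation explicitly and is likely the path of least resistance.
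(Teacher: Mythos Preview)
Your overall strategy matches the paper's exactly: Step 1 cites Villani's Example 15.9 (with the shift $(\mu,\nabla\phi)\mapsto(\rho_s,\omega_s)$, which the paper leaves implicit), and the ``safer route'' you settle on---showing \eqref{appeq:wess_chi2_1} $=$ \eqref{appeq:wess_chi2_2} by pure integration-by-parts algebra starting from the identity $\nabla V^\top\omega_s-\nabla\cdot\omega_s=-\pi^{-1}\nabla\cdot(\omega_s\pi)$---is precisely what the paper carries out.

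One correction on the execution: your claimed partition (first term of \eqref{appeq:wess_chi2_1} $\to$ first term of \eqref{appeq:wess_chi2_2}, and $\bH V$ $+$ $\|\nabla\omega_s\|_F^2$ terms $\to$ Hessian term) does not hold. If you square the identity and multiply by $\rho_s^2/\pi$, you obtain $\int \rho_s^2\pi^{-3}(\nabla\cdot(\omega_s\pi))^2\,dx$, not $2\int \pi^{-1}(\nabla\cdot(\omega_s\rho_s))^2\,dx$; no single integration by parts bridges these. The paper's computation shows that after expanding the square into three pieces and integrating each by parts (some of them twice), fragments of the squared term migrate into the $\bH(\rho_s/\pi)$ term and vice versa, with substantial cross-cancellation before the two target terms emerge. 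So the algebraic route is correct in principle but messier than your sketch suggests; budget for a page of term-tracking rather than two lines.
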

\begin{proof}
\eqref{appeq:wess_chi2_1} is provided in the Example 15.9 of \cite{villani2009optimal} by taking $m=2$. Now, we are going to prove \eqref{appeq:wess_chi2_2}. For ease of notation in the following derivations, we are going to drop the function input $x$ in $\rho_s, \pi$, and $\omega_s$.
We introduce colors to picture grouping of terms that will carry over during chains of calculation.

In order to prove \eqref{appeq:wess_chi2_2}, we need to expand the terms in \eqref{appeq:wess_chi2_1} accordingly. We denote the three terms in the RHS of \eqref{appeq:wess_chi2_1} as $(A)$, $\textcolor{c_1}{(B)}$ and $\textcolor{c_2}{(C)}$. 
Consider 
 \begin{align*}
    (A) &= \int \frac{\rho_s^2}{\pi} \left( \nabla V^\top \omega_s  - \nabla \cdot \omega_s \right)^2 dx \\
    &= \int \frac{\rho_s^2}{\pi} \left( -\frac{1}{\pi} \nabla \pi^\top \omega_s  - \nabla \cdot \omega_s \right)^2 dx \\ 
    &= \textcolor{c_3}{ \underbrace{\int \frac{\rho_s^2}{\pi^3} (\nabla \pi^\top \omega_s)^2 dx }_{(A_1)} } + 
    \textcolor{c_4}{\underbrace{2 \int \frac{\rho_s^2}{\pi^2}  (\nabla \pi^\top \omega_s) \nabla \cdot \omega_s dx }_{(A_2)}} + 
    \textcolor{c_5}{\underbrace{\int \frac{\rho_s^2}{\pi} (\nabla \cdot \omega_s)^2 dx}_{(A_3)} }.
\end{align*}
Then we are going to use integration by parts for $\textcolor{c_4}{ (A_2)}$ and $\textcolor{c_5}{(A_3)}$. 
\begin{align*}
    \textcolor{c_4}{ (A_2) } &= 2 \int \frac{\rho_s^2}{\pi^2}  (\nabla \pi^\top \omega_s) \nabla \cdot \omega_s dx = -2 \int \omega_s^\top \nabla \left( \frac{\rho_s^2}{\pi^2} (\nabla \pi^\top \omega_s) \right)  dx \\
    &= -4 \int (\omega_s^\top \nabla \rho_s) (\nabla \pi^\top \omega_s) \frac{\rho_s}{\pi^2} dx + 4 \int (\omega_s^\top \nabla \pi)^2 \frac{\rho_s^2}{\pi^3} dx \\
    &- 2 \int (\omega_s^\top \bH \pi \omega_s) \frac{\rho_s^2}{\pi^2} dx - 2 \int ( \omega_s^\top \nabla \omega_s \nabla \pi) \frac{\rho_s^2}{\pi^2} dx, \\
    \textcolor{c_5}{ (A_3) } &= \int \frac{\rho_s^2}{\pi} (\nabla \cdot \omega_s)^2 dx = \int \nabla \cdot (\omega_s) \nabla \cdot \omega_s \frac{\rho_s^2}{\pi} dx =  - \int \omega_s^\top \nabla \left( \nabla \cdot \omega_s \frac{\rho_s^2}{\pi} \right) dx \\
    &= \textcolor{c_6}{ \underbrace{- \int \omega_s^\top \nabla (\nabla \cdot \omega_s) \frac{\rho_s^2}{\pi}  dx}_{(A_{31})} }
    \textcolor{c_7}{ \underbrace{ - \int \nabla \cdot \omega_s ( \omega_s^\top \nabla \rho_s) \frac{2 \rho_s}{\pi}  dx }_{(A_{32})} }
    \textcolor{c_8}{ \underbrace{ + \int \nabla \cdot \omega_s (\omega_s^\top \nabla \pi) \frac{\rho_s^2 }{\pi^2}  dx }_{(A_{33})} } .
\end{align*}
Furthermore, we use integration by parts for $\textcolor{c_7}{ (A_{32}) }$ and $\textcolor{c_8}{ (A_{33}) }$, we have 
\begin{align*}
    \textcolor{c_7}{ (A_{32}) } &= -2 \int \nabla \cdot \omega_s ( \omega_s^\top \nabla \rho_s) \frac{ \rho_s}{\pi}  dx = 2 \int ( \omega_s^\top  \nabla \left( 
    ( \omega_s^\top \nabla \rho_s) \frac{\rho_s}{\pi} \right) ) dx \\
    &= 2 \int (\omega_s^\top \nabla \omega_s \nabla \rho_s) \frac{\rho_s}{\pi} dx + 2 \int (\omega_s^\top \bH \rho_s \omega_s ) \frac{\rho_s}{\pi} dx + 2 \int ( \omega_s^\top \nabla \rho_s)^2 \frac{1}{\pi} dx \\
    &- 2 \int ( \omega_s^\top \nabla \rho_s) (\omega_s^\top \nabla \pi) \frac{\rho_s}{\pi^2} dx, \\
    \textcolor{c_8}{  (A_{33}) } &= \int \nabla \cdot \omega_s (\omega_s^\top \nabla \pi) \frac{\rho_s^2 }{\pi^2}  dx = -\int ( \omega_s^\top  \nabla \left( 
    (\omega_s^\top \nabla \pi) \frac{\rho_s^2}{\pi^2} \right)  dx \\
    &= -\int ( \omega_s^\top \nabla \omega_s \nabla \pi) \frac{\rho_s^2}{\pi^2} dx - \int (\omega_s^\top \bH \pi \omega_s) \frac{\rho_s^2}{\pi^2} dx -2 \int ( \omega_s^\top \nabla \rho_s) (\omega_s^\top \nabla \pi) \frac{\rho_s}{\pi^2} dx \\
    &\qquad\qquad+2 \int (\omega_s^\top \nabla \pi)^2 \frac{\rho_s^2}{\pi^3} dx  .
\end{align*}
Having completed $(A)$, now we turn to $\textcolor{c_1}{ (B) }$.
\begin{align*}
    \textcolor{c_1}{ (B) } &= \int \frac{\rho_s^2}{\pi} \omega_s^\top \bH V \omega_s dx = - \int \omega_s^\top \bH \pi \omega_s \frac{\rho_s^2}{\pi^2} dx + \int (\nabla \pi^\top \omega_s)^2 \frac{\rho_s^2}{\pi^3} dx .
\end{align*}
So, combining $(A)$, $\textcolor{c_1}{ (B) }$ and $\textcolor{c_2}{ (C) }$, we have 
\begin{align*}
    &\quad \frac{d^2}{ds^2} \chi^2(\rho_s\|\pi) = (A) + \textcolor{c_1}{ (B) } + \textcolor{c_2}{ (C) } = \textcolor{c_3}{ (A_1)} + \textcolor{c_4}{ (A_2) } + \textcolor{c_6}{ (A_{31})} + \textcolor{c_7}{ (A_{32}) } + \textcolor{c_8}{ (A_{33}) } + \textcolor{c_1}{ (B) } + \textcolor{c_2}{ (C) } \\
    & = \textcolor{c_3}{ \int \frac{\rho_s^2}{\pi^3} (\nabla \pi^\top \omega_s)^2 dx } \\
    & \qquad\textcolor{c_4}{ -4 \int ( \omega_s^\top \nabla \rho_s) (\nabla \pi^\top \omega_s) \frac{\rho_s}{\pi^2} dx + 4 \int (\omega_s^\top \nabla \pi)^2 \frac{\rho_s^2}{\pi^3} dx - 2 \int (\omega_s^\top \bH \pi \omega_s) \frac{\rho_s^2}{\pi^2} dx } \\
    &\qquad\qquad\qquad\textcolor{c_4}{ - 2 \int ( \omega_s^\top \nabla \omega_s \nabla \pi) \frac{\rho_s^2}{\pi^2} dx } + \textcolor{c_6}{ (A_{31})} + \textcolor{c_7}{ (A_{32}) } + \textcolor{c_8}{ (A_{33}) }  \\
    &\qquad\qquad\qquad\qquad\textcolor{c_1}{ - \int ( \omega_s^\top \bH \pi  \omega_s) \frac{\rho_s^2}{\pi^2} dx + \int (\nabla \pi^\top \omega_s)^2 \frac{\rho_s^2}{\pi^3} dx } + \textcolor{c_2}{ \int \frac{\rho_s^2}{\pi} \left\| \nabla \omega_s \right\|_F^2 dx } .
\end{align*}
Since $\omega_s = \nabla \phi \circ \varphi_s^{-1}$ is a mapping from $\R^d$ to $\R^d$, denote $\omega_{s, i} := \left[\nabla \phi \circ \varphi_s^{-1}\right]_i$ which is a mapping from $\R^d$ to $\R$.
Notice that $\omega_{s, i} \partial_{j} \omega_{s, i} $ vanishes at boundary because $\phi \in C_c^\infty$. Hence,
\begin{align*}
    0 &= \sum_{i,j} \int \partial_j \left( \omega_{s, i} \partial_{j} \omega_{s, i} \frac{\rho_s^2}{\pi} \right) dx \\
    &= \sum_{i,j} \int  \partial_{j} \omega_{s, i}  \partial_{j} \omega_{s, i} \frac{\rho_s^2}{\pi} dx + 
    \sum_{i,j} \int \omega_{s, i} \partial_{jj} \omega_{s, i} \frac{\rho_s^2}{\pi} dx \nonumber\\
    &\qquad\qquad+\sum_{i,j} \int \omega_{s, i} \partial_{j} \omega_{s, i} \left( \frac{2 \rho_s \partial_j \rho_s}{\pi} - \frac{\rho_s^2 \partial_j \pi}{\pi^2} \right) dx \\ 
    &= \textcolor{c_2}{ \int \frac{\rho_s^2}{\pi} \left\| \nabla \omega_s \right\|_F^2 dx} + \int \frac{\rho_s^2}{\pi} ( \omega_s^\top  \nabla(\nabla \cdot \omega_s)) dx + \int (\omega_s^\top \nabla \omega_s \nabla \rho_s) \frac{2 \rho_s}{\pi} dx \\
    &\qquad\qquad- \int ( \omega_s^\top \nabla \omega_s \nabla \pi) \frac{\rho_s^2}{\pi^2} dx .
\end{align*}
Therefore, by replacing $\textcolor{c_2}{ \int \frac{\rho_s^2}{\pi} \left\| \nabla \omega_s \right\|_F^2 dx }$, and noticing that $ \textcolor{c_6}{ -\int \frac{\rho_s^2}{\pi} ( \omega_s^\top  \nabla(\nabla \cdot \omega_s)) dx }$ is exactly $\textcolor{c_6}{ (A_{31}) }$, we  have the following:
\begin{align*}
    &\quad \frac{d^2}{ds^2} \chi^2(\rho_s\|\pi) = \textcolor{c_3}{ \int \frac{\rho_s^2}{\pi^3} (\nabla \pi^\top \omega_s)^2 dx}
    - \textcolor{c_4}{ 4 \int ( \omega_s^\top \nabla \rho_s) (\nabla \pi^\top \omega_s) \frac{\rho_s}{\pi^2} dx + 4 \int (\omega_s^\top \nabla \pi)^2 \frac{\rho_s^2}{\pi^3} dx } \\
    & \qquad\textcolor{c_4}{ - 2 \int (\omega_s^\top \bH \pi \omega_s) \frac{\rho_s^2}{\pi^2} dx - 2 \int ( \omega_s^\top \nabla \omega_s \nabla \pi) \frac{\rho_s^2}{\pi^2} dx} + \textcolor{c_6}{ (A_{31})} + \textcolor{c_7}{ (A_{32}) } + \textcolor{c_8}{ (A_{33}) } \\
    &\qquad\qquad \textcolor{c_1}{- \int ( \omega_s^\top \bH \pi  \omega_s) \frac{\rho_s^2}{\pi^2} dx + \int (\nabla \pi^\top \omega_s)^2 \frac{\rho_s^2}{\pi^3} dx } \\
    & \qquad\qquad\qquad-\int \frac{\rho_s^2}{\pi} ( \omega_s^\top  \nabla(\nabla \cdot \omega_s)) dx - \int (\omega_s^\top \nabla \omega_s \nabla \rho_s) \frac{2 \rho_s}{\pi} dx + \int ( \omega_s^\top \nabla \omega_s \nabla \pi) \frac{\rho_s^2}{\pi^2} dx .
\end{align*}
Next, we combine the terms and obtain
\begin{align*}
    &= 2( \textcolor{c_6}{ (A_{31})} + \textcolor{c_7}{ (A_{32}) } + \textcolor{c_8}{ (A_{33}) } ) + 6 \int \frac{\rho_s^2}{\pi^3} (\nabla \pi^\top \omega_s)^2 dx - 4 \int ( \omega_s^\top \nabla \rho_s) (\nabla \pi^\top \omega_s) \frac{\rho_s}{\pi^2} dx \\
    &\,\,- 3 \int (\omega_s^\top \bH \pi \omega_s) \frac{\rho_s^2}{\pi^2} dx - \int ( \omega_s^\top \nabla \omega_s \nabla \pi) \frac{\rho_s^2}{\pi^2} dx - 2 \int (\omega_s^\top \nabla \omega_s \nabla \rho_s) \frac{ \rho_s}{\pi} dx - \textcolor{c_7}{ ( A_{32}) } - \textcolor{c_8}{ (A_{33}) }.
\end{align*}
Recall that $\textcolor{c_6}{ (A_{31})} + \textcolor{c_7}{ (A_{32}) } + \textcolor{c_8}{ (A_{33}) } = \textcolor{c_5}{ (A_3)}$, and replacing $\textcolor{c_7}{ (A_{32}) } $ and $\textcolor{c_8}{ (A_{33}) }$ with their expressions, we have
\begin{footnotesize}
\begin{align*}
    &=2 \textcolor{c_5}{ \int \frac{\rho_s^2}{\pi} (\nabla \cdot \omega_s)^2 dx } + 6 \int \frac{\rho_s^2}{\pi^3} (\nabla \pi^\top \omega_s)^2 dx - 4 \int ( \omega_s^\top \nabla \rho_s) (\nabla \pi^\top \omega_s) \frac{\rho_s}{\pi^2} dx \\
    &-3 \int (\omega_s^\top \bH \pi \omega_s) \frac{\rho_s^2}{\pi^2} dx 
    - \int ( \omega_s^\top \nabla \omega_s \nabla \pi) \frac{\rho_s^2}{\pi^2} dx - 2 \int (\omega_s^\top \nabla \omega_s \nabla \rho_s) \frac{\rho_s}{\pi} dx \\
    &- \textcolor{c_7}{ \left( 2 \int (\omega_s^\top \nabla \omega_s \nabla \rho_s) \frac{\rho_s}{\pi} dx + 2 \int (\omega_s^\top \bH \rho_s \omega_s ) \frac{\rho_s}{\pi} dx + 2 \int ( \omega_s^\top \nabla \rho_s)^2 \frac{1}{\pi} dx -  2 \int ( \omega_s^\top \nabla \rho_s) (\omega_s^\top \nabla \pi) \frac{\rho_s}{\pi^2} dx \right) } \\
    &- \textcolor{c_8}{ \left( -  \int ( \omega_s^\top \nabla \omega_s \nabla \pi) \frac{\rho_s^2}{\pi^2} dx -  \int (\omega_s^\top \bH \pi \omega_s) \frac{\rho_s^2}{\pi^2} dx  -  2 \int ( \omega_s^\top \nabla \rho_s) (\omega_s^\top \nabla \pi) \frac{\rho_s}{\pi^2} dx + 2 \int (\omega_s^\top \nabla \pi)^2 \frac{\rho_s^2}{\pi^3} dx  \right)} \\
    &= \underbrace{ 2\int (\omega_s^\top \bH \rho_s \omega_s ) \frac{\rho_s}{\pi} dx - 2\int (\omega_s^\top \bH \pi \omega_s) \frac{\rho_s^2}{\pi^2} dx - 4 \int ( \omega_s^\top \nabla \rho_s) (\omega_s^\top \nabla \pi) \frac{\rho_s}{\pi^2} dx + 4 \int \frac{\rho_s^2}{\pi^3} ( \nabla \pi^\top \omega_s )^2 dx }_{(M)} \\
    &\qquad+ \underbrace{ 
    2\int \frac{\rho_s^2}{\pi} (\nabla \cdot \omega_s)^2 dx - 2 \int ( \omega_s^\top \nabla \rho_s)^2 \frac{1}{\pi} dx -4 \int (\omega_s^\top \nabla \omega_s \nabla \rho_s) \frac{\rho_s}{\pi} dx }_{(N_1)} \\
    &\qquad\qquad \underbrace{ -4 \int (\omega_s^\top \bH \rho_s \omega_s ) \frac{\rho_s}{\pi} dx + 4 \int ( \omega_s^\top \nabla \rho_s) (\omega_s^\top \nabla \pi) \frac{\rho_s}{\pi^2}  dx}_{(N_2)} .
\end{align*}
\end{footnotesize}
Now we analyze $(M)$ and $(N_1) + (N_2)$ separately. Notice that
\begin{align*}
    (M) = 2\int \left( \omega_s^\top \bH \left( \frac{\rho_s}{\pi} \right) \omega_s \right) \rho_s dx ,
\end{align*} 
and 
\begin{align*}
    &\quad (N_1) + (N_2)
    = 2\int \frac{\rho_s^2}{\pi} (\nabla \cdot \omega_s)^2 dx + 2 \int ( \omega_s^\top \nabla \rho_s)^2 \frac{1}{\pi} dx     - 4 \int (\omega_s^\top \nabla \omega_s \nabla \rho_s) \frac{\rho_s}{\pi} dx \\
    &\qquad\qquad- 4\int (\omega_s^\top \bH \rho_s \omega_s ) \frac{\rho_s}{\pi} dx- 4 \int ( \omega_s^\top \nabla \rho_s)^2 \frac{1}{\pi} dx + 4 \int ( \omega_s^\top \nabla \rho_s) (\omega_s^\top \nabla \pi) \frac{\rho_s}{\pi^2} dx \\
    &= 2\int \frac{\rho_s^2}{\pi} (\nabla \cdot \omega_s)^2 dx + 2 \int ( \omega_s^\top \nabla \rho_s)^2 \frac{1}{\pi} dx -4 \int ( \omega_s^\top  \nabla ( \omega_s^\top \nabla \rho_s) ) \frac{\rho_s}{\pi} dx\\
    &\qquad\qquad- 4 \int ( \omega_s^\top \nabla \rho_s) ( \omega_s^\top  \nabla \frac{\rho_s}{\pi}) dx \\
    &= 2\int \frac{\rho_s^2}{\pi} (\nabla \cdot \omega_s)^2 dx + 2 \int ( \omega_s^\top \nabla \rho_s)^2 \frac{1}{\pi} dx + 4\int \nabla \cdot \omega_s ( \omega_s^\top \nabla \rho_s)  \frac{\rho_s}{\pi} dx \\
    &= 2 \int\left(\nabla \cdot( \omega_s \rho_s ) \frac{1}{\pi} \right)^2 \pi d x .
\end{align*}
Since $\frac{d^2}{ds^2} \chi^2(\rho_s\|\pi) = (M) + (N_1) + (N_2)$, \Cref{lem:wass_chi2} is proved.
\end{proof}

\setcounter{equation}{0}
\appsection{An Illustrative Example for Explicit Forms of $\mathcal{I}_t$, $\mathcal{J}_t$, $\|q_t\|_{L^2(\pi)}$\label{sec:example}}
Consider an illustrative example where we simulate the DrMMD gradient flow when the target is a one-dimensional Gaussian target distribution $\pi = \calN(0, \bar{\sigma}^2)$ and the initialization is also a one-dimensional Gaussian $\mu_0 = \calN(0, \frac{1}{2}\bar{\sigma}^2)$. 
We take a Gaussian kernel $k(x, y) = \exp(-\frac{1}{2}(x-y)^2)$ whose eigenvalues and eigenfunctions in its Mercer decomposition have the following closed form expressions~\citep[Proposition 1]{shi2009data},
\begin{align}\label{eq:eigvalues}
    \varrho_i = \sqrt{\frac{\frac{1}{2\bar{\sigma}^2}}{\frac{1}{2\bar{\sigma}^2} + c + 0.5 }}\left(\frac{\beta^2-1}{\beta^2+1} \right)^i , \quad e_i(x) = \sqrt{\frac{\beta}{i! 2^i}} \exp(-c x^2) \mathrm{H}_i \left( \sqrt{\frac{1}{2 \bar{\sigma}^2}} \beta x \right),
\end{align}
where $\beta = (1+ 4\bar{\sigma}^2)^{1 / 4}$, $c = \frac{\beta^2 - 1}{4\bar{\sigma}^2}$ and $\mathrm{H}_i$ is the $i$-th Hermite polynomial function. 
We pick $\bar{\sigma}^2 > 2$ so $\beta^2 > 3$. 
The Gaussian kernel is continuous, bounded and $c_0$-universal as required in Assumption 1. It also possesses bounded first- and second-order derivatives, thereby satisfying Assumption 2. 

Consider the DrMMD gradient flow $(\mu_t)_{t\geq 0}$ defined in \eqref{eq:cont_eqn} along with its particle update scheme $dx_t = -(1+\lambda)\nabla h_{\mu_t,\pi}(x_t)dt$, where  $h_{\mu_t,\pi}$ is the witness function defined in \eqref{eq:witness_alternative} and $\lambda$ is a positive regularization parameter. 
Denote $m_t, \sigma_t^2$ as the mean and covariance of $\mu_t$, respectively, then we have the following update scheme for $m_t, \sigma_t^2$ proved in \Cref{lem:bures_update}: 
\begin{align*}
    \dd  m_t &= -(1+\lambda)\E_{x_t\sim\mu_t} [\nabla h_{\mu_t,\pi}(x_t)] \; \dd  t ,\\
    \dd (\sigma_t^2) &= -2(1+\lambda) \E_{x_t\sim\mu_t}[\nabla h_{\mu_t,\pi}(x_t) \cdot x_t] \;\dd t + 2(1+\lambda)\E_{x_t\sim\mu_t}[\nabla h_{\mu_t,\pi}(x_t)] \cdot m_t \;\dd  t,
\end{align*}
where the expectations are taken over $x_t\sim\mu_t$. While the resulting distribution is not necessarily Gaussian, we may follow the existing analysis of Stein variational gradient descent~\citep{liu2024towards} and Langevin Monte Carlo dynamics~\citep{lambert2022variational} and approximate $x_t\sim\mu_t$ with a Gaussian random variable $y_t\sim \nu_t=\calN(m_t, \sigma_t^2)$;  this yields the update scheme:
\begin{align}\label{eq:bures_update}
\begin{aligned}
    \dd  m_t &= (1+\lambda)\E_{y_t\sim\nu_t}[\nabla h_{\nu_t,\pi}(y_t)] \; \dd  t, \\
    \dd (\sigma_t^2) &= -2(1+\lambda) \E_{y_t\sim\nu_t}[\nabla h_{\nu_t,\pi}(y_t) \cdot y_t] \;\dd t + 2(1+\lambda)\E_{y_t\sim\nu_t} [\nabla h_{\nu_t,\pi}(y_t)] \cdot m_t \;\dd  t, 
\end{aligned}
\end{align}
which gives an evolution of Gaussian distributions $\nu_t$. 
From \eqref{eq:witness_alternative}, the witness function $h_{\nu_t, \pi}$ admits a decomposition with eigenvalues $\varrho_i$ and eigenfunctions $e_i$: $h_{\nu_t, \pi}(y) = \sum_{i \geq 1} \frac{\varrho_i}{\varrho_i + \lambda} \langle \frac{d \nu_t}{d\pi} - 1, e_i \rangle_{L^2(\pi)} e_i(y)$.
Therefore, the velocity field $\frac{d}{dy} h_{\nu_t, \pi}(y)$ can be written as,
\begin{align*}
    \frac{d}{dy} h_{\nu_t, \pi}(y) = \sum_{i \geq 1} \frac{\varrho_i}{\varrho_i + \lambda} \left\langle \frac{d \nu_t}{d\pi} - 1, e_i \right\rangle_{L^2(\pi)} \frac{d}{dy} e_i(y).
\end{align*}
Notice that if $m_t=0$, then for odd $i$, $\langle \frac{d \nu_t}{d\pi} - 1, e_i\rangle_{L^2(\pi)}= \int e_i d\nu_t - \int e_i d\pi=0$ because $e_i$ is an odd function. When $i$ is even, $\E_{\nu_t}[\frac{d}{dy} e_i(y)] = 0$ because $y\mapsto \frac{d}{dy} e_i(y)$ is an odd function. As a result, if $m_t=0$, then $\E[\nabla h_{\nu_t,\pi}(y_t)]=0$ and hence $\frac{dm_t}{dt} = 0$ from the update scheme in \eqref{eq:bures_update}. Therefore, as long as we initialize the DrMMD gradient flow with $\nu_0 = \calN(0, \frac{1}{2}\bar{\sigma}^2)$, a zero mean Gaussian distribution, the entire trajectory will remain a zero mean Gaussian distribution $\calN(0,\sigma^2_t)$.

Next, observe that for a zero-mean Gaussian trajectory, if the initial variance satisfies $\sigma_0^2 < \bar{\sigma}^2$, it is natural to expect that the variance increases monotonically toward the target variance $\bar{\sigma}^2$ as the DrMMD flow evolves, i.e. $\sigma_0^2 < \sigma_t^2 \leq \bar{\sigma}^2$ for all $t$. 
In \Cref{lem:sign}, we provide a rigorous proof of this claim in the cases $\lambda = 0$ and $\lambda = \infty$, by showing that $(1+\lambda) \E_{y_t\sim\nu_t}[\nabla h_{\nu_t,\pi}(y_t) \cdot y_t] < 0$, which implies that the variance update in \eqref{eq:bures_update} is monotone increasing. 
The cases $\lambda=0$ and $\lambda=\infty$ correspond respectively to the $\chi^2$ flow and the MMD flow regimes. 
For general $\lambda > 0$, however, we are unable to establish a rigorous proof. 
Our argument in \Cref{lem:sign} relies heavily on Mehler’s formula~\citep[Proposition 2.2]{liang2022mehler}, which requires exponential decay of the spectrum $(\varrho_i)_{i\geq 1}$. This condition is not satisfied for DrMMD, whose spectrum is modified to $(\frac{\varrho_i}{\varrho_i + \lambda})_{i\geq 1}$. Nevertheless, we conjecture that the monotonicity property continues to hold for all $\lambda > 0$.

\vspace{1em}
\noindent
\textbf{Checking assumptions in Theorem 4.1 and Theorem 5.1:}
Now we check the assumptions from Theorem 4.1 and Theorem 5.1. The target $\pi$ is a Gaussian distribution which automatically satisfies a Poincaré inequality, and its potential $V$ is a quadratic function, hence satisfies $\mathbf{H} V \leq \beta \Id$. $\nu_t, \pi$ are Gaussians, hence absolutely continuous with respect to Lebesgue on $\R$. 
And most importantly, we have $\frac{d\nu_t}{d\pi}-1 \in \operatorname{Ran}(\calT_\pi^{0.25})$, i.e., there exists $q_t \in L^2(\pi) $ such that $\frac{d\nu_t}{d\pi} - 1 = \calT_\pi^{0.25} q_t$. To see why, we first need to upper bound $\langle \frac{d \nu_t}{d\pi} - 1, e_i\rangle_{L^2(\pi)}^2$. From \citet[Corollary 2]{belafhal2020note}, we have the following closed-form expressions for $\langle \frac{d \nu_t}{d\pi} - 1, e_i\rangle_{L^2(\pi)}$: 
\begin{align*}
    \sqrt{\frac{\beta}{i! 2^i}} \left( \sqrt{\frac{1}{1 + 2 \sigma_t^2 c} } \left( 1 - \frac{\frac{1}{2 \bar{\sigma}^2} \beta^2 }{c + \frac{1}{2 \sigma_t^2 }} \right)^{\frac{i}{2} } - \sqrt{\frac{1}{1 + 2 \bar{\sigma}^2 c} } \left( 1 - \frac{\frac{1}{2 \bar{\sigma}^2} \beta^2 }{c + \frac{1}{2\bar{\sigma}^2}} \right)^{\frac{i}{2} }
    \right) \mathrm{H}_{i}(0),
\end{align*}
when $i$ is even and $0$ otherwise. 
Therefore, we have
\begin{align*}
    \left\langle \frac{d \nu_t}{d\pi} - 1, e_i\right\rangle_{L^2(\pi)}^2 \leq \frac{1}{1 + \bar{\sigma}^2 c} \cdot \frac{\beta}{i! 2^i} |\mathrm{H}_i(0)|^2 \cdot 2 \left(\left|\frac{\frac{1}{2 \bar{\sigma}^2} \beta^2 }{c + \frac{1}{2 \sigma_t^2 }} - 1 \right|^i + \left| \frac{\frac{1}{2 \bar{\sigma}^2} \beta^2 }{c + \frac{1}{2\bar{\sigma}^2}} - 1 \right|^i \right) .
\end{align*}
By the monotonicity of the variance, $\sigma_0^2=\frac{1}{2}\bar{\sigma}^2 \leq \sigma_t^2 < \bar{\sigma}^2$ and $\beta^2 > 3$, we have
\begin{align*}
    0 < \frac{2\beta^2}{\beta^2 + 3} - 1 = \frac{\frac{1}{2 \bar{\sigma}^2} \beta^2 }{c + \frac{1}{\bar{\sigma}^2}} - 1 < \frac{\frac{1}{2 \bar{\sigma}^2} \beta^2 }{c + \frac{1}{2\sigma_t^2}} - 1 <  \frac{\frac{1}{2 \bar{\sigma}^2} \beta^2 }{c + \frac{1}{2\bar{\sigma}^2}} - 1  = \frac{2\beta^2}{\beta^2 + 1} -1 . 
\end{align*}
So we have the following upper bound 
\begin{align*}
    \left\langle \frac{d \nu_t}{d\pi} - 1, e_i\right\rangle_{L^2(\pi)}^2 \leq \frac{4}{1 + \bar{\sigma}^2 c} \cdot \frac{\beta}{i! 2^i} |\mathrm{H}_i(0)|^2 \cdot \left(\frac{2\beta^2}{\beta^2 + 1} - 1 \right)^i . 
\end{align*}
Now we are ready to study the $L^2(\pi)$-norm of $q_t$. Recall the formulas of $\varrho_i$ in \eqref{eq:eigvalues}, we have
\begin{align*}
    \|q_t\|_{L^2(\pi)} &= \sum_{i=1}^{\infty} \frac{\left\langle \frac{d \nu_t}{d\pi} - 1, e_i\right\rangle_{L^2(\pi)}^2}{\varrho_i^{0.5}} \leq \frac{4\beta}{1 + \bar{\sigma}^2 c} \sum_{i=1}^{\infty} \frac{1}{i! 2^i} \left|\mathrm{H}_{i}(0)\right|^2 \left(\frac{\beta^2-1}{\beta^2 + 1} \right)^i \left( \left(\frac{\beta^2+1}{\beta^2-1} \right)^{0.5} \right)^i \\
    &= \frac{4\beta}{1 + \bar{\sigma}^2 c} \sum_{i=1}^{\infty} \frac{1}{i! 2^i} \left|\mathrm{H}_{i}(0)\right|^2 \left( \left(\frac{\beta^2-1}{\beta^2+1} \right)^{0.5} \right)^i = \frac{4\beta}{1 + \bar{\sigma}^2 c} \sqrt{\frac{\beta^2+1}{2}} . 
\end{align*}
The last equality holds by Mehler’s formula~\citep[Proposition 2.2]{liang2022mehler}. 
A quick sanity check for the above derivations is to take $r=0.5$ and see that $\sum_{i=1}^{\infty} \varrho_i^{-1} \langle \frac{d \nu_t}{d\pi} - 1, e_i\rangle_{L^2(\pi)}^2$ is divergent. This indicates that $\frac{d \nu_t}{d\pi} - 1 \notin \calH$, which verifies the fact that the Gaussian RKHS does not contain constant functions~\citep[Corollary 4.44]{steinwart2008support}. 
Finally, $\mathcal{I}_t$ and $\mathcal{J}_t$ admit the following explicit formulas as well as uniform upper bounds 
\begin{align*}
    \mathcal{J}_t &= \left\|\nabla(\log \pi)^{\top} \nabla\left(\frac{d \nu_t}{d \pi}\right)\right\|^2_{L^2(\pi)} = \frac{\frac{1}{\sigma_t^2} - \frac{1}{\bar{\sigma}^2}}{\sqrt{2\pi}\bar{\sigma}^3 \sigma_t^2} \cdot \frac{\Gamma(7/2)}{\left(\frac{1}{\sigma_t^2} - \frac{1}{2\bar{\sigma}^2} \right)^{7/2}} \leq \frac{4 \Gamma(7/2)}{\sqrt{2\pi} \left(\frac{3}{2} \right)^{7/2}} \\
    \mathcal{I}_t &= \left\|\Delta\left(\frac{d \nu_t}{d \pi}\right)\right\|_{L^2(\pi)}= \left( \frac{\bar{\sigma}^2}{\sigma_t^2} \frac{\left(\frac{1}{\sigma_t^2} - \frac{1}{\bar{\sigma}^2}\right)^2}{\sqrt{2\pi \bar{\sigma}^2}} \right) \cdot \left(\frac{\Gamma(1/2)}{\left(\frac{1}{\sigma_t^2} - \frac{1}{2\bar{\sigma}^2} \right)^{1/2}} + \frac{2 \Gamma(3/2) \left(\frac{1}{\sigma_t^2} - \frac{1}{\bar{\sigma}^2}\right)}{\left(\frac{1}{\sigma_t^2} - \frac{1}{2\bar{\sigma}^2} \right)^{3/2}} \right.\\
    &\qquad + \left.  \frac{\Gamma(5/2)\left(\frac{1}{\sigma_t^2} - \frac{1}{\bar{\sigma}^2}\right)^2}{\left(\frac{1}{\sigma_t^2} - \frac{1}{2\bar{\sigma}^2} \right)^{5/2}} \right) \leq \frac{2}{\sqrt{2\pi} \bar{\sigma}^4} \left( \frac{\Gamma(1/2)}{\left(\frac{3}{2} \right)^{5/2}} + \frac{\Gamma(3/2)}{\left(\frac{3}{2} \right)^{3/2}} + \frac{\Gamma(5/2)}{\left(\frac{3}{2} \right)^{5/2}} \right). 
\end{align*}

\vspace{1em}
\noindent
\textbf{Empirical verification:}
While the above derivations demonstrate that a (Gaussian projected) DrMMD gradient flow satisfies all the required assumptions, it is instructive to demonstrate that a particle implementation in the discrete time setting, without explicit Gaussian projection,
shows the behavior consistent with the theory. We therefore simulate our finite particle DrMMD gradient descent $\hat{\mu}_n$ with a step size $\gamma=0.01$ and particle number $N=10,000$ and empirically inspect its convergence properties. We take $\bar{\sigma}^2 = 6$ so $\beta^2 = 5$. 
We estimate the density of the DrMMD descent $\mu_n$ from the particles with a kernel density estimator using a Gaussian kernel with lengthscale $0.1$~\citep{epanechnikov1969non}.  
Based on the estimated densities $\mu_n$, we compute the following two quantities: $\mathcal{I}_n = \| \nabla V^\top \nabla ( \frac{d \mu_n}{d\pi} ) \|_{L^2(\pi)}$ and $\mathcal{J}_n = \| \Delta ( \frac{d \mu_n}{d\pi} ) \|_{L^2(\pi)}$. 
Their evolution along the DrMMD gradient descent is shown in \Cref{fig:one_d_gaussian}.
We observe that both quantities decrease over time as desired, which is a consequence of increasing smoothness of the density ratio $\frac{d \mu_n}{d\pi}$ as $\mu_n$ converges to $\pi$.
We also report the evolution of the KL divergence $\widehat{\mathrm{KL}}(\hat{\mu}_n\|\hat{\pi})$ along the flow, estimated from particles. 

\begin{figure}
    \centering
    \includegraphics[width=0.5\linewidth]{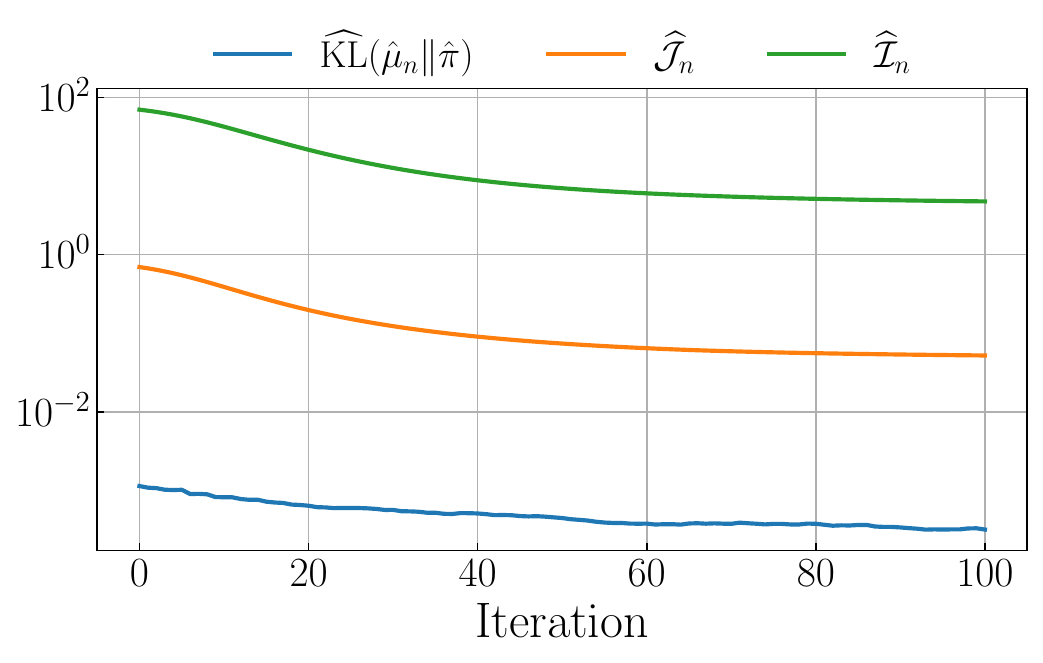}
    \caption{Evolution of $\widehat{\mathrm{KL}}(\hat{\mu}_n\|\hat{\pi})$, $\widehat{\mathcal{I}}_n$ and $\widehat{\mathcal{J}}_n$ along DrMMD particle descent, where all three terms are estimated with samples. }
    \label{fig:one_d_gaussian}
\end{figure}

\begin{lem}\label{lem:bures_update}
Given the DrMMD gradient flow update scheme $dx_t = -(1+\lambda)\nabla h_{\mu_t,\pi}(x_t)dt$, its mean $m_t=\E[x_t]$ and variance $\sigma_t^2=\E[x_t^2]-\E[x_t]^2$ update scheme can be expressed as
\begin{align*}
    \dd  m_t &= -(1+\lambda)\E_{x_t\sim\mu_t} [\nabla h_{\mu_t,\pi}(x_t)] \; \dd  t ,\\
    \dd (\sigma_t^2) &= -2(1+\lambda) \E_{x_t\sim\mu_t}[\nabla h_{\mu_t,\pi}(x_t) \cdot x_t] \;\dd t + 2(1+\lambda)\E_{x_t\sim\mu_t}[\nabla h_{\mu_t,\pi}(x_t)] \cdot m_t \;\dd  t,
\end{align*}
\end{lem}
\begin{proof}
    For the mean update, $\frac{\dd }{\dd t} m_t = \E[\frac{\dd }{\dd t}x_t]=-(1+\lambda)\E_{x_t\sim\mu_t} [\nabla h_{\mu_t,\pi}(x_t)]$. For the variance update, 
    \begin{align*}
        \frac{\dd}{\dd t} \sigma_t^2 &= \frac{\dd}{\dd t} \left(\E[x_t^2]-\E[x_t]^2\right) = 2 \E[x_t \cdot \frac{\dd}{\dd t} x_t] - 2 \E[x_t] \cdot \frac{\dd}{\dd t} \E[x_t] \\
        &= -2(1+\lambda) \E_{x_t\sim\mu_t}[\nabla h_{\mu_t,\pi}(x_t) \cdot x_t] + 2(1+\lambda)\E_{x_t\sim\mu_t}[\nabla h_{\mu_t,\pi}(x_t)] \cdot m_t. 
    \end{align*}
    Hence the result.
\end{proof}
\begin{lem}\label{lem:sign}
Let $\nu_t$ be a zero mean normal distribution $\calN(0,\sigma_t^2)$ and $0 < \sigma_t^2 < \bar{\sigma}^2$.
For the eigenvalues $(\varrho_i)_{i\geq1}$ and eigenfunctions $(e_i)_{i\geq1}$ defined in \eqref{eq:eigvalues}, we have 
\begin{align*}
    \sum_{i \geq 1} \left\langle \frac{d \nu_t}{d\pi} - 1, e_i \right\rangle_{L^2(\pi)} \E_{x\sim\nu_t}\left[\frac{d}{dx} e_i(x) x\right] < 0 , \\
    \sum_{i \geq 1} \varrho_i \left\langle \frac{d \nu_t}{d\pi} - 1, e_i \right \rangle_{L^2(\pi)} \E_{x\sim\nu_t}\left[\frac{d}{dx} e_i(x) x\right] < 0 .
\end{align*}
which correspond to the variance update in \eqref{eq:bures_update} when $\lambda=0$ and $\lambda=\infty$, respectively. 
\end{lem}
\begin{proof}
For the inner product term, we know from Corollary 2 of \citet{belafhal2020note} that
\begin{align*}
    &\left\langle \frac{d \nu_t}{d\pi} - 1, e_i \right\rangle_{L^2(\pi)} \\
    &= \mathrm{H}_{i}(0) \sqrt{\frac{\beta}{i! 2^i}} \left( \sqrt{\frac{1}{1 + 2 \sigma_t^2 c} } \left( 1 - \frac{\frac{1}{2 \bar{\sigma}^2} \beta^2 }{c + \frac{1}{2 \sigma_t^2 }} \right)^{\frac{i}{2} } - \sqrt{\frac{1}{1 + 2 \bar{\sigma}^2 c} } \left( 1 - \frac{\frac{1}{2 \bar{\sigma}^2} \beta^2 }{c + \frac{1}{2\bar{\sigma}^2}} \right)^{\frac{i}{2}} \right) .
\end{align*}
For the expectation term, notice that
\begin{align*}
    &\quad \E_{x\sim\nu_t}\left[\frac{d}{dx} e_i(x) x\right] = \sqrt{\frac{\beta}{i! 2^i}} \frac{1}{\sqrt{2\pi}\sigma_t} \int x \exp\left( -\frac{1}{2\sigma_t^2}x^2 \right) \frac{d}{dx} e_i(x) \; \dd x \nonumber \\
    &\stackrel{(\ast)}{=} - \sqrt{\frac{\beta}{i! 2^i}} \frac{1}{\sqrt{2\pi}\sigma_t} \int  \exp\left( -\frac{1}{2\sigma_t^2}x^2 \right)\left( 1 - \frac{x^2}{\sigma_t^2} \right) e_i(x) \; \dd x \nonumber \\
    &= - \sqrt{\frac{\beta}{i! 2^i}} \frac{1}{\sqrt{2\pi}\sigma_t} \int  \exp\left( -\frac{1}{2\sigma_t^2}x^2 \right) \exp(-c x^2) \mathrm{H}_i \left( \sqrt{\frac{1}{2 \bar{\sigma}^2}} \beta x \right) \; \dd x \\
    &\qquad + \sqrt{\frac{\beta}{i! 2^i}} \frac{1}{\sqrt{2\pi}\sigma_t^3} \int  \exp\left( -\frac{1}{2\sigma_t^2}x^2 \right) \exp(-c x^2) \mathrm{H}_i \left( \sqrt{\frac{1}{2 \bar{\sigma}^2}} \beta x \right) x^2 \; \dd x \\
    &=: E_{i,1} + E_{i,2}. 
\end{align*}
Here in $(\ast)$, we use integration by parts in which the boundary term vanishes, because $\lim_{x\to\infty} \exp(-x^2)\mathrm{H}_i(x)=0$. 
Next, notice that the second term $E_{i,2}$ equals precisely the derivative of $E_{i,1}$ with respect to $c$, rescaled by $\frac{1}{\sigma_t^2}$. To distinguish it from the other $c$ that will show up later in $\langle \frac{d \nu_t}{d\pi} - 1, e_i \rangle_{L^2(\pi)}$, we denote it as $\mathfrak{c}$. 
The original $\sum_{i\geq1} \langle \frac{d \nu_t}{d\pi} - 1, e_i \rangle_{L^2(\pi)} \E_{x\sim\nu_t}[\frac{d}{dx} e_i(x) x]$ can be written as the sum of two components $F_1 + F_2$:
\begin{align*}
    F_1 &= \sum_{i\geq1} \mathrm{H}_{i}(0) \sqrt{\frac{\beta}{i! 2^i}} \left( \sqrt{\frac{1}{1 + 2 \sigma_t^2 c} } \left( 1 - \frac{\frac{1}{2 \bar{\sigma}^2} \beta^2 }{c + \frac{1}{2 \sigma_t^2 }} \right)^{\frac{i}{2} } - \sqrt{\frac{1}{1 + 2 \bar{\sigma}^2 c} } \left( 1 - \frac{\frac{1}{2 \bar{\sigma}^2} \beta^2 }{c + \frac{1}{2\bar{\sigma}^2}} \right)^{\frac{i}{2}} \right) E_{i,1}, \\
    F_2 &= \sum_{i\geq1} \mathrm{H}_{i}(0) \sqrt{\frac{\beta}{i! 2^i}} \left( \sqrt{\frac{1}{1 + 2 \sigma_t^2 c} } \left( 1 - \frac{\frac{1}{2 \bar{\sigma}^2} \beta^2 }{c + \frac{1}{2 \sigma_t^2 }} \right)^{\frac{i}{2} } - \sqrt{\frac{1}{1 + 2 \bar{\sigma}^2 c} } \left( 1 - \frac{\frac{1}{2 \bar{\sigma}^2} \beta^2 }{c + \frac{1}{2\bar{\sigma}^2}} \right)^{\frac{i}{2}} \right) E_{i,2} . 
\end{align*}
From Corollary 2 of \citet{belafhal2020note}, we know 
\begin{align*}
    E_{i,1} = -\sqrt{\frac{\beta}{i! 2^i}}  \sqrt{\frac{1}{1 + 2 \sigma_t^2 \mathfrak{c} } } \left( 1 - \frac{\frac{1}{2 \bar{\sigma}^2} \beta^2 }{\mathfrak{c} + \frac{1}{2 \sigma_t^2 }} \right)^{\frac{i}{2} } \mathrm{H}_{i}(0),
\end{align*}
and $E_{i,2}=\frac{1}{\sigma_t^2} \frac{d}{d \mathfrak{c}} E_{i,1}$. The relation between $E_{i,1}$ and $E_{i,2}$ would carry over to $F_1$ and $F_2$ as well, i.e., $F_2 =\frac{1}{\sigma_t^2} \frac{d}{d \mathfrak{c}} F_1$. 
Next, we compute the first half of $F_1$. 
\begin{align}
    &\quad \sum_{i\geq 1} \mathrm{H}_{i}(0)  \sqrt{\frac{\beta}{i! 2^i}} \sqrt{\frac{1}{1 + 2 \sigma_t^2 c}} \left( 1 - \frac{\frac{1}{2 \bar{\sigma}^2} \beta^2 }{c + \frac{1}{2 \sigma_t^2 }} \right)^{\frac{i}{2} } E_{i,1} \nonumber \\
    &= -\sqrt{\frac{1}{1 + 2 \sigma_t^2 c} }  \sqrt{\frac{1}{1+ 2 \sigma_t^2 \mathfrak{c}} } \sum_{i\geq 1}\frac{\beta}{i! 2^i} \left( 1 - \frac{\frac{1}{2 \bar{\sigma}^2} \beta^2 }{c + \frac{1}{2 \sigma_t^2 }} \right)^{\frac{i}{2} } \left( 1 - \frac{\frac{1}{2 \bar{\sigma}^2} \beta^2 }{\mathfrak{c} + \frac{1}{2 \sigma_t^2 }} \right)^{\frac{i}{2}} |\mathrm{H}_i(0)|^2 \nonumber \\
    &= - \beta \sqrt{\frac{1}{1 + 2 \sigma_t^2 c} } \sqrt{\frac{1}{1+ 2 \sigma_t^2 \mathfrak{c}} } \left( 1 - \left( 1 - \frac{\frac{1}{2 \bar{\sigma}^2} \beta^2 }{c + \frac{1}{2 \sigma_t^2 }} \right) \cdot \left( 1 - \frac{\frac{1}{2 \bar{\sigma}^2} \beta^2 }{\mathfrak{c} + \frac{1}{2 \sigma_t^2 }} \right) \right)^{-\frac{1}{2}} \nonumber \\
    &= -\beta \left( (1 + 2 \sigma_t^2 c)(1 + 2 \sigma_t^2 \mathfrak{c}) - (1 + 2 \sigma_t^2 c - \frac{\sigma_t^2}{\bar{\sigma}^2} \beta^2) (1 + 2 \sigma_t^2 \mathfrak{c} - \frac{\sigma_t^2}{\bar{\sigma}^2} \beta^2) \right)^{-\frac{1}{2}} \nonumber \\
    &= - \left( \frac{\sigma_t^2}{\bar{\sigma}^2} (2 + 2 \sigma_t^2 c + 2 \sigma_t^2 \mathfrak{c}) - \left( \frac{\sigma_t^2}{\bar{\sigma}^2} \beta \right)^2 \right)^{-\frac{1}{2}} .\nonumber
\end{align}
The second last equality holds by the Mehler’s formula~\citep[Proposition 2.2]{liang2022mehler}. 
Similarly, we can also compute the second half of $F_1$. 
\begin{align}
    &\quad \sum_{i\geq 1} \mathrm{H}_{i}(0)  \sqrt{\frac{\beta}{i! 2^i}} \sqrt{\frac{1}{1 + 2 \bar{\sigma}^2 c}} \left( 1 - \frac{\frac{1}{2 \bar{\sigma}^2} \beta^2 }{c + \frac{1}{2 \bar{\sigma}^2 }} \right)^{\frac{i}{2} } E_{i,1} \nonumber \\
    &= -\beta \left( (1 + 2 \bar{\sigma}^2 c)(1 + 2 \sigma_t^2 \mathfrak{c}) - (1 + 2 \bar{\sigma}^2 c - \beta^2) (1 + 2 \sigma_t^2 \mathfrak{c} - \frac{\sigma_t^2}{\bar{\sigma}^2} \beta^2) \right)^{-\frac{1}{2}} \nonumber  \\
    &= - \left( 1 + 2 \sigma_t^2 \mathfrak{c} + (1 + 2\bar{\sigma}^2 c)\frac{\sigma_t^2}{\bar{\sigma}^2} - \frac{\sigma_t^2}{\bar{\sigma}^2} \beta^2 \right)^{-\frac{1}{2}}.\nonumber
\end{align}
Combining the above two equations, we obtain the following formula of $F_1$. 
\begin{align*}
    F_1 &= - \left( \frac{\sigma_t^2}{\bar{\sigma}^2} (2 + 2 \sigma_t^2 c + 2 \sigma_t^2 \mathfrak{c}) - \left( \frac{\sigma_t^2}{\bar{\sigma}^2} \beta \right)^2 \right)^{-\frac{1}{2}} + \left( 1 + 2 \sigma_t^2 \mathfrak{c} + (1 + 2\bar{\sigma}^2 c)\frac{\sigma_t^2}{\bar{\sigma}^2} - \frac{\sigma_t^2}{\bar{\sigma}^2} \beta^2 \right)^{-\frac{1}{2}} . 
\end{align*}
Since $F_2 =\frac{1}{\sigma_t^2} \frac{d}{d \mathfrak{c}} F_1$, we have
\begin{align*}
    F_2 = \left( \frac{\sigma_t^2}{\bar{\sigma}^2} (2 + 2 \sigma_t^2 c + 2 \sigma_t^2 \mathfrak{c}) - \left( \frac{\sigma_t^2}{\bar{\sigma}^2} \beta \right)^2 \right)^{-\frac{3}{2}} \frac{\sigma_t^2}{\bar{\sigma}^2} - \left( 1 + 2 \sigma_t^2 \mathfrak{c} + (1 + 2\bar{\sigma}^2 c)\frac{\sigma_t^2}{\bar{\sigma}^2} - \frac{\sigma_t^2}{\bar{\sigma}^2} \beta^2 \right)^{-\frac{3}{2}} .
\end{align*}
Recall that the original $\sum_{i\geq1} \langle \frac{d \nu_t}{d\pi} - 1, e_i \rangle_{L^2(\pi)} \E_{x\sim\nu_t}[\frac{d}{dx} e_i(x) x]$ can be written as the sum of two components $F_1 + F_2$, and recall that $\mathfrak{c} = c$ and $\beta^2=1+4\bar{\sigma}^2 c$ by definition. We obtain
\begin{small}
\begin{align*}
    \sum_{i \geq 1} \left\langle \frac{d \nu_t}{d\pi} - 1, e_i \right\rangle_{L^2(\pi)} \E_{x\sim\nu_t}\left[\frac{d}{dx} e_i(x) x\right] = F_1 + F_2 = \left( \frac{\sigma_t^2}{\bar{\sigma}^2} \left(2 - \frac{\sigma_t^2}{\bar{\sigma}^2} \right) \right)^{-\frac{1}{2}} \cdot \left(-1 + \left(2 - \frac{\sigma_t^2}{\bar{\sigma}^2} \right)^{-1} \right) . 
\end{align*}
\end{small}
which is negative when $0 < \sigma_t < \bar{\sigma}$. So we have concluded the proof of the first claim.
Now, we are about to prove the second claim. 
Following the same derivations as above, we can write $\sum_{i \geq 1} \langle \varrho_i \frac{d \nu_t}{d\pi} - 1, e_i \rangle_{L^2(\pi)} \E_{x\sim\nu_t}[\frac{d}{dx} e_i(x) x]$ as the sum of two terms $G_1$ and $G_2$.
\begin{small}
\begin{align*}
    G_1 &= \sum_{i\geq1} \varrho_i \mathrm{H}_{i}(0) \sqrt{\frac{\beta}{i! 2^i}} \left( \sqrt{\frac{1}{1 + 2 \sigma_t^2 c} } \left( 1 - \frac{\frac{1}{2 \bar{\sigma}^2} \beta^2 }{c + \frac{1}{2 \sigma_t^2 }} \right)^{\frac{i}{2} } - \sqrt{\frac{1}{1 + 2 \bar{\sigma}^2 c} } \left( 1 - \frac{\frac{1}{2 \bar{\sigma}^2} \beta^2 }{c + \frac{1}{2\bar{\sigma}^2}} \right)^{\frac{i}{2}} \right) E_{i,1}, \\
    G_2 &= \sum_{i\geq1} \varrho_i \mathrm{H}_{i}(0) \sqrt{\frac{\beta}{i! 2^i}} \left( \sqrt{\frac{1}{1 + 2 \sigma_t^2 c} } \left( 1 - \frac{\frac{1}{2 \bar{\sigma}^2} \beta^2 }{c + \frac{1}{2 \sigma_t^2 }} \right)^{\frac{i}{2} } - \sqrt{\frac{1}{1 + 2 \bar{\sigma}^2 c} } \left( 1 - \frac{\frac{1}{2 \bar{\sigma}^2} \beta^2 }{c + \frac{1}{2\bar{\sigma}^2}} \right)^{\frac{i}{2}} \right) E_{i,2} . 
\end{align*}
\end{small}
Since $\varrho_i \propto (\frac{\beta^2-1}{\beta^2+1})^i$ defined in \eqref{eq:eigvalues} has exponential decay, hence Mehler's formula still hold. Up to some positive multiplier coefficient that do not change the sign, $G_1$ can be written as the following formula
\begin{align*}
    G_1 &= -\left( (2\bar{\sigma}^2c)^{-2}(1 + 2 \sigma_t^2 c)^2 + 2 \frac{\sigma_t^2}{\bar{\sigma}^2} - \frac{\sigma_t^4}{\bar{\sigma}^4} \right)^{-\frac{1}{2}} + \left( (2\bar{\sigma}^2 c)^{-2} (1 + 2 \bar{\sigma}^2 c)(1 + 2 \sigma_t^2 \mathfrak{c}) + 1 \right)^{-\frac{1}{2}} . 
\end{align*}
And similar to $F_1, F_2$, we have $G_2 =\frac{1}{\sigma_t^2} \frac{d}{d \mathfrak{c}} G_1$. As a result, 
\begin{align*}
    G_2 &= \left( (2\bar{\sigma}^2c)^{-2}(1 + 2 \sigma_t^2 c)^2 + 2 \frac{\sigma_t^2}{\bar{\sigma}^2} - \frac{\sigma_t^4}{\bar{\sigma}^4} \right)^{-\frac{3}{2}} \cdot \left(\frac{1+2\sigma_t^2 c}{(2\bar{\sigma}^2 c)^2} + \frac{\sigma_t^2}{\bar{\sigma}^2} \right) \\
    &- \left( (2\bar{\sigma}^2 c)^{-2} (1 + 2 \bar{\sigma}^2 c)(1 + 2 \sigma_t^2 \mathfrak{c}) + 1 \right)^{-\frac{3}{2}} \cdot \left(\frac{1+2\bar{\sigma}^2 c}{(2\bar{\sigma}^2 c)^2} +1 \right) . 
\end{align*}
Combined, we obtain
\begin{align*}
    G_1 + G_2 &= -\left( (2\bar{\sigma}^2c)^{-2}(1 + 2 \sigma_t^2 c)^2 + 2 \frac{\sigma_t^2}{\bar{\sigma}^2} - \frac{\sigma_t^4}{\bar{\sigma}^4} \right)^{-\frac{3}{2}} \cdot \left( (1 + 2 \sigma_t^2 c) \frac{2\sigma_t^2 c}{(2\bar{\sigma}^2 c)^2} + \frac{\sigma_t^2}{\bar{\sigma}^2} - \frac{\sigma_t^4}{\bar{\sigma}^4} \right)  \\
    &+ \left( (2\bar{\sigma}^2 c)^{-2} (1 + 2 \bar{\sigma}^2 c)(1 + 2 \sigma_t^2 c) + 1 \right)^{-\frac{3}{2}} \cdot (1 + 2 \bar{\sigma}^2 c) \frac{2\sigma_t^2 c}{(2\bar{\sigma}^2 c)^2} . 
\end{align*}
When $0 < \sigma_t^2 < \bar{\sigma}^2$, we have the following relations
\begin{align*}
    (2\bar{\sigma}^2c)^{-2}(1 + 2 \sigma_t^2 c)^2 + 2 \frac{\sigma_t^2}{\bar{\sigma}^2} - \frac{\sigma_t^4}{\bar{\sigma}^4} &< (2\bar{\sigma}^2 c)^{-2} (1 + 2 \bar{\sigma}^2 c)(1 + 2 \sigma_t^2 c) + 1, \\
    (1 + 2 \sigma_t^2 c) \frac{2\sigma_t^2 c}{(2\bar{\sigma}^2 c)^2} + \frac{\sigma_t^2}{\bar{\sigma}^2} - \frac{\sigma_t^4}{\bar{\sigma}^4} &> (1 + 2 \bar{\sigma}^2 c) \frac{2\sigma_t^2 c}{(2\bar{\sigma}^2 c)^2}. 
\end{align*}
Therefore,  
\begin{align*}
    \sum_{i \geq 1} \varrho_i \left\langle \frac{d \nu_t}{d\pi} - 1, e_i \right \rangle_{L^2(\pi)} \E_{x\sim\nu_t}\left[\frac{d}{dx} e_i(x) x\right] = G_1 + G_2 < 0,
\end{align*}
proving the second claim.
\end{proof}
\newpage
\end{appendices}


\end{document}